\newcommand{\printfnsymbol}[1]{%
  \textsuperscript{\@fnsymbol{#1}}%
}
\newcommand{\textcallig}[1]{{\callig#1}}
\newcommand{\f}{v}
\newcommand{\g}{g}
\newcommand{\x}{x}
\newcommand{\expd}{\text{\textcallig{p}}}
\newcommand{\ex}{\Expl}
\newcommand{\prodd}{\Upsilon}
\def\Expl{\mathcal{E}}
\newcommand{\mubar}[1]{\bar{\mu}_{#1}}
\newcommand{\setlessell}{\mathcal{S}_\ell }
\newcommand{\bfb}{\mathbf{b} }
\DeclareMathOperator{\spann}{span}
\DeclareMathOperator{\calA}{\mathcal{A}}
\newcommand*{\defeq}{\stackrel{\text{def}}{=}}
\DeclareMathOperator*{\argmin}{arg\,min}
\newcommand{\norm}[1]{\left\lVert#1\right\rVert}
\newtheorem{example}{Example} 
\newtheorem{theorem}{Theorem}
\newtheorem{lemma}[theorem]{Lemma} 
\newtheorem{proposition}[theorem]{Proposition}
\newtheorem{definition}[theorem]{Definition}
\newtheorem{axiom}[theorem]{Axiom}
\newtheorem{claim}[theorem]{Claim}
\newcolumntype{?}[1]{!{\vrule width #1}}
\title{Faith-Shap: The Faithful Shapley Interaction Index}
\author{%
  Che-Ping Tsai, \ Chih-Kuan Yeh, \ Pradeep Ravikumar \\
  \\
  \texttt{\{chepingt, cjyeh, pradeepr\}@cs.cmu.edu} \\
  Department of Machine Learning\\
  Carnegie-Mellon University\\
  Pittsburgh, PA 15213 \\
}
\begin{document}

\maketitle

\begin{abstract}%
Shapley values, which were originally designed to assign attributions to individual players in coalition games, have become a commonly used approach in explainable machine learning to provide attributions to input features for black-box machine learning models. A key attraction of Shapley values is that they uniquely satisfy a very natural set of axiomatic properties. However, extending the Shapley value to assigning attributions to interactions rather than individual players, an \emph{interaction index}, is non-trivial: as the natural set of axioms for the original Shapley values, extended to the context of interactions, no longer specify a unique interaction index. Many proposals thus introduce additional less ``natural'' axioms, while sacrificing the key axiom of efficiency, in order to obtain unique interaction indices. In this work, rather than introduce additional conflicting axioms, we adopt the viewpoint of Shapley values as coefficients of the most faithful linear approximation to the pseudo-Boolean coalition game value function. By extending linear to $\ell$-order polynomial approximations, we can then define the general family of \emph{faithful interaction indices}. We show that by additionally requiring the faithful interaction indices to satisfy interaction-extensions of the standard individual Shapley axioms (dummy, symmetry, linearity, and efficiency), we obtain a \emph{unique} Faithful Shapley Interaction index, which we denote Faith-Shap, as a natural generalization of the Shapley value to interactions. We then provide some illustrative contrasts of Faith-Shap with previously proposed interaction indices, and further investigate some of its interesting algebraic properties. We further show the computational efficiency of computing Faith-Shap, together with some additional qualitative insights, via some illustrative experiments.
\end{abstract}

\section{Introduction}

Explaining the prediction of a black-box machine learning model via attributions to its features is an increasingly important task. Most approaches have focused on attributions to \emph{individual features}, which does not always suffice to provide insight into the model when there are heavy feature interactions. For instance, when explaining models with text input, we might also ask for attributions to phrases and sequences of words rather than just individual words. Similarly, in Question Answering (QA)~\citep{ye2021connecting}, it is of interest to measure attributions to query answer tuples, rather than just individual entities associated with answers. Such feature interactions are also salient with images as input, where instead of attributions to individual pixels, we might prefer attributions to groups of pixels. 

A large class of recent approaches for individual feature attributions reduces the task to a cooperative game theory problem. Given a machine learning model, a test point, and the underlying data distribution, one can devise a ``set value function'' that takes as input a set of features and outputs the value of that set of features. There are many choices for such a reduction to a set function~\citep{lundberg2017unified, sundararajan2019many, frye2020shapley, chen2020true}. We can then relate this to a cooperative game theory problem where the features are players, the set function above is the value function of the coalition game that specifies the value of various player coalitions, and we wish to derive feature attributions given such a value function. This meta-approach has led to a slew of explanation approaches when the goal is to obtain individual feature attributions. The key question we focus on in this paper is to obtain attributions to \emph{feature interactions} instead. In this setting, any feature interactions (up to a given order), along with each individual feature, should get some attribution score. This question has attracted some attention in the cooperative game theory and the explainable AI literature, with the broad strategy of extending popular approaches for individual feature attributions, such as Shapley and Banzhaf values \citep{shapley1953value,harsanyi1963simplified}, to the interaction context. But these existing proposals come with many caveats.

Part of the attraction of the cooperative game theory based explanations above is that for the case of individual feature attributions, if we stipulate some natural axioms such as linearity, symmetry, dummy, and efficiency (detailed in a later section), there exist unique attributions such as Shapley and Banzhaf (depending on the notion of efficiency). Thus we have both a strong axiomatic foundation to the explanations, as well as a very compelling uniqueness result that there can exist no other explanations that satisfy these axioms. These have thus led to an explosion of Shapley value based explanations in the XAI literature that assign attributions to features, data, and even concepts~\citep{lundberg2017unified, gromping2007estimators, lindeman1980introduction, owen2014sobol, owen2017shapley, datta2016algorithmic,ghorbani2019data, jia2019towards,yeh2020completeness}. However, when we move to the context of feature interactions, while the axioms above have natural extensions from the individual feature to the feature interaction context, they no longer result in a \emph{unique feature attribution value}. 

Approaches to address this have thus focused on adding additional less natural axioms to ensure uniqueness. One set of unique feature attributions --- Shapley interaction and Banzhaf interaction indices~\citep{grabisch1999axiomatic} --- derive unique attributions via a \emph{recursive axiom}, which specifies how higher-order feature attributions be derived from lower order feature interaction attributions (all the way to individual feature attributions). Thus, given the uniqueness at the level of individual feature attributions, we in turn get uniqueness at all levels of interaction attributions. One major caveat of these Shapley interaction and Banzhaf interaction indices is that they do not satisfy the efficiency axiom for interaction feature attributions, and hence can no longer be viewed as distributing the total contribution of the model prediction among all feature interactions. The other caveat is that the recursive axiom, while convenient to extend uniqueness from individual to interaction feature attributions, is much less ``natural'' when compared with the original Shapley axioms, which specifically defined the forms of first-order indices for certain value functions.
To address these caveats, \citet{sundararajan2020shapley} proposed the \emph{interaction distribution axiom} that entails distributing higher-order interactions to the topmost interaction indices at the expense of impoverished lower-order interactions. This makes the interaction attributions unique for unanimity games \citep{shapley1953value}, and since these act as a basis for set value functions, by linearity this ensures uniqueness of interaction attributions for general games. The caveat however is that the specified attribution distribution inordinately favors the topmost interactions, which in turn affects the usefulness of both the lower and highest-order interactions as we show in our examples. And arguably, the interaction distribution axioms too are much less natural when compared to the original Shapley axioms. Thus, there remains an open problem to specify a ``natural'' restriction or axiom that allows for unique interaction attributions.

An additional desideratum is that the feature interaction attributions be cognizant of the \emph{maximum interaction order} of the interaction attributions we require. For instance, with individual feature attributions, the maximum interaction order is one, while with pairwise feature attributions, the maximum interaction order is two. This would allow the explanations to be tailored to the set of possible interactions and satisfy the relevant axioms with respect to just these  interactions, instead of all possible subsets of feature interactions.

In this work, rather than devising potentially less natural axioms to ensure uniqueness, we work from yet another viewpoint of Shapley values, that they are faithful to the set value function: for all subsets, the sum of individual feature attributions over a subset should approximate the set value function evaluated on that subset. When formalized as a  weighted regression problem, this yields Shapley and Banzhaf values depending on the weights in the weighted regression~\citep{banzhaf1964weighted,ruiz1996least}. We then extend the above weighted regression to feature interactions up to a given maximum interaction order, which then yields what we call Faith-Interaction indices. We show that when  restricting to the class of Faith-Interaction indices, together with the (interaction extensions of the individual) Shapley axioms, we obtain a unique interaction index, which we term the Faith-Shap (for Faithful Shapley Interaction) index, which reduces to the individual feature Shapley values when the top interaction order is one. We thus posit Faith-Shap as the natural extension of Shapley values from individual features to interaction indices. Similarly, when the efficiency axiom is replaced by the generalized 2-efficiency axiom, we obtain a unique interaction index, which we term Faith-Banzhaf (for Faithful Banzhaf Interaction) index. The latter has also appeared in other guises in prior work \citep{hammer1992approximations,grabisch2000equivalent}. Unlike the other restrictive axioms discussed earlier, here we only require that the explanations be faithful to the model, which has always been a big attraction of Shapley values in the explainable AI (XAI) context. We corroborate the usefulness of these Faith-Interaction indices by contrasting them with prior indices in two illustrative coalition games, as well as real-world XAI applications. We then discuss the algebraic properties of Faithful Shapley Interaction index by relating them to cardinal indices, i.e. indices that can be expressed as a linear combination of marginal contributions, as well as in terms of approximations to multilinear extensions of the coalition set value function. An additional benefit of the Faith Interaction indices is that the estimation becomes much more efficient via leveraging the weighted linear regression formulation, which we validate in our experiments.

\def\model{f}
\def\X{\mathcal{X}}
\def\R{\mathbb{R}}
\section{Preliminaries}

\subsection{Notations}
Suppose we are given a black-box model $\model : \X \mapsto \mathbb{R}$, with input domain $\X \subseteq \R^d$; and suppose we wish to explain its prediction at a given test point $\x \in \X$.  Suppose also given the tuple $\model, \x$ (and possibly with additional information about the underlying data distribution on which $\model$ is trained on, and from which $\x$ is drawn), there is a well-defined set function $\f_{\x} : 2^d \rightarrow \mathbb{R}$. We can interpret such a set function as specifying the value of a subset of the set of $d$ features. Many popular explanations employ such a reduction of the model and its prediction context to set value functions; see \citet{ribeiro2016should, lundberg2017unified, sundararajan2020shapley} for many examples. When clear from the context, and for notational simplicity, we will often omit $\x$ and simply use $\f$ to denote the set function. Such a reduction allows us to leverage results from cooperative game theory, by relating the set of features to a set of players, and the set function above as specifying the values of coalitions of players.

We are then interested in quantifying the importance of interactions between different features up to some order $\ell \in [d]$. Note that in this context, when we mean interactions between features, we mean non-self interactions between distinct features, since self-self interactions could simply be identified with the individual features.
In other words, we require an importance function $\ex$ which for each coalition $S \subseteq [d]$ where $0 \leq |S| \leq \ell$, outputs a scalar $\ex_{S} (\f, \ell)$.
Let $\mathcal{S}_\ell$ denote the set of all subsets of $[d]$ with size less than or equal to $\ell$; the size of this set can be seen to be $d_{\ell} \defeq \sum_{j=0}^{\ell} { d \choose j}$.
We then use the shorthand $\ex(\f, \ell) = (\ex_{S}(\f, \ell))_{S \in \mathcal{S}_\ell} \in \R^{d_\ell}$. To simplify notation, we omit braces for small sets and write $T \cup i$ to represent $T \cup \{i\}$. 

\subsection{Definitions}
We begin by recalling the concept of discrete derivatives.
\begin{definition}
\label{def:discrete_derivative}
(Discrete Derivative) Given a set function $\f:2^d \mapsto \mathbb{R}$ and two finite disjoint coalitions $S, T \subseteq [d]$ with $S \cap T = \text{\O}$, the $S$-derivative of $\f$ at $T$, $\Delta_S (\f(T))$, is defined recursively as follows:
\begin{equation}\label{eq:discrete1}
    \Delta_i \f(T) = \f(T \cup i) - \f(T), \ \ \forall i \in [d], 
\text{ and }
\end{equation}
\begin{equation}\label{eq:discrete2}
    \Delta_S (\f(T)) = \Delta_i [\Delta_{S \backslash i} (\f(T)) ] = \sum_{L \subseteq S} (-1)^{|S|-|L|} \f(T \cup L), \forall i \in S.
\end{equation}
\end{definition}
The second equality in Eqn.~\eqref{eq:discrete2} can be shown via induction on $S$ \citep{fujimoto2006axiomatic}. 
As an illustration of discrete derivatives, for a subset $S$ of size $2$, the discrete derivative can be written as  
$$
\Delta_{\{i,j\}} \f(T) 
=  \f(T \cup \{i,j\}) -\f(T\cup j) - \f(T\cup i) +\f(T).
$$
$\Delta_{\{i,j\}} \f(T) $ captures the joint effect of features $i$ and $j$ co-occurring compared to the individual effects of $i$ and $j$. If $\Delta_{\{i,j\}} \f(T) > 0$ (resp. $<0$), we say $i$ and $j$ have positive (resp. negative) interaction effect in the presence of $T$ since the presence of $i$ increases (resp. decreases) the marginal contribution of $j$ to coalition $T$. 
Following the intuition from the two features example, the discrete derivative $\Delta_S (\f(T))$ can be viewed as a measurement of the  \emph{marginal interaction of $S$ in the presence of $T$}. When a set of features have a positive (negative) interaction effect, the discrete derivative is positive (negative). 
Discrete derivatives play a fundamental role in measurement of interaction effects.
As we will see in the following section, the Shapley and Banzhaf interaction indices can be viewed as a weighted average of $S$-derivatives over all subsets $T \subseteq [d] \backslash S$.

\vskip0.1in
\noindent
Next, let us recall the concept of the Möbius transform.
\begin{definition}
(Möbius transform) Given set function $\f:2^d \mapsto \mathbb{R}$, the Möbius transform of $\f(\cdot)$ is 
\begin{equation}
\label{eqn:def_mobius_transform}
a(\f,S) = \sum_{T \subseteq S} (-1)^{|S| - |T|} \f(T) 
\ \text{ for all } \ S \subseteq [d].
\end{equation}
\end{definition}
An important property~\citep{shapley1953value} of the Möbius transform is that any set function $\f(\cdot)$ can be expressed as:
\begin{equation}
\label{eqn:discopose_f}
\f = \sum_{R \subseteq [d]} a(\f,R) \, \f_R,
\end{equation}
where $\f_R$ for any $R \subseteq [d]$ has the form $\f_R(S) = 1$ if $S \supseteq R$ and $0$ otherwise; and is also known as a \emph{unanimity game} value function in game theory. Eqn.~\eqref{eqn:discopose_f} states that any set function can be expressed as a linear combination of these unanimity game value functions (so that $\{\f_R\}_{R \subseteq [d]}$ form a basis for real-valued set value functions), with the Möbius transforms $a(\f,R)$ as their coefficients. Note that if an interaction index satisfies the \textbf{interaction linearity axiom} (to be discussed in the sequel), the interaction index for general set value functions can be expressed as a linear combination of the interaction indices for unanimity games.

\section{Background: Axioms for Interaction Indices} 
\label{sec:game_axiom}
In this section, we present natural extensions of Shapley axioms for individual features to the feature interactions~\citep{grabisch1999axiomatic,sundararajan2020shapley}. We then discuss the key interaction indices proposed so far in the literature --- the Shapley interaction index, Banzhaf interaction index and Shapley-Taylor interaction index --- with respect to these axioms. In all these axioms, we allow for dependence on the maximum interaction order $\ell \in [d]$. A summarization of axioms that these interaction indices satisfy is in Table \ref{table:axioms}.

\begin{table}[ht]
\centering
\small
\resizebox{\textwidth}{!}{
\begin{tabular}{ | c | c | c | c | c | c | c | c | c | }
\hline 
Indices 
& \begin{tabular}{@{}c@{}} Interaction \\ linearity  \end{tabular}  
& \begin{tabular}{@{}c@{}} Interaction \\ symmetry  \end{tabular} &\begin{tabular}{@{}c@{}} Interaction\\ dummy  \end{tabular} & \begin{tabular}{@{}c@{}} Interaction\\ efficiency  \end{tabular} & \begin{tabular}{@{}c@{}} Interaction\\ recursive  \end{tabular}  & \begin{tabular}{@{}c@{}} Generalized \\  Interaction\\ 2-efficiency \end{tabular} & \begin{tabular}{@{}c@{}} Interaction\\ distribution  \end{tabular}  & \begin{tabular}{@{}c@{}} Is \\ Faith-Interaction \\ Index \end{tabular}  \\
\hline
\hline
\begin{tabular}{@{}c@{}} Shapley \\ Interaction \end{tabular}   &\checkmark &\checkmark &\checkmark  & & \checkmark & & & \\
\hline 
\begin{tabular}{@{}c@{}} Banzhaf \\ Interaction \end{tabular}  &\checkmark &\checkmark &\checkmark  & & \checkmark & \checkmark & & \\
\hline 
\begin{tabular}{@{}c@{}} Shapley \\ Taylor \end{tabular}  &\checkmark &\checkmark &\checkmark  & \checkmark & &  & \checkmark& \\
\hline 
\begin{tabular}{@{}c@{}} Faithful \\ Shapley \end{tabular}  &\checkmark &\checkmark &\checkmark  & \checkmark & & & & \checkmark \\
\hline 
\begin{tabular}{@{}c@{}} Faithful \\ Banzhaf \end{tabular} &\checkmark &\checkmark &\checkmark  & & & \checkmark   & & \checkmark  \\
\hline
\end{tabular}}
\caption{A table of axioms that different interaction indices satisfy.}
\label{table:axioms}
\end{table}

\begin{axiom}
(Interaction Linearity): For any maximum interaction order $\ell \in [d]$, and for any two set functions $\f_1$ and $\f_2$, and any two scalars $\alpha_1, \alpha_2 \in \mathbb{R}$, the interaction index satisfies: $\ex(\alpha_1 \f_1+ \alpha_2 \f_2,\ell) = \alpha_1 \ex(\f_1,\ell) + \alpha_2 \ex(\f_2,\ell)$.
\end{axiom}

The interaction linearity axiom states that the feature interaction index is a linear functional of the set function $\f(\cdot)$. It ensures that the corresponding indices scale with the value function $\f(\cdot)$.

\begin{axiom}
(Interaction Symmetry):  For any maximum interaction order $\ell \in [d]$, and for any set function $\f:2^d \mapsto \R$ that is symmetric with respect to elements $i, j \in [d]$, so that
$\f(S \! \cup  i) = \f(S \! \cup  j) \!$ for any $S \subseteq [d] \backslash \{i,j\}$, the interaction index satisfies: $\ex_{T \cup i}(\f,\ell) = \ex_{ T \cup j}(\f,\ell)$ for any $T \subseteq [d] \backslash \{i,j\}$ with $|T| < \ell$.
\end{axiom}

The interaction symmetry axiom entails that if the value function treats two features the same, their corresponding feature interaction index values should be the same as well.

\begin{axiom}
(Interaction Dummy): For any maximum interaction order $\ell \in [d]$, and for any set function $\f:2^d \mapsto \R$ such that $\f(S \cup i) = \f(S)$ for some $i \in [d]$ and for all $ S \subseteq [d] \backslash \{i\}$, the interaction index satisfies:  $\ex_{T}(\f, \ell) = 0$ for all $T \in \mathcal{S}_\ell$ with $i \in T$.
\end{axiom}

The interaction dummy axiom entails that a dummy feature $i \in [d]$ that has no influence on the function $\f$ should have no interaction effect with the other features.

\begin{axiom}
(Interaction Efficiency): For any maximum interaction order $\ell \in [d]$, and for any set function $\f:2^d \rightarrow \R$, the interaction index satisfies: $\sum_{S \in \mathcal{S}_\ell\backslash\text{\O}} \ex_S(\f,\ell) = \f([d]) - \f(\text{\O})$ and $\ex_{\text{\O}}(\f,\ell) = \f(\text{\O})$.
\end{axiom}
The interaction efficiency 
ensures that the interaction index distributes the total value $\f([d])$ among the different subsets in $\mathcal{S}_\ell$. This axiom lends itself a natural explanation of $\ex_S(\f,\ell)$: it represents the marginal contribution that the group $S$ makes to the total value, 
which has also been considered by \citet{sundararajan2020shapley}.
As we will detail in the sequel, some of the recently proposed interaction indices do not satisfy such an efficiency axiom. For instance, the chaining interaction and Shapley interaction indices only require the total sum of \emph{individual feature importances} to sum to $\f([d]) - \f(\text{\O})$, without consideration of the higher-order interaction importances. 

\paragraph{Challenge: Lack of Uniqueness:} These axioms are natural extensions to the interaction setting of classical axioms for individual feature attributions; see \citet{fujimoto2006axiomatic,grabisch1999axiomatic} for a counterpart of these interaction axioms without consideration of the maximum interaction order $\ell \in [d]$.
As \citet{sundararajan2020shapley} note, though the linearity, symmetry, dummy, and efficiency axioms uniquely specify a feature attribution when the maximum interaction order $\ell = 1$ (i.e. for individual feature attributions), they no longer do when $\ell > 1$. In other words, there could exist many interaction indices that all satisfy the axioms specified above. A big attraction of the individual Shapley value was its uniqueness given the corresponding individual attribution axioms. Accordingly, a line of work has focused on specifying additional axioms that together specify a unique interaction index.

\begin{axiom}(Recursive Interaction):
For any maximum interaction order $2 \leq \ell \leq d$, and for any set function $\f:2^d \rightarrow \mathbb{R}$, and for any $j \in [d]$, 
let the reduced set functions $\f^{[d] \backslash j}, \f_{\cup j}^{[d] \backslash j}:2^{d-1} \rightarrow \mathbb{R}$ be defined as: 
$$
\text{ for all }  \  T \subseteq [d] \backslash j, \ \ \f^{[d] \backslash j}(T) = \f(T),
\ \  \text{ and } \ \ 
\f_{\cup j}^{[d] \backslash j}(T) = \f(T \cup j) - \f(j) .
$$
Then the interaction index satisfies: $\ex_{S}(\f,\ell) = \ex_{S \backslash j}(\f_{\cup j}^{[d] \backslash j}, \ell) - \ex_{S \backslash j}(\f^{[d] \backslash j}, \ell), \ \ \forall S \in \setlessell \text{ with } |S| \geq 2$.
\end{axiom}

The recursive axiom above is an extension of the recursive axiom of~\citet{grabisch1999axiomatic} to account for arbitrary maximum interaction orders. The axiom can be informally interpreted as ``how does the presence or absence of feature $j$ influence the share of feature set $S$''. But more importantly (and the reason it is termed the recursive axiom) is that it specifies how higher-order interaction scores are \emph{uniquely determined} given lower-order interaction indices. By recursion, the higher-order interaction indices are thus uniquely specified given just the  singleton feature attributions. The reason this helps with uniqueness is that so long as the axioms entail unique singleton attributions, together with this recursive axiom, they would entail unique interaction attributions.  Thus, we argue that the recursive axiom is less ``natural'' compared to previously introduced axioms since the recursive axiom only ensures the uniqueness property, at the potential expense of other axiomatic properties.

\paragraph{Shapley Interaction Index:} 
~\citet{grabisch1999axiomatic} thus show that there is a unique interaction index that satisfies the interaction linearity, symmetry, dummy, and the recursive axioms (but not the interaction efficiency axiom), and whose restrictions to singleton sets correspond to Shapley values. They term this interaction index Shapley interaction index. 
This Shapley interaction index has the following closed form:
\begin{equation}
\label{eqn:closed_form_shap_inter}
    \ex_S^{\text{Shap}}(\f, \ell) =  \sum_{T \subseteq [d]/ S}  \frac{|T|! (d-|S|-|T|)!}{(d-|S|+1)!} \Delta_S(\f(T)),\ \ \  
    \forall S \in \setlessell.
\end{equation}
A critical caveat of the resulting Shapley interaction value is that it no longer satisfies the interaction efficiency axiom when the maximum interaction order $\ell>1$. Indeed, simply summing the contributions to singleton sets (i.e. the classical individual attribution Shapley values) is already equal to $\f([d]) - \f(\text{\O})$, so the only way for the interaction efficiency axiom to be satisfied if all the other interaction attributions sum to zero, which they do not. 

\paragraph{Banzhaf Interaction Index:} 
~\citet{grabisch1999axiomatic} further show that there is a unique interaction index that satisfies the interaction linearity, symmetry, dummy, and recursive axioms (but not the interaction efficiency axiom), and whose restrictions to singleton sets correspond to the Banzhaf values. They term this interaction index Banzhaf interaction index, which has the following closed form:
\begin{equation}
\label{eqn:closed_form_bzf_inter}
    \ex^{\text{Bzf}}_S(\f,\ell) =  \sum_{T \subseteq [d]/S}  \frac{1}{2^{d-|S|}} \Delta_S(\f(T)),\ \ \  
    \forall  S \in \setlessell.
\end{equation}
It can be again shown that the Banzhaf interaction index does not satisfy the interaction efficiency axiom even when $\ell=1$; though they do satisfy the generalized 2-efficiency axiom, which can be stated as follows.
\begin{axiom}(Generalized Interaction 2-Efficiency): 
Define the reduced function $\f_{[ij]}: 2^{d-1} \rightarrow \mathbb{R}$ given any $i,j \in [d]$ as $\f_{[ij]}(S) = \f(S)$ for all sets $S$ containing both $i$ and $j$, and $\f_{[ij]}(S \cup [ij]) = \f(S \cup \{i,j\})$ for all $S$ containing neither $i$ nor $j$. That is, the reduced function considers features $i$ and $j$ together as a group $[ij]$. Then the interaction index satisfies:  $\ex_{S \cup [ij]}(\f_{[ij]}, \ell) = \ex_{S \cup i}(\f, \ell) + \ex_{S \cup j}(\f, \ell)$ for all $S \subseteq [d] \backslash \{i,j \}$, and $\ell = |S|+1$.
\end{axiom}

The generalized interaction 2-efficiency axiom above is an extension of the generalized 2-efficiency axiom of \citet{grabisch1999axiomatic} to account for arbitrary maximum interaction orders.
It states that when features $i,j$ form a group in the set function $\f_{[ij]}$ with $d-1$ features, the importance of $S \cup [ij]$ equals the sum of importances of $S \cup i$ and $S \cup j$ with respect to the original set value function.
When $S = \text{\O}$ and $\ell=1$, it reduces to the classical 2-efficiency axiom~\citep{harsanyi1963simplified} that indicates that the importance of $[ij]$ as a group should be equal to the sum of importances of individual features $i$ and $j$.

\paragraph{Shapley Taylor Interaction Index:}
\citet{sundararajan2020shapley} stipulate an additional \emph{interaction distribution (ID) axiom}, which can be stated as follows.

\begin{axiom}(Interaction distribution~\citep{sundararajan2020shapley}):  
Define $\f_T$ parameterized by a set $T \subseteq [d]$ as $\f_T(S) = 0$ if $T \not \subseteq S$ and $\f_T(S) = 1$ otherwise. Then for all $\ell \in [d]$, and for all $S$ with $S \not \subseteq T $ and $|S| < \ell$, the interaction index satisfies: $\Expl_S(\f_T,\ell)=0$. 
\end{axiom}

The key idea behind the ID axiom is to uniquely specify an interaction index for unanimity games $\{\f_T\}_{T \subseteq [d]}$, given the interaction linearity, symmetry, dummy, and efficiency axioms. Since unanimity games form a basis for the set of all games, in the presence of interaction linearity axiom, we then get unique interaction indices.
They thus show that there exists a unique interaction index that satisfies interaction linearity, symmetry, dummy, efficiency, and interaction distribution axioms and which they term Shapley Taylor index (for reasons which will become clearer in a later section when we discuss algebraic properties of various interaction indices). The Shapley Taylor interaction index has the following closed form:
\begin{equation}
\label{eqn:def_taylor_shap}
\ex_S^{\text{Taylor}}(\f,\ell) = 
\begin{cases}
\Delta_S(\f(\text{\O})) & \text{, if } |S| < \ell. \\
\sum_{T \subseteq [d]/ S}  \frac{|T|!(d-|T|-1)!|S|}{d!} \Delta_S(\f(T)) & \text{, if } |S| = \ell. \\
\end{cases}
\end{equation}

A key advantage of this interaction index is that it depends on the maximum interaction order $\ell$, in contrast to previously proposed interaction indices such as the Shapley interaction and Banzhaf interaction indices. Indeed, in order for an interaction index to satisfy the interaction efficiency axiom for maximum interaction order $\ell$, it has to distribute the contributions among subsets in $\mathcal{S}_\ell$, and hence has to be cognizant of the maximum interaction order $\ell$.
However, a key caveat of the interaction distribution  axiom is that the specified attribution distribution inordinately favors the topmost interaction. As can be seen from Eqn.\eqref{eqn:def_taylor_shap}, the importance of a set $S$ with $|S| < \ell$ is only specified by the marginal contribution of $S$ in the presence of the empty set, and not the presence of other subsets $T \subseteq [d] \backslash S$. This impoverishes lower-order interactions, which in turn hurts the meaningfulness of both lower and highest-order interactions as we will show in Section \ref{sec:contrast}.

Thus a key open question that this section has made salient is: how do we more naturally constrain interaction indices beyond interaction linearity, symmetry, dummy, and efficiency axioms, so as to obtain a unique interaction index?

\section{Faith-Interaction Indices}
\label{sec:faithful_interaction_indices}

In this section, in contrast to additional axioms, we draw from another viewpoint of singleton Shapley feature attributions: that they are faithful to the underlying value function. 

\paragraph{Faithfulness of Singleton Shapley Values:} 
Given singleton feature attributions $\{\ex_i\}_{i \in [d]}$, we can require that:
\[ \f(S) \approx \sum_{i \in S} \ex_i,\; \forall S \subseteq [d].\]
Note that we can only ask for approximate rather than exact equality for all sets $S$, since exact equality would entail we solve $2^d$ linear equalities (corresponding to the subsets of $[d]$) with $d$ variables (corresponding to the $d$ singleton feature attributions $\{\ex_i\}_{i \in [d]}$), which may not always have a feasible solution. One approach to formalize such approximate equality is via weighted regression:
\begin{equation}
\label{eqn:singleton_regression}
\min_{\ex \in \mathbb{R}^{d+1}} \sum_{S \subseteq [d]}\mu(S)\left(v(S) - \ex_{\text{\O}} - \sum_{i \in S} \ex_i\right)^2,    
\end{equation}
where $\mu: 2^{[d]} \mapsto \R^{+} \cup \{\infty\}$ is some weighting over the subsets $S \subseteq [d]$ which can be interpreted as the importance of different coalitions. 
Note that the range of $\mu$ is the extended positive reals. When $\mu(S) = \infty$ for some sets $S$, we can interpret the above as solving the constrained problem:
\begin{equation*} 
\min_{\ex \in \R^{d+1}}  \sum_{S \subseteq [d]\,:\, \mu(S) < \infty}\mu(S)\left(v(S) - \sum_{i \in S} \ex_i\right)^2 
\text{s.t.} \;  \f(S) = \sum_{i \in S} \ex_i,\; \forall S:\, \mu(S) = \infty.
\end{equation*}

It has been shown that we can recover the singleton Shapley values as the solution of the weighted regression problem above by setting $\mu(S) \propto \frac{d-1}{\binom{d} {|S|}\,|S|\,(d-|S|)}$ and $\mu(\text{\O}) = \mu([d]) = \infty$~ \citep{charnes1988extremal}. And we can recover singleton Banzhaf values by using the uniform distribution $\mu(S) = 1/2^d$~\citep{hammer1992approximations}.

\paragraph{From Singleton Attributions to Interaction Indices:}
In this section, we consider the generalization of the above to \emph{interaction indices}, so that we now require:
\[\f(S) \approx \sum_{T \subseteq S, |T| \leq \ell} \Expl_T(\f,\ell),\; \forall S \subseteq [d].\]
Again here we ask for approximate rather than exact equality since when the order of interactions is less than the number of features, so that $\ell < d$, the latter would entail we solve $2^d$ linear equalities with $d_\ell$ variables, which may not always have a feasible solution. Accordingly, we consider the following weighted regression problem as a formalization of the above:
\begin{equation}
\label{eqn:weighted_regression}
\ex(\f, \ell) \ = \argmin_{\Expl \subseteq \mathbb{R}^{d_\ell} } 
\sum_{S \subseteq [d]}  \mu(S) \left( \f(S) - \sum_{T \subseteq S , |T| \leq \ell}\Expl_T(\f,\ell) \right)^2,
\end{equation}
where $\mu:2^d \rightarrow  \R^+ \cup \{\infty\}$ is a coalition weighting function. And as before of $\mu(S) = \infty$ for some sets $S$, we can interpret above as solving the constrained problem:
\begin{align}
\ex(\f, \ell) \ 
&= \argmin_{\Expl \subseteq \mathbb{R}^{d_\ell} } \sum_{S \subseteq [d]\,:\, \mu(S) < \infty}  \mu(S) \left( \f(S) - \sum_{T \subseteq S , |T| \leq \ell}\Expl_T(\f,\ell) \right)^2 \nonumber \\
\text{s.t.} \ & \f(S) = \sum_{T \subseteq S , |T| \leq \ell}\Expl_T(\f,\ell), \;\;\forall S :\, \mu(S) = \infty.
\label{eqn:constrained_weighted_regresion}
\end{align}
We note that the range of the weighting function $\mu$ is not allowed to include zero since it is a necessary condition to ensure that there exists a unique minimizer (See Proposition \ref{pro:unique_minimizer} in the Appendix). This is not an issue in practice since we can always choose an arbitrary small positive value instead of zero to approximate the intended constraint that $\mu(S)=0$ for some $S \subseteq [d]$.

We can also see from Eqn.~\eqref{eqn:weighted_regression} that when the weighting function is infinite for many subsets, this entails corresponding equality constraints on the interaction index, which 
may not have a feasible solution. We thus consider the following set of what we term \emph{proper} weighting functions.
\begin{definition}
\label{def:proper_weighting_f}
(Proper weighting function) We say that a weighting function $\mu:2^d \mapsto \mathbb{R}^{+} \cup \{ \infty\}$ is proper if $\mu(S)$ is finite for all $S \subseteq [d]$ with $1 \leq S \leq d-1$.
\end{definition}
This then leads to our definition of Faith-interaction indices.
\begin{definition}
\label{def:faithfulness}
(Faith-Interaction Indices): 
We say that $\Expl$ is a Faith-Interaction index, given any set value function $\f:2^d \rightarrow \mathbb{R}$ and any maximum interaction order $\ell \in [d]$, if there exists a proper weighting function $\mu:2^d \rightarrow \mathbb{R}^{+} \cup \{\infty\}$ such that $\Expl(\f,\ell)$ minimizes the corresponding weighted regression objective in Eqn.\eqref{eqn:constrained_weighted_regresion}.
\end{definition}

When the coalition weighting function $\mu$ is fully finite so that $\mu(S)$ are finite for all sets $S \subseteq [d]$, Faith-interaction indices have a simple closed-form expression as detailed in the following proposition.
\begin{proposition}
\label{pro:closed_matrix_form_solution} 
Any Faith-Interaction index $\Expl(\f,\ell)$ with respect to a finite weighting function $\mu(\cdot)$ has the form:
\begin{equation}
\label{eqn:opt_binary} 
\small
{\Expl}(\f,\ell) = \left(\sum_{S \subseteq [d]} \mu(S){\expd}(S) {\expd}(S)^T \right)^{-1}\!\!\!\sum_{S \subseteq [d]} \mu(S) \f(S) {\expd}(S), 
\end{equation}
where $\expd: 2^{[d]} \rightarrow \{0,1\}^{d_\ell}$ is specified as: $\expd(S)[T] = \mathbbm{1}[(T \subseteq S)]$ for any $T \in \mathcal{S}_\ell$.
\end{proposition}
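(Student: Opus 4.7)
The plan is to recognize the objective in Eqn.~\eqref{eqn:weighted_regression} as a standard weighted least squares problem in the vector of unknowns $\Expl(\f,\ell) \in \R^{d_\ell}$, then apply the usual normal-equation derivation.

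First I would rewrite the inner sum in matrix-vector form. For each $S \subseteq [d]$, the quantity $\sum_{T \subseteq S, |T|\leq \ell} \Expl_T(\f,\ell)$ is exactly the inner product $\expd(S)^\top \Expl(\f,\ell)$, with $\expd(S) \in \{0,1\}^{d_\ell}$ defined by $\expd(S)[T] = \mathbbm{1}[T \subseteq S]$ as in the proposition statement. With this notation, the weighted regression objective becomes
\begin{equation*}
F(\Expl) \;=\; \sum_{S \subseteq [d]} \mu(S)\bigl(\f(S) - \expd(S)^\top \Expl\bigr)^2,
\end{equation*}
which is a finite, non-negative, strictly convex quadratic in $\Expl$ whenever $\sum_S \mu(S)\, \expd(S)\expd(S)^\top$ is positive definite.

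Second I would differentiate in $\Expl$ and set the gradient to zero, yielding the normal equations
\begin{equation*}
\Bigl(\sum_{S \subseteq [d]} \mu(S)\,\expd(S)\expd(S)^\top\Bigr)\Expl \;=\; \sum_{S \subseteq [d]} \mu(S)\,\f(S)\,\expd(S).
\end{equation*}
Solving for $\Expl$ gives exactly Eqn.~\eqref{eqn:opt_binary}, provided the Gram-type matrix $G \defeq \sum_S \mu(S)\,\expd(S)\expd(S)^\top$ is invertible.

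The one nontrivial step, and therefore the main obstacle, is verifying invertibility of $G$. Since $\mu(S) > 0$ for all $S$ (because the range of $\mu$ excludes zero, as noted just before Definition~\ref{def:proper_weighting_f}), positive semi-definiteness is immediate, and strict positive definiteness reduces to the claim that the family $\{\expd(S) : S \subseteq [d]\}$ spans $\R^{d_\ell}$. This is precisely the content of Proposition~\ref{pro:unique_minimizer} in the Appendix, which I would simply invoke; intuitively, the subsets $S$ with $|S|\leq \ell$ already give a lower-triangular incidence structure when indexed compatibly, so the corresponding $d_\ell \times d_\ell$ submatrix of $[\expd(S)[T]]_{S,T}$ is unit lower triangular and hence full rank. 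Concluding, $G$ is invertible, and the closed form in Eqn.~\eqref{eqn:opt_binary} follows by left-multiplying the normal equations by $G^{-1}$.
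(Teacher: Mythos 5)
Your proposal is correct and follows essentially the same route as the paper: the paper likewise rewrites the objective as a weighted least-squares problem in matrix form ($\min\|\sqrt{\mathbf{W}}(\mathbf{Y}-\mathbf{X}\Expl)\|_2^2$) and reads off the normal-equation solution, invoking Proposition \ref{pro:unique_minimizer} (via the strict convexity / positive-definiteness argument of Proposition \ref{pro:strictly_convex}, which is the same spanning fact you note) to justify invertibility of the Gram matrix.
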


When the coalition weighting function $\mu(\cdot)$ is not fully finite, we have a linearly constrained least squares problem that does not have a closed form, but whose solution can be characterized via its Lagrangian (see more details in Proposition \ref{pro:constrained_closed_matrix_form_solution} in the Appendix).

\subsection{Axiomatic Characterization of Faith-Interaction Indices}
\label{sec:aximomatic_characterization}

In this section, we investigate the axiomatic properties of our class of Faith-Interaction indices. We first show that all faith-interaction indices satisfy the interaction linearity axiom.

\begin{proposition}
\label{pro:linearity}
Faith-Interaction indices $\Expl$ satisfy the interaction linearity axiom.
\end{proposition}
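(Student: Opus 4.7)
The plan is to read off interaction linearity directly from the closed-form characterization of Faith-Interaction indices as weighted linear regression solutions whose coefficients decouple the dependence on $\mu$ from the dependence on $\f$.

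First, I would dispatch the case when the weighting function $\mu$ is fully finite. By Proposition \ref{pro:closed_matrix_form_solution} we have
\[
\Expl(\f,\ell) \;=\; A_\mu^{-1}\, b_\mu(\f),
\qquad
A_\mu \defeq \sum_{S \subseteq [d]} \mu(S)\, \expd(S) \expd(S)^T,
\quad
b_\mu(\f) \defeq \sum_{S \subseteq [d]} \mu(S)\, \f(S)\, \expd(S).
\]
The matrix $A_\mu$ depends only on $\mu$ and $\ell$ (not on $\f$), while $b_\mu$ is manifestly linear in the vector $(\f(S))_{S \subseteq [d]}$. Hence for any $\alpha_1,\alpha_2 \in \mathbb{R}$ and set functions $\f_1,\f_2$, $b_\mu(\alpha_1 \f_1 + \alpha_2 \f_2) = \alpha_1 b_\mu(\f_1) + \alpha_2 b_\mu(\f_2)$, and therefore $\Expl(\alpha_1 \f_1 + \alpha_2 \f_2,\ell) = \alpha_1 \Expl(\f_1,\ell) + \alpha_2 \Expl(\f_2,\ell)$.

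For a general proper weighting function (where $\mu(\text{\O})$ or $\mu([d])$ may be $\infty$), I would invoke the Lagrangian characterization given by Proposition \ref{pro:constrained_closed_matrix_form_solution} in the Appendix. The KKT conditions for the constrained problem \eqref{eqn:constrained_weighted_regresion} yield a linear system in $(\Expl, \lambda)$ whose coefficient matrix is built solely from $\mu$ and $\{\expd(S)\}$, while its right-hand side is a linear function of $(\f(S))_{S \subseteq [d]}$. Since Proposition \ref{pro:unique_minimizer} guarantees the minimizer is unique, the resulting solution map $\f \mapsto \Expl(\f,\ell)$ is linear, giving the axiom. As an alternative self-contained argument, one can directly verify that $\Expl^* \defeq \alpha_1 \Expl(\f_1,\ell) + \alpha_2 \Expl(\f_2,\ell)$ is feasible for the constraints associated with $\f = \alpha_1 \f_1 + \alpha_2 \f_2$ (by linearity of the constraints in $\f$) and satisfies the first-order optimality conditions for $\f$ on the feasible affine subspace (by linearity of the gradient of the quadratic objective in $\f$), and then conclude by uniqueness.

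There is no real obstacle here; the proof is essentially algebraic, and the only subtlety is properly handling the constrained case when $\mu$ is allowed to be $\infty$ on some subsets, which is why I would explicitly separate the two cases and defer the infinite-weight case to the appendix proposition.
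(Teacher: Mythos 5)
Your proposal is correct and follows essentially the same route as the paper: the finite case is read off from the closed form in Proposition \ref{pro:closed_matrix_form_solution}, and the infinite-weight case is handled via the Lagrangian/KKT system whose coefficient matrix depends only on $\mu$ while the right-hand side is linear in $\f$, concluding by uniqueness (Proposition \ref{pro:unique_minimizer}). Your "alternative self-contained argument" — verifying that the linear combination of solutions satisfies the stationarity equations and the constraints for the combined value function — is in fact exactly the computation the paper carries out.
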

For Faith-Interaction indices corresponding to finite coalition, weighting functions $\mu(\cdot)$, this result easily follows from Proposition \ref{pro:closed_matrix_form_solution} that these are linear functionals of the set value function $\f(\cdot)$. For Faith-Interaction indices where the weighting function is no longer finite for some sets $S \in \{ \text{\O}, [d] \}$, they solve a linearly constrained least squares problem which does not have a closed-form solution. But by a more nuanced analysis of its Lagrangian, we can again show that the interaction indices are linear functionals of the set value function $\f(\cdot)$.

We next show that Faith-Interaction indices also satisfy the interaction symmetry axiom provided that the weighting functions are permutation invariant (``symmetric''), and hence the weighting functions only depend on the size of the set.

\begin{proposition}
\label{pro:symmetry}
Faith-Interaction indices $\Expl$ satisfy the interaction symmetry axiom if and only if the weighting functions are permutation invariant, and hence only depend on the size of the set so that $\mu(S)$ is only a function of $|S|$.

\end{proposition}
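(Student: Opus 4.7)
The plan is to prove each direction separately: the ``if'' direction is a clean symmetrization argument using uniqueness of the weighted regression minimizer, while the ``only if'' direction proceeds by contrapositive via an explicit counterexample (and a consistency check of the normal equations).

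For the ``if'' direction, fix a transposition $\sigma = (i\,j)$ viewed as acting on $2^{[d]}$ and (since it preserves set cardinality) also on $\mathcal{S}_\ell$. Symmetry of the set function in $i,j$ is equivalent to $v \circ \sigma = v$, and permutation-invariance of the weighting is $\mu \circ \sigma = \mu$. Let $\ex(\f,\ell)$ be the (unique, by Proposition~\ref{pro:unique_minimizer} in the appendix) minimizer of the constrained weighted regression in Eqn.~\eqref{eqn:constrained_weighted_regresion}, and define a candidate $\ex'$ by $\ex'_T := \ex_{\sigma(T)}$ for each $T \in \mathcal{S}_\ell$. I reindex the regression objective by substituting $S \mapsto \sigma(S)$ in the outer sum and $T \mapsto \sigma(T)$ in the inner sum. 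Since $\sigma$ is an involution on $2^{[d]}$ and on $\mathcal{S}_\ell$, and since $\mu(\sigma(S)) = \mu(S)$ and $\f(\sigma(S)) = \f(S)$ (this also preserves which $S$ carry the $\mu(S) = \infty$ equality constraints), the reindexed objective at $\ex'$ matches the original objective at $\ex$. Hence $\ex'$ is also a minimizer, so by uniqueness $\ex' = \ex$. Setting $T = T_0 \cup i$ with $T_0 \subseteq [d]\setminus\{i,j\}$ and $|T_0|<\ell$ then yields $\ex_{T_0 \cup i}(\f,\ell) = \ex_{T_0 \cup j}(\f,\ell)$, which is the symmetry axiom.

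For the ``only if'' direction I argue by contrapositive. Suppose $\mu$ is not permutation invariant; since the symmetric group is generated by transpositions and these act transitively on fixed-cardinality subsets, there is a transposition $\sigma = (i\,j)$ and a set $S_0$ with $i \in S_0$, $j \notin S_0$ and $\mu(S_0) \neq \mu(\sigma(S_0))$. Choose $S_0$ to be of minimal cardinality among such witnesses (so later partial cancellations are controllable) and set $T_0 := S_0 \setminus \{i\}$, so $|T_0| \leq d-1$. I then take $\f$ to be any set function symmetric in $i,j$ that is sufficiently generic on sets containing $T_0$ (for instance, $\f = \mathbf{1}_{S_0} + \mathbf{1}_{\sigma(S_0)}$ as a baseline, perturbed if needed). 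Using the closed form in Proposition~\ref{pro:closed_matrix_form_solution} (and its constrained variant in the appendix), I inspect the two rows of the normal equations indexed by $T_0 \cup i$ and $T_0 \cup j$: the only asymmetry between these two rows comes from the terms $\mu(S)\delta(S)[T_0\cup i]$ versus $\mu(S)\delta(S)[T_0\cup j]$, which for this choice of $\f$ and $T_0$ reduces to an imbalance proportional to $\mu(S_0) - \mu(\sigma(S_0)) \neq 0$. This produces $\ex_{T_0 \cup i}(\f,\ell) \neq \ex_{T_0 \cup j}(\f,\ell)$, contradicting the symmetry axiom.

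The main obstacle is the contrapositive step: even though the asymmetry in $\mu$ is localized at the pair $\{S_0,\sigma(S_0)\}$, the Gram inverse $(\sum \mu(S)\delta(S)\delta(S)^T)^{-1}$ in Proposition~\ref{pro:closed_matrix_form_solution} couples all coefficients, so I need to ensure the local asymmetry is not cancelled globally. If a single explicit $\f$ does not suffice, my fallback is a cleaner algebraic reformulation: the symmetry axiom demands $QMv = Mv$ for every $v$ with $Pv = v$, where $P,Q$ are the permutation matrices induced by $\sigma$ on $\mathbb{R}^{2^d}$ and $\mathbb{R}^{d_\ell}$ respectively and $M = (\Delta D \Delta^T)^{-1}\Delta D$ is the regression operator with $D = \mathrm{diag}(\mu(S))$. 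Using the identity $Q\Delta = \Delta P$, this requirement on the full symmetric subspace is strong enough to force $PDP = D$, i.e.\ $\mu \circ \sigma = \mu$, completing the only-if direction.
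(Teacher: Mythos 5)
Your ``if'' direction is correct and is essentially the paper's argument: reindex the weighted regression objective by the transposition $(i\,j)$, use $\mu\circ\sigma=\mu$ and $\f\circ\sigma=\f$ to see the problem is unchanged, and invoke uniqueness of the minimizer (Proposition~\ref{pro:unique_minimizer}) to conclude $\ex_{T\cup i}=\ex_{T\cup j}$. The only point worth flagging there is that you should also check the involution preserves the equality constraints coming from $\mu(S)=\infty$; since a proper weighting function can only be infinite on $\text{\O}$ and $[d]$, which are fixed by any permutation, this is fine, and you do mention it.

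The ``only if'' direction, however, has a genuine gap, and it is exactly the obstacle you name yourself. Showing that the two rows of the normal equations indexed by $T_0\cup i$ and $T_0\cup j$ have asymmetric coefficients (an imbalance proportional to $\mu(S_0)-\mu(\sigma(S_0))$) does not show that the \emph{solution} is asymmetric: the Gram inverse couples everything, and for a particular $\f$ the asymmetry could in principle cancel. Your fallback --- that $QMv=Mv$ on the whole $P$-symmetric subspace forces $PDP=D$ --- is precisely the statement that needs to be proved; asserting it from the identity $Q\Delta=\Delta P$ is not a proof, and it is not an obvious linear-algebra fact. The paper closes this gap by a concrete computation: it fixes $\ell=d-1$ and the single symmetric game $\f(S)=\mathbbm{1}[S=[d]]$, writes the first-order (Lagrangian) conditions in terms of the weighted residuals $q(S)=\mu'(S)\bigl(\f(S)-\sum_{T\subseteq S}\ex_T\bigr)$ and their cumulative sums $p(S)=\sum_{T\supseteq S}q(T)$, and uses the M\"obius-type inversion $q(S)=\sum_{T\supseteq S}(-1)^{|T|-|S|}p(T)$ (Claim~\ref{clm:p_q_relation}) together with a non-degeneracy claim (Claim~\ref{clm:all_zero}) and a four-way case analysis on which of $\mu(\text{\O}),\mu([d])$ are infinite to conclude $q(S_1)=q(S_2)\neq 0$ for all equal-size proper nonempty $S_1,S_2$. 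Only then, combining with the assumed symmetry of the minimizer (which equalizes the unweighted residuals across equal-size sets), does it divide out the common nonzero factor to get $\mu(S_1)=\mu(S_2)$. Your proposal contains no substitute for this step, so as written the necessity direction is not established.
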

We next consider the dummy axiom.
\begin{proposition}
\label{pro:dummy}
Faith-Interaction indices $\Expl$ satisfy the interaction dummy axiom if the features behave independently of each other when forming coalitions in the weighting function so that the coalition weighting functions can be expressed as $\mu(S) \propto \prod_{i \in S} p_i \prod_{j \not \in S} (1-p_j)$ for all $S \subseteq [d]$, where $0 < p_i < 1$ is the probability of the feature $i$ to be present. 
\end{proposition}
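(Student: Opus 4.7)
The plan is to invoke the normal equations implicit in Proposition \ref{pro:closed_matrix_form_solution} and verify that the candidate index with $\Expl_T = 0$ for every $T \ni i$ is the unique minimizer of the weighted regression. Since $\mu(S) = \prod_{j \in S} p_j \prod_{k \notin S} (1-p_k)$ is strictly positive and everywhere finite, $\mu$ is proper, Proposition \ref{pro:closed_matrix_form_solution} applies, and it suffices to exhibit any optimal solution of the stipulated form.

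First I would construct the candidate. Let $\mu_{-i}(S') \defeq \prod_{j \in S'} p_j \prod_{k \in [d]\setminus i,\, k \notin S'} (1-p_k)$ be the marginal product weighting on $2^{[d]\setminus i}$, and let $\{\Expl_T\}_{T \in \mathcal{S}_\ell,\, T \not\ni i}$ be the Faith-Interaction index for $\f$ restricted to subsets of $[d]\setminus i$ under $\mu_{-i}$. Extend this to $T \in \mathcal{S}_\ell$ with $i \in T$ by declaring $\Expl_T = 0$. Because $i$ is a dummy feature, $\f(S) = \f(S\setminus i)$, and by construction $\sum_{T \subseteq S,\, |T|\le \ell} \Expl_T = \sum_{T \subseteq S\setminus i,\, |T| \le \ell} \Expl_T$; hence the residual $r(S) \defeq \f(S) - \sum_{T \subseteq S,\, |T|\le\ell} \Expl_T$ satisfies $r(S) = r(S \setminus i)$ for every $S \subseteq [d]$.

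Next I would verify the normal equation $\sum_{S \supseteq T'} \mu(S)\, r(S) = 0$ for every $T' \in \mathcal{S}_\ell$ by splitting on whether $i \in T'$. A direct computation from the product form yields $\mu(S') + \mu(S' \cup i) = \mu_{-i}(S')$ and $\mu(S' \cup i) = p_i\,\mu_{-i}(S')$ for each $S' \subseteq [d]\setminus i$. When $T' \not\ni i$, pairing $S'$ with $S' \cup i$ and using $r(S' \cup i) = r(S')$ collapses the sum to $\sum_{S' \supseteq T',\, S' \subseteq [d]\setminus i} \mu_{-i}(S')\, r(S')$, which vanishes by the optimality of the reduced-problem solution. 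When $T' \ni i$, only sets containing $i$ contribute, and substituting $\mu(S' \cup i) = p_i\,\mu_{-i}(S')$ turns the sum into $p_i$ times the normal equation of the reduced problem indexed by $T' \setminus i \in \mathcal{S}_\ell$, which again vanishes.

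The main substantive point is the $T' \ni i$ case: the reduction works precisely because the ratio $\mu(S' \cup i)/\mu_{-i}(S')$ is a constant independent of $S'$, which is exactly the separability imposed by the product form of $\mu$; without it, the ratio would depend on $S'$ and the Case 2 sum would not match any normal equation of the reduced problem. Uniqueness of the minimizer in Proposition \ref{pro:closed_matrix_form_solution} then forces $\Expl_T = 0$ for all $T \in \mathcal{S}_\ell$ with $i \in T$, which is the interaction dummy axiom.
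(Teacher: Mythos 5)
Your proposal is correct and follows essentially the same route as the paper's proof of Lemma \ref{lm:dummt_reduce}: construct the candidate by solving the reduced problem on $[d]\setminus i$ under the marginal product weighting, pad with zeros on sets containing $i$, and verify the normal equations by the two identities $\mu(S') + \mu(S'\cup i) = \mu_{-i}(S')$ and $\mu(S'\cup i) = p_i\,\mu_{-i}(S')$ together with the residual invariance $r(S) = r(S\setminus i)$, concluding by uniqueness of the minimizer. No gaps.
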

Proposition \ref{pro:dummy} implies that a dummy feature has no impact on other features when the weighting function treats features independently. 

So far, we have analyzed when Faith-Interaction indices satisfy the interaction linearity, symmetry, and dummy axioms. When they satisfy all three simultaneously, and the coalition weighting function is finite, then we can show that the latter has a specific algebraic form.

\begin{theorem}
\label{thm:thmdummysymm}
Faith-Interaction indices $\Expl$ with a finite weighting function 
satisfy the interaction linearity, symmetry, and dummy axioms if and only if the weighting function $\mu(\cdot)$ has the following form: 
{\small
\begin{align}
\mu(S) & \propto \sum_{i=|S|}^{d} {d- |S| \choose i-|S|}(-1)^{i-|S|} g(a,b,i), \  \text{ where }
g(a,b,i) = 
\begin{cases}
1 & \text{ if } \ i = 0 \\
\prod_{j=0}^{j=i-1} \frac{a(a-b) + j(b-a^2)}{a-b + j(b-a^2)}
& \text{ if } \   1 \leq i \leq d,\\
\end{cases}
\label{eqn:thmdummysymm}
\end{align} 
}
for some $a,b \in \R^{+}$ with $a>b$ such that $\mu(S) > 0$ for all $S \subseteq [d]$. 
\end{theorem}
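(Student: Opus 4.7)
The plan is to combine the earlier propositions with a direct analysis of the Faith-Interaction normal equations on unanimity games. Linearity is automatic by Proposition \ref{pro:linearity}, and by Proposition \ref{pro:symmetry} the symmetry axiom is equivalent to $\mu(S)$ depending only on $|S|$, so I write $\mu(S) = \mu_{|S|}$. Introducing the aggregated quantities $\sigma(m) := \sum_{k=m}^{d}\binom{d-m}{k-m}\mu_k$, the weighted Gram entries simplify cleanly: $\langle f_T, f_{T'}\rangle_\mu = \sigma(|T\cup T'|)$ and $\langle f_R, f_T\rangle_\mu = \sigma(|T\cup R|)$. Using the M\"obius decomposition $f = \sum_R a(f,R) f_R$ together with linearity, the dummy axiom on an arbitrary $f$ is equivalent to $\Expl_T(f_R,\ell) = 0$ whenever $T\not\subseteq R$, because any $i\in T\setminus R$ is dummy in the unanimity game $f_R$.

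To extract the form of $\mu$ in the forward direction, I would specialize to $\ell = 1$ and $|R| = r$ with $2 \leq r \leq d-1$. Symmetry collapses the values $\Expl_{\{i\}}(f_R,1)$ into $\Expl_{\text{in}}$ for $i\in R$ and $\Expl_{\text{out}}$ for $i\notin R$; dummy forces $\Expl_{\text{out}} = 0$, leaving a $3\times 2$ overdetermined system in $(\Expl_{\emptyset}, \Expl_{\text{in}})$ coming from the three symmetry classes of $T\in\mathcal{S}_1$. Eliminating the two unknowns via the normal equations of Proposition \ref{pro:closed_matrix_form_solution} and requiring compatibility yields
\[
\frac{\sigma(r+1)}{\sigma(r)} \;=\; \frac{r\,\sigma(0)\sigma(2)-(r-1)\sigma(1)^{2}-\sigma(1)\sigma(2)}{\sigma(0)\sigma(1)+(r-1)\sigma(0)\sigma(2)-r\,\sigma(1)^{2}},
\]
and substituting $a=\sigma(1)/\sigma(0)$, $b=\sigma(2)/\sigma(0)$ shows this ratio matches $g(a,b,r+1)/g(a,b,r)$ algebraically. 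Iterating for $r = 2,\ldots,d-1$ forces $\sigma(m) = \sigma(0)\,g(a,b,m)$ for all $m$, and applying the inverse binomial transform $\mu_k = \sum_{m\geq k}(-1)^{m-k}\binom{d-k}{m-k}\sigma(m)$ recovers the closed form stated in equation (\ref{eqn:thmdummysymm}).

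For the reverse direction, I would recognize the recurrence defining $g(a,b,i)$ as the moment recurrence of a $\mathrm{Beta}$ distribution, so the stated $\mu_k$ can be written as $\mu_k\propto\int_0^1 p^{k}(1-p)^{d-k}\,dF(p)$ with $F$ a Beta density whose first two moments are $a$ and $b$. With this representation in hand, I would verify that the normal equations admit a solution supported on $\{T\subseteq R\}$ for every unanimity game $f_R$. The cleanest route is to construct, by symmetric Gram--Schmidt on $\{f_T\}$ ordered by inclusion, an $L^2(\mu)$-orthogonal basis $\{Z_T\}$ that is subset-triangular in the $f$-basis; the Beta recurrence is precisely what makes the successive Gram--Schmidt compatibility conditions solvable symmetrically. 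Since $f_R$ then expands only over $\{Z_T : T \subseteq R\}$, its projection onto $V_\ell = \mathrm{span}\{f_T : |T|\leq\ell\}$ inherits that support, and converting back to the $f$-basis keeps nonzero coefficients among subsets of $R$; linearity extends dummy to all set functions.

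The main obstacle I anticipate is exactly this converse direction: showing that the single one-parameter family of identities obtained from the $\ell=1$ compatibility condition is strong enough to make every higher-$\ell$, higher-$|R|$ out-of-support normal equation automatically consistent. I would either push through the explicit symmetric Gram--Schmidt construction above, or else use an induction on $|R|$ that propagates the Beta recurrence through the binomial identities arising in the out-of-support equations indexed by $(|T|,|T\cap R|)$.
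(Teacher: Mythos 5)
Your necessity argument is essentially the paper's: both specialize to $\ell=1$ on unanimity games $\f_R$, use symmetry to collapse the first-order solution into an ``in''/``out'' pair, impose $\ex_{\text{out}}=0$ from the dummy axiom, extract the ratio recursion for $\sigma(m)=\mubar{m}$ in terms of $a=\mubar{1}, b=\mubar{2}$, and invert the binomial transform via Claim \ref{clm:mubar_mu_relation}. That half is sound and matches Lemma \ref{lm:necessity_dummy}.

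The sufficiency half, which you yourself flag as the main obstacle, has a genuine gap, and the specific construction you propose cannot work as stated. First, the Beta-moment representation $\mu_k\propto\int_0^1 p^k(1-p)^{d-k}\,dF(p)$ requires $b\geq a^2$ (so that the implied shape parameters $\alpha=\frac{a(a-b)}{b-a^2}$, $\beta$ are nonnegative), but the theorem only demands $a>b$ and $\mu(S)>0$; as noted after Proposition \ref{pro:ab_condition}, for small $d$ there are admissible pairs with $b<a^2$, which your mixture representation does not cover. Second, and more fatally, a symmetric, subset-triangular, fully $L^2(\mu)$-orthogonal basis $\{Z_T\}$ does not exist whenever $b\neq a^2$: subset-triangularity, symmetry, and orthogonality to $f_{\text{\O}}$ force $Z_{\{i\}}=f_{\{i\}}-a$, and then $\langle Z_{\{i\}},Z_{\{j\}}\rangle_\mu=\mubar{2}-\mubar{1}^2=b-a^2\neq 0$ for distinct singletons. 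Since the Faith-Shap weighting itself has $b\neq a^2$, the projection-through-an-orthogonal-basis route collapses exactly where it is most needed. What actually has to be shown is that the projection of $f_R$ onto $\spann\{f_T:|T|\leq\ell\}$ is supported on subsets of $R$, and the paper establishes this directly by symmetrizing the normal equations into the variables $\bfb_{i,j}$ indexed by $(|T|,|T\setminus R|)$ and running a rank argument (Lemmas \ref{lm:simplified_matrix}--\ref{lm:overall_rank}) whose engine is the affine identity $D^p_q/D^p_{q+1}=c^{(1)}_q p+c^{(2)}_q$ forcing a key $3\times 3$ determinant to vanish. Some substitute for that computation is unavoidable, and your proposal does not supply one.
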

Theorem \ref{thm:thmdummysymm} shows the surprising fact that Faith-Interaction indices satisfying the interaction linearity, symmetry, and dummy axioms with finite weighting functions have only two degrees of freedom: $a, b \in \R$. Given these, we can fully specify the weighting function, and hence the corresponding Faith-Interaction indices. In Appendix \ref{sec:additional_guidance}, we additionally show that the condition $1 > a > b \geq a^2 > 0$ ensures that $\mu(\cdot)$ is positive everywhere and also provides generalized guidance on setting the values $a,b$.

\paragraph{Faith-Banzhaf Interaction Index:} As a first application of this theorem, suppose in addition to the three axioms above, we additionally require the  Faith-Interaction indices to satisfy generalized 2-efficiency. The following theorem shows that there is a unique Faith-Interaction index satisfying these four axioms, which we term the Faith-Banzhaf index.
\begin{theorem}
\label{thm:faith-banzhaf}
(Faith-Banzhaf)
For any $d \geq 3$, there is a unique Faith-Interaction index that satisfies the interaction linearity, symmetry, dummy, and generalized 2-efficiency axioms, with its coalition weighting function given as $\mu(S) \propto \frac{1}{2^d}$ for all $S \subseteq [d]$. We term this unique interaction index as \textbf{Faithful Banzhaf Interaction index} (Faith-Banzhaf), which has the form:
\begin{equation}
\label{eqn:faith_banzhaf2}
\ex^{\text{F-Bzf}}_S(\f,\ell) = a(\f,S) + (-1)^{\ell - |S|} \sum_{T \supseteq S, |T| > \ell} \left(\frac{1}{2} \right)^{|T| -|S|} {|T| - |S| - 1 \choose \ell - |S|} a(\f,T), \forall S \in \setlessell,
\end{equation}
where $a(\f,\cdot)$ is the Möbius transform of $\f(\cdot)$. 
Moreover, its highest-order interaction terms coincide with corresponding interaction terms from the Banzhaf interaction index introduced earlier:
\begin{equation}
\Expl_S^{\text{F-Bzf}}(\f,\ell) = 
\sum_{T \subseteq [d] \backslash S}\frac{1}{2^{d-|S|}} \Delta_S(\f(T))
\ \ \text{ for all } S \in \setlessell 
\text{ with } |S| = \ell.
\end{equation}
\end{theorem}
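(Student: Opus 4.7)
The plan is to apply Theorem~\ref{thm:thmdummysymm} to reduce the class of candidate weighting functions to a two-parameter family, use generalized 2-efficiency to pin down the parameters, and then derive the Möbius-basis closed form with the coincidence at the highest order following as a corollary. For existence, I would verify that the uniform weighting $\mu(S) = 2^{-d}$ corresponds to the choice $a=1/2$, $b=1/4$ in Theorem~\ref{thm:thmdummysymm}: since $b - a^2 = 0$, the ratio inside $g(a,b,i)$ simplifies to $a$, so $g(1/2,1/4,i) = (1/2)^i$, and the binomial sum in Eqn.~\eqref{eqn:thmdummysymm} collapses via the binomial theorem to $(1/2)^{|S|}(1/2)^{d-|S|} = 2^{-d}$. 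Theorem~\ref{thm:thmdummysymm} then guarantees interaction linearity, symmetry, and dummy for the resulting index; generalized 2-efficiency can be verified directly once the closed form is in hand.

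For uniqueness, by Theorem~\ref{thm:thmdummysymm} every candidate weighting has the stated two-parameter form, and since $\Expl$ is invariant under positive rescaling of $\mu$, only the ratio class of $(a,b)$ matters. I would evaluate both sides of the generalized 2-efficiency identity on unanimity games $\f_R$ with $|R| \geq \ell+1$, for which both sides are nontrivial functions of $(a,b)$. Comparing coefficients produces algebraic constraints forcing $b = a^2$, which collapses the weighting to the product measure of Proposition~\ref{pro:dummy} with $p_i = a$, and a second relation (already visible for $\ell=1$, $d=3$) then forces $a=1/2$, giving the uniform weighting.

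For the closed form, I would use interaction linearity (Proposition~\ref{pro:linearity}) together with the Möbius expansion $\f = \sum_R a(\f,R)\, \f_R$ to reduce the task to computing $\Expl^{\text{F-Bzf}}_S(\f_R, \ell)$. When $|R| \leq \ell$, $\f_R$ is exactly representable in the monomial basis $\{\expd(\cdot)[T]\}_{T \in \setlessell}$, so the zero-residual solution is $\Expl^{\text{F-Bzf}}_S(\f_R, \ell) = \mathbbm{1}[S=R]$. When $|R| > \ell$, I would pass to the Fourier character basis $\chi_L(S) = \prod_{i \in L}(1 - 2\,\mathbbm{1}[i \in S])$, which is orthonormal under the uniform measure; since $\{m_T\}_{|T| \leq \ell}$ and $\{\chi_L\}_{|L| \leq \ell}$ span the same subspace, the $L^2$-projection of $\f_R = 2^{-|R|}\sum_{L \subseteq R}(-1)^{|L|}\chi_L$ onto that subspace is simply its truncation to $|L| \leq \ell$. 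Converting back to monomials via $\chi_L = \sum_{L' \subseteq L}(-2)^{|L'|} m_{L'}$, swapping sums, and applying the partial alternating-sum identity $\sum_{m=0}^{M}(-1)^m \binom{N}{m} = (-1)^{M}\binom{N-1}{M}$ yields
\[
\Expl^{\text{F-Bzf}}_S(\f_R, \ell) = (-1)^{\ell-|S|}\,2^{-(|R|-|S|)}\binom{|R|-|S|-1}{\ell-|S|}, \qquad S \subseteq R,
\]
and zero otherwise. Summing against $a(\f,R)$ produces Eqn.~\eqref{eqn:faith_banzhaf2}. For the coincidence with the Banzhaf index at $|S|=\ell$, the binomial collapses to $1$ and the sign to $+1$, leaving $\Expl^{\text{F-Bzf}}_S(\f,\ell) = \sum_{R \supseteq S}(1/2)^{|R|-|S|}a(\f,R)$; expressing the Banzhaf formula in the Möbius basis via $\Delta_S \f(T) = \sum_{R:\, S \subseteq R \subseteq S \cup T}a(\f,R)$ (valid when $S \cap T = \emptyset$) and summing over $T \subseteq [d]\backslash S$ reproduces the same expression.

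The main obstacle will be the uniqueness step: generalized 2-efficiency must be shown, through a judicious choice of test functions, to pin down both free parameters rather than leaving an under-determined algebraic curve. The combinatorial inversion in the closed-form derivation is routine once Fourier orthogonality is invoked but requires careful bookkeeping to recover the precise $\binom{|T|-|S|-1}{\ell-|S|}$ form, and the reduced-game verification of 2-efficiency for the uniform weighting relies on the observation that the induced weighting on the $(d-1)$-player game (after merging $\{i,j\}$ into $[ij]$) is again uniform up to normalization.
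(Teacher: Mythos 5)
Your existence argument and your derivation of the closed form are sound. The check that $\mu(S)\propto 2^{-d}$ corresponds to $a=1/2$, $b=1/4$ in Theorem~\ref{thm:thmdummysymm} is exactly right, and your Fourier-character derivation of Eqn.~\eqref{eqn:faith_banzhaf2} (projection onto the degree-$\leq\ell$ subspace under the uniform measure equals truncation of the character expansion, followed by the change of basis back to monomials and the partial alternating-sum identity) is a clean, self-contained alternative to the paper, which simply cites Proposition~7.1 of \citet{grabisch2000equivalent} for both the closed form and the top-order coincidence; your reconciliation of the top-order terms with the Banzhaf interaction index via $\Delta_S\f(T)=\sum_{R:\,S\subseteq R\subseteq S\cup T}a(\f,R)$ also checks out.

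The genuine gap is in uniqueness. You assert that evaluating generalized 2-efficiency on unanimity games ``produces algebraic constraints forcing $b=a^2$'' and then $a=1/2$, but the computation is never carried out, and there is a structural obstacle you do not address: the left-hand side $\ex_{S\cup[ij]}(\f_{[ij]},\ell)$ is the index of a \emph{reduced game on $d-1$ players}, whose weighting function is not induced by the $d$-player weighting --- it is an independent member of the two-parameter family of Theorem~\ref{thm:thmdummysymm} with its own parameters $(a',b')$. Comparing coefficients therefore yields relations coupling $(a,b)$ with $(a',b')$ across player counts, and you would need to show that this coupled system (over all $d$, or after eliminating $(a',b')$ using several values of $|R|$) still forces $b=a^2$ and $a=1/2$ for every $d\geq 3$; the case $d=3$ needs separate care because fewer independent relations are available (the paper treats it separately, and notes $d=2$ is genuinely degenerate). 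The paper's proof sidesteps this entirely: it first uses Proposition~\ref{pro:ell_equals_d} to obtain the normalization $\ex_{[d]}(\f,d)=a(\f,[d])$, then invokes the Grabisch--Roubens characterization (Claim~\ref{clm:generalized_2eff}), which is stated uniformly over all player counts, to conclude that the top-order terms must equal the Banzhaf interaction index; only then does it match the $\ell=1$ values on unanimity games (via the closed form of Proposition~\ref{pro:closed_form_finite_mu}) against the known Banzhaf values $1/2^{|R|-1}$, deriving the recursion $\mubar{r+1}-\mubar{r+2}=\tfrac{1}{2}\left(\mubar{r}-\mubar{r+1}\right)$ and solving for $(a,b)$. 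Until you either carry out your coefficient comparison in full, handling the cross-$d$ coupling and the $d=3$ case, or import an analogous characterization result, the uniqueness half of the theorem is not established.
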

Our derivation of Faith-Banzhaf indices follows the pseudo-Boolean function approximation results from \citet{grabisch2000equivalent}. 

\paragraph{Faith-Shapley Interaction Index:}
When moving from generalized 2-efficiency to the more natural interaction efficiency axiom, we have the following proposition.

\begin{proposition}
\label{pro:efficiency}
Faith-Interaction indices satisfy the interaction efficiency axiom if and only if
the weighting functions satisfy $\mu(\text{\O}) = \mu([d]) = \infty$.
\end{proposition}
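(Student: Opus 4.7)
I would split the argument into two directions.

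\textbf{If direction.} When $\mu(\text{\O}) = \mu([d]) = \infty$, the Faith-Interaction index solves the constrained problem~\eqref{eqn:constrained_weighted_regresion} and hence must satisfy the associated equality constraints. The constraint at $S = \text{\O}$ collapses to $\Expl_\text{\O}(\f,\ell) = \f(\text{\O})$, while the constraint at $S = [d]$ gives $\sum_{T \in \setlessell} \Expl_T(\f,\ell) = \f([d])$. Subtracting yields $\sum_{T \in \setlessell \setminus \{\text{\O}\}} \Expl_T(\f,\ell) = \f([d]) - \f(\text{\O})$, which together with $\Expl_\text{\O}(\f,\ell) = \f(\text{\O})$ is precisely the interaction efficiency axiom.

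\textbf{Only-if direction, setup.} For the converse I would use the contrapositive. By Proposition~\ref{pro:linearity}, $\Expl(\cdot,\ell)$ is linear in $\f$, so interaction efficiency for all $\f$ is equivalent to $\Expl_\text{\O}(e_S,\ell) = \mathbbm{1}[S=\text{\O}]$ and $\sum_{T \in \setlessell}\Expl_T(e_S,\ell) = \mathbbm{1}[S=[d]]$ holding for every indicator set function $e_S(S') \defeq \mathbbm{1}[S'=S]$. I would assume for contradiction that $\mu(\text{\O}) < \infty$; the case $\mu([d]) < \infty$ follows by the symmetric argument with column sums of the inverse Gram matrix replacing its $\text{\O}$-row.

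\textbf{Key computation, fully finite regime.} When every $\mu(S)$ is finite, Proposition~\ref{pro:closed_matrix_form_solution} expresses $\Expl$ in closed form via $A \defeq \bigl(\sum_{S'}\mu(S')\expd(S')\expd(S')^T\bigr)^{-1}$. The first efficiency identity then rewrites as
\[
\mu(S) \sum_{T \in \setlessell,\, T \subseteq S} A_{\text{\O},T} \;=\; \mathbbm{1}[S = \text{\O}] \qquad \text{for every } S \subseteq [d].
\]
Reading off $S=\text{\O}$ gives $A_{\text{\O},\text{\O}} = 1/\mu(\text{\O})$, and Möbius inversion over the equations with $|S| \leq \ell$ forces $A_{\text{\O},T} = (-1)^{|T|}/\mu(\text{\O})$ for all $T \in \setlessell$. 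Plugging into any equation with $|S| = \ell+1$ (available since $\ell < d$) and invoking $\sum_{k=0}^{\ell}(-1)^k\binom{\ell+1}{k} = (-1)^\ell$, the left-hand side reduces to $\mu(S)(-1)^\ell/\mu(\text{\O}) \neq 0$, contradicting the right-hand side being zero.

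\textbf{Main obstacle.} The partially constrained case $\mu(\text{\O}) < \infty$ with $\mu([d]) = \infty$ is the main technical subtlety, since Proposition~\ref{pro:closed_matrix_form_solution} no longer applies directly. I would invoke the Lagrangian characterization in Proposition~\ref{pro:constrained_closed_matrix_form_solution}: the stationarity conditions acquire a rank-one Lagrange-multiplier term $\lambda\,\expd([d])$ coming from the constraint $\sum_{T\in\setlessell} \Expl_T = \f([d])$, but restricting attention to set functions with $\f([d])=0$ annihilates that multiplier and reduces the $\text{\O}$-row identity back to precisely the Möbius system above, now on the $[d]$-deleted Gram matrix. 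The $|S| = \ell+1$ contradiction then reappears unchanged, completing the proof.
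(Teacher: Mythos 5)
Your ``if'' direction is the same as the paper's. Your fully finite computation in the ``only if'' direction is also correct, and it takes a genuinely different route: you read the efficiency identity as a condition on the $\text{\O}$-row of the inverse Gram matrix, Möbius-invert to get $A_{\text{\O},T}=(-1)^{|T|}/\mu(\text{\O})$, and contradict the equation at $|S|=\ell+1$ (your restriction to $\ell<d$ is not just convenient but necessary, since at $\ell=d$ the minimizer is the Möbius transform and efficiency holds for any positive weighting; the paper likewise works at $\ell=d-1$, but with a single test function $\f$ equal to $1$ on $[d]$ and $0$ elsewhere, and a residual-bookkeeping argument via Claims~\ref{clm:p_q_relation} and~\ref{clm:all_zero}).

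The gap is in the mixed cases, which are exactly the cases the proposition is about (one endpoint infinite, the other finite). Your claim that ``restricting attention to set functions with $\f([d])=0$ annihilates that multiplier'' is false: for an equality-constrained least-squares problem, the multiplier $\lambda_{[d]}$ vanishes precisely when the \emph{unconstrained} minimizer already satisfies the constraint, not when the constraint's right-hand side is zero. Setting $\f([d])=0$ does not make the unconstrained minimizer feasible, so generically $\lambda_{[d]}\neq 0$ and the stationarity system does not collapse to your unconstrained Möbius system on a $[d]$-deleted Gram matrix; the rank-one term $\lambda_{[d]}\,\expd([d])$ persists and contaminates every equation, including the one at $|S|=\ell+1$ that delivers your contradiction. (In the paper's treatment of this case, the vanishing of $\lambda$ is \emph{derived from} the efficiency hypothesis applied to a specific test function --- one shows $q(S)=\tfrac{\lambda}{2}(-1)^{d-|S|}$ for $S\subsetneq[d]$, so $q(\text{\O})=0$ forces $\lambda=0$ and hence all residuals to vanish, contradicting Claim~\ref{clm:all_zero} --- it is not a consequence of $\f([d])=0$.) The mirror case $\mu(\text{\O})=\infty$, $\mu([d])<\infty$ that you dispatch ``by symmetry'' has the same defect, now with a multiplier attached to the constraint $\ex_{\text{\O}}=\f(\text{\O})$. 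To repair the argument you would need to carry the multipliers explicitly through the $\text{\O}$-row (or column-sum) identity, e.g.\ via the Lagrangian system of Proposition~\ref{pro:constrained_closed_matrix_form_solution}, and show that no linear-in-$\f$ choice of $\lambda_{[d]}$ can make $\ex_{\text{\O}}(\f,\ell)=\f(\text{\O})$ hold identically.
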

That the condition in the proposition is sufficient is a straight-forward consequence of the fact that $\mu(\text{\O}) = \mu([d]) = \infty$ entails that the corresponding linear constraint be exactly satisfied, so that: $\sum_{S \in \mathcal{S}_\ell} \ex_S(\f,\ell) = \f([d])$ and $\ex_{\text{\O}}(\f,\ell) = \f(\text{\O})$, which is precisely the interaction efficiency axiom. We now have the machinery to present our main result on the unique Faith-Interaction index that satisfies the four (interaction counterparts of the) standard axioms that the singleton Shapley value satisfies. 

\begin{theorem}
\label{thm:faith_shap}
(Faith-Shap)
There is a unique Faith-Interaction index that satisfies the interaction linearity, symmetry, dummy, and efficiency axioms, with its coalition weighting function given as:
\begin{equation}
\label{eqn:faithshap_weight}
\mu(S) \propto
\frac{d-1}{\binom{d} {|S|}\,|S|\,(d-|S|)}
\text{   for all   } S \subseteq [d]
\text{   with   } 1 \leq |S| \leq d-1, 
\text{   and   } \mu(\text{\O}) = \mu([d]) = \infty.
\end{equation}
We term this unique interaction index as the \textbf{Faithful Shapley Interaction index} (Faith-Shap), which has the form:
\begin{equation}
\label{eqn:faith_shapley}
\ex^{\text{F-Shap}}_S(\f,\ell) = 
a(\f,S) + (-1)^{\ell - |S| }\frac{|S|}{\ell + |S|} {\ell \choose |S| } \sum_{T \supset S, |T| > \ell} \frac{ {|T| - 1 \choose \ell }}{ { |T| + \ell -1 \choose \ell + |S|}} a(\f,T), \;\; \forall S \in \mathcal{S}_\ell,
\end{equation}
where $a(\f,\cdot)$ is the Möbius transform of $\f(\cdot)$. 
Moreover, its highest-order interaction terms can be expressed as a weighted average of discrete derivatives:
\begin{equation}
\label{eqn:faith_shapley_highest_order}
\Expl_S^{\text{F-Shap}}(\f,\ell) = \frac{(2\ell -1)!}{((\ell-1)!)^2 } 
\sum_{T \subseteq [d] \backslash S}\frac{(\ell+|T|-1)!(d-|T|-1)!}{(d+\ell-1)!}   \Delta_S(\f(T))
\ \ \text{ for all } S \in \setlessell 
\text{ with } |S| = \ell.
\end{equation}
\end{theorem}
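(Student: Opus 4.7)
The plan is to establish the theorem by (i) exhibiting the Shapley kernel weighting and verifying that it induces a Faith-Interaction index satisfying all four axioms, (ii) proving uniqueness of any Faith-Interaction index satisfying these axioms, and (iii) deriving the closed-form expressions in Eqns.~\eqref{eqn:faith_shapley} and~\eqref{eqn:faith_shapley_highest_order}.

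For existence, I would propose the weighting $\mu$ in Eqn.~\eqref{eqn:faithshap_weight} and verify each axiom in turn: interaction linearity from Proposition~\ref{pro:linearity}; interaction symmetry from Proposition~\ref{pro:symmetry} since $\mu$ depends only on $|S|$; and interaction efficiency from Proposition~\ref{pro:efficiency} since $\mu(\text{\O}) = \mu([d]) = \infty$. The interaction dummy axiom needs more care, since the Shapley kernel is not of independent product form and Proposition~\ref{pro:dummy} does not apply directly; I would verify it by computing the constrained-regression solution on unanimity games $\f_R$, using symmetry to reduce to a small linear system in symmetric coordinates and checking that the resulting $\Expl_T(\f_R,\ell) = 0$ whenever $T$ contains a feature outside $R$.

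For uniqueness, I would use interaction linearity (Proposition~\ref{pro:linearity}) together with the Möbius decomposition of Eqn.~\eqref{eqn:discopose_f} to reduce the problem to determining $\Expl(\f_R,\ell)$ uniquely for every unanimity game $\f_R$. On such a game, interaction dummy forces $\Expl_T(\f_R,\ell) = 0$ for all $T$ intersecting $[d]\setminus R$ (each such feature is dummy for $\f_R$), and interaction symmetry reduces the remaining values to a function of $|T|$ and $|R|$ alone. The axioms by themselves yield only one equation (from efficiency) for up to $\min(\ell,|R|)$ symmetric unknowns; the remaining constraints must come from the KKT optimality conditions of the constrained weighted regression in Eqn.~\eqref{eqn:constrained_weighted_regresion}. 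I would establish an infinite-endpoint analog of Theorem~\ref{thm:thmdummysymm} characterizing which weightings produce a Faith-Interaction index satisfying linearity, symmetry, and dummy together with $\mu(\text{\O}) = \mu([d]) = \infty$, and then show that within this family the induced index collapses to a single one — the one generated by the Shapley kernel. The closed forms in Eqns.~\eqref{eqn:faith_shapley} and~\eqref{eqn:faith_shapley_highest_order} then follow by computing the contribution of each unanimity basis element $\f_R$ explicitly and applying the discrete-derivative identity Eqn.~\eqref{eq:discrete2}, together with standard binomial identities.

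The principal obstacle will be the uniqueness step: the four axioms alone do not determine the weighting function, and in general different admissible weightings can produce different Faith-Interaction indices, so the argument must carefully combine the symmetric ansatz on unanimity games with the regression's optimality conditions to show that only one admissible index exists. A secondary technical hurdle is the combinatorial algebra needed to pass from the Möbius-form expression in Eqn.~\eqref{eqn:faith_shapley} to the compact discrete-derivative expression in Eqn.~\eqref{eqn:faith_shapley_highest_order} for the top-order terms, particularly the manipulations involving the ratio $\binom{|T|-1}{\ell}/\binom{|T|+\ell-1}{\ell+|S|}$.
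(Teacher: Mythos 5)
Your proposal follows essentially the same route as the paper: existence by verifying the four axioms for the Shapley-kernel weighting (with the dummy axiom checked directly on unanimity games via a symmetric linear system), uniqueness by reducing to unanimity games through interaction linearity and exploiting the regression's optimality conditions, and the closed forms by M\"obius/Beta-function algebra. The one device you are missing at the crux is that the paper's necessity argument specializes to maximum interaction order $\ell=1$: there the constrained regression has a known closed form \citep{ding2008formulas}, and imposing the dummy axiom on each unanimity game $\f_R$ forces $(\bar{\mu}_r-\bar{\mu}_{r+1})/(\bar{\mu}_1-\bar{\mu}_2)=1/r$ for all $1\le r\le d-1$ (with $\bar{\mu}_r$ the cumulative weight of coalitions containing a fixed $r$-set), which pins the weighting down uniquely up to scale --- so the ``admissible family'' you anticipate having to collapse is already a single point, and no separate collapse argument is needed.
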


When the maximum interaction order $\ell=1$, so that we only require singleton feature contributions, the explanation coincides with the classical singleton Shapley values. Thus for larger orders with $\ell > 1$, Faith-Shap can be seen to be a ``natural'' generalization of the first-order Shapley value. Note that the set of axioms it satisfies are (interaction extensions of) the classical linearity, symmetry, dummy, and efficiency axioms. As noted before in an interaction context these axioms alone do not uniquely specify an interaction index. In contrast to the less intuitive axioms such as recursive and interaction distribution axioms, we merely require an interaction extension of the faithfulness property of singleton Shapley values: that the interaction Shapley values approximate the given set value function for all possible subsets.

\section{Contrasting Faith-Interaction with other Interaction Indices}
\label{sec:contrast}

In this section, we compare our Faith-Interaction indices, specifically Faith-Shap, with the other interaction indices introduced earlier.

\paragraph{Comparison with Shapley Interaction and Banzhaf Interaction Indices:}
As noted earlier, the Shapley interaction and Banzhaf interaction indices do not satisfy the interaction efficiency axiom, which states that the sum of interaction weights should equal the difference between the value function evaluated over the complete and empty sets. A critical advantage of the interaction efficiency axiom is that it forces the interaction index to distribute a fixed contribution (difference between the value function evaluated over the complete and empty sets) among the different interactions; without such a distributive requirement, the resulting weights can become quite non-intuitive. For instances of such non-intuitive behaviors, we refer to \citet{sundararajan2020shapley}, who provided many simple examples where the sum of Shapley interaction values over all subsets diverges as the number of features increases, even when the value function is bounded and $\f([d])=1$. Another caveat with these two interaction indices is that they are not cognizant of the maximum interaction order, and hence we cannot compute Shapley values that differ with varying maximum interaction orders. 

\paragraph{Comparison with Shapley Taylor index:}
The Shapley Taylor index does satisfy the four axioms of interaction linearity, symmetry, dummy, and efficiency. However, as noted earlier these four axioms do not uniquely determine interaction indices. The fifth axiom Shapley Taylor index then imposes for uniqueness is the interaction distribution axiom, which has caveats of imbalanced distributions of values to coalitions of different orders, namely, inordinately favoring the maximum interaction order. In particular, the interaction distribution axiom states that higher than max-order interaction values (order $>\ell$) be distributed to the max-order interactions (order $=\ell$), but these max-order terms end up unable to solely explain all higher-order interactions.  On the other hand, it entails lower than max order interactions (order $<\ell$) that do not take into account sub-coalitions other than the empty set, which can be contrasted for instance with singleton Shapley value that explicitly takes into account even higher order coalitions that contain the single feature. Thus the interaction distribution  has the consequence of making both lower and max-order interactions less faithful to the model.

In contrast, in our Faith-Interaction indices, even lower-order interaction weights take into account all possible coalitions, and where the weights are balanced so that the overall set of interaction indices optimally approximates the behavior of the underlying value function. 

\subsection{Examples}
\label{sec:examples}

\paragraph{Example 1: }
We illustrate the difference between these interaction indices using a function with diminishing marginal utility. Consider the following value function with $11$ features: 
\begin{equation}
\label{eqn:example1}
\f(S) = 
\begin{cases}
0  & \text{ , if } |S| \leq 1. \\
|S| - p \times { |S| \choose 2 }  & \text{ , otherwise.} \\
\end{cases}
\end{equation}
This function represents the payoff when any subset of $11$ people work on a task. Each person contributes $1$ unit to the overall payoff, and the task requires at least $2$ people. However, the marginal utility is diminishing in nature, since any two people also have a probability of $p$ of being non-cooperative. 
Given this payoff function, it is worth reflecting on what the attributions to individuals should be. While it might seem that zero is a good value since at least two people are needed for the task, this attribution would only correspond to the \emph{marginal contribution} of an individual player i.e. how much a player would contribute when they are by themselves. Whereas we would like our attributions to also take into account larger coalitions, and marginal contributions to such larger coalitions: this is one of the motivations for considering coalitional game-theoretic indices. Once we do so, then it can be seen that an individual effect of one is much more reasonable. Similarly, we would expect that the pairwise interaction effects be close to $-p$.

In Table \ref{table:example1}, we list the values for different interaction indices for $p=0.1, 0.2$. When the maximum interaction order $\ell=1$, all indices are similar since their restrictions to singleton are the Shapley/Banzhaf values.
When the maximum interaction order $\ell=2$, our Faith-Shap accurately captures individual contribution and pairwise interaction effects by
assigning $0.95/0.95$ and $-0.091/-0.191$ for order $1$ and $2$ and for $p=0.1/0.2$ respectively, which are very close to the intuitions we outlined earlier. 
However, the Shapley Taylor index assigns the individual effect of $i$ by using the marginal $\f(\{i\}) - \f(\text{\O})$, which can be highly inaccurate since such a marginal contribution does not take into account marginal contributions to larger coalitions. 

For $p=0.1$, Shapley Taylor along with Interaction Shapley assigns a positive/zero value to the interaction effect, which suggests that forming groups has complimentary/no effects. On the contrary, Banzhaf interaction and Faith-Banzhaf give negative values for interaction between players, which correctly reflects the decrease in the marginal utility of this game.

For $p=0.2$, the Shapley Taylor index is uniformly zero for any order. This highlights the other drawback of the Shapley Taylor index: the impoverished lower-order interaction indices make the max-order indices less faithful to the model. Specifically, for $p = 0.2$, and with $d = 11$ players, we can see that $\f([d]) = \f(\emptyset) = 0$. We have already seen that $\ex^{\text{Taylor}}_{\{i\}}(\f,\ell) = 0$, for $i \in [d]$. For $\ell = 2$, we then have that the summation of the max-order (i.e. order two) indices equals  $\f([d]) - \f(\emptyset) - \sum_{i=1}^{d} \ex^{\text{Taylor}}_{\{i\}}(\f,\ell) = 0$ by the efficiency axiom. Since all max-order indices have the same value by the symmetry axiom, the max-order indices are uniformly zero. In this case, the Shapley Taylor indices do not take into account the function values $\f(S)$ with $ \ell \leq |S| < d$, and can be arbitrarily unfaithful to these orders. 
Here, the Banzhaf interaction and Faith-Banzhaf again correctly reflect the  negative interaction between players. However, the Banzhaf interaction value gives a value close to $0$ for the first-order indices. Taken together with its negative interaction effects, it might seem that coalitions can only be hurtful to the payoff, which is misleading since the total utility is positive when 2 to 10 players are present. On the other hand, our  Faith-Banzhaf gives a positive value close to $1$ for individual effects of order $1$. Taken together with its negative interaction effects, the value given by the Faith-Banzhaf seems more intuitive: every single player contributes to the utility, while each pair of players hurts the utility. 

Another instructive viewpoint for interaction values is by inspecting their utility for approximating the overall payoff function. In Figure \ref{fig:example1_1} and \ref{fig:example1_2}, we approximate the function $\f(S)$ using $\sum_{T \subseteq S, |T| \leq 2} \Expl_T(\f,\ell)$ for different interaction indices. We can see that our Faith-Shap/Faith-Banzhaf are (almost) faithful to all orders except for $|S|=1$. However, the Shapley Taylor index is only fully faithful to the model when the order is $0,1,11$, and curves for other interaction indices are unfaithful.

\begin{table}[ht]
\centering
\small
\begin{tabular}{ | c | c ?{1pt} c | c ?{1.5pt} c ?{1pt} c | c |}
    \hline 
    \multirow{3}{*}{Indices}
    & 
    \multicolumn{3}{c?{1.5pt}}{$p=0.1$} & \multicolumn{3}{c|}{$p=0.2$}  \\
    \cline{2-7}
    & $\ell=1$ &
    \multicolumn{2}{c?{1.5pt}}{$\ell=2$} &
     $\ell=1$ & \multicolumn{2}{c|}{$\ell=2$}  \\
    \cline{2-7}
     & Order 1 & Order 1 & Order 2 & Order 1 & Order 1 & Order 2\\
    \hline \hline
    Faith-Shap & 0.5 &  0.95 & -0.091  & 0 & 0.95 & -0.191\\
    \hline
    Shapley Taylor & 0.5 & 0 & 0.1  & 0  & 0 & 0 \\
    \hline 
    Interaction Shapley & 0.5 & 0.5 & 0 &  0 & 0 & -0.1\\
    \hline 
    \hline
    Banzhaf Interaction & 0.51 & 0.51  & -0.113 &  0.009 & 0.009 & -0.213 \\
    \hline
    Faith-Banzhaf & 0.51 & 1.08 & -0.113 &  0.009 & 1.08&  -0.213 \\
    \hline
   \end{tabular}
   \caption{Values for different interaction indices of different orders for $p=0.1,0.2$ with different maximum interaction orders. Note that the value function is symmetric with respect to players, so we use order 1 and 2 to denote importance scores of any single player and interaction of any two players.
   Note that $\ex^{\text{F-Shap}}_{\text{\O}}(\f,\ell) = 0$ and $\ex^{\text{F-Bzf}}_{\text{\O}}(\f,\ell) = -0.24$ for both $p=0.1$ and $p=0.2$.}
\label{table:example1}
\end{table}

\begin{table}[ht]
\begin{minipage}[b]{0.48\linewidth}
\centering
\includegraphics[width=\linewidth]{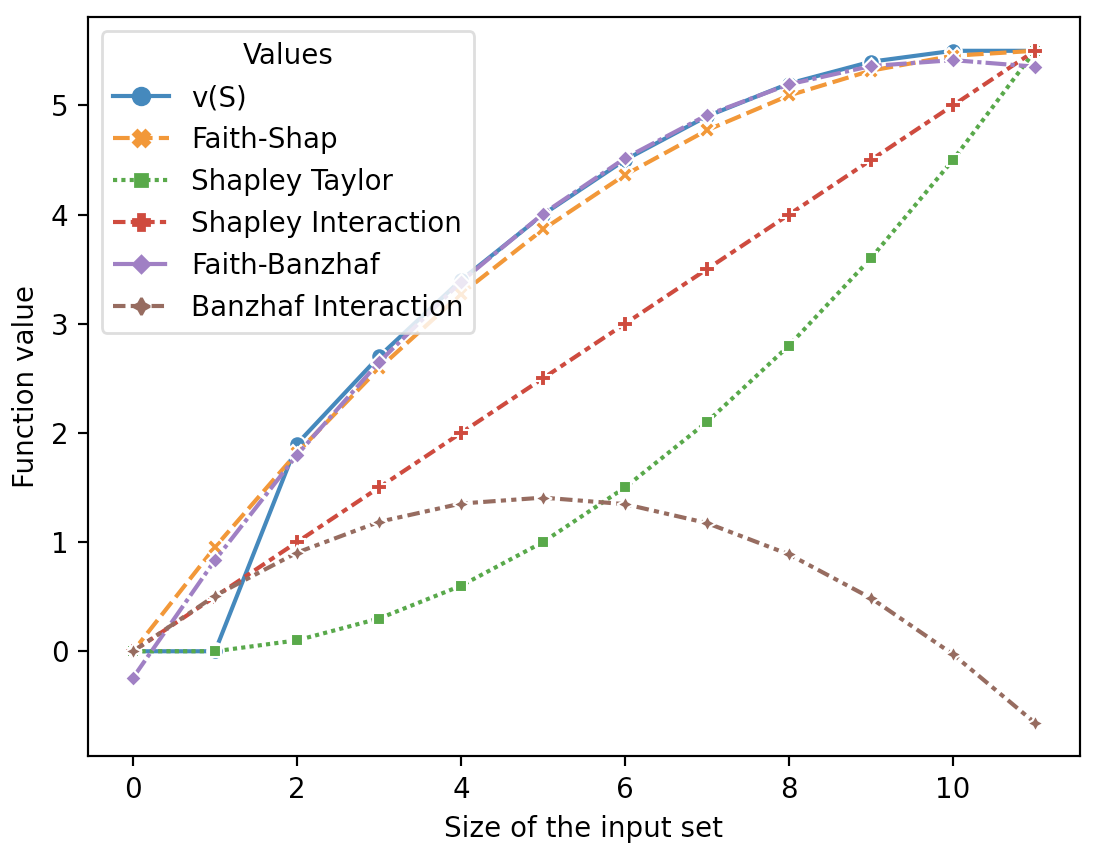}
\captionof{figure}{Function approximation of Eqn.\eqref{eqn:example1} using different interaction indices for $p=0.1$ with the maximum interaction order $\ell=2$.}
\label{fig:example1_1}
\end{minipage}\hfill
\begin{minipage}[b]{0.48\linewidth}
\centering
\includegraphics[width=\linewidth]{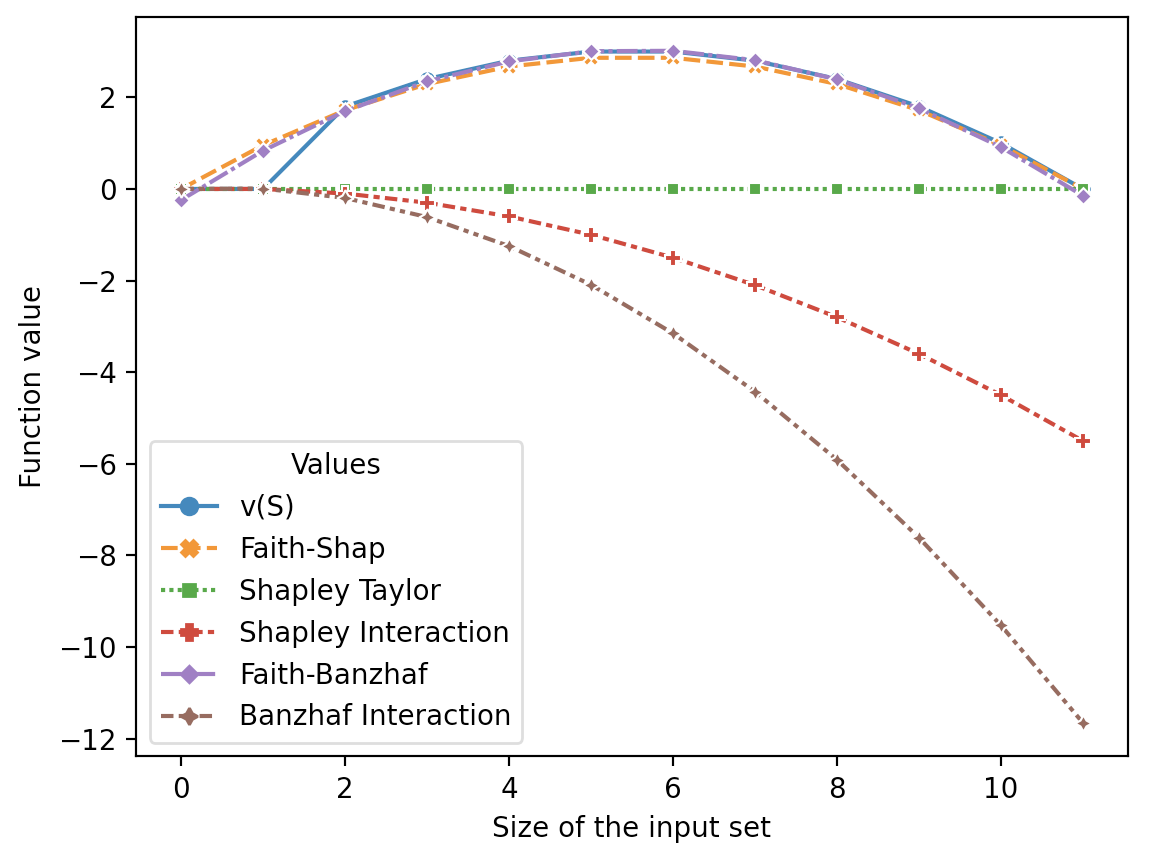}
\captionof{figure}{Function approximation of Eqn.\eqref{eqn:example1} using different interaction indices for $p=0.2$ with the maximum interaction order $\ell=2$. }
\label{fig:example1_2}
\end{minipage}
\end{table}

\paragraph{Example 2:}
We provide another example, this time with increasing marginal utility. Consider a family who is in the wind energy business, with $d=11$ family members. Currently, the family owns 1 wind turbine, and they can get 3 units of revenue per wind turbine they own. Now, each family member is considering whether to manage a wind turbine. To build $x$ wind turbines, the cost is described by the function $\text{cost}(x) = x + 2\log(x+1)$, as they may get a discount from the constructor to build more wind turbines at the same time. If exactly one member chooses to manage a wind turbine, the building cost will be 0 since the family already owns one wind turbine. The total revenue for the family when $S$ is the set of members that participate in building new wind turbines can be described by the following function:

\begin{equation}
\label{eqn:example2}
\f(S) = 
\begin{cases}
0 & \text{ , if } |S| = 0. \\
3 & \text{ , if } |S| \leq 1. \\
3|S| - (|S| - 2 \log(|S|+1)) & \text{ , if } 2 \leq |S| \leq 11.\\
\end{cases}
\end{equation} 
This function has an increasing marginal utility since the marginal cost is decreasing. Therefore, we would expect the interaction effect to be positive. However, from Table \ref{table:example2}, only Faith-Shap, Faith-Banzhaf and Banzhaf interaction indices capture this effect.

Moreover, the Faith-Shap and Faith-Banzhaf indices have the following intuitive interpretation: Having one more member joining the family business increases the total revenue by 1.20/1.19 unit, with 0.07/0.09 additional unit of revenue when two members join together since they are cooperative. In contrast, we can not interpret the Banzhaf interaction index for orders 1 and 2 jointly since it is not cognizant of the maximum interaction order $\ell$.

\begin{table}[ht]
\begin{minipage}[b]{0.58\linewidth}
\centering
\small
\begin{tabular}{ | c | c ?{1pt} c | c |}
    \hline 
    \multirow{2}{*}{Indices} & $\ell= 1$ & \multicolumn{2}{c|}{$\ell=2$} \\
    \cline{2-4}
    & Order 1 & Order 1 & Order 2 \\
    \hline \hline
    Faith-Shap & 1.55 & 1.20 & 0.07 \\
    \hline
    Shapley Taylor & 1.55 & 3 & -0.29 \\
    \hline 
    Shapley Interaction & 1.55 &  1.55  & -0.12 \\
    \hline 
    \hline 
    Faith-Banzhaf & 1.65 & 1.19 & 0.09 \\
    \hline
    Banzhaf Interaction & 1.65 &  1.65 & 0.09 \\
    \hline
   \end{tabular}
\caption{\small{Values for different interaction indices of different orders with the maximum interaction order $\ell=2$. Note that the value function is symmetric with respect to players, so we use order 1 and 2 to denote importance scores of any single player and interaction of any two players. Note that $\ex^{\text{F-Shap}}_{\text{\O}}(\f,\ell) = 0$ and $\ex^{\text{F-Bzf}}_{\text{\O}}(\f,\ell) = 0.48$ for the indices corresponding to empty sets. }}
\label{table:example2}
\end{minipage}\hfill \hfill
\begin{minipage}[b]{0.41\linewidth}
\centering
\includegraphics[width=0.9\linewidth]{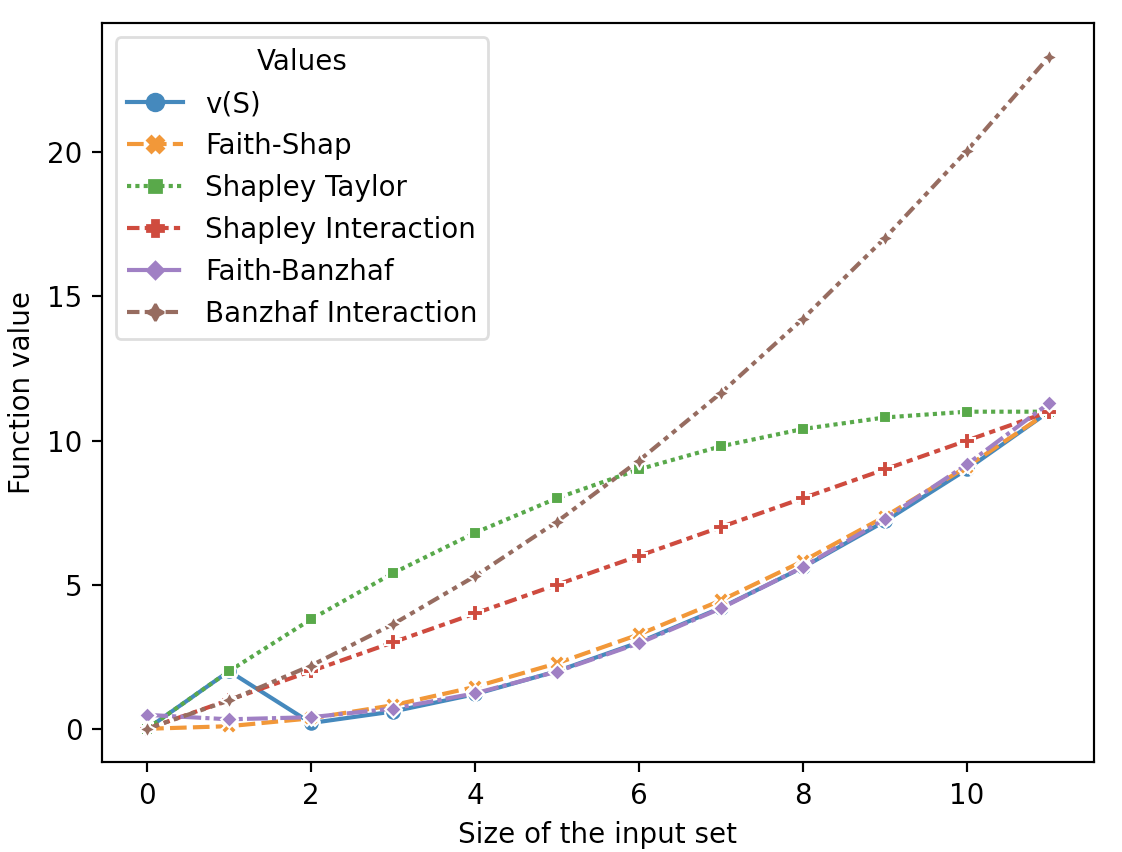}
\captionof{figure}{Function approximation of Eqn.\eqref{eqn:example2} using different interaction indices with the maximum interaction order $\ell=2$.}
\label{fig:example2}
\end{minipage}
\end{table}

\section{Computation of Faithful Shapley Interaction Index}

The exact computation of Faith-Shap indices via Eqn.\eqref{eqn:faith_shapley} is intractable in general since it involves computations of $\mathcal{O}(2^d)$ Möbius transforms of different subsets. However, when the value function is lower-order, the Faith-Shap interaction indices can be computed in polynomial time. 
\begin{definition}
(Order of value functions) A value function $\f$ has order $\ell_{\f} \in \mathbb{N}$ if $\ell_{\f}$ is the smallest integer such that $a(\f, R) = 0$ for all $R \subseteq [d]$ with $|R| > \ell_{\f}$.
\end{definition}

From Eqn.\eqref{eqn:discopose_f}, this entails the value function with order $\ell_{\f}$ can be written as $\f = \sum_{R \subseteq [d], |R| \leq \ell_{\f}}a(\f,R) v_R$. When a game only involves the cooperation of a small number of players, its value function can usually be written as a summation of lower-order basis functions. For instance, a basis function $\f_R$ has order 1,
example value functions in Section \ref{sec:examples} have order 2, the value function~\citep{garg2012novel} defined as summations of pairwise distances within a coalition used in clustering also have order 2. When $\ell_{\f} = \mathcal{O}(\ell) > \ell$, by using Eqn.\eqref{eqn:faith_shapley},
the exact computation of Faith-Shap indices of a subset $S$ only requires time complexity $\mathcal{O}(d^{O(\ell)})$ since we only need to consider ${d \choose \ell_{\f} - |S|} = \mathcal{O}(d^{O(\ell)})$ Möbius transforms of subsets.

For the computation of the Faith-Shap for general functions, computing the exact Faith-Shap values requires $2^d$ model evaluations. We sample each coalition $S \subseteq [d]$ with probability $\propto \frac{d-1}{{d \choose |S|}|S|(d-|S|)}$, and solve 
\begin{align}
\argmin_{\Expl \subseteq \mathbb{R}^{d_\ell} } \frac{1}{n} \sum_{i=1}^n  \left( \f(S) - \sum_{T \subseteq S , |T| \leq \ell}\Expl_T(\f,\ell) \right)^2,
\text{s.t.}\sum_{T \subseteq [d] , |T| \leq \ell}\Expl_T(\f,\ell) = \f([d]), \text{ and } \Expl_{\text{\O}}(\f,\ell) = \f(\text{\O}).
\label{eqn:faith_shap_estimation}
\end{align}
We empirically show the sampling approach provides more accurate estimates with fewer model evaluations in Section \ref{sec:exp_compute}. We defer deriving approximation results for sampling as well as other approximation methods for computing Faith-Shap to future work. There is a rich body of work on developing such approximation results for the first-order Shapley values (see \citet{mitchell2022sampling,covert2020improving}, which could be extended to the Faith-Shap setting.

\section{Algebraic Properties of Faith-Interaction Indices}

In the following two sub-sections, we discuss how Faith-Shap can be represented as a cardinal index, as well as through the lens of a multilinear approximation.

\subsection{Cardinal Indices}

\citet{grabisch1999axiomatic} show that any interaction index (they only consider the classical case with maximum interaction order $\ell = d$) that satisfies the  linearity, dummy, and symmetry axioms necessarily has the following form:
\begin{equation}
\label{eqn:cardinal_interaction indices}
\ex_S(\f, d) = \sum_{T \subseteq [d]\backslash S} p^{|S|}_{|T|} \Delta_{S} \f(T), \ \ \forall S \subseteq [d],
\end{equation}
and for some family of constants $\{p^s_{t}\}_{s \in [0:d],t \in [0:d-s]}$.
They term this class of interaction indices as \emph{cardinal} interaction indices.
Of course this is a large class, and it is not apriori clear how to further constrain the indices so as to get specific values for the constants $\{p^s_{t}\}$. We remark in passing that Shapley and Banzhaf interaction indices impose additional structure on the constants $\{p^s_{t}\}$. 

We can also consider the class of probabilistic interaction indices:
\begin{align*}
    \ex_S(\f,d) = \sum_{T \subseteq [d]\backslash S} p^{S}_{T} \Delta_{S}v(T), 
\end{align*}
where for any $S \subseteq [d]$, the constants $\{p^{S}_{T}\}_{T \subseteq [d]\backslash S}$ form a probability distribution on $[d]\backslash S$. We can then define cardinal-probabilistic indices as those indices that are both cardinal and probabilistic interaction indices, so that $p^{S}_{T} = p^{|S|}_{|T|}$, for some family of constants $\{p^s_{t}\}_{s \in [1:d],t \in [0:d-s]}$ that satisfy:
\[\sum_{t = 0}^{d-s}{d-s\choose t } p^s_t = 1.\]
\citet{fujimoto2006axiomatic} shows that indices that satisfy certain additivity, monotonicity, symmetry, and dummy partnership axioms are necessarily cardinal probabilistic indices. As \citet{fujimoto2006axiomatic} shows, Shapley and Banzhaf interaction indices do fall into this class. 

One could of course extend these notions of cardinal, probabilistic, and cardinal-probabilistic indices to be cognizant of the maximum interaction order $\ell \in [d]$. It is an interesting open question to investigate extensions of results of \citet{fujimoto2006axiomatic} to such a sub-class of cardinal-probabilistic indices cognizant of the max-interaction order. In this section, we provide a modest initial result along these lines, focusing on the top interaction level of the interaction index.
\begin{proposition}
\label{pro:faith_shap_cardinal_prob}
For any maximum interaction order $1 \leq \ell \leq d$, and for any set value function $\f:2^d \mapsto \mathbb{R}$,
the top level of the Faithful Shapley Interaction index can be expressed as a cardinal-probabilistic index:
\begin{equation}
\label{eqn:faith_shap_cardinal_prob}
\ex^{\text{F-Shap}}_{S}(\f,\ell) 
= \sum_{T \subseteq [d] \backslash S} p^\ell_{|T|} \Delta_S(\f(T)), \ \ \forall S \subseteq [d] \text{ with } |S| = \ell,
\end{equation}
where  $p^{\ell}_t = \frac{(2\ell -1)!(\ell+t-1)!(d-t-1)!}{((\ell-1)!)^2 (d+\ell-1)!}$. Moreover, it satisfies
$\sum_{t=0}^{d-\ell} {d-\ell \choose t} p^\ell_t = 1$.
\end{proposition}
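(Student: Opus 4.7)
The cardinal form of the claim is essentially immediate from Theorem~\ref{thm:faith_shap}: reading off Eqn.~\eqref{eqn:faith_shapley_highest_order}, the top-level coefficients depend only on $|T|$ and match the stated $p^{\ell}_t = \frac{(2\ell-1)!(\ell+t-1)!(d-t-1)!}{((\ell-1)!)^2(d+\ell-1)!}$. So the only substantive content of the proposition is the normalization identity
\begin{equation*}
\sum_{t=0}^{d-\ell} \binom{d-\ell}{t} p^{\ell}_t \;=\; 1,
\end{equation*}
which, after clearing the common prefactor, is equivalent to showing
\begin{equation*}
\sum_{t=0}^{d-\ell} \binom{d-\ell}{t} (\ell+t-1)!\,(d-t-1)! \;=\; \frac{((\ell-1)!)^2\,(d+\ell-1)!}{(2\ell-1)!}.
\end{equation*}

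The plan is to rewrite each factorial factor in the summand as a product of $(\ell-1)!$ and a binomial coefficient, namely $\tfrac{(\ell+t-1)!}{t!} = (\ell-1)!\binom{\ell+t-1}{\ell-1}$ and, setting $m = d-\ell$, $\tfrac{(m+\ell-t-1)!}{(m-t)!} = (\ell-1)!\binom{m+\ell-t-1}{\ell-1}$. Factoring these out of $\binom{m}{t}(\ell+t-1)!(m+\ell-t-1)!$ collapses the sum to $m!\,((\ell-1)!)^2 \sum_{t=0}^{m}\binom{\ell+t-1}{\ell-1}\binom{m+\ell-t-1}{\ell-1}$.

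The remaining sum is a Chu–Vandermonde type convolution, which I would evaluate with the generating function identity $\sum_{k\geq 0}\binom{\ell+k-1}{\ell-1} x^k = (1-x)^{-\ell}$. Multiplying two copies and reading off the coefficient of $x^m$ in $(1-x)^{-2\ell}$ gives $\sum_{t=0}^{m}\binom{\ell+t-1}{\ell-1}\binom{m+\ell-t-1}{\ell-1} = \binom{m+2\ell-1}{2\ell-1}$. Substituting this back and simplifying $m!\binom{m+2\ell-1}{2\ell-1} = (m+2\ell-1)!/(2\ell-1)!$ together with $m+2\ell-1 = d+\ell-1$ yields exactly $\tfrac{((\ell-1)!)^2(d+\ell-1)!}{(2\ell-1)!}$, which cancels the prefactor in $p^{\ell}_t$ to give $1$.

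The main (and essentially only) obstacle is recognizing the right factorization to expose a Chu–Vandermonde convolution; once the summand is written in the binomial-coefficient form above, the generating-function computation is routine. A sanity check at $\ell = 1$ reduces $p^{1}_t$ to $\tfrac{t!(d-t-1)!}{d!} = \tfrac{1}{d\binom{d-1}{t}}$, whose weighted sum over $t$ equals $1$, recovering the well-known normalization of the classical Shapley coefficients.
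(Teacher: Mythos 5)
Your proposal is correct, and its overall skeleton matches the paper's: both proofs take the cardinal form as an immediate consequence of Theorem~\ref{thm:faith_shap} (Eqn.~\eqref{eqn:faith_shapley_highest_order}), so the only real work is the normalization identity $\sum_{t=0}^{d-\ell}\binom{d-\ell}{t}p^{\ell}_t=1$. Where you diverge is in how that sum is evaluated. The paper writes $\frac{(\ell+t-1)!(d-t-1)!}{(d+\ell-1)!}$ as the Beta integral $B(\ell+t,d-t)=\int_0^1 x^{\ell+t-1}(1-x)^{d-t-1}\,dx$, exchanges sum and integral, collapses the binomial sum under the integral via $(x+(1-x))^{d-\ell}=1$, and is left with $\frac{(2\ell-1)!}{((\ell-1)!)^2}B(\ell,\ell)=1$. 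You instead factor the summand into $m!\,((\ell-1)!)^2\binom{\ell+t-1}{\ell-1}\binom{m+\ell-t-1}{\ell-1}$ with $m=d-\ell$ and evaluate the convolution as the coefficient of $x^m$ in $(1-x)^{-2\ell}$, giving $\binom{m+2\ell-1}{2\ell-1}$; I checked the algebra and it closes correctly, including the final cancellation against the prefactor. The two arguments are really the integral and the combinatorial face of the same negative-binomial convolution, but yours is slightly more elementary (no integral representation or interchange of sum and integral needed), while the paper's Beta-function route has the advantage of reusing machinery that already appears in its proofs of Proposition~\ref{pro:faith_shap_path_integral} and the closed form of Faith-Shap, where the Beta distribution plays a structural role. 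Your $\ell=1$ sanity check is also consistent with the classical Shapley normalization.
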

Therefore, the top level of the Faithful Shapley Interaction index captures the interactions of features in $S$ \emph{in the presence of all subsets $T \subseteq [d] \backslash S$.} 

\def\g{g}
\subsection{Multilinear Formulation}

Any set value function $\f: 2^{[d]} \mapsto \R$ has a unique multi-linear extension $\g: [0,1]^{d} \mapsto \R$, also referred to the \textit{Owen multilinear extension}~\citep{owen1972multilinear}, given as:
\[
g(x) := \sum_{T \subseteq [d]}\f(T) \prod_{i \in T}x_i \prod_{i \not\in T}(1 - x_i)
,\quad \forall x \in [0,1]^d.
\]
For any set $S \subseteq [d]$, with $S = \{i_1,\hdots,i_s\}$, denote its $S$-derivative as $\Delta_{S}g(x) := \frac{\partial^{s}g(x)}{\partial x_{i_1}\hdots \partial x_{i_s}}$.

\subsubsection{Path Integrals} 
\citet{grabisch2000equivalent} show that Shapley interaction index can be written as:
\[ \ex^{\text{Shap}}_S(\f,d) = \int_{x=0}^{1} \Delta_{S} g(x,\hdots,x) dx, \  \forall S \subseteq [d].
\]
That is, we can obtain the Shapley interaction index by integrating the $S$-derivative along the diagonal of the unit hypercube.

On the other hand, the Banzhaf interaction index can be written as:
\[ \ex^{\text{Bzf}}_S(\f, d) = \int_{x \in [0,1]^d} \Delta_{S} g(x) dx, \  \forall S \in \mathcal{S}_d.
\]
That is, we can obtain the Banzhaf interaction index by integrating the  $S$-derivative over the entire unit hypercube. In this case, it also has the closed form: $\Delta_S g(1/2,\hdots,1/2)$.

\citet{fujimoto2006axiomatic} show that any cardinal probabilistic index $\ex$ has the form:
\[ \ex_S(\f, d) = \int_{x=0}^{1} \Delta_{S} g(x,\hdots,x) dF_{|S|}(x), \  \forall S \in \mathcal{S}_d,
\]
for some family of CDFs $\{F_{s}\}_{s \in [d]}$. That is, we can obtain any cardinal probabilistic index by integrating the $S$-derivative along the diagonal of the unit hypercube with respect to some distribution over $[0,1]$. 

It is an interesting open question whether we could extend these results from \citet{grabisch2000equivalent} and  \citet{fujimoto2006axiomatic} to interaction indices that are cognizant of the maximum interaction order $\ell \in [d]$.  In this section, we provide a modest initial result along these lines, focusing on the top interaction level of the interaction index.

\begin{proposition}
\label{pro:faith_shap_path_integral}
For any maximum interaction order $1 \leq \ell \leq d$, and for any set function $\f:2^d \mapsto \mathbb{R}$, the top level of the Faithful Shapley Interaction index value can be expressed as: 
\begin{equation}
\label{eqn:faith_shap_path_integral}
\ex^{\text{F-Shap}}_{S}(\f,\ell) 
= \int_{x=0}^1 \Delta_S g(x,\cdots,x) dI_x(\ell,\ell), \ \ \forall S \in \setlessell \text{ with } |S| = \ell,
\end{equation}
where $I_x(\ell,\ell)$ is cumulative distribution function of the beta distribution $B(x;\ell,\ell)$.
\end{proposition}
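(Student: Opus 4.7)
The plan is to reduce the path‐integral identity to the cardinal‐probabilistic formula already established in Proposition~\ref{pro:faith_shap_cardinal_prob}, so we only need to integrate a polynomial against a beta density. Concretely, I would first show that for any $S \subseteq [d]$ the multilinear extension $g$ satisfies
\begin{equation*}
\Delta_S g(x_1,\dots,x_d) \;=\; \sum_{W \subseteq [d]\setminus S} \Delta_S \f(W)\prod_{i \in W} x_i \prod_{i \in [d]\setminus S \setminus W}(1-x_i).
\end{equation*}
This is a standard fact about multilinear extensions and can be proved by induction on $|S|$: the base case $|S|=1$ comes from splitting $U \subseteq [d]$ into those with $i \in U$ and those without, and each inductive step applies the same splitting with respect to one more coordinate. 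Restricting to the diagonal $x_1 = \cdots = x_d = x$ yields
\begin{equation*}
\Delta_S g(x,\dots,x) \;=\; \sum_{W \subseteq [d]\setminus S} \Delta_S \f(W)\, x^{|W|}(1-x)^{d-|S|-|W|}.
\end{equation*}

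Next, I would fix $|S|=\ell$ and integrate this expression against the $\mathrm{Beta}(\ell,\ell)$ density $\frac{x^{\ell-1}(1-x)^{\ell-1}}{B(\ell,\ell)}$. Swapping sum and integral and using the beta-function identity $\int_0^1 x^{a-1}(1-x)^{b-1}dx = \frac{(a-1)!(b-1)!}{(a+b-1)!}$, one obtains for each $W$ with $|W|=t$ the coefficient
\begin{equation*}
\frac{B(\ell+t,\ell+d-\ell-t)}{B(\ell,\ell)} \;=\; \frac{(\ell+t-1)!\,(d-t-1)!\,(2\ell-1)!}{((\ell-1)!)^2\,(d+\ell-1)!}.
\end{equation*}
This is precisely the cardinal‐probabilistic weight $p^{\ell}_{t}$ appearing in Proposition~\ref{pro:faith_shap_cardinal_prob}. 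Hence
\begin{equation*}
\int_0^1 \Delta_S g(x,\dots,x)\, dI_x(\ell,\ell) \;=\; \sum_{W \subseteq [d]\setminus S} p^{\ell}_{|W|}\, \Delta_S \f(W) \;=\; \ex^{\text{F-Shap}}_S(\f,\ell),
\end{equation*}
with the second equality being exactly the statement of Proposition~\ref{pro:faith_shap_cardinal_prob} applied at the top level $|S|=\ell$.

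There is essentially no hard step: the multilinear‐extension derivative identity is standard, and the beta integral bookkeeping is elementary. The only point requiring a little care is correctly tracking the exponents after setting $|S|=\ell$ (so that $d-|S|-|W|+\ell = d-|W|$) and confirming that the beta normalization constants line up with the factorials of $p^{\ell}_{t}$; once this matches, the proposition follows immediately by invoking the previously established cardinal‐probabilistic representation.
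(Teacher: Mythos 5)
Your proof is correct, but it takes a different route from the paper's. The paper expands $\Delta_S g(x,\dots,x)$ in the \emph{M\"obius basis}, $\Delta_S g(x) = \sum_{T \supseteq S} a(\f,T)\prod_{i \in T\setminus S}x_i$, integrates term by term against the $\mathrm{Beta}(\ell,\ell)$ density to get $\frac{1}{B(\ell,\ell)}\sum_{T\supseteq S}\frac{(|T|-1)!\,(\ell-1)!}{(|T|+\ell-1)!}\,a(\f,T)$, and then matches this against the M\"obius-coefficient closed form of $\ex^{\text{F-Shap}}_S$ in Eqn.~\eqref{eqn:faith_shapley} (a final identification it asserts rather than spells out). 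You instead expand $\Delta_S g$ in terms of \emph{discrete derivatives}, $\Delta_S g(x,\dots,x)=\sum_{W\subseteq[d]\setminus S}\Delta_S \f(W)\,x^{|W|}(1-x)^{d-\ell-|W|}$, so that the beta integral directly produces $B(\ell+t,d-t)/B(\ell,\ell)=p^{\ell}_{t}$ and the conclusion follows immediately from Proposition~\ref{pro:faith_shap_cardinal_prob}. Both arguments ultimately rest on Theorem~\ref{thm:faith_shap} (your dependency chain runs through Proposition~\ref{pro:faith_shap_cardinal_prob}, which the paper proves from Eqn.~\eqref{eqn:faith_shapley_highest_order}, so there is no circularity), and both reduce to the same beta-function bookkeeping; the advantage of your decomposition is that the resulting coefficients are recognized as the cardinal-probabilistic weights $p^\ell_t$ on sight, making the final identification essentially automatic, whereas the paper's M\"obius route requires reconciling the integral with the alternating-sign formula of Eqn.~\eqref{eqn:faith_shapley}. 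Your stated derivative identity for the multilinear extension is standard and your exponent/normalization computations check out.
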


\subsubsection{Taylor Expansion}

In contrast to path integrals, \citet{sundararajan2020shapley} use the Taylor expansion of $g(\mathbf{1}) = \f([d])$ around $g(\mathbf{0}) = \f(\text{\O})$
Taylor derivations to derive their interaction index.
Specifically, they show that Shapley Taylor index $\ex^{\text{Taylor}}_S(\f,\ell)$ is equal to the $|S|^{\text{th}}$ term of the $(\ell-1)^{\text{th}}$ order Taylor expansion of $g(\cdot)$ with Lagrange remainder: 
\begin{align*}
g(\mathbf{1})
& = \sum_{j=0}^{\ell-1} \frac{g^{(j)}(\mathbf{0})}{j!}
g(\mathbf{0}) + \int_{x=0}^{1}\frac{(1-x)^{\ell-1}}{(\ell-1)!}g^{(\ell)}(x,\cdots,x) dx \\
& =  \sum_{j=0}^{\ell-1}
\sum_{|S|=j} \Delta_S g(\mathbf{0})
+ \sum_{|S| = \ell} \int_{x=0}^{1} \ell (1-x)^{\ell-1} \Delta_S g(x,\cdots,x) dx \\ 
& \, \, \text{\citep[Theorem~3]{sundararajan2020shapley}} \\
& =  \sum_{j=0}^{\ell-1}
\sum_{|S|=j}
\ex_S^{\text{Taylor}}(\f,\ell) 
+ \sum_{|S| = \ell}\ex^{\text{Taylor}}_S(\f,\ell),
\end{align*}
where $g^{(j)}(x)$ is the $j^{\text{th}}$ derivative of the function $g(x,\cdots,x)$, $\ex_S^{\text{Taylor}}(\f,\ell)  = \Delta_S g(\mathbf{0})$ for $|S| < \ell$ and $\ex_S^{\text{Taylor}}(\f,\ell)  
= \int_{x=0}^{1} \ell (1-x)^{\ell-1} \Delta_S g(x,\cdots,x) dx$ with $|S| = \ell$.
This can be seen to result in impoverished lower-order subset interactions, which now no longer take into account higher-order coalitions that include that subset. 

\subsubsection{Pseudo-Boolean Function Approximation}

While we have so far discussed the continuous multi-linear extension of a set value function $\f: 2^{[d]} \mapsto \R$, we can also simply consider its equivalent pseudo-Boolean counterpart $\g \in \mathcal{F}$ with $\mathcal{F} =\{ \g: \{0,1\}^{d} \mapsto \R \}$:
\[
\g(x) := \sum_{T \subseteq [d]}\f(T) \prod_{i \in T}x_i \prod_{i \not\in T}(1 - x_i),\quad \forall x \in \{0,1\}^d.
\]
One can also derive the pseudo-Boolean function $\g_{\ex}$ corresponding to interaction indices $\ex$, and ask for interaction indices with pseudo-Boolean counterparts $\g_{\ex}$ that best approximate the pseudo-Boolean counterpart $\g$ of the set value function. Specifically, given a maximum interaction order $\ell \in [d]$ and an interaction index $\ex \in \mathbb{R}^{d_\ell}$, its pseudo-Boolean counterpart $\g_{\ex} \in \mathcal{F}$ is defined as:
$$
\g_{\ex}(x) := \sum_{T \subseteq [d], |T| \leq \ell} \ex_T(\f,\ell)  \prod_{i \in T}x_i, \quad \forall x \in \{0,1\}^d.
$$

\citet{hammer1992approximations} and \citet{grabisch2000equivalent} consider solving for the best $\ell_2$-norm approximation by the function $\g_{\ex}(\cdot)$ with degree up to $\ell$. That is, $\norm{\g-\g_{\ex}}_2 =  \sqrt{\sum_{x \in \{0,1\}^d} (\g(x) - \g_\ex(x))^2}$. 
Using this perspective, we can see that Faith-Banzhaf interaction indices can in turn be related to such a function approximation:
$$
\ex^{\text{F-Bzf}}(\f,\ell)
= \min_{\ex \in \mathbb{R}^{d_\ell}} \norm{g(x) - g_{\ex}(x)}_2
= \min_{\ex \in \mathbb{R}^{d_\ell}} \sum_{S \subseteq [d]} \left(\f(S) - \sum_{T \subseteq S, |T| \leq \ell} \ex_T \right)^2,
$$
and where the solution has the closed-form expression we detail in Theorem \ref{thm:thmdummysymm}. 

For the singleton attribution case, with max order $\ell=1$, \citet{ding2008formulas} and \citet{ruiz1998family} consider $\mu$-norm function approximations $\|g(x) - g_{\ex}(x)\|_{\mu} = \sqrt{\sum_{x \in \{0,1\}^d}\mu(x)(g(x)-g_\ex(x))^2 }$, but where $\mu$ only depends on $\norm{x}_1$, and where $\mu(\mathbf{0})$ and $\mu(\mathbf{1})$ can both be infinity. \citet{ding2008formulas} provide a closed-form expression for $g_\ex(x)$, while  \citet{ruiz1998family} analyze its axiomatic properties.

For the specific case where the probability of coalition $S$ can be expressed as $\mu(x) = \prod_{i: x_i=1} p_i \prod_{j:x_j=0} (1-p_j)$ for some $0 < p_i < 1$ indicating the probability of the feature $i$ being present, 
\citet{ding2010transforms} and \citet{marichal2011weighted} considers solving the best $\ell^{\text{th}}$ order polynomial approximation under  $\|\cdot\|_{\mu}$ norm.

In contrast to the above work, our developments could be cast as pseudo-Boolean approximations for the general weighted norm case $\|\cdot\|_{\mu}$, for general weighting functions $\mu(\cdot)$ without stringent structural assumptions, and while allowing for arbitrary maximum interaction orders $\ell \in [d]$.

\section{Experiments}
We first provide some experiments validating the relative computational efficiency of computing our Faith-Interaction indices, followed by quantitative and qualitative demonstrations of their use as explanations of ML models over a language dataset.

The language dataset we use throughout the experiment is the simplified IMDB \citep{maas2011learning} dataset, where the model only uses the first two sentences of movie reviews as input, and predicts the probability of the reviews being positive. 
The model being explained is a BERT language model \citep{devlin2018bert} with $0.82$ accuracy on the test set.

\subsection{Computational Efficiency}
\label{sec:exp_compute}
Exact computation of interaction indices that aggregate over all possible feature subsets exactly typically requires $2^d$ model evaluations (with $d$ features) which is impractical in most machine learning applications.
A key advantage of our Faith-Interaction indices, as compared to other recently proposed interaction indices such as the Shapley Taylor index and Shapley Interaction index, is that they can be computed by solving a weighted least squares problem. 
As we empirically show in this section, this enables us to provide more accurate estimates with fewer model evaluations, compared to the other recent approaches that employ permutation-based sampling methods.

\begin{wraptable}{r}{8cm}
\centering
\resizebox{0.5\textwidth}{!}{
\begin{tabular}{|c|c|c|} 
\toprule
Methods & Simplified IMDB & Bank marketing \\
\midrule
Shapley Taylor & 2781.2 & 7368.3\\
\midrule
Shapley Interaction & 3960.4 & 10421.7\\
\midrule
Faith-Shap & \textbf{887.4} & \textbf{893.7} \\
\bottomrule
\end{tabular}}
\caption{Run-time comparison of different Shapley values. Each value represents the number of evaluations required to achieve averaged squared distance less than $10^{-3}$.}
\label{table:runtime}
\end{wraptable} 

\paragraph{Setup:}

To demonstrate the computational efficiency of Faith-Interaction indices, we compare our proposed Faith-Shap with Shapley interaction and Shapley Taylor interaction indices using different estimation methods.
For the Faith-Shap interaction index, 
We use Eqn.\eqref{eqn:faith_shap_estimation} and solve the corresponding linear regression problem with $\ell_1$ regularization, and regularization parameter $\alpha = 10^{-3}$ and $\alpha = 10^{-6}$ for the simplified IMDB dataset and the bank dataset. For the Shapley Taylor interaction and Shapley Interaction indices, we use the permutation-based sampling methods ( see exact algorithms in the Appendix \ref{sec:additional_exp}).

We compare these indices in two datasets: 
(1) language dataset: we randomly choose $50$ samples with $d=15$ words from the test set of simplified IMDB and set $\ell=2$. We treat each word in each text sentence as an input feature and set the baseline to empty text so that we simply remove a word if it does not appear in a coalition. We use BERT prediction scores as outputs of the value function.
(2) Portuguese marketing dataset~\citep{moro2014data}: this is a tabular dataset with $d=17$ features. We train an xgboost model~\citep{chen2016xgboost} with $90.5\%$ accuracy on the test set. We also compare them on synthetic sparse functions in the Appendix.

To measure how close an interaction index is to its ground-truth value, we use two evaluation metrics: 
(1) averaged squared distance of all top indices, $\norm{\ex - \ex^{\text{est}}}_2^2/{d \choose \ell}$,
and 
(2) precision at $10$, which we measure the proportion of top-10 feature interactions (with respect to absolute value) in the top-10 ground-truth interactions as top interactions are more critical when these indices are used in XAI.
(3) We also report run-time, as measured by the number of model evaluations required to achieve averaged squared distance to be smaller than $10^{-3}$.

We also note that we drop the lower-order indices and only compare top-order indices (order=$\ell$)
since computing lower-order Shapley Taylor indices are trivial. 
We sampled all $2^d$ different coalitions to compute the ground truth of each index. 
Each evaluation metric is reported by averaging $50$ different inputs with $20$ different random seeds.

\paragraph{Results:}

From Figure \ref{fig:computation_efficiency} and Table \ref{table:runtime}, we see that Faith-Shap can be estimated more accurately and uses fewer model evaluations: in both language data and sparse settings, as well as in terms of all evaluation metrics.

\begin{figure}[!ht]
\centering
      \subfigure[\label{fig:nlp_order2_l2}\small{ Averaged squared distance (language).}]{\includegraphics[width=0.24\textwidth]{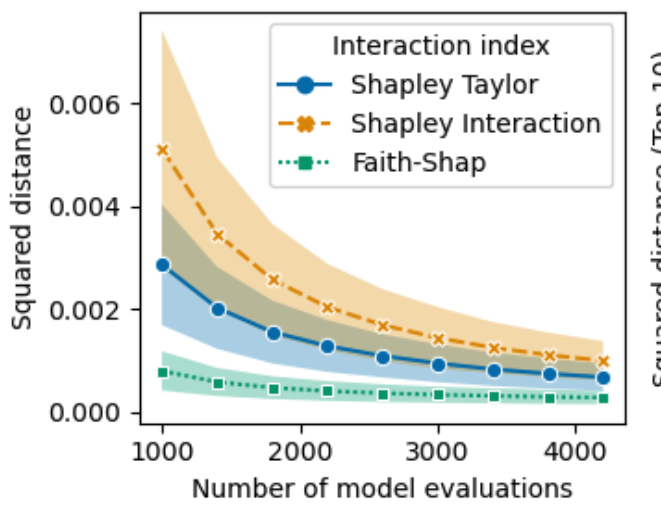}}\hfill
      \subfigure[\label{fig:nlp_order2_prec}\small{ Precision@10  (language).}]{\includegraphics[width=0.24\textwidth]{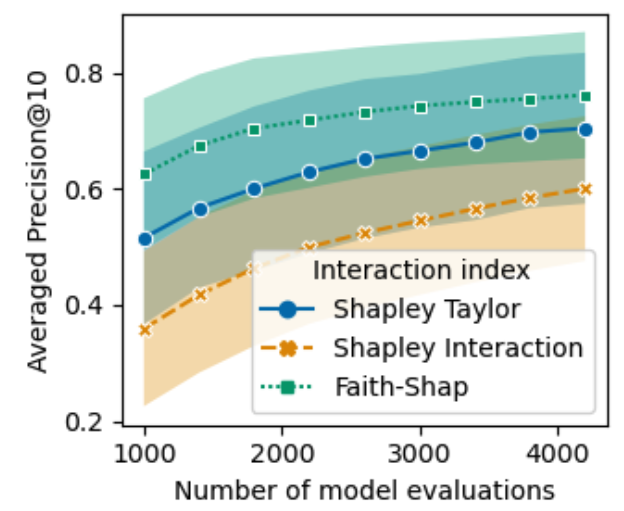}}\hfill
      \subfigure[\label{fig:sparse_l2}\small{ Average squared distance (bank).}]{\includegraphics[width=0.24\textwidth]{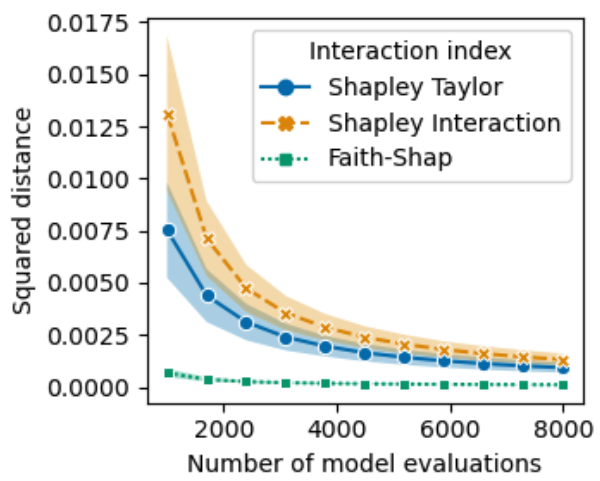}}\hfill
      \subfigure[\label{fig:sparse_prec}\small{ Precision@10 (bank).}]{\includegraphics[width=0.24\textwidth]{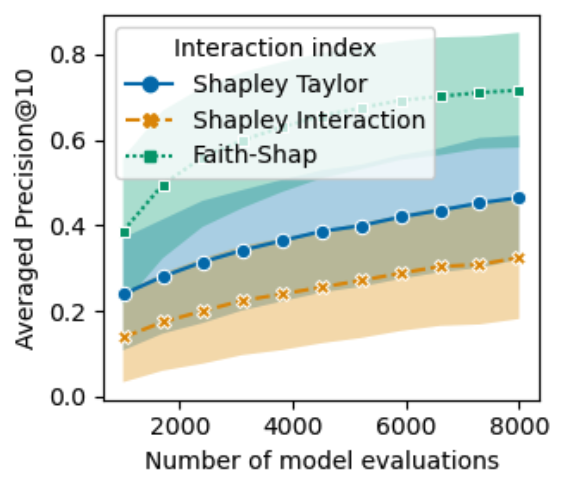}}\hfill
\caption{Comparison of Faith-Shap, Shapley Taylor and Shapley interaction indices
in terms of computational efficiency in language data and synthetic sparse functions. The shaded area indicates the total area between one standard deviation above and below. }
\label{fig:computation_efficiency}
\end{figure}

\subsection{ Explanations on a Language Dataset} \label{sec:nlp}

In this section, we use our Faith-Shap interaction index to explain the BERT model on the simplified IMDB dataset. The experimental setting is the same as described in Section \ref{sec:exp_compute} except that 
we set the maximum interaction order $\ell = 2$, the regularization parameter $\alpha = 10^{-3}$ for Lasso, and sample 4000 coalitions for each text in the simplified IMDB. 
Table \ref{table:imdb} shows some of the interesting interactions we found.

\begin{table*}[h!]
\small
\center
\resizebox{1\linewidth}{!}{
\begin{tabular}{m{0.05\linewidth}|m{0.8\linewidth}|m{0.08\linewidth}|m{0.08\linewidth}} 
\toprule
Index & Sentences (bold words are the interactions with the highest (absolute) importance values) & Model Prediction & Interaction score \\
\midrule
1 & I have \textbf{Never forgot} this movie. All these years and it has remained in my life. & Positive & 0.818 \\
\midrule
2 & TWINS EFFECT is a poor film in so many respects. The \textbf{only good} element is that it doesn't take itself seriously..
& Negative & -0.375 \\
\midrule
3 & I rented this movie to get an easy, entertained view of the history of Texas. I got a \textbf{headache instead}. & Negative & 0.396 \\
\midrule
4 & Truly \textbf{appalling waste} of space. Me and my friend tried to watch this film to its conclusion but had to switch it off about 30 minutes from the end. & Negative & 0.357 \\
\midrule
5 & I still remember watching Satya for the first time. I was completely \textbf{blown away}. & Positive & 0.283 \\
\bottomrule
\end{tabular}}
\caption{Top interactions of different examples on IMDB. See more results in Appendix \ref{sec:additional_exp}.}
\label{table:imdb}
\end{table*}
In the first two examples, we see non-complementary interaction effects. In the first example, while the importance values of the individual words ``Never'' and ``forgot'' are negative (as shown in Tables \ref{tab:index3}, \ref{tab:index5} in the Appendix), their joint effect as shown in the table here is extremely positive. Similarly, for the second, the words ``only'' and ``good'' are individually positive, while their joint effect is strongly negative. The fourth and fifth examples show more subtle non-complementarity effects. In the fourth example, while the individual words ``headache'' and ``instead'' have negative importance scores, their joint effect is positive, since the total effect of the phrase is less than the sum of the individual importance of these two words. The last example shows the effect of complementarity: words in a phrase are only meaningful when all words are present, and hence have a positive interaction effect. 

In Appendix \ref{sec:additional_exp}, we further show the top-15 important interactions and compare them to those from Shapley Taylor index, Shapley interaction index, Integrated Hessian~\citep{janizek2021explaining} 
and Archipelago~\citep{tsang2020extracting}.

 We find that although Faith-Shap, Shapley Taylor index, and Shapley interaction index capture similar feature interactions, the later two methods are not able to find meaningful singleton features. The reasons are (1) the first-order terms of the Shapley Taylor indices are trivial, which is the difference between predicted probabilities of a sentence containing only one word and an empty sentence (a baseline) (2) importance scores for the first order Shapley value and Shapley interaction index are not comparable since the Shapley interaction index does not satisfy the efficiency axiom. For integrated Hessian, we empirically find that the BERT model assigns higher values for self-interactions and punctuation marks.

\section{Related work}

\paragraph{Related work in cooperative theory:} in cooperative game theory, a set function $\f(\cdot)$ with $\f(\emptyset) = 0$ corresponds to a transferable utility game (TU-game), and a set function with order $\leq \ell$ is called an $\ell$-additive TU-game~\citep{grabisch2016set}. Therefore, our approach can be viewed as a least squares approximation of a TU-game by an $\ell$-additive TU-game; see for instance Eqn.~\eqref{eqn:constrained_weighted_regresion}.
Variants and special cases of this least squares approximation problem have been studied in the cooperative game theory field. For $\ell=1$, \citet{charnes1988extremal} first give general solutions when the weighting function is symmetric and positive, %
and show that the Shapley value results from a particular choice of the weighting function. \citet{ruiz1996least,ruiz1998family} consider the same setting, and study the axiomatic properties of the solutions of the least squares problems. \citet{ding2008formulas} further generalizes the previous results by considering the cases where some weights are allowed to be zero.
For the case where maximum interaction order $\ell>1$, \citet{hammer1992approximations} and \citet{ grabisch2000equivalent} solve the least squares problem when the weighting function is a constant, and show that the top-level coefficients coincide with those of the Banzhaf interaction indices of order $\ell$.
\citet{ding2010transforms} and \citet{marichal2011weighted} consider a certain weighted version of the problem, and propose weighted Banzhaf interaction indices. \citet{grabisch2020k} consider the approximation problem under the constraints that both TU-games yield the same Shapley value. 
\citet{marichal1999chaining} extend the Shapley value and propose the chaining interaction index whose definition is based on maximal chains of ordered sets. 
For more details on this line of work, see the recent book~\citep{grabisch2016set}. From the lens of TU-game approximation, our work could be viewed as allowing for general weighting functions $\mu(\cdot)$ without stringent structural assumptions, as well as arbitrary maximum interaction orders $\ell \in [d]$.

While the Shapley value focuses on a fair allocation among players, there exist other solution concepts in cooperative game theory that have different purposes. For example,  core~\citep{gillies1953some} allocates the total payoff in a stable manner, nucleolus~\citep{schmeidler1969nucleolus} is a solution lying in the core with unique axiomatic properties, and the Nash bargaining solution~\citep{nash1950bargaining} focuses on two-player bargaining problems. Extending these concepts to interaction contexts may lead to different solutions with different properties, and are interesting topics for future work.

\paragraph{Feature attribution in XAI:} When Faith-Shap is used in XAI, it can be seen as a local and post-hoc approach that extracts singleton features and feature interaction importances for a given prediction. It can be viewed as an order $\ell$ polynomial model,  with desired axiomatic properties, that explains how a black-box model behaves locally. Explaining complex models with an interpretable \emph{local surrogate model} has been substantially studied in XAI.
LIME~\citep{ribeiro2016should} use a local linear model to describe a prediction made by the model being explained. Model Agnostic Supervised Local Explanations (MAPLE)~\citep{plumb2018model} utilize local linear modeling and dual interpretation of random forests. 
AnchorLIME~\citep{ribeiro2018anchors} uses IF-THEN rules to generate explanations. Model Understanding through Subspace Explanations (MUSE)~\citep{lakkaraju2019faithful} explains how the model behaves in subspaces characterized by certain features of interest. Kernel SHAP~\citep{lundberg2017unified} can be viewed as first-order Faith-Shap. These approaches assign credits to each individual feature based on how much it influences the models' prediction and do not aim to explain how feature interactions affect the model.

\paragraph{Feature interactions in XAI:} Feature interactions have also been investigated in the machine learning community.
\citet{tsang2017detecting} detect feature interactions by examining weight matrices of DNNs. 
\citet{tsang2018neural} disentangle complex feature interactions within DNNs by forcing the weights matrices to be block-diagonal.
\citet{singh2018hierarchical} build hierarchical explanations within a feed-forward neural network using hierarchical clustering of features. 
\citet{cui2019learning} and \citet{janizek2021explaining} explain pairwise interactions in neural networks, and Bayesian neural networks respectively via second-order derivatives. 
\citet{lundberg2020local} quantify feature interactions in tree-based models using the Shapley interaction index. \citet{tsang2020does} proposes Archipelago, which quantifies the interaction within a feature group $S$ via the marginal importance $\f(S) - \f(\emptyset)$. 

While these approaches have taken significant steps towards understanding feature interactions, they are limited to a certain kind of model architecture.
\citet{tsang2017detecting} and \citet{tsang2018neural} can only be applied to feed-forward neural network architectures, but not LSTMs and CNNs. 
While \citet{singh2018hierarchical} can be applied to LSTMs and CNNs, it is unclear how to apply it to recent innovations such as transformers. The approach of  \citet{cui2019learning} can only be applied to Bayesian neural networks, and \citet{janizek2021explaining} can only be applied to models where second-order derivatives exist everywhere. \citet{lundberg2020local} only study tree-based models.
While Archipelago~\citep{tsang2020does} is a post-hoc explanation approach that can be applied to any model, 
Archipelago measures the importance of a feature group as a whole, while Faith-Shap measures the marginal effects of interaction among feature groups. Also, the Archipelago does not obey the dummy axiom and satisfies the efficiency axiom only for certain kinds of functions.

\section{Conclusion}

Deriving unique interaction indices that satisfy the interaction extensions of the individual Shapley axioms has been a long-standing open problem. Existing approaches introduce additional less natural axioms, with some even sacrificing natural ones such as efficiency, in order to specify unique interaction indices. In this work, we take the alternate route of considering the family of what we term faithful interaction indices, which similar to individual Shapley values, aim to approximate the given set value function for all feature subsets. We show that when restricting to the class of faithful interaction indices, we obtain a unique interaction index that satisfies the interaction extensions of the individual Shapley axioms, which we term the Faithful Shapley Interaction Index (Faith-Shap). We  show the benefits of the faithful Shapley interaction index via specific games of interest where there is diminishing return and increasing return and connect the Faith-Shap to cardinal probabilistic indices and multilinear approximations. Finally, we show that Faith-Shap is efficient to estimate thanks to its connection to weighted linear regression in sparse settings, and provide some qualitative results for their use as explanations of machine learning models on a real language dataset.

\section{Acknoledgement}
The authors would like to thank Michel Grabisch for his generous feedback and thank Hung-Hsun Yu for providing assistance in deriving the closed-form solution of Faith-Shap.

\begin{small}\bibliography{ref}
\bibliographystyle{plainnat}
\end{small}

\newpage

\appendix

\section{Organization}

The Appendices contain additional technical content and are organized as follows: 
In Appendix \ref{sec:additional_exp}, we provide details for sampling algorithms for different indices and supplementary results for different setups for the computational efficiency experiment in Section \ref{sec:exp_compute}.
In Appendix \ref{sec:complete_expl}, we give experimental details and show the detailed results of the Faithful Shapley Interaction value and Shapley Taylor indices.
In Appendix \ref{sec:additional_guidance}, we provide additional guidance on Theorem, where we clarify how to choose the parameters $a,b$ to design Faith-Interaction indices. 
In Appendix \ref{sec:aux_thm_results}, we provide auxiliary theoretical results of the Faith-Interaction indices, which will be subsequently used in our proof of main theorems. 
Finally, in Appendix \ref{sec:proofs_propositions}, \ref{sec:proofs_theorems} and \ref{sec:proofs_claims}, we provide the proof of propositions, theorems, and claims respectively.

\section{Experimental Details and Supplementary Results of Computational Efficiency}
\label{sec:additional_exp}
In this section, we provide implementation details of the sampling algorithms for different indices as well as supplementary experimental results for computational efficiency experiments.

The sampling algorithms for the Shapley Taylor and Shapley interaction indices are shown in Algorithm \ref{alg:taylor_shap} and \ref{alg:shap_interaction}. These algorithms are based on the fact that these two indices are the expected value of discrete derivatives over different ordering processes~\citep[Section 2.2]{sundararajan2020shapley}. These algorithms are more efficient since they may use $\f(S)$ of the same coalition $S$ to compute indices of different subsets.
Also, to measure the run-time of each index, we measure its average squared distance every 200/300 model evaluation.

We also measure computation efficiency on the synthetic sparse functions, which is constructed as follows: we parameterize the synthetic sparse function $\f:\{0,1\}^d \rightarrow \mathbb{R}$ with $\sum_{i=1}^N a_i \prod_{j \in S_i} x_j$, where $x = \{x_1,\cdots,x_d \} \in \{0,1 \}^d$ are the input of the value function,  $S_1,S_2,\cdots, S_N$ are subsets of $[d]$ and $a_1,\cdots,a_N$ are coefficients. We set $d=70$, $N=30$, $\ell=2$ and $d=90$, $N=10$, $\ell=2$, sample each $a_i$ uniformly over $[-\frac{i}{10},\frac{i}{10}]$. Each $S_i$ is uniformly sampled over subsets of $|S|$ with sizes $\leq 5$ and $\leq 10$, respectively. We use Eqn.\eqref{eqn:faith_shapley} to compute the ground truth of interaction indices for sparse synthetic functions. The results are shown in Figure \ref{fig:extra_sparse}. We can see that Faith-Shap is more efficient in terms of all evaluation metrics.

\begin{figure}[!ht]
\centering
      \subfigure[\label{fig:appendix_d90_square2}\small{ 
      Averaged squared distance for $d=70$, $\ell=2, N = 30, |S| \leq 5$. }]{\includegraphics[width=0.23\textwidth]{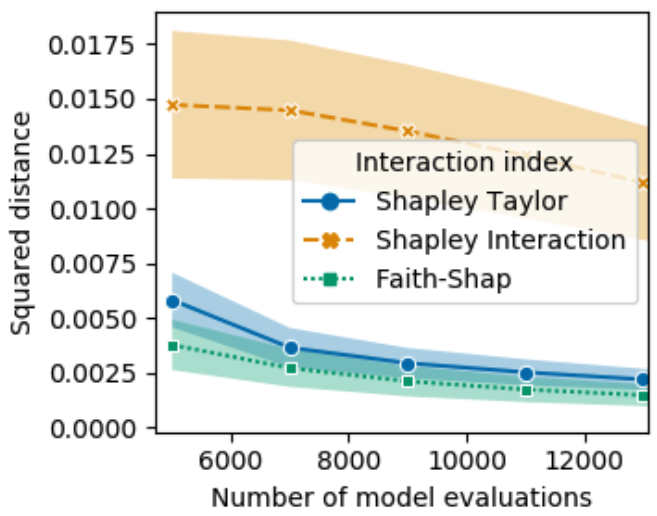}}\hfill
      \subfigure[\label{fig:appendix_d90_precision2}\small{ Precision@10 for $d=70$, $\ell=2, N = 30, |S| \leq 5$.}]{\includegraphics[width=0.23\textwidth]{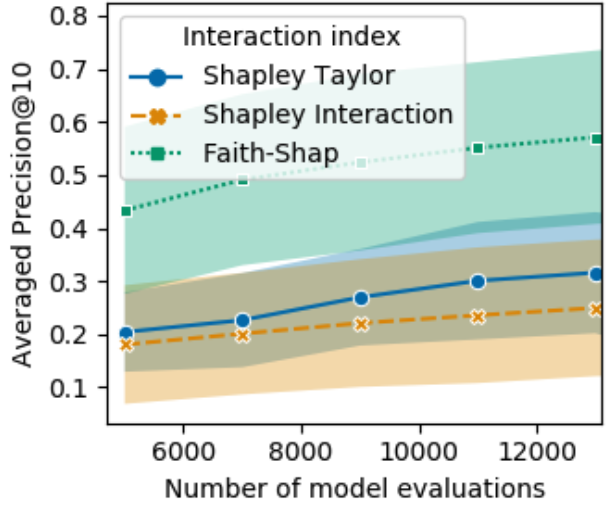}}\hfill
      \subfigure[\label{fig:appendix_d90_square1}\small{ 
      Averaged squared distance for $d=90$, $\ell=2, N = 20, |S| \leq 10$. }]{\includegraphics[width=0.23\textwidth]{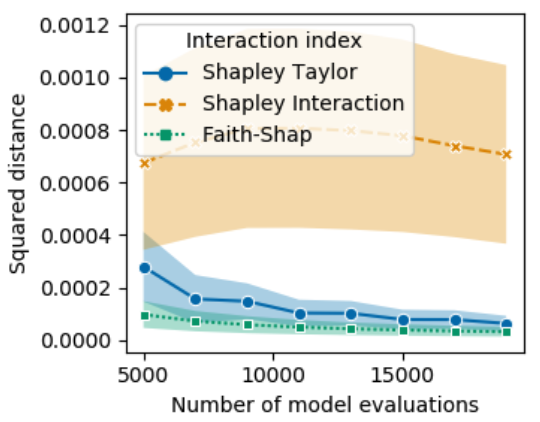}}\hfill
      \subfigure[\label{fig:appendix_d90_precision1}\small{Precision@10 for $d=90$, $\ell=2, N = 20, |S| \leq 10$. }]{\includegraphics[width=0.23\textwidth]{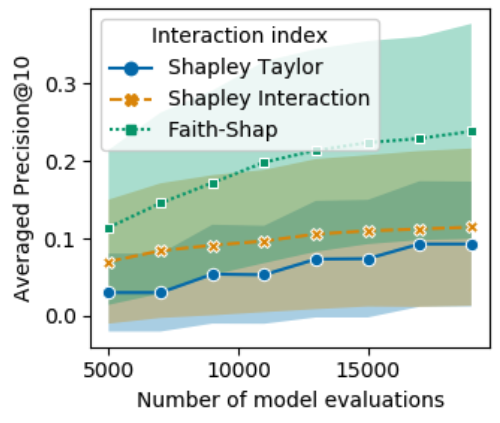}}\hfill
\caption{Comparison of Faith-Shap, Shapley Taylor and Shapley interaction indices
in terms of computational efficiency on synthetic sparse functions for $d=70$, $\ell=2, N = 30, |S| \leq 5$, and $d=90$, $\ell=2, N = 20, |S| \leq 10$.}
\label{fig:extra_sparse}
\end{figure}

\begin{algorithm}[!ht]
\caption{Permutation-based sampling algorithm for the top-order Shapley Taylor index}
\SetKwInOut{Input}{input}
\SetKwInOut{Output}{output}
\Input{a value function $\f:2^d \mapsto \mathbb{R}$, maximum order $\ell$.} 

\SetKwBlock{Beginn}{beginn}{ende}
 \Begin{
    $\text{sum}[S] \leftarrow 0 $  for all sets $S \subseteq [d]$ with size $\ell$. \\
    $\text{count}[S] \leftarrow 0 $ for all sets $S \subseteq [d]$ with size $\ell$.  \\
    \For{$t=1,2,...$}{
        $\pi \leftarrow \{i_1,\cdots, i_d\} $ be a random ordering of $\{1,2,\cdots,d\}$. \\
        \For{ all set $S \subseteq [d]$ with size $\ell$}{
        $i_k \leftarrow$ the leftmost element of $S$ in the ordering $\pi$. \\
        $T \leftarrow \{i_1,\cdots, i_{k-1} \}$ the set of predecessors of $i_k$ in $\pi$. \\
        $\text{sum}[S] \leftarrow \text{sum}[S] + \Delta_S(\f(T)) $. \\
        $\text{count}[S] = \text{count}[S] +1$. \\
        }
    }
    $\text{indices}[S] \leftarrow \text{sum}[S] / \text{count}[S]$  for all sets $S \subseteq [d]$ with size $\ell$. \\
    \Return $\text{indices}$  \\
}
\label{alg:taylor_shap}
\end{algorithm}

\begin{algorithm}[!ht]
\caption{Permutation-based sampling algorithm for the Shapley Interaction index.}
\SetKwInOut{Input}{input}
\SetKwInOut{Output}{output}
\Input{a value function $\f:2^d \mapsto \mathbb{R}$, maximum order $\ell$.} 

\SetKwBlock{Beginn}{beginn}{ende}
 \Begin{
    $\text{sum}[S] \leftarrow 0 $  for all sets $S \subseteq [d]$ with size $\ell$. \\
    $\text{count}[S] \leftarrow 0 $ for all sets $S \subseteq [d]$ with size $\ell$.  \\
    \For{$t=1,2,...$}{
        $\pi \leftarrow \{i_1,\cdots, i_d\} $ be a random ordering of $\{1,2,\cdots,d\}$. \\
        \For{$k= 1,\cdots,d-\ell+1$}{
        $S \leftarrow \{i_k,\cdots,i_{k+\ell-1} \}$. \\
        $T \leftarrow \{i_1,\cdots, i_{k-1} \}$ the set of predecessors of $i_k$ in $\pi$. \\
        $\text{sum}[S] \leftarrow \text{sum}[S] + \Delta_S(\f(T)) $. \\
        $\text{count}[S] = \text{count}[S] +1$. \\
        }
    }
    $\text{indices}[S] \leftarrow \text{sum}[S] / \text{count}[S]$  for all sets $S \subseteq [d]$ with size $\ell$. \\
    \Return $\text{indices}$  \\
}
\label{alg:shap_interaction}
\end{algorithm}

\newpage

\section{Experimental details for Language Dataset}
\label{sec:complete_expl}

For the dataset, the Internet Movie Review Dataset (IMDb)~\citep{maas2011learning} consists of 50,000 binary labeled movie reviews. Each review is annotated as a positive or negative review. We used 25,000 reviews for training and 25,000 reviews for evaluation. 

Here, the set function $\f(x)$ represents the predicted probability of input texts being positive sentiment, which is between 0 and 1. We remove a word in a text sequence if the corresponding entry of the word in a binary perturbation variable $x$ is 0. 
we use 4000 samples to estimate both Faithful Shapley Interaction indices and Shapley Taylor indices. We use Lasso with regularization parameter $\alpha = 0.001$ to estimate 
Faithful Shapley Interaction indices and permutation-based sampling method to estimate the highest order Shapley Taylor indices $(\ell=2)$.

For comparison with other feature interactions methods in XAI, we provide top-15 important features (interactions) for Faith-Shap, Shapley Taylor interaction indices, Shapley Interaction indices, Integrated Hessian~\citep{janizek2021explaining}, and Archipelago~\citep{tsang2020extracting}
in Table \ref{tab:index1} to \ref{tab:index5}. We note that Archipelago is run with the number of interactions $k=3$, and its usage is slightly different than our methods: it constructs a feature hierarchy as an explanation rather than measuring the importance scores of feature interactions.

\begin{table}[ht]
\centering
\small
\resizebox{0.9\textwidth}{!}{
\begin{tabular}{m{0.05\linewidth}|m{0.33\linewidth}|m{0.13\linewidth}|m{0.33\linewidth}|m{0.13\linewidth}} 
\toprule
Index & 
\multicolumn{4}{l}{
\begin{tabular}{m{0.80\linewidth}|m{0.13\linewidth}} Sentences & Predicted Prob. \\
\end{tabular}} \\ 
\midrule
\multirow{34}{*}{1} & \multicolumn{4}{l}{\begin{tabular}{m{0.80\linewidth}|m{0.13\linewidth}} I have Never forgot this movie. All these years and it has remained in my life. & 0.992 \\ \end{tabular}} \\ 
\cmidrule[1.5pt]{2-5} & \multicolumn{2}{c|}{\textit{Faithful Shapley indices}} & \multicolumn{2}{c}{\textit{Shapley Taylor indices}}  \\ 
\cmidrule{2-5}
& Feature (interactions) & Scores & Feature (interactions) & Scores \\
\cmidrule{2-5}
& Never, forgot & 0.818 & Never, forgot & 1.077 \\
\cmidrule{2-5}
& life & 0.383 & Never, life & -0.211 \\
\cmidrule{2-5}
& forgot & -0.254 & remained, movie & -0.177 \\
\cmidrule{2-5}
& and & 0.168 & Never, this & -0.160 \\
\cmidrule{2-5}
& it & 0.168 & forgot, life & -0.149 \\
\cmidrule{2-5}
& Never & -0.163 & and, forgot & -0.149 \\
\cmidrule{2-5}
& years & 0.156 & in, life & -0.143 \\
\cmidrule{2-5}
& All & 0.132 & Never, it & -0.122 \\
\cmidrule{2-5}
& my & 0.126 & Never, movie & -0.114 \\
\cmidrule{2-5}
& has & 0.120 & have, Never & -0.110 \\
\cmidrule{2-5}
& have & 0.112 & I, have & 0.106 \\
\cmidrule{2-5}
& Never, life & -0.106 & forgot, in & -0.105 \\
\cmidrule{2-5}
& forgot, it & -0.096 & Never, All & -0.104 \\
\cmidrule{2-5}
& my, life & -0.086 & years, life & -0.101 \\
\cmidrule{2-5}
& this & 0.081 & it, forgot & -0.101 \\
\cmidrule[1.5pt]{2-5}
& \multicolumn{2}{c|}{\textit{Shapley interaction indices }} & \multicolumn{2}{c}{\textit{Integrated Hessian}}  \\
\cmidrule{2-5}
& Never, forgot & 1.166 & this, this & 1.796 \\
\cmidrule{2-5}
& these, life & -0.194 & in, in & -1.406 \\
\cmidrule{2-5}
& have, Never & -0.164 & my, my & -1.357 \\
\cmidrule{2-5}
& Never, it & -0.154 & ., . & 1.123 \\
\cmidrule{2-5}
& it, forgot & -0.149 & have, this & 1.050 \\
\cmidrule{2-5}
& forgot, life & -0.148 & movie, movie & 1.038 \\
\cmidrule{2-5}
& forgot, remained & 0.146 & it, it & -0.972 \\
\cmidrule{2-5}
& have, forgot & -0.139 & never, this & 0.820 \\
\cmidrule{2-5}
& and, Never & -0.136 & in, my & 0.792 \\
\cmidrule{2-5}
& it, life & -0.135 & in, . & 0.719 \\
\cmidrule{2-5}
& and, it & 0.131 & ., . & -0.602 \\
\cmidrule{2-5}
& forgot, in & -0.129 & this, in & 0.554 \\
\cmidrule{2-5}
& I, it & -0.125 & remained, remained & -0.532 \\
\cmidrule{2-5}
& years, Never & -0.121 & this, life & 0.526 \\
\cmidrule{2-5}
& forgot, my & -0.119 & remained, my & 0.526 \\
\cmidrule[1.5pt]{2-5}
& \multicolumn{4}{c}{ Archipalego
} \\
\cmidrule{2-5}
& \multicolumn{4}{c}{\includegraphics[width=\textwidth]{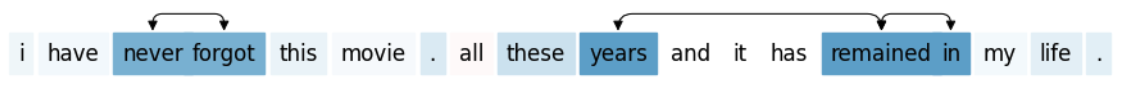}
} \\
\bottomrule    
\end{tabular}}
\vspace{0.2cm}
\caption{Top-15 important feature (interactions) for different methods for language dataset. The predicted probability is the out probability of the sentence having positive sentiment. The archipalego algorithm is run with $k=3$ interactions. }
\label{tab:index1}
\end{table}

\begin{table}[ht]
\centering
\small
\resizebox{0.9\textwidth}{!}{
\begin{tabular}{m{0.05\linewidth}|m{0.33\linewidth}|m{0.13\linewidth}|m{0.33\linewidth}|m{0.13\linewidth}} 
\toprule
Index & 
\multicolumn{4}{l}{
\begin{tabular}{m{0.80\linewidth}|m{0.13\linewidth}} Sentences & Predicted Prob. \\
\end{tabular}} \\ 
\midrule
\multirow{34}{*}{2} & \multicolumn{4}{l}{\begin{tabular}{m{0.80\linewidth}|m{0.13\linewidth}}TWINS EFFECT is a poor film in so many respects. The only good element is that it doesn't take itself seriously. & 0.012 \\ \end{tabular}} \\ 
\cmidrule[1.5pt]{2-5} & \multicolumn{2}{c|}{\textit{Faithful Shapley indices}} & \multicolumn{2}{c}{\textit{Shapley Taylor indices}}  \\ 
\cmidrule{2-5}
& Feature (interactions) & Scores & Feature (interactions) & Scores \\
\cmidrule{2-5}
& poor & -0.341 & only, good & -0.450 \\
\cmidrule{2-5}
& respects & 0.297 & EFFECT, good & -0.182 \\
\cmidrule{2-5}
& only, good & -0.243 & good, is & -0.171 \\
\cmidrule{2-5}
& poor, only & 0.206 & poor, film & -0.169 \\
\cmidrule{2-5}
& good & 0.176 & only, element & -0.168 \\
\cmidrule{2-5}
& poor, respects & -0.173 & doesn't, poor & 0.151 \\
\cmidrule{2-5}
& doesn't & -0.169 & only, poor & 0.150 \\
\cmidrule{2-5}
& poor, good & 0.122 & respects, poor & -0.149 \\
\cmidrule{2-5}
& only, doesn't & 0.115 & itself, poor & -0.142 \\
\cmidrule{2-5}
& poor, doesn't & 0.111 & respects, good & -0.137 \\
\cmidrule{2-5}
& many & 0.095 & it, doesn't & -0.108 \\
\cmidrule{2-5}
& it & 0.084 & it, only & -0.098 \\
\cmidrule{2-5}
& itself & 0.083 & take, seriously & 0.095 \\
\cmidrule{2-5}
& element & 0.076 & doesn't, good & -0.094 \\
\cmidrule{2-5}
& poor, many & -0.070 & doesn't, only & 0.093 \\
\cmidrule[1.5pt]{2-5}
& \multicolumn{2}{c|}{\textit{Shapley interaction indices }} & \multicolumn{2}{c}{\textit{Integrated Hessian}}  \\
\cmidrule{2-5}
& only, good & -0.280 & ., . & 19.441 \\
\cmidrule{2-5}
& a, only & -0.259 & is, . & 3.506 \\
\cmidrule{2-5}
& only, poor & 0.223 & the, . & 2.802 \\
\cmidrule{2-5}
& poor, good & 0.171 & only, . & 2.374 \\
\cmidrule{2-5}
& doesn't, poor & 0.159 & take, . & 2.004 \\
\cmidrule{2-5}
& respects, poor & -0.154 & it, . & 1.463 \\
\cmidrule{2-5}
& good, is & -0.150 & good, . & 1.462 \\
\cmidrule{2-5}
& The, good & -0.146 & seriously, . & 1.372 \\
\cmidrule{2-5}
& poor, element & -0.146 & doesn, . & 1.282 \\
\cmidrule{2-5}
& doesn't, only & 0.142 & is, . & 1.226 \\
\cmidrule{2-5}
& doesn't, take & -0.130 & that, . & 1.169 \\
\cmidrule{2-5}
& respects, only & -0.120 & ', ' & 1.143 \\
\cmidrule{2-5}
& good, element & -0.119 & ., . & -1.001 \\
\cmidrule{2-5}
& it, take & -0.117 & is, is & 0.996 \\
\cmidrule{2-5}
& so, that & -0.112 & itself, . & 0.914 \\
\cmidrule[1.5pt]{2-5}
& \multicolumn{4}{c}{ Archipalego
} \\
\cmidrule{2-5}
& \multicolumn{4}{c}{\includegraphics[width=\textwidth]{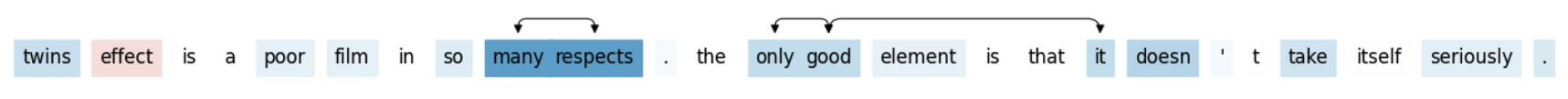}
} \\
\bottomrule    
\end{tabular}}
\vspace{0.2cm}
\caption{Top-15 important feature (interactions) for different methods for language dataset. The predicted probability is the out probability of the sentence having positive sentiment. The archipalego algorithm is run with $k=3$ interactions. }
\label{tab:index2}
\end{table}

\begin{table}[ht]
\centering
\small
\resizebox{0.9\textwidth}{!}{
\begin{tabular}{m{0.05\linewidth}|m{0.33\linewidth}|m{0.13\linewidth}|m{0.33\linewidth}|m{0.13\linewidth}} 
\toprule
Index & 
\multicolumn{4}{l}{
\begin{tabular}{m{0.80\linewidth}|m{0.13\linewidth}} Sentences & Predicted Prob. \\
\end{tabular}} \\ 
\midrule
\multirow{34}{*}{3} & \multicolumn{4}{l}{\begin{tabular}{m{0.80\linewidth}|m{0.13\linewidth}}
I rented this movie to get an easy, entertained view of the history of Texas. I got a headache instead. & 0.026 \\ \end{tabular}} \\ 
\cmidrule[1.5pt]{2-5} & \multicolumn{2}{c|}{\textit{Faithful Shapley indices}} & \multicolumn{2}{c}{\textit{Shapley Taylor indices}}  \\ 
\cmidrule{2-5}
& Feature (interactions) & Scores & Feature (interactions) & Scores \\
\cmidrule{2-5}
& instead & -0.321 & headache, instead & 0.268 \\
\cmidrule{2-5}
& headache, instead & 0.252 & view, instead & -0.178 \\
\cmidrule{2-5}
& headache & -0.205 & headache, Texas & -0.139 \\
\cmidrule{2-5}
& easy & 0.158 & rented, instead & 0.137 \\
\cmidrule{2-5}
& view & 0.130 & instead, easy & -0.125 \\
\cmidrule{2-5}
& history & 0.123 & got, headache & -0.118 \\
\cmidrule{2-5}
& rented & -0.122 & entertained, instead & -0.115 \\
\cmidrule{2-5}
& Texas & 0.101 & rented, headache & 0.109 \\
\cmidrule{2-5}
& entertained & 0.095 & got, easy & -0.108 \\
\cmidrule{2-5}
& rented, instead & 0.085 & got, history & -0.105 \\
\cmidrule{2-5}
& Texas, headache & -0.069 & a, I & -0.100 \\
\cmidrule{2-5}
& history, instead & -0.064 & view, history & 0.100 \\
\cmidrule{2-5}
& the & 0.059 & got, rented & -0.100 \\
\cmidrule{2-5}
& entertained, instead & -0.057 & got, a & -0.099 \\
\cmidrule{2-5}
& this & 0.052 & history, an & 0.094 \\
\cmidrule[1.5pt]{2-5}
& \multicolumn{2}{c|}{\textit{Shapley interaction indices }} & \multicolumn{2}{c}{\textit{Integrated Hessian}}  \\
\cmidrule{2-5}
& headache, instead & 0.333 & ., . & -13.363 \\
\cmidrule{2-5}
& easy, I & -0.248 & i, . & -2.441 \\
\cmidrule{2-5}
& movie, instead & -0.226 & ., a & -2.117 \\
\cmidrule{2-5}
& history, to & -0.162 & ., . & -1.420 \\
\cmidrule{2-5}
& Texas, an & -0.135 & texas, . & 1.171 \\
\cmidrule{2-5}
& rented, easy & -0.135 & this, . & -1.087 \\
\cmidrule{2-5}
& entertained, easy & 0.130 & ., i & -1.056 \\
\cmidrule{2-5}
& to, easy & -0.115 & to, . & -1.035 \\
\cmidrule{2-5}
& view, instead & -0.114 & of, . & 0.843 \\
\cmidrule{2-5}
& entertained, instead & -0.112 & entertained, entertained & -0.761 \\
\cmidrule{2-5}
& of, instead & -0.102 & headache, headache & 0.753 \\
\cmidrule{2-5}
& an, easy & 0.095 & history, history & -0.688 \\
\cmidrule{2-5}
& instead, easy & -0.093 & ., got & -0.673 \\
\cmidrule{2-5}
& of, easy & -0.087 & view, . & 0.657 \\
\cmidrule{2-5}
& get, easy & -0.085 & to, to & -0.599 \\
\cmidrule[1.5pt]{2-5}
\cmidrule[1.5pt]{2-5}
& \multicolumn{4}{c}{ Archipalego
} \\
\cmidrule{2-5}
& \multicolumn{4}{c}{\includegraphics[width=\textwidth]{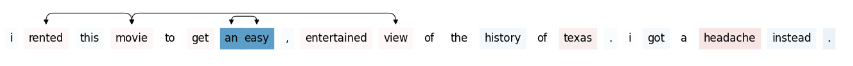}
} \\
\bottomrule     
\end{tabular}}
\vspace{0.2cm}
\caption{Top-15 important feature (interactions) for different methods for language dataset. The predicted probability is the out probability of the sentence having positive sentiment. The archipalego algorithm is run with $k=3$ interactions. }
\label{tab:index3}
\end{table}

\begin{table}[ht]
\centering
\small
\resizebox{0.9\textwidth}{!}{
\begin{tabular}{m{0.05\linewidth}|m{0.33\linewidth}|m{0.13\linewidth}|m{0.33\linewidth}|m{0.13\linewidth}} 
\toprule
Index & 
\multicolumn{4}{l}{
\begin{tabular}{m{0.80\linewidth}|m{0.13\linewidth}} Sentences & Predicted Prob. \\
\end{tabular}} \\ 
\midrule
\multirow{34}{*}{4} & \multicolumn{4}{l}{\begin{tabular}{m{0.80\linewidth}|m{0.13\linewidth}}Truly appalling waste of space. Me and my friend tried to watch this film to its conclusion but had to switch it off about 30 minutes from the end.
 & 0.002 \\ \end{tabular}} \\ 
\cmidrule[1.5pt]{2-5} & \multicolumn{2}{c|}{\textit{Faithful Shapley indices}} & \multicolumn{2}{c}{\textit{Shapley Taylor indices}}  \\ 
\cmidrule{2-5}
& Feature (interactions) & Scores & Feature (interactions) & Scores \\
\cmidrule{2-5}
& waste & -0.345 & appalling, waste & 0.298 \\
\cmidrule{2-5}
& appalling, waste & 0.257 & Truly, waste & -0.296 \\
\cmidrule{2-5}
& appalling & -0.251 & switch, it & -0.248 \\
\cmidrule{2-5}
& Truly & 0.169 & tried, waste & 0.230 \\
\cmidrule{2-5}
& waste, tried & 0.167 & but, watch & -0.210 \\
\cmidrule{2-5}
& friend & 0.162 & friend, waste & -0.184 \\
\cmidrule{2-5}
& space & 0.149 & friend, tried & -0.172 \\
\cmidrule{2-5}
& tried & -0.134 & friend, but & -0.169 \\
\cmidrule{2-5}
& Truly, waste & -0.118 & Truly, but & -0.145 \\
\cmidrule{2-5}
& watch & 0.087 & but, waste & 0.145 \\
\cmidrule{2-5}
& off & -0.086 & waste, watch & -0.140 \\
\cmidrule{2-5}
& and & 0.078 & waste, off & 0.138 \\
\cmidrule{2-5}
& waste, friend & -0.074 & had, space & -0.128 \\
\cmidrule{2-5}
& waste, space & -0.058 & Truly, film & 0.126 \\
\cmidrule{2-5}
& of & 0.055 & 30, waste & 0.124 \\
\cmidrule[1.5pt]{2-5}
& \multicolumn{2}{c|}{\textit{Shapley interaction indices }} & \multicolumn{2}{c}{\textit{Integrated Hessian}}  \\
\cmidrule{2-5}
& tried, watch & -0.365 & the, the & -31.568 \\
\cmidrule{2-5}
& appalling, waste & 0.293 & the, end & -13.784 \\
\cmidrule{2-5}
& tried, waste & 0.259 & the, . & 9.472 \\
\cmidrule{2-5}
& from, end & 0.230 & end, end & -5.719 \\
\cmidrule{2-5}
& conclusion, its & 0.228 & end, . & 3.522 \\
\cmidrule{2-5}
& Truly, waste & -0.210 & from, the & 3.390 \\
\cmidrule{2-5}
& waste, space & -0.202 & ., . & 2.616 \\
\cmidrule{2-5}
& space, off & -0.191 & had, the & 1.540 \\
\cmidrule{2-5}
& to, its & -0.180 & from, end & 1.441 \\
\cmidrule{2-5}
& to, space & 0.166 & 30, the & -0.959 \\
\cmidrule{2-5}
& Me, its & 0.162 & its, the & 0.941 \\
\cmidrule{2-5}
& Truly, of & 0.155 & minutes, . & 0.821 \\
\cmidrule{2-5}
& the, Me & -0.154 & off, the & 0.796 \\
\cmidrule{2-5}
& but, waste & 0.148 & ., . & 0.779 \\
\cmidrule{2-5}
& had, space & -0.146 & to, the & 0.737 \\
\cmidrule[1.5pt]{2-5}
& \multicolumn{4}{c}{ Archipalego
} \\
\cmidrule{2-5}
& \multicolumn{4}{c}{\includegraphics[width=\textwidth]{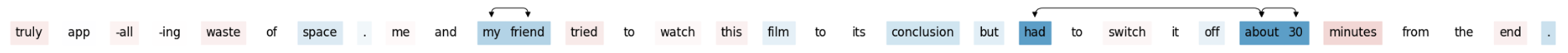}
} \\
\bottomrule      
\end{tabular}}
\vspace{0.2cm}
\caption{Top-15 important feature (interactions) for different methods for language dataset. The predicted probability is the out probability of the sentence having positive sentiment. The archipalego algorithm is run with $k=3$ interactions. }
\label{tab:index4}
\end{table}

\begin{table}[ht]
\centering
\small
\resizebox{0.9\textwidth}{!}{
\begin{tabular}{m{0.05\linewidth}|m{0.33\linewidth}|m{0.13\linewidth}|m{0.33\linewidth}|m{0.13\linewidth}} 
\toprule
Index & 
\multicolumn{4}{l}{
\begin{tabular}{m{0.80\linewidth}|m{0.13\linewidth}} Sentences & Predicted Prob. \\
\end{tabular}} \\ 
\midrule
\multirow{34}{*}{5} & \multicolumn{4}{l}{\begin{tabular}{m{0.80\linewidth}|m{0.13\linewidth}}I still remember watching Satya for the first time. I was completely blown away. & 0.994 \\ \end{tabular}} \\ 
\cmidrule[1.5pt]{2-5} & \multicolumn{2}{c|}{\textit{Faithful Shapley indices}} & \multicolumn{2}{c}{\textit{Shapley Taylor indices}}  \\ 
\cmidrule{2-5}
& Feature (interactions) & Scores & Feature (interactions) & Scores \\
\cmidrule{2-5}
& remember & 0.337 & blown, away & 0.345 \\
\cmidrule{2-5}
& blown, away & 0.293 & the, first & 0.191 \\
\cmidrule{2-5}
& time & 0.281 & time, first & 0.182 \\
\cmidrule{2-5}
& Satya & 0.208 & watching, for & -0.169 \\
\cmidrule{2-5}
& remember, blown & -0.158 & time, away & -0.167 \\
\cmidrule{2-5}
& watching & 0.153 & time, Satya & -0.151 \\
\cmidrule{2-5}
& blown & 0.146 & time, still & -0.145 \\
\cmidrule{2-5}
& time, away & -0.127 & still, watching & -0.144 \\
\cmidrule{2-5}
& completely, away & -0.101 & I, watching & -0.131 \\
\cmidrule{2-5}
& Satya, time & -0.091 & watching, first & -0.128 \\
\cmidrule{2-5}
& remember, time & -0.073 & remember, away & 0.118 \\
\cmidrule{2-5}
& I, watching & -0.071 & Satya, away & -0.118 \\
\cmidrule{2-5}
& completely, blown & 0.063 & was, watching & -0.115 \\
\cmidrule{2-5}
& first, blown & -0.053 & remember, blown & -0.110 \\
\cmidrule{2-5}
& first & 0.049 & completely, away & -0.107 \\
\cmidrule[1.5pt]{2-5}
& \multicolumn{2}{c|}{\textit{Shapley interaction indices }} & \multicolumn{2}{c}{\textit{Integrated Hessian}}  \\
\cmidrule{2-5}
& blown, away & 0.318 & ., . & 4.759 \\
\cmidrule{2-5}
& was, remember & 0.237 & was, . & 1.866 \\
\cmidrule{2-5}
& remember, blown & -0.180 & blown, blown & 1.552 \\
\cmidrule{2-5}
& time, Satya & -0.167 & i, . & 1.185 \\
\cmidrule{2-5}
& the, first & 0.144 & was, was & 1.105 \\
\cmidrule{2-5}
& blown, first & -0.133 & i, . & 1.063 \\
\cmidrule{2-5}
& completely, blown & 0.126 & blown, away & 0.889 \\
\cmidrule{2-5}
& time, away & -0.119 & satya, . & 0.857 \\
\cmidrule{2-5}
& I, was & 0.093 & for, for & -0.763 \\
\cmidrule{2-5}
& watching, blown & 0.087 & remember, remember & -0.745 \\
\cmidrule{2-5}
& I, watching & -0.083 & for, time & -0.745 \\
\cmidrule{2-5}
& time, blown & 0.080 & i, i & -0.727 \\
\cmidrule{2-5}
& watching, away & -0.078 & ., . & -0.616 \\
\cmidrule{2-5}
& remember, watching & -0.076 & watching, satya & 0.592 \\
\cmidrule{2-5}
& remember, was & -0.074 & completely, . & 0.579 \\
\cmidrule[1.5pt]{2-5}
& \multicolumn{4}{c}{ Archipalego
} \\
\cmidrule{2-5}
& \multicolumn{4}{c}{\includegraphics[width=\textwidth]{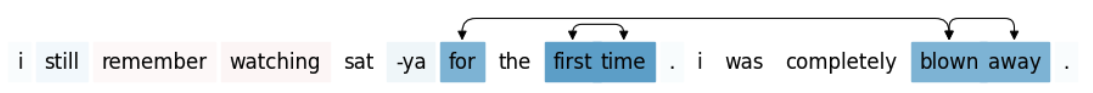}
} \\
\bottomrule       
\end{tabular}}
\vspace{0.2cm}
\caption{Top-15 important feature (interactions) for different methods for language dataset. The predicted probability is the out probability of the sentence having positive sentiment. The archipalego algorithm is run with $k=3$ interactions. }
\label{tab:index5}
\end{table}

\section{Additional Guidance on Theorem \ref{thm:thmdummysymm} }
\label{sec:additional_guidance}
In this section, we clarify how to use Theorem \ref{thm:thmdummysymm} to design Faith-Interaction indices satisfying interaction linearity, symmetry, and dummy axioms by first explaining Theorem \ref{thm:thmdummysymm} and then providing some examples.

Theorem \ref{thm:thmdummysymm} states that the finite weighting function must be in the following form:
\begin{align*}
\mu(S) & \propto \sum_{i=|S|}^{d} {d- |S| \choose i-|S|}(-1)^{i-|S|} g(a,b,i), \  \text{ where }
g(a,b,i) = 
\begin{cases}
1 & \text{ , if } \ i = 0. \\
\prod_{j=0}^{j=i-1} \frac{a(a-b) + j(b-a^2)}{a-b + j(b-a^2)}
& \text{ , if } \   1 \leq i \leq d. \\
\end{cases}
\end{align*} 
for some $a,b \in \R^{+}$ with $a>b$ such that $\mu(S) > 0$ for all $S \subseteq [d]$. 

To better understand this formula, some questions need to be answered: 
(1) What kind of $a,b$ makes $\mu(S) > 0$ for all $S \subseteq [d]$? 
(2) What is the physical meaning of the parameters $a$ and $b$?

To answer (1), we show that a simple condition $1 \geq a > b \geq a^2 > 0$ suffices to make $\mu(S) > 0$ for all $S \subseteq [d]$.
\begin{proposition}
\label{pro:ab_condition}
When $a,b \in \mathbb{R}^+$ such that $1 \geq a > b \geq a^2 > 0$, we have 
\begin{align*}
\sum_{i=|S|}^{d} {d- |S| \choose i-|S|}(-1)^{i-|S|} g(a,b,i) > 0
\ \ \text{ for all } S \subseteq [d],
\end{align*} 
where $g(a,b,i)$ is defined in Eqn.\eqref{eqn:thmdummysymm}. 
\end{proposition}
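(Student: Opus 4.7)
My plan is to prove positivity by expressing $g(a,b,i)$ as the $i$-th moment of an explicit probability measure on $[0,1]$, after which the alternating sum collapses to a manifestly positive integral. The key structural observation is that with $\alpha \defeq a-b > 0$ and $\beta \defeq b - a^2 \geq 0$, the ratio between consecutive terms simplifies to
\begin{equation*}
\frac{g(a,b,i+1)}{g(a,b,i)} = \frac{a\alpha + i\beta}{\alpha + i\beta}.
\end{equation*}
A preliminary observation I would record is that the chain $1 \geq a > b \geq a^2 > 0$ forces $a(1-a) > a - b \geq 0$, hence $0 < a < 1$ strictly; this will matter for the subsequent Beta parameters to be positive.

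Next I would split into two cases. In the degenerate case $\beta = 0$ (i.e. $b = a^2$), the ratio is the constant $a$, so $g(a,b,i) = a^i$, and the alternating sum telescopes to $\mu(S) = a^{|S|}(1-a)^{d-|S|}$, which is strictly positive on the interior $0 < a < 1$. In the generic case $\beta > 0$, I set $\gamma \defeq \alpha/\beta > 0$ so that the ratio becomes $(a\gamma + i)/(\gamma + i)$, giving
\begin{equation*}
g(a,b,i) \;=\; \prod_{j=0}^{i-1}\frac{a\gamma + j}{\gamma + j} \;=\; \frac{(a\gamma)_i}{(\gamma)_i} \;=\; \frac{\Gamma(a\gamma + i)\,\Gamma(\gamma)}{\Gamma(a\gamma)\,\Gamma(\gamma + i)}.
\end{equation*}
Applying the Beta identity $\Gamma(u)\Gamma(v)/\Gamma(u+v) = \int_0^1 x^{u-1}(1-x)^{v-1}\,dx$ with $u = a\gamma + i$, $v = \gamma(1-a)$, both of which are strictly positive since $0 < a < 1$, I obtain the moment representation
\begin{equation*}
g(a,b,i) \;=\; \int_0^1 x^i\, \phi(x)\,dx, \qquad \phi(x) \defeq \tfrac{1}{B(a\gamma,\,\gamma(1-a))}\, x^{a\gamma - 1}(1-x)^{\gamma(1-a)-1},
\end{equation*}
where $\phi$ is the density of the $\mathrm{Beta}(a\gamma, \gamma(1-a))$ distribution, strictly positive on $(0,1)$.

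Plugging this into the formula and exchanging the finite sum with the integral, I get
\begin{equation*}
\mu(S) \;\propto\; \int_0^1 x^{|S|} \sum_{j=0}^{d-|S|}\binom{d-|S|}{j}(-1)^j x^j\, \phi(x)\,dx \;=\; \int_0^1 x^{|S|}(1-x)^{d-|S|}\phi(x)\,dx,
\end{equation*}
after the binomial identity $\sum_j \binom{k}{j}(-1)^j x^j = (1-x)^k$. The integrand is nonnegative on $[0,1]$ and strictly positive on $(0,1)$, so the integral is strictly positive, completing the argument. The only delicate point I anticipate is verifying that the Beta parameters $a\gamma$ and $\gamma(1-a)$ are both positive, which reduces to the inequality $a < 1$ derived above; once that is in hand, the whole argument is essentially a clean bookkeeping exercise using the Pochhammer-to-Beta conversion.
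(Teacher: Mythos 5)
Your proof is correct, and it takes a genuinely different route from the paper's. The paper proves positivity combinatorially: it introduces the quantities $D^p_q = \sum_{j=0}^{q}\binom{q}{j}(-1)^j\bar{\mu}_{p+j}$, shows $\bar{\mu}_t = g(a,b,t)$, derives the recursion $D^p_q = D^p_{q-1} - D^{p+1}_q$, and runs an induction on the coefficients $c^{(1)}_q, c^{(2)}_q, c^{(3)}_q$ of the affine-in-$p$ ratios $D^p_q/D^p_{q+1}$ (using the invariant $c^{(2)}_q - c^{(3)}_q = 1$) to conclude $D^p_q > 0$, whence $\mu_{|S|} = D^{|S|}_{d-|S|} > 0$. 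You instead observe that $g(a,b,i) = (a\gamma)_i/(\gamma)_i$ is exactly the $i$-th moment of a $\mathrm{Beta}(a\gamma,(1-a)\gamma)$ distribution (with the degenerate case $b=a^2$ handled separately as a point mass at $a$), so the alternating binomial sum collapses to $\int_0^1 x^{|S|}(1-x)^{d-|S|}\phi(x)\,dx$, which is manifestly positive. Your argument is shorter and more conceptual; it also exhibits the admissible weighting functions of Theorem~\ref{thm:thmdummysymm} as mixtures of the independent-Bernoulli weights $\prod_{i\in S}p\prod_{j\notin S}(1-p)$ over $p\sim\mathrm{Beta}(a\gamma,(1-a)\gamma)$, which dovetails nicely with Proposition~\ref{pro:dummy} and gives an independent explanation of why these weights yield the dummy axiom. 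The paper's approach, while longer, builds machinery ($D^p_q$ and the ratio recursions) that is reused in the proofs of Theorems~\ref{thm:thmdummysymm} and~\ref{thm:faith_shap}, so it is not wasted effort in context. One cosmetic slip: the chain should read $a(1-a) \geq a-b > 0$ (the first inequality can be an equality when $b=a^2$, and the second is strict since $a>b$), but the conclusion $0<a<1$ is unaffected.
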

We delayed the proof of this proposition to Appendix \ref{sec:proofs_propositions}. We note that it is only a sufficient condition for selecting $a$ and $b$: For some small $d \in \mathbb{N}$, we may have some $a,b$ such that $1 > a^2 > b >0$ but makes $\mu(S) > 0 $ for all $S \subseteq [d]$. However, if $a = \mubar{1}$ and $b = \mubar{2}$ need to make the weighting function positive for all $d \in \mathbb{N}$, we must have the condition  $1 \geq a > b \geq a^2 > 0$.

For question (2), we show that $\mubar{i} = g(a,b,i)
= \sum_{L \supseteq S} \mu(L)$ for subsets $S \subseteq [d]$ with $|S| = i$  in the proof in Section \ref{sec:proof_theorem_dummysymm}. Here, $\mubar{i}$ is defined as the total weight of coalitions containing a group of features of size $i$ (any group with size $i$ will work due to the interaction symmetry axiom).
By plugging in $i=1,2$, we get $a = \mubar{1}$ and $b = \mubar{2}$ are the total weights of coalitions containing a single feature and a pair of features. 

In the following, we give some special cases with particularly chosen $a$ and $b$ to provide an intuition of Theorem \ref{thm:thmdummysymm},

\begin{example}
\label{exmp:banzhaf}
When $a=0.5$ and $b=0.25$, the weighting function $\mu(\cdot)$ with respect to Theorem \ref{thm:thmdummysymm} is $\mu(S) = 1/2^d$ for all $S \subseteq [d]$. In this case, the explanations $\Expl_T(\f,\ell)$ equals the Banzhaf Interaction value up to order $\ell$ for all $|T| = \ell$, which has the form $\ex_T(\f,\ell) = \sum_{S \subseteq [d] \backslash T} \Delta_T \f(S) / 2^{d-|S|}$.
\end{example}

In this example, the Banzhaf interaction value satisfies interaction linearity, symmetry, and dummy axioms \cite{fujimoto2006axiomatic}, which coincides with Theorem \ref{thm:thmdummysymm}.
We also provide another guideline to design the values of $a, b$ based on the desired $\frac{\mu_{d}}{\mu_{d-1}}$ and $\frac{\mu_{d-1}}{\mu_{d-2}}$, where $\mu_i = \mu(S)$ when $|S| = i$. \footnote{ $\mu_i$ can be defined since the interaction symmetry axiom ensures that all coalitions with equal size have equal weights.}

\begin{proposition}
\label{pro:guideline_ab}
\begin{equation*}
\text{Let }
\frac{\mu_{d}}{\mu_{d-1}} = r_1 
\ \text{ and } \ 
\frac{\mu_{d-1}}{\mu_{d-2}} = r_2 
\ \text{ with} \ 
r_1 > r_2 > \frac{(d-2)r_1}{ d -1 + r_1} > 0,
\end{equation*} 
then $a$ and $b$ can be represented as functions of $r_1$ and $r_2$:
\begin{equation}
\label{eqn:r1r2_to_ab}
a = \frac{ r_1(r_2+1) -(d-1)(r_1-r_2) }
{(r_1+1)(r_2+1) - (d-1)(r_1-r_2)  } 
\ \text{ and } \ 
b = \frac{ r_1(r_2+1)  -(d-2)(r_1-r_2) }
{(r_1+1)(r_2+1) - (d-2)(r_1-r_2) } a.
\end{equation}
In this case, $a$ and $b$ satisfy $1 > a > b \geq a^2 > 0$, which implies $\mu_i > 0$ for all $0 \leq i \leq d$ .
\end{proposition}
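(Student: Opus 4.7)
The plan is to invert the map $(a,b)\mapsto(r_1,r_2)$ by combining two ingredients from Theorem \ref{thm:thmdummysymm}: the closed product formula
\[
\frac{\mubar{i+1}}{\mubar{i}} = \frac{a(a-b)+i(b-a^2)}{(a-b)+i(b-a^2)},
\]
and the identity $\mubar{i}=\sum_{L\supseteq S,\,|S|=i}\mu(L)=\sum_{k=i}^d\binom{d-i}{k-i}\mu_k$. Binomial inversion of the latter at $i=d-2,d-1,d$ gives $\mu_d=\mubar{d}$, $\mu_{d-1}=\mubar{d-1}-\mubar{d}$, and $\mu_{d-2}=\mubar{d-2}-2\mubar{d-1}+\mubar{d}$. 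A short calculation then reduces the two target ratios to
\[
\alpha:=\frac{\mubar{d}}{\mubar{d-1}}=\frac{r_1}{r_1+1},\qquad \beta:=\frac{\mubar{d-1}}{\mubar{d-2}}=\frac{r_2(r_1+1)}{1+2r_2+r_1r_2},
\]
expressions in $r_1,r_2$ alone.

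Next I would substitute these into the product-formula ratios. Writing $u:=a-b$ and $t:=v/u$ with $v:=b-a^2$, the two equations become $\alpha=(a+(d-1)t)/(1+(d-1)t)$ and $\beta=(a+(d-2)t)/(1+(d-2)t)$, a $2\times 2$ linear system in $(a,t)$. Solving gives $t=(\alpha-\beta)/[1-(d-1)\alpha+(d-2)\beta]$ and $a=\alpha-(d-1)t(1-\alpha)$. The key cancellation $\alpha-\beta=(r_1-r_2)/[(r_1+1)(1+2r_2+r_1r_2)]$ makes the back-substitution tractable, and after collecting terms the formula for $a$ in Eqn.~\eqref{eqn:r1r2_to_ab} emerges. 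The formula for $b$ then follows either from $b=a-u$, or equivalently by observing that the parallel identity at $i=d-2$ expresses $b/a$ as the same rational form as $a$ but with $d-1$ replaced by $d-2$.

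Finally, to verify $1>a>b\geq a^2>0$, set $N_j:=r_1(r_2+1)-j(r_1-r_2)$ and $D_j:=N_j+(r_2+1)$, so that $a=N_{d-1}/D_{d-1}$ and $b=(N_{d-2}/D_{d-2})\,a$. Positivity $N_{d-1}>0$ is equivalent to $r_2>(d-2)r_1/(r_1+d-1)$, yielding $0<a<1$; the expansion $N_{d-2}D_{d-1}-N_{d-1}D_{d-2}=(r_2+1)(r_1-r_2)$ shows that $b/a>a$ whenever $r_1>r_2$, hence $b>a^2$; and $b<a$ is immediate from $N_{d-2}<D_{d-2}$. An appeal to Proposition \ref{pro:ab_condition} then delivers $\mu_i>0$ for all $i$. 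The main obstacle is the algebraic bookkeeping that reduces the $2\times 2$ linear-system solution to the compact closed form in Eqn.~\eqref{eqn:r1r2_to_ab}; once the highlighted cancellation identities are in hand, the inequality verification is essentially immediate.
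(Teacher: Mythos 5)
Your proposal is correct and follows essentially the same route as the paper: convert the $\mu$-ratios to cumulative ratios $\mubar{d}/\mubar{d-1}$ and $\mubar{d-1}/\mubar{d-2}$, use the closed form from Theorem \ref{thm:thmdummysymm} to set up a two-equation system (your $(a,t)$ parametrization is just the reciprocal of the paper's $(A,B)=(a/t,1/t)$), recover $b/a$ as the same rational form with $d-1$ shifted to $d-2$, and verify $1>a>b\geq a^2>0$ by cross-multiplying the resulting fractions before invoking Proposition \ref{pro:ab_condition}. The only loose remark is ``$b=a-u$'' (since $u$ is not determined independently of $b$), but your alternative derivation of $b$ via the index-shifted identity is exactly what the paper does, so there is no gap.
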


This proposition provides a guideline to design a unique interaction value that satisfies interaction linearity, symmetry, dummy axioms  based on given values of $\frac{\mu_{d}}{\mu_{d-1}}$ and $\frac{\mu_{d-1}}{\mu_{d-2}}$. For example, if the coalition $\mu_t$ has a higher probability to form when $t$ is large, such as the case when the features of an image is explained. As an example, we may set $\frac{\mu_{d}}{\mu_{d-1}} = 10$. We then have $10 > r_2 > \frac{d-2}{d+9} 10$, and we can set $r_2 = 9$ when $d<101$. This narrows down a unique interaction value that satisfies these three axioms and the conditions of $\frac{\mu_{d}}{\mu_{d-1}} = 10$ and $\frac{\mu_{d-1}}{\mu_{d-2}} = 9$.

\newpage

\section{Auxiliary Theoretical Results}
\label{sec:aux_thm_results}

In this section, we provide auxiliary theoretical results of the Faith-Interaction indices. These properties are useful in the proof of our main theorems. The proof are delayed to Appendix \ref{sec:proofs_propositions}

First of all, we show that if the coalition weighting function $\mu(\cdot)$ is finite,  Eqn.\eqref{eqn:weighted_regression} is strictly convex.

\begin{proposition}
\label{pro:strictly_convex}
If the coalition weighting function $\mu(\cdot)$ is finite such that $\mu(S) \in \mathbb{R}^+ $ for all $S \subseteq [d]$, Eqn.\eqref{eqn:weighted_regression} is strictly convex.
\end{proposition}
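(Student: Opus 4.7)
The plan is to write the objective in Eqn.~\eqref{eqn:weighted_regression} as a weighted least squares problem in matrix form, and then reduce strict convexity to a linear-independence claim about indicator vectors of upward-closed coalitions, which I will settle by a short Möbius-style induction.

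First, I would set up the matrix formulation. Let $X \in \mathbb{R}^{2^d \times d_\ell}$ denote the design matrix whose rows are indexed by $S \subseteq [d]$ and columns by $T \in \mathcal{S}_\ell$, with $X_{S,T} = \mathbbm{1}[T \subseteq S]$; this is exactly the vector $\expd(S)$ from Proposition~\ref{pro:closed_matrix_form_solution}. Let $W$ be the diagonal matrix with $W_{SS} = \mu(S) > 0$, and $\mathbf{f} \in \mathbb{R}^{2^d}$ the vector with entries $\f(S)$. Then the objective equals $(\mathbf{f} - X\Expl)^\top W (\mathbf{f} - X\Expl)$, whose Hessian in $\Expl$ is $2 X^\top W X$. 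Since $W$ is positive-definite diagonal, $X^\top W X$ is positive definite (equivalently, the objective is strictly convex) \emph{if and only if} $X$ has full column rank.

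Second, I would establish that $X$ has full column rank, which is the only non-routine step. Suppose $c \in \mathbb{R}^{d_\ell}$ satisfies $Xc = 0$, i.e.\ $\sum_{T \subseteq S,\, |T|\leq\ell} c_T = 0$ for every $S \subseteq [d]$. Restricting to sets $S$ with $|S| \leq \ell$ makes the bound $|T| \leq \ell$ automatic, so the condition simplifies to $\sum_{T \subseteq S} c_T = 0$ for every $S \in \mathcal{S}_\ell$. I would then induct on $|S|$: the base case $S = \emptyset$ gives $c_\emptyset = 0$, and assuming $c_T = 0$ for every $T \subsetneq S$ (which requires only sets of strictly smaller size, all of which lie in $\mathcal{S}_\ell$), the equation at $S$ reduces to $c_S = 0$. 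This shows $c = 0$, hence linear independence of the columns.

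The only real obstacle is the rank argument; once that is in hand, strict convexity follows immediately from $X^\top W X \succ 0$. The induction is essentially the invertibility of the zeta matrix of the poset $(\mathcal{S}_\ell, \subseteq)$, which is a classical consequence of Möbius inversion and does not require any structural assumption on $\mu$ beyond positivity. As a byproduct the same argument shows the minimizer is unique and given by the closed form in Eqn.~\eqref{eqn:opt_binary}.
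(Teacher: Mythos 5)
Your proposal is correct and takes essentially the same route as the paper's own proof: both reduce the claim to positive definiteness of the Hessian $2X^{\top}WX$ via the weighted least-squares matrix formulation, and both show the design matrix has trivial kernel by the same induction on $|S|$ over subsets of size at most $\ell$. The only cosmetic difference is that you phrase the kernel argument as invertibility of the zeta matrix of the poset $(\mathcal{S}_\ell,\subseteq)$, which is the same induction in different clothing.
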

Given that Eqn.\eqref{eqn:weighted_regression} is strictly convex, we next show that the minimization problems have a unique minimizer.
\begin{proposition}
\label{pro:unique_minimizer}
The (constrained) regression problems defined in  Eqn.\eqref{eqn:constrained_weighted_regresion} with a proper weighting function $\mu$ ( Definition \ref{def:proper_weighting_f})
have a unique minimizer.
\end{proposition}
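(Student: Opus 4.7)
The plan is to split the argument into two parts: (i) show that the feasible set is non-empty so that a minimizer exists, and (ii) show that any two minimizers must coincide. For (i), setting $\Expl_\text{\O} = \f(\text{\O})$, $\Expl_{\{1\}} = \f([d]) - \f(\text{\O})$, and every other coordinate to zero clearly satisfies both potential hard constraints $\Expl_\text{\O} = \f(\text{\O})$ (active when $\mu(\text{\O}) = \infty$) and $\sum_{T \in \setlessell} \Expl_T = \f([d])$ (active when $\mu([d]) = \infty$). When $\mu$ is fully finite, Proposition~\ref{pro:strictly_convex} already gives a unique minimizer, so the only remaining cases are those with one or both of $\mu(\text{\O}), \mu([d])$ infinite.

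For (ii), I would write the finite-weight part of the objective as $\|W^{1/2}(F - P\Expl)\|_2^2$, where $P$ is the design matrix with rows $\expd(S)^\top$ for the sets $S$ with $\mu(S) < \infty$, and $W$ is the diagonal matrix of the corresponding weights $\mu(S) > 0$. Suppose $\Expl^{(1)}$ and $\Expl^{(2)}$ are both minimizers and let $\delta = \Expl^{(1)} - \Expl^{(2)}$. Since the objective is convex, a standard parallelogram identity (applied to the midpoint $\tfrac{1}{2}(\Expl^{(1)}+\Expl^{(2)})$, which is also feasible and hence attains a value $\geq$ the minimum) forces $W^{1/2}P\delta = 0$, i.e.\ $\expd(S)^\top \delta = 0$ for every $S$ with $\mu(S) < \infty$. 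Because $\mu$ is proper, this already covers every $S$ with $1 \leq |S| \leq d-1$. Moreover, feasibility of both $\Expl^{(1)}$ and $\Expl^{(2)}$ forces $\expd(\text{\O})^\top \delta = \delta_\text{\O} = 0$ whenever $\mu(\text{\O}) = \infty$, and $\expd([d])^\top \delta = \sum_{T \in \setlessell} \delta_T = 0$ whenever $\mu([d]) = \infty$.

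Combining these conditions in each of the remaining cases gives $\expd(S)^\top \delta = 0$ for every $S \subseteq [d]$. Writing $f_\delta(S) := \sum_{T \subseteq S,\, T \in \setlessell} \delta_T = \expd(S)^\top \delta$, the previous step says $f_\delta \equiv 0$. Applying Möbius inversion at any $T \in \setlessell$ yields
\begin{equation*}
\delta_T \;=\; \sum_{R \subseteq T} (-1)^{|T|-|R|} f_\delta(R) \;=\; 0,
\end{equation*}
where the first equality follows from the standard identity $\sum_{R :\, U \subseteq R \subseteq T}(-1)^{|T|-|R|} = \mathbbm{1}[U=T]$ applied to the expansion of $f_\delta(R)$, an identity that lives entirely inside $\mathcal{S}_\ell$ so no ``phantom'' coordinates $\delta_U$ with $|U|>\ell$ are needed. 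Hence $\delta = 0$ and the minimizer is unique.

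The main delicate point is the last step: because we only parametrize $\Expl$ on $\mathcal{S}_\ell$ rather than on all of $2^{[d]}$, one must check that Möbius inversion still recovers $\delta$ from $f_\delta$ using only subsets of order at most $\ell$. The identity above handles this cleanly, so the argument goes through uniformly across all three remaining configurations of $(\mu(\text{\O}), \mu([d]))$.
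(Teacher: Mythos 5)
Your proof is correct and follows essentially the same route as the paper's: existence from non-emptiness of the affine feasible set, and uniqueness by showing that the difference $\delta$ of two minimizers must satisfy $\expd(S)^\top\delta = 0$ for every $S \subseteq [d]$ (finite-weight sets via the midpoint/parallelogram argument, infinite-weight sets via the hard constraints) and hence vanish. The only cosmetic differences are that the paper routes the midpoint argument through an auxiliary fully finite weighting $\mu'$ and a separate strict-convexity proposition, and establishes triviality of the kernel by induction on $|S|$ rather than by your (equally valid) Möbius inversion restricted to $\mathcal{S}_\ell$.
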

This proposition is a straightforward application of the following fact: For a minimization problem with linear constraints, if the objective is strictly convex, then it has a unique minimizer.

Also, we note that having a positive measure for all subsets of $[d]$ on the weighting function $\mu(\cdot)$ is necessary to ensure the uniqueness of the minimizer. Consider the case when 
the maximum interaction order equals the number of features, i.e. $\ell=d$, there are $2^{d}$ variables with $2^{d}$ equalities. That is, $\f(S) -\sum_{T \subseteq S} \ex_S(\f,d) = 0$ for all $S \subseteq [d]$. In this case, we can not have any 
$S \subseteq [d]$ such that $\mu(S) = 0$ due to the lack of equations. 

In this special case of $\ell=d$, we have the following closed-form expression. We note that these results are independent of the weighting function as long as we have $\mu(S) > 0$ for all $S \subseteq [d]$.

\begin{proposition}
\label{pro:ell_equals_d}
When the maximum interaction order $\ell=d$, the minimizer of Eqn.\eqref{eqn:constrained_weighted_regresion} the Möbius transform of $\f$, i.e. $\ex_S(\f,d) = a(\f,S) = \sum_{T \subseteq S} (-1)^{|S| - |T|} \f(T) $ for all subsets $S \subseteq [d]$.
\end{proposition}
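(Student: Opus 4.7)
The plan is to show that the Möbius transform $a(\f,\cdot)$ achieves the objective value of zero in the weighted regression of Eqn.\eqref{eqn:constrained_weighted_regresion}, and then conclude uniqueness by invoking Proposition \ref{pro:unique_minimizer}.

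First, I would note that when $\ell = d$, the set $\mathcal{S}_\ell$ consists of \emph{all} subsets of $[d]$, so the cardinality constraint $|T| \leq \ell$ inside the residual $\f(S) - \sum_{T \subseteq S,\,|T|\leq \ell}\Expl_T$ becomes vacuous and the residual is simply $\f(S) - \sum_{T \subseteq S}\Expl_T$. Since every summand in Eqn.\eqref{eqn:constrained_weighted_regresion} is a nonnegatively weighted square, the objective is bounded below by zero.

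Next, I would invoke the classical Möbius inversion identity recalled in Eqn.\eqref{eqn:discopose_f}, namely that the Möbius transform $a(\f,T)=\sum_{R \subseteq T}(-1)^{|T|-|R|}\f(R)$ satisfies $\f(S) = \sum_{T \subseteq S} a(\f,T)$ for every $S \subseteq [d]$. Thus setting $\Expl_T = a(\f,T)$ makes \emph{every} residual exactly zero. This attains the minimum value of zero in the unconstrained part of the objective, and it automatically satisfies any equality constraints arising from $\mu(S) = \infty$ (which, for a proper weighting function, can only happen at $S = \text{\O}$ or $S = [d]$, both of which are among the equalities being satisfied).

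Finally, Proposition \ref{pro:unique_minimizer} guarantees that the constrained weighted regression has a unique minimizer, so $\ex_S(\f,d) = a(\f,S)$ for all $S \subseteq [d]$. There is no substantive obstacle here: once one observes that $\ell=d$ yields $2^d$ coefficients and $2^d$ linear equalities $\f(S) = \sum_{T\subseteq S}\Expl_T$ that are exactly solvable via Möbius inversion, the weighted regression collapses to a trivial interpolation problem, independent of the particular choice of $\mu(\cdot)$.
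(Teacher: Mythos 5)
Your proposal is correct and follows essentially the same route as the paper's proof: exhibit the Möbius transform as a feasible point achieving objective value zero via the inversion identity $\f(S)=\sum_{T\subseteq S}a(\f,T)$, note the objective is nonnegative, and conclude uniqueness from Proposition \ref{pro:unique_minimizer}. Your extra remark that the infinite-weight constraints at $\text{\O}$ and $[d]$ are automatically satisfied is a small point the paper leaves implicit, but it does not change the argument.
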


Then we provide the expression of partial derivatives of the objective in Eqn.\eqref{eqn:weighted_regression} with respect to each variable $\ex_A(\f,\ell)$ for all $A \subseteq [d]$ with $|A| \leq \ell$.

\begin{proposition}
\label{pro:parital_derivative_general}
The partial derivative of Eqn.\eqref{eqn:weighted_regression} with respect to $\Expl_A(\f, \ell)$ is 
\begin{equation}
\label{eqn:partial_derivative_general}
-2 \sum_{ \substack{S: S \supseteq A, \\ \mu(S) < \infty}} \mu(S) \f(S) + 2\sum_{S \in \setlessell} \Expl_S(\f, \ell) \sum_{\substack{L: L \supseteq S \cup A, \\ \mu(L) < \infty}} \mu(L)
\ \ \text{ for all } \ \ 
A \in \setlessell.
\end{equation}
\end{proposition}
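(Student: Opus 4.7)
The plan is to view the objective as a quadratic form in the variables $\{\Expl_T(\f,\ell)\}_{T \in \setlessell}$ and differentiate term by term, then rearrange by a standard exchange of summation. Since sets $S$ with $\mu(S)=\infty$ are treated as equality constraints (as in Eqn.~\eqref{eqn:constrained_weighted_regresion}) rather than appearing in the quadratic objective, throughout the differentiation only the sets $S$ with $\mu(S)<\infty$ contribute, which is exactly why both sums in Eqn.~\eqref{eqn:partial_derivative_general} are restricted to these sets.

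First I would write the quadratic residual as $r(S) \defeq \f(S) - \sum_{T \subseteq S,\, |T| \leq \ell} \Expl_T(\f,\ell)$, so that the objective equals $\sum_{S:\mu(S)<\infty} \mu(S)\, r(S)^2$. The dependence of $r(S)$ on $\Expl_A(\f,\ell)$ is linear: the variable $\Expl_A(\f,\ell)$ appears in the inner sum defining $r(S)$ if and only if $A \subseteq S$ (and $|A| \leq \ell$, which is assumed since $A \in \setlessell$). Consequently
\[
\frac{\partial r(S)}{\partial \Expl_A(\f,\ell)} \;=\; -\mathbbm{1}[A \subseteq S],
\]
and by the chain rule the partial derivative of the objective becomes
\[
-2 \sum_{\substack{S: S \supseteq A,\\ \mu(S)<\infty}} \mu(S)\, r(S).
\]

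Next I would expand $r(S)$ inside this sum. The $\f(S)$ part immediately produces the first term $-2 \sum_{S \supseteq A,\,\mu(S)<\infty} \mu(S)\f(S)$ of Eqn.~\eqref{eqn:partial_derivative_general}. The remaining contribution is
\[
2 \sum_{\substack{S: S \supseteq A,\\ \mu(S)<\infty}} \mu(S) \sum_{T \subseteq S,\, |T| \leq \ell} \Expl_T(\f,\ell),
\]
and the key (and essentially only non-mechanical) step is to interchange the order of summation. For fixed $T \in \setlessell$, the inner conditions $A \subseteq S$ and $T \subseteq S$ combine into the single condition $S \supseteq A \cup T$. Relabeling the outer summation index as $L$, this rewrites as
\[
2 \sum_{T \in \setlessell} \Expl_T(\f,\ell) \sum_{\substack{L: L \supseteq A \cup T,\\ \mu(L)<\infty}} \mu(L),
\]
which is precisely the second term in Eqn.~\eqref{eqn:partial_derivative_general}.

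This establishes the formula. The calculation is almost entirely routine; the only obstacles are bookkeeping ones, namely (i) correctly identifying the indicator $\mathbbm{1}[A \subseteq S]$ that arises from differentiating the inner sum, and (ii) performing the summation swap to merge the two set-inclusion conditions $A \subseteq S$ and $T \subseteq S$ into $L \supseteq A \cup T$. Both are handled cleanly once the objective is written in the residual form above, and no additional machinery beyond the definition of the weighted regression is needed.
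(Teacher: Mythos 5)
Your proposal is correct and follows essentially the same route as the paper: both arguments identify that $\Expl_A(\f,\ell)$ enters only the residuals of sets $S \supseteq A$ with $\mu(S)<\infty$, and both collect the coefficient of each $\Expl_S(\f,\ell)$ via the merged inclusion condition $L \supseteq S \cup A$ (the paper phrases this as reading off the coefficients of the quadratic $a_A \Expl_A^2 + b_A \Expl_A + c_A$, while you phrase it as the chain rule on the residual followed by an exchange of summation, which is the same computation). No gaps.
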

This proposition is frequently used in our proof as we solve the minimization problem. Next, the following proposition illustrates how to solve the constrained regression problem via Lagrangian.

\begin{proposition}
\label{pro:constrained_closed_matrix_form_solution} 
Any Faith-Interaction index $\Expl(\f,\ell)$ with respect to a proper weighting function $\mu(\cdot)$ with $\mu(\text{\O}) = \mu([d]) = \infty$ has the form:
\begin{equation}
\small
\begin{bmatrix}
\lambda_{\text{\O}} \\
\lambda_{[d]} \\
\ex_{\text{\O}}(\f,\ell) \\
\cdots \\
\ex_{S}(\f,\ell) \\
\ex_{T}(\f,\ell) \\
\cdots \\
\end{bmatrix}
=
\underbrace{\begin{bmatrix}
0, & 0, & 1,  & \cdots, & 0, & 0, &\cdots \\ 
0, & 0, & 1,  & \cdots, & 1, & 1, & \cdots \\
-\frac{1}{2}, & -\frac{1}{2}, & \bar{\mu}(\text{\O}), & \cdots, & \bar{\mu}(S), & \bar{\mu}(T), & \cdots \\
\cdots,  & \cdots, & \cdots, & \cdots, & \cdots, & \cdots, & \cdots, \\
0, & -\frac{1}{2}, & \bar{\mu}(S), & \cdots, & \bar{\mu}(S), & \bar{\mu}(S \cup T), & \cdots \\
0, & -\frac{1}{2}, & \bar{\mu}(T), & \cdots, & \bar{\mu}(S \cup T), & \bar{\mu}(T), & \cdots \\
\cdots,  & \cdots, & \cdots, & \cdots, & \cdots, & \cdots, & \cdots \\
\end{bmatrix}^{-1}}_{\mathbf{M}^{-1}}
\underbrace{\begin{bmatrix}
\f(\text{\O}) \\
\f([d]) \\
\bar{\f}(\text{\O}) \\
\cdots \\
\bar{\f}(S) \\
\bar{\f}(T) \\
\cdots \\
\end{bmatrix}}_{\mathbf{y}},
\end{equation}
where $\lambda_{\text{\O}}$ and $\lambda_{[d]}$ are Lagrange multipliers with respect to the constraints on the empty set and the full set, 
$\bar{\mu}(S) = \sum_{L \supseteq S , \mu(L) < \infty} \mu(L)$, and $\bar{\f}(S) = \sum_{L \supseteq S , \mu(L) < \infty} \mu(L)\f(L)$.

Formally, the matrix $\mathbf{M} \in \mathbb{R}^{(d_\ell+2) \times (d_\ell+2)}$ and the vector $\mathbf{y} \in \mathbb{R}^{d_\ell+2}$ have the following definitions: we overuse the notations $\lambda_{\text{\O}},\lambda_{[d]}$ and let the rows and columns of $\mathbf{M}$ are indexed by $\{\lambda_{\text{\O}},\lambda_{[d]}, \text{\O},\cdots,S,T,\cdots \}$, which are corresponding to variables $\lambda_{\text{\O}},\lambda_{[d]}, \ex_{\text{\O}(\f,\ell),\cdots, \ex_S(\f,\ell),\ex_T(\f,\ell)}$
\begin{equation*}
\label{eqn:M_y_definitions}
\mathbf{M}_{S,T} =
\begin{cases}
1 & \text{ if } S = (\lambda_{\text{\O}}) \wedge( T = \text{\O}). \\
0 & \text{ if } (S = \lambda_{\text{\O}} )\wedge ( T \neq \text{\O}). \\
1 & \text{ if } (S = \lambda_{[d]}) \wedge (T \subseteq \mathcal{S}_\ell). \\
0 & \text{ if } (S = \lambda_{[d]}) \wedge (T \in \{\lambda_{\text{\O}},\lambda_{[d]} \}). \\
-\frac{1}{2} & \text{ if } (S = \emptyset) \wedge (T = \lambda_{[d]}).  \\
0 & \text{ if } (S \in \mathcal{S}_{\ell} \backslash \text{\O}) \wedge ( T =  \lambda_{\text{\O}} ). \\
-\frac{1}{2} & \text{ if } (S \in \mathcal{S}_{\ell}) \wedge (T = \lambda_{[d]}). \\
\bar{\mu}(S \cup T) & \text{, otherwise.} 
\end{cases}, 
\ \text{ and } \ 
\mathbf{y}_{S} =
\begin{cases}
\f(\text{\O}) & \text{ if } S = \lambda_{\text{\O}}. \\
\f([d]) & \text{ if } S = \lambda_{[d]} \\
\bar{\f}(S) & \text{ otherwise. }\\
\end{cases}
\end{equation*}
where we use $\mathbf{M}_{S,T}$ to denote the entry of the intersection of $S^{th}$ row and $T^{th}$ column.
\end{proposition}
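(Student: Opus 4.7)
The plan is to derive the KKT system for the constrained problem in Eqn.\eqref{eqn:constrained_weighted_regresion} under $\mu(\text{\O}) = \mu([d]) = \infty$, and to verify it coincides exactly with the stated matrix equation $\mathbf{M}\mathbf{z} = \mathbf{y}$. First I would form the Lagrangian
\begin{equation*}
L \;=\; \sum_{S:\,\mu(S)<\infty}\mu(S)\Bigl(\f(S) - \sum_{T \subseteq S,\,|T|\le \ell}\ex_T\Bigr)^{2} \;+\; \lambda_{\text{\O}}\bigl(\f(\text{\O}) - \ex_{\text{\O}}\bigr) \;+\; \lambda_{[d]}\Bigl(\f([d]) - \sum_{T \in \mathcal{S}_\ell}\ex_T\Bigr),
\end{equation*}
which incorporates the two equality constraints forced by the infinite weights.

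For the stationarity conditions I would reuse the computation behind Proposition \ref{pro:parital_derivative_general}: restricted to $S$ with $\mu(S) < \infty$, the partial derivative of the quadratic part with respect to $\ex_A$ equals $-2\bar{\f}(A) + 2\sum_{T \in \mathcal{S}_\ell}\bar{\mu}(T \cup A)\,\ex_T$, with the shorthand $\bar{\mu}(U) = \sum_{L \supseteq U,\,\mu(L)<\infty}\mu(L)$ and $\bar{\f}(U) = \sum_{L \supseteq U,\,\mu(L)<\infty}\mu(L)\f(L)$. Differentiating the Lagrange terms adds $-\lambda_{[d]}$ for every $A \in \mathcal{S}_\ell$ (since $\ex_A$ appears in the full-set constraint) and an extra $-\lambda_{\text{\O}}$ only when $A = \text{\O}$. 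Setting $\partial L / \partial \ex_A = 0$ and dividing by $2$ yields
\begin{equation*}
\sum_{T \in \mathcal{S}_\ell} \bar{\mu}(T \cup A)\,\ex_T \;-\; \tfrac{1}{2}\lambda_{[d]} \;-\; \tfrac{1}{2}\lambda_{\text{\O}}\,\mathbbm{1}[A = \text{\O}] \;=\; \bar{\f}(A).
\end{equation*}
Stacking these $d_\ell$ equations together with the two primal constraints $\ex_{\text{\O}} = \f(\text{\O})$ and $\sum_{T \in \mathcal{S}_\ell}\ex_T = \f([d])$ in the order $(\lambda_{\text{\O}}, \lambda_{[d]}, \text{\O}, \ldots, S, T, \ldots)$ reproduces $\mathbf{M}$ and $\mathbf{y}$ entry by entry.

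It remains to show $\mathbf{M}$ is invertible. Since the quadratic objective is strictly convex (Proposition \ref{pro:strictly_convex}) and the two equality constraints are affine and linearly independent (the first involves only $\ex_{\text{\O}}$, the second has strictly larger support), the KKT system is both necessary and sufficient for optimality; the unique minimizer guaranteed by Proposition \ref{pro:unique_minimizer} pins down $\ex$ uniquely, and reading off $\lambda_{\text{\O}}$ and $\lambda_{[d]}$ from the $\text{\O}$-row and any non-empty-$A$-row shows the multipliers are uniquely determined as well. Consequently $\mathbf{M}\mathbf{z} = \mathbf{y}$ has a unique solution for this particular $\mathbf{y}$; since $\mathbf{M}$ does not depend on $\f$, a nontrivial kernel vector would yield a second solution and contradict uniqueness, so $\mathbf{M}$ has trivial kernel and is invertible, giving $\mathbf{z} = \mathbf{M}^{-1}\mathbf{y}$ as claimed. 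The one point requiring care is the bookkeeping for which multiplier attaches to which row (the ``double-$-\tfrac{1}{2}$'' in the $\text{\O}$-row coming from $\ex_{\text{\O}}$ sitting inside both constraints); this is where a sign or indexing slip would most easily occur.
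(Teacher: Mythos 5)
Your proposal is correct and follows essentially the same route as the paper: write the Lagrangian for the two equality constraints induced by the infinite weights, use the partial-derivative formula of Proposition \ref{pro:parital_derivative_general} to get the stationarity equations (including the extra $-\tfrac{1}{2}\lambda_{\text{\O}}$ in the $\text{\O}$-row), stack them with the primal constraints into $\mathbf{M}\mathbf{z}=\mathbf{y}$, and invoke the uniqueness of the minimizer from Proposition \ref{pro:unique_minimizer} to conclude $\mathbf{M}$ is invertible. Your invertibility argument is in fact slightly more careful than the paper's (which asserts invertibility directly from uniqueness), since you explicitly note that the multipliers are pinned down by the rows and that a nontrivial kernel vector would contradict uniqueness because $\mathbf{M}$ is independent of $\f$.
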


\newpage

\section{Proof of Propositions}
\label{sec:proofs_propositions}

In this section, we provide the proof of theorems and propositions in Section \ref{sec:faithful_interaction_indices}. Before going to the main proof, we introduce some new notations.

\subsection{Proof of Proposition \ref{pro:closed_matrix_form_solution}}
\begin{proof} 
Now we transform the problem into a linear regression problem using matrix representations: let the feature matrix 
$$
\mathbf{X} \in \{ 0,1\}^{2^{d} \times d_\ell}
\text{  indexed with  } \mathbf{X}_{S,T} = \mathbbm{1}[(T \subseteq S) \vee (T = \text{\O})], 
\text{ where } S \subseteq [d]
\text{ and } T \in\mathcal{S}_\ell.
$$
We note that the feature matrix $\mathbf{X}$ is indexed with two sets $S$ and $T$, denoting its rows and columns. Each row of $S$ can also be expressed as $\mathbf{X}_S = \expd(S)$, where $\expd(S) \in \mathbb{R}^{d_\ell}$ with $\expd(S)[T] = \mathbbm{1}[(T \subseteq S) \vee (T = \text{\O})]$.

Then we define the weight matrix:
$$
\sqrt{\mathbf{W}} \in \{ 0,1\}^{2^{d} \times 2^{d}}
\text{ is a diagonal matrix with each entry on the diagonal } \sqrt{\mathbf{W}}_{S,S} = \sqrt{\mu(S)},
$$
where $S \subseteq [d]$.
The function values of $\f(\cdot)$ on each subset can be written into a vector:
$$
\mathbf{Y} \in \mathbb{R}^{2^{d}}
\text{  indexed with }
\mathbf{Y}_S = \f(S)
\text{ where } S \subseteq [d].
$$
With the above definitions, Equation \eqref{eqn:weighted_regression_in_proof} can be viewed as
$$
\min \norm{\sqrt{\mathbf{W} }(\mathbf{Y} -\mathbf{X} \Expl(\f,\ell))}_2^2 
= \min \norm{\mathbf{Y_w}-\mathbf{X_w}\Expl(\f,\ell)}_2^2,
$$
where $\mathbf{Y_w} = \sqrt{\mathbf{W} }\mathbf{Y}$ and $\mathbf{X_w} = \sqrt{\mathbf{W} }\mathbf{X}$. This is a linear regression problem with $\mathbf{X_w}$ being design matrix and $\mathbf{Y_w}$ being the response vector. Since it has a unique minimizer by Proposition \ref{pro:unique_minimizer}, we can apply the closed-form solution:
\begin{align*}
\ex(\f,\ell)
& = \left(\mathbf{X_w}^{T} \mathbf{X_w} \right)^{-1}
\mathbf{X_w}^T \mathbf{Y_w}  \\
& = \left(\sum_{S \subseteq [d]} \sqrt{\mu(S)}{\expd}(S)  {\sqrt{\mu(S)}\expd}(S)^T \right)^{-1}\!\!\!\sum_{S \subseteq [d]}  \left(\sqrt{\mu(S)}{\expd}(S) \right) \left(\sqrt{\mu(S)}\f(S) \right), \\
& = \left(\sum_{S \subseteq [d]} \mu(S){\expd}(S) {\expd}(S)^T \right)^{-1}\!\!\!\sum_{S \subseteq [d]} \mu(S) \f(S) {\expd}(S).
\end{align*}

\end{proof}

\subsection{Proof of Proposition \ref{pro:linearity}}
\begin{proof}
By Proposition \ref{pro:unique_minimizer}, Faith-Interaction indices are the unique minimizer for some weighting function $\mu(\cdot)$. In below, we discuss two cases: when $\mu(S)$ is finite for all subsets $S$ and when $\mu(S)$ is infinity for some subsets $S$.

First, when the coalition weighting function is finite such that $\mu(S) < \infty$ for all subsets $S \subseteq [d]$, by Proposition \ref{pro:closed_matrix_form_solution}, $\ex(\f,\ell)$ has a linear relation with respect to the set function $\f(\cdot)$. Therefore, it satisfies the interaction linearity axiom.  

Next, we solve the case when some coalition function $\mu(\cdot)$ has an infinity measure on some subsets $S$. Denote $\mathcal{T} = \{T: \mu(T) = \infty, T \subseteq [d] \}$ be the set containing all subsets of $[d]$ with infinity weights. We obtain the unique minimizer by solving this constrained minimization problem with a Lagrange multiplier. Specifically, we denote $F(\ex) = \sum_{S:S \subseteq [d], \mu(S) < \infty}  \mu(S) \left( \f(S) - \sum_{T \subseteq S , |T| \leq \ell}\Expl_T(\f,\ell) \right)^2$ as our objective and solve the following $d_\ell + |\mathcal{T}|$ equations:
\begin{equation}
\label{eqn:lagrangian}
\begin{cases}
\frac{\partial F(\ex)}{\partial \ex_A } = \sum_{T: T \subseteq \mathcal{T} \wedge T \supseteq A } \lambda_T 
& \text{ for all } A \in \setlessell.
\\
\f(A) - \sum_{T \subseteq A , |T| \leq \ell}\Expl_T(\f,\ell) = 0 
& \text{ for all } A \subseteq \mathcal{T}.
\end{cases}    
\end{equation}

By Definition \ref{def:Dpq} and Proposition \ref{pro:parital_derivative_general}, the partial derivative of Eqn.\eqref{eqn:weighted_regression} with respect to $\Expl_A(\f, \ell)$ is 
$$
\frac{\partial F(\ex)}{\partial \ex_A } =-2 \sum_{ \substack{S: S \supseteq A, \\ \mu(S) < \infty}} \mu(S) \f(S) + 2\sum_{S \in \setlessell} \bar{\mu}(S \cup A) \Expl_S(\f, \ell).
$$
Combing the above, we solve the following equations:
\begin{align}
\label{eqn:lagrangian2}
(i) \ \  & -2 \sum_{ \substack{S: S \supseteq A, \\ \mu(S) < \infty}} \mu(S) \f(S) + 2\sum_{S \in \setlessell} \bar{\mu}(S \cup A) \Expl_S(\f, \ell) = \sum_{T: (T \subseteq \mathcal{T} )\wedge (T \supseteq A) } \lambda_T 
& ,\text{ for all } A \in \setlessell. \nonumber \\
(ii) \ \  & \f(A) - \sum_{T \subseteq A , |T| \leq \ell}\Expl_T(\f,\ell) = 0 
& ,\text{ for all } A \subseteq \mathcal{T}.
\end{align}
Denote $\lambda = [\lambda_T]_{T \subseteq \mathcal{T}} \subseteq \mathbb{R}^{|\mathcal{T}|}$ as a vector consisting of all multiplier $\lambda_T$.
Let $[\lambda^{(1)}, \Expl(\f_1,\ell)],[\lambda^{(2)},\Expl(\f_2,\ell)] \in \mathbb{R}^{|\mathcal{T}|+ d_\ell}$ be the solution of Eqn.\eqref{eqn:lagrangian2}
( the minimizers of Eqn.\eqref{eqn:constrained_weighted_regresion}) with respect to set functions $\f_1(\cdot)$ and $\f_2(\cdot)$. 

Now we prove that $[\lambda^{(1+2)}, \Expl(\f_{1+2},\ell)] =  [\alpha_1\lambda^{(1)} + \alpha_2 \lambda^{(2)}, \alpha_1\Expl(\f_1,\ell) + \alpha_2 \Expl(\f_2,\ell)]$ is the solution of Eqn.\eqref{eqn:lagrangian2} with respect to the function $\f_{1+2}= \alpha_1 \f_1 + \alpha_2 \f_2$.

First of all, for equation (i), $\lambda$, $\ex(\f,\ell)$ and $\f$ have linear relation. Therefore, we have, for all $A \in \setlessell$, 

\begin{align*}
&-2 \sum_{ \substack{S: S \supseteq A, \\ \mu(S) < \infty}} \mu(S) \f_{1+2}(S) + 2\sum_{S \in \setlessell} \bar{\mu}(S \cup A) \Expl_S(\f_{1+2}, \ell) \\
& = \alpha_1 \left(-2 \sum_{ \substack{S: S \supseteq A, \\ \mu(S) < \infty}} \mu(S) \f_{1}(S) + 2\sum_{S \in \setlessell} \bar{\mu}(S \cup A) \Expl_S(\f_{1}, \ell) \right)  \\
& + \alpha_2 \left(-2 \sum_{ \substack{S: S \supseteq A, \\ \mu(S) < \infty}} \mu(S) \f_{2}(S) + 2\sum_{S \in \setlessell} \bar{\mu}(S \cup A) \Expl_S(\f_{2}, \ell) \right)  \\
& = \alpha_1 \sum_{T: (T \subseteq \mathcal{T} )\wedge (T \supseteq A) } \lambda^{(1)}_T + \alpha_2 \sum_{T:(T \subseteq \mathcal{T} )\wedge (T \supseteq A) } \lambda^{(2)}_T   \\
& = \sum_{T: (T \subseteq \mathcal{T} )\wedge (T \supseteq A) } \lambda^{(1+2)}_T  .
\end{align*}

Secondly, for equation (ii), we also have, for all $A \subseteq \mathcal{T}$,
{ \small
\begin{align*}
& \f_{1+2}(A) - \sum_{T \subseteq A , |T| \leq \ell}\Expl_T(\f_{1+2},\ell) 
& = \alpha_1 \left( \f_1(A) - \sum_{T \subseteq A , |T| \leq \ell}\Expl_T(\f_1,\ell) \right) + \alpha_2 \left( \f_2(A) - \sum_{T \subseteq A , |T| \leq \ell}\Expl_T(\f_2,\ell) \right) 
& = 0.
\end{align*}}

Therefore, $[\lambda^{(1+2)}, \Expl(\f_{1+2},\ell)] =  [\alpha_1\lambda^{(1)} + \alpha_2 \lambda^{(2)}, \  \alpha_1\Expl(\f_1,\ell) + \alpha_2 \Expl(\f_2,\ell)]$ is the solution of Eqn.\eqref{eqn:lagrangian2} with respect to the function $\f_{1+2}= \alpha_1 \f_1 + \alpha_2 \f_2$. Hence, the Faith-Interaction indices satisfy the interaction linearity axiom.
\end{proof}

\subsection{Proof of Proposition \ref{pro:symmetry}}

\subsubsection{Sufficient Condition:}
\begin{proof}
We prove that if the proper weighting functions are permutation invariant, then Faith-Interaction indices $\Expl$ satisfy the interaction symmetry axiom.

Suppose indexes $i$ and $j$ are symmetric with respect to the function $\f$. That is,  $\f(S \cup i) = \f(S \cup j)$ for any set $S \subseteq [d] \backslash \{i,j\}$. Our goal is to prove that indexes $i$ and $j$ in the corresponding explanation are also symmetric, i.e.
$\ex_{S \cup i}(\f,\ell) = \ex_{S \cup j}(\f,\ell)$ for any set $S \subseteq [d] \backslash \{i,j\}$ with $|S| < \ell$. 

By Proposition \ref{pro:unique_minimizer},
Eqn.\eqref{eqn:weighted_regression} has a unique minimizer. The objective function can be written into the following form:

{ \small
\begin{align*}
&  \argmin_{\Expl(\f,\ell) \in \mathbb{R}^{d_\ell}} \sum_{\substack{S \subseteq [d] \\ \mu(S) < \infty}}  \mu(S) \left( \f(S) - \sum_{L \subseteq S , |L| \leq \ell}\Expl_L(\f,\ell) \right)^2  \\
 & = \sum_{\substack{S \subseteq [d] \backslash \{i,j\} \\ \mu(S) < \infty }}  \mu(S) \left( (\f(S) - \sum_{L \subseteq S , |L| \leq \ell}\Expl_L(\f,\ell) \right)^2 + 
 \sum_{\substack{S \subseteq [d] \backslash \{i,j\} \\ \mu(S) < \infty }}  \mu(S \cup i) \left( \f(S \cup i) - \sum_{L \subseteq S \cup i , |L| \leq \ell}\Expl_L(\f,\ell) \right)^2  \\
 & + \sum_{\substack{S \subseteq [d] \backslash \{i,j\}\\ \mu(S) < \infty}}  \mu(S \cup j) \left( \f(S\cup j) - \sum_{L \subseteq S\cup j , |L| \leq \ell}\Expl_L(\f,\ell) \right)^2 \\ 
 & +  \sum_{\substack{S \subseteq [d] \backslash \{i,j\} \\ \mu(S) < \infty}}  \mu(S \cup \{i,j\}) \left( \f(S \cup \{i,j\}) - \sum_{L \subseteq S \cup \{i,j\} , |L| \leq \ell}\Expl_L(\f,\ell) \right)^2 \\
 & \text{ subject to } 
 \f(S) = \sum_{T \subseteq S} \Expl_T(\f,\ell), \ \  \forall \ \  S: \mu(S) = \infty.
\end{align*}}

If $i$ and $j$ in function $\mu(\cdot)$ and $\f(\cdot)$ are interchanged, the above equation becomes

{ \small
\begin{align*}
& \argmin_{\Expl(\f,\ell) \in \mathbb{R}^{d_\ell}} \sum_{S \subseteq [d] \backslash \{i,j\} }  \mu(S) \left( \f(S) - \sum_{L \subseteq S , |L| \leq \ell}\Expl_L(\f,\ell) \right)^2 + 
 \sum_{S \subseteq [d] \backslash \{i,j\} }  \mu(S \cup j) \left( \f(S \cup j) - \sum_{L \subseteq S \cup i , |L| \leq \ell}\Expl_L(\f,\ell) \right)^2  \\
& + \sum_{S \subseteq [d] \backslash \{i,j\} }  \mu(S \cup i) \left( \f(S\cup i) - \sum_{L \subseteq S\cup j , |L| \leq \ell}\Expl_L(\f,\ell) \right)^2 \\
& + 
 \sum_{S \subseteq [d] \backslash \{i,j\} }  \mu(S \cup \{i,j\}) \left( \f(S \cup \{i,j\}) - \sum_{L \subseteq S \cup \{i,j\} , |L| \leq \ell}\Expl_L(\f,\ell) \right)^2 \\
  & \text{ subject to } 
 \f(S) = \sum_{T \subseteq S} \Expl_T(\f,\ell), \ \ \forall \ \  S: \mu(S) = \infty.
\end{align*}}

The equation remains the same since $\mu(S \cup i) = \mu(S \cup j)$  and $\f(S \cup i) = \f(S \cup j)$. Also, the constraints $\f(S) = \sum_{T \subseteq S} \Expl_T(\f,\ell), \ \ \forall \ \  S: \mu(S) = \infty$ remain the same since a proper weighting function is only allowed to have infinity measure on $\mu(\text{\O})$ and $\mu([d])$, which are symmetric to any $i$ and $j$. 

Given that Eqn.\eqref{eqn:weighted_regression} has a unique solution, the above two minimization problems should have the same minimizer. We note that $i$ and $j$ have been interchanged in the set function $\f(\cdot)$ and the weighting function  $\mu$, so $i$ and $j$ should also be symmetric in the minimizer, i.e. $\ex_{S \cup i}(\f,\ell) = \ex_{S \cup j}(\f,\ell)$ for any set $S \subseteq [d] \backslash \{i,j\}$ with $|S| < \ell$. 

\end{proof}

\subsubsection{Necessary Condition}

\begin{proof}
Next, we show that Faith-Interaction indices $\Expl$ satisfy the interaction symmetry axiom only if the proper weighting functions are permutation invariant so that $\mu(S)$ is only a function of $|S|$.

We consider the case when $\ell = d- 1$ and the set function $\f$ is defined as below.
\begin{equation}
\f(S) = \begin{cases}
1  & \text{, if $S = [d]$.} \\
0 & \text{, otherwise.} \\
\end{cases}    
\end{equation}
In this case, $\mathcal{S}_\ell$ consists of all subsets of $[d]$ except for $[d]$. Next, we define a new coalition weighting function $\mu' :2^d \rightarrow \mathbb{R}^{+}$ with 
$$
\mu'(S) =
\begin{cases}
1 & \text{ if } \mu(S) = \infty. \\
\mu(S) & \text{ otherwise.} \\
\end{cases}
$$
We can see that for all $\ex(\f,\ell) \subseteq \mathbb{R}^{d_\ell}$ satisfying  $\f(S) - \sum_{T \subseteq S , |T| \leq \ell}\Expl_T(\f,\ell) =0, \forall S: \mu(S) = \infty, S \subseteq [d]$, the values of objective functions instantiated with $\mu$ and $\mu'$ are the same. That is, 
$$
\sum_{S \subseteq [d]\,:\, \mu(S) < \infty}  \mu(S) \left( \f(S) - \sum_{T \subseteq S , |T| \leq \ell}\Expl_T(\f,\ell) \right)^2
= \sum_{S \subseteq [d]}  \mu'(S) \left( \f(S) - \sum_{T \subseteq S , |T| \leq \ell}\Expl_T(\f,\ell) \right)^2.
$$
Therefore, we can substitute $\mu$ with $\mu'$ and the objective function can be written as 
\begin{align*}
\small
F(\ex) = \sum_{S \subseteq [d]} \mu'(S) \left(\f(S) - \sum_{T \subseteq S, |T| \leq d-1} \Expl_T(\f, \ell)  \right)^2,
\ \text{s.t.} \ & \f(S) - \sum_{T \subseteq S , |T| \leq \ell}\Expl_T(\f,\ell) =0 \;\;,\;\forall S \,:\, \mu(S) = \infty.
\end{align*}
Let $q(S) = \mu'(S) \left(\f(S) - \sum_{T \subseteq S, |T| \leq d-1} \Expl_T(\f, \ell) \right)$ and $p(S) = \sum_{T \supseteq S} q(T)$ for all $S \subseteq [d]$. The partial derivative of $F(\ex)$ with respect to $\ex_L$ is 
$$
\frac{F(\ex)}{\partial \ex_L} 
= -2\sum_{S \supseteq L} \mu'(S) \left(\f(S) - \sum_{T \subseteq S, |T| \leq d-1} \Expl_T(\f, \ell) \right) = -2\sum_{S \supseteq L} q(S) = -2p(L),
$$
where $L \subset [d]$. Before going to the main proof, we first introduce the following claim, which provides a relation between $p(\cdot)$ and $q(\cdot)$.
\begin{claim}
\label{clm:p_q_relation}
For all $S \subseteq [d]$, $q(S) = \sum_{T \supseteq S} (-1)^{|T| - |S|}p(T)$. 
\end{claim}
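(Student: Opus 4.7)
The plan is to recognize that the relation $p(S) = \sum_{T \supseteq S} q(T)$ is precisely the "upward" zeta transform on the Boolean lattice $2^{[d]}$, and the claimed identity $q(S) = \sum_{T \supseteq S} (-1)^{|T|-|S|} p(T)$ is its Möbius inverse. So the proof reduces to verifying the standard Möbius inversion on $2^{[d]}$ ordered by reverse inclusion, which amounts to an inclusion-exclusion calculation.

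Concretely, I would substitute the definition of $p$ into the right-hand side and swap the order of summation:
\[
\sum_{T \supseteq S} (-1)^{|T|-|S|} p(T) \;=\; \sum_{T \supseteq S} (-1)^{|T|-|S|} \sum_{U \supseteq T} q(U) \;=\; \sum_{U \supseteq S} q(U) \sum_{T:\, S \subseteq T \subseteq U} (-1)^{|T|-|S|}.
\]
For fixed $U \supseteq S$, the inner sum ranges over all $T$ with $S \subseteq T \subseteq U$, which corresponds bijectively to subsets of $U \setminus S$ (identifying $T$ with $T \setminus S$). Grouping by $k = |T| - |S|$ gives $\sum_{k=0}^{|U|-|S|} \binom{|U|-|S|}{k}(-1)^{k} = (1-1)^{|U|-|S|}$, which equals $1$ if $U = S$ and $0$ otherwise. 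Only the term $U = S$ survives, leaving exactly $q(S)$.

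There is no real obstacle here; the only thing to be careful about is the legitimacy of interchanging the two finite sums (trivial since $[d]$ is finite) and the bookkeeping of the alternating binomial identity. Once those are in place, the claim follows in one short calculation.
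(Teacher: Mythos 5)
Your proof is correct and follows essentially the same route as the paper's: substitute the definition of $p$, interchange the two finite sums, and collapse the inner alternating sum via $\sum_{k}\binom{|U|-|S|}{k}(-1)^k=(1-1)^{|U|-|S|}$, leaving only the $U=S$ term. Framing it as M\"obius inversion on the Boolean lattice is a nice conceptual gloss, but the computation is identical to the paper's.
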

Also, the following claim states that $p(L)$ for all $L \subseteq [d]$ can not be zero simultaneously. 
\begin{claim}
\label{clm:all_zero}
There is no $\ex(\f,\ell) \in \mathbb{R}^{2^d-1}$ satisfying $p(L) = 0$ for all $L \subseteq [d]$.
\end{claim}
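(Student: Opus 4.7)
The plan is a short proof by contradiction leveraging Claim \ref{clm:p_q_relation}. Suppose toward contradiction that there exists $\ex(\f,\ell) \in \mathbb{R}^{2^d-1}$ with $p(L) = 0$ for every $L \subseteq [d]$ (including $L = [d]$, for which $p([d]) = q([d])$). By Claim \ref{clm:p_q_relation}, this immediately yields $q(S) = 0$ for every $S \subseteq [d]$.

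Next, I would unpack the definition of $q(\cdot)$ for proper subsets. For any $S \subsetneq [d]$ we have $\f(S) = 0$, so $q(S) = 0$ together with $\mu'(S) > 0$ forces
\begin{equation*}
\sum_{T \subseteq S,\, |T| \leq d-1} \ex_T(\f,\ell) \;=\; \sum_{T \subseteq S} \ex_T(\f,\ell) \;=\; 0, \qquad \forall\, S \subsetneq [d].
\end{equation*}
By a straightforward induction on $|S|$ (or equivalently Möbius inversion on the Boolean lattice restricted to proper subsets), these equations propagate to $\ex_T(\f,\ell) = 0$ for every $T \subsetneq [d]$: the base case $S = \text{\O}$ gives $\ex_{\text{\O}} = 0$, and if $\ex_U = 0$ for every $U \subsetneq S$ then the equation at $S$ yields $\ex_S = 0$.

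Finally, I would derive the contradiction from the constraint at $S = [d]$. Since all $\ex_T$ with $T \subsetneq [d]$ vanish and $\f([d]) = 1$,
\begin{equation*}
q([d]) \;=\; \mu'([d])\Bigl(\f([d]) - \sum_{T \subseteq [d],\, |T| \leq d-1} \ex_T(\f,\ell)\Bigr) \;=\; \mu'([d]) \cdot 1 \;>\; 0,
\end{equation*}
which contradicts $q([d]) = 0$. Hence no such $\ex(\f,\ell)$ exists. I do not expect any real obstacle here: the only nontrivial moving part is invoking Claim \ref{clm:p_q_relation} to convert the hypothesis on $p$ into an equivalent hypothesis on $q$, after which the positivity of $\mu'$ and the particular choice of $\f$ (supported only at $[d]$) force an incompatibility between the equations indexed by proper subsets and the equation indexed by $[d]$ itself.
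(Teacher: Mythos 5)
Your proof is correct and follows essentially the same route as the paper's: invoke Claim \ref{clm:p_q_relation} to convert $p \equiv 0$ into $q \equiv 0$, use positivity of $\mu'$ and $\f(S)=0$ on proper subsets to force $\ex_T(\f,\ell)=0$ for all $T \subsetneq [d]$ by induction on $|S|$, and then contradict $q([d])=0$ via $\f([d])=1$. No gaps.
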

 
With these results in hands, we now prove that 
$q(S_1) = q(S_2)  \neq 0$ for all $S_1, S_2 \subseteq [d]$ with $1 \leq |S_1| = |S_2| \leq d-1$.
Since a proper weighting function is only allowed to have $\mu([d])$ or $\mu(\text{\O})$ to be infinity, we separate the problem into four cases: (1) $\mu(S) < \infty$ for all $S \subseteq [d]$. (2) Only $\mu(\text{\O}) = \infty$. (3) Only $\mu([d]) = \infty$. (4) Only $\mu([d]) = \mu(\text{\O}) = \infty$.

\paragraph{ (1) $\mu(S) < \infty$ for all $S \subseteq [d]$:} 
we solve the minimization problem using partial derivatives:
$$
\frac{F(\ex)}{\partial \ex_S}  = -2p(S) = 0,
\ \text{ for all $S \subset [d]$.}
$$
By Claim \ref{clm:p_q_relation}, for all $S \subset [d]$, we have 
\begin{equation}
\label{eqn:q_p_case1}
q(S) = \sum_{T \supseteq S} (-1)^{|T| - |S|}p(T) = (-1)^{d-|S|}p([d]),
\end{equation}
which implies that $q(S_1) = q(S_2)$ for all $S_1, S_2 \subseteq [d]$ with $1 \leq |S_1| = |S_2| \leq d-1$.

If there exists $q(S) = 0$ for some $S \subset [d]$, then $p([d])$ must also be zero by Eqn.\eqref{eqn:q_p_case1}. Then again by Eqn.\eqref{eqn:q_p_case1}, we have $q(S) = 0$ for all $S \subset [d]$, which is a contradiction by Claim \ref{clm:all_zero}.

\paragraph{ (2) Only $\mu(\text{\O}) = \infty$:} in this case, the only constraint is $q(\text{\O}) = \f(\text{\O}) - \ex_{\text{\O}}(\f,\ell) = 0$, which implies $\ex_{\text{\O}}(\f,\ell) = f(\text{\O})$. Then
we solve the minimization problem using partial derivatives:
$$
\frac{F(\ex)}{\partial \ex_S}  = -2p(S) = 0,
\ \text{ for all $S \subset [d]$ with $ 0 <|S| \leq d-1$.}   
$$
By Claim \ref{clm:p_q_relation}, we have 
\begin{equation}
\label{eqn:q_p_case2}
q(S) = \sum_{T \supseteq S} (-1)^{|T| - |S|}p(T) = (-1)^{d-|S|}p([d]),
\ \text{ for all $S \subset [d]$ with $ 0 <|S| \leq d-1$,}   
\end{equation}
which implies that $q(S_1) = q(S_2)$ for all $S_1, S_2 \subseteq [d]$ with $1 \leq |S_1| = |S_2| \leq d-1$.

If there exists $q(S) = 0$ for some $S \subset [d]$, then $p([d])$ must also be zero by Eqn.\eqref{eqn:q_p_case2}. Then again by Eqn.\eqref{eqn:q_p_case2}, we have $q(S) = 0$ for all $S \subset [d]$ with $ 0 < |S| \leq d-1$. Also, the constraint implies that $q(\text{\O}) = 0$. By Claim \ref{clm:all_zero}, we can not have $q(S) = 0$ for all $S \subseteq [d]$, which is a contradiction.

\paragraph{(3) Only $\mu([d]) = \infty$:} in this case, the only constraint is $q([d]) = \f([d]) - \sum_{T \subseteq [d]} \ex_{T}(\f,\ell) = 0$. Then
we solve the constrained minimization problem using a Lagrange multiplier.
$$
\frac{F(\ex)}{\partial \ex_S}  = -2p(S) = \lambda,
\ \text{ for all $S \subset [d]$ with $ 0 \leq |S| \leq d-1$.}   
$$
By Claim \ref{clm:p_q_relation}, for all $S \subset [d]$ with $ 0 \leq |S| \leq d-1$, we have 
\begin{align}
q(S) & = \sum_{T \supseteq S} (-1)^{|T| - |S|}p(T)  \nonumber \\
& = (-1)^{d-|S|}p([d]) + \sum_{T \supseteq S} (-1)^{|T| - |S|} \frac{-\lambda}{2} \nonumber \\
& = (-1)^{d-|S|}p([d]) -\frac{\lambda}{2} \times \left[(1-1)^{d -|S|} - (-1)^{d-|S|} \right] \nonumber \\
& = (-1)^{d-|S|}p([d]) +\frac{\lambda}{2}  (-1)^{d-|S|} \nonumber \\
& = \frac{\lambda}{2}  (-1)^{d-|S|} 
\ \ \ (p([d]) = q([d]) = 0)
\label{eqn:q_p_case3}
\end{align}
which implies that $q(S_1) = q(S_2)$ for all $S_1, S_2 \subseteq [d]$ with $1 \leq |S_1| = |S_2| \leq d-1$.

If there exists $q(S) = 0$ for some $S \subset [d]$ with $0 \leq |S| \leq d-1$, then we have $\lambda = 0$ by Eqn.\eqref{eqn:q_p_case3}. Then again by Eqn.\eqref{eqn:q_p_case3}, we have $q(S) = 0$ for all $S \subset [d]$ with $ 0 \leq |S| \leq d-1$. Also, the constraint implies that $q([d]) = 0$. By Claim \ref{clm:all_zero}, we can not have $q(S) = 0$ for all $S \subseteq [d]$, which is a contradiction.

\paragraph{(4) Only $\mu(\text{\O}) = \mu([d]) = \infty$:} in this case, the constraints are $q(\text{\O}) = \f(\text{\O}) - \ex_{\text{\O}} = 0$ and $q([d]) = \f([d]) - \sum_{T \subseteq [d]} \ex_{T}(\f,\ell) = 0$. Then
we solve the constrained minimization problem using a Lagrange multiplier.
$$
\frac{F(\ex)}{\partial \ex_S}  = -2p(S) = \lambda,
\ \text{ for all $S \subset [d]$ with $ 1 \leq |S| \leq d-1$.}   
$$
By Claim \ref{clm:p_q_relation}, for all $S \subset [d]$ with $ 1 \leq |S| \leq d-1$, we have 
\begin{align}
q(S) & = \sum_{T \supseteq S} (-1)^{|T| - |S|}p(T)  \nonumber \\
& = (-1)^{d-|S|}p([d]) + \sum_{T \supseteq S} (-1)^{|T| - |S|} \frac{-\lambda}{2} \nonumber \\
& = (-1)^{d-|S|}p([d]) -\frac{\lambda}{2} \times \left[(1-1)^{d -|S|} - (-1)^{d-|S|} \right] \nonumber \\
& = (-1)^{d-|S|}p([d]) +\frac{\lambda}{2}  (-1)^{d-|S|} \nonumber \\
& = \frac{\lambda}{2}  (-1)^{d-|S|} 
\ \ \ (p([d]) = q([d]) = 0)
\label{eqn:q_p_case4}
\end{align}
which implies that $q(S_1) = q(S_2)$ for all $S_1, S_2 \subseteq [d]$ with $1 \leq |S_1| = |S_2| \leq d-1$.

If there exists $q(S) = 0$ for some $S \subset [d]$ with $1 \leq |S| \leq d-1$, then we have $\lambda = 0$ by Eqn.\eqref{eqn:q_p_case4}. Then again by Eqn.\eqref{eqn:q_p_case4}, we have $q(S) = 0$ for all $S \subset [d]$ with $ 1 \leq  |S| \leq d-1$. Also, the constraint implies that $q(\text{\O}) = q([d]) = 0$. By Claim \ref{clm:all_zero}, we can not have $q(S) = 0$ for all $S \subseteq [d]$, which is a contradiction.

By summarizing (1)$\sim$(4), we conclude that $q(S_1) = q(S_2)  \neq 0$ for all $S_1, S_2 \subseteq [d]$ with $1 \leq |S_1| = |S_2| \leq d-1$. That is,
$$
\mu'(S_1) 
\underbrace{\left(\f(S_1) - \sum_{T_1 \subseteq S_1, |T_1| \leq d-1} \Expl_{T_1}(\f, \ell) \right)}_{(i)}
= \mu'(S_2) 
\underbrace{\left(\f(S_2) - \sum_{T_2 \subseteq S_2, |T_2| \leq d-1} \Expl_{T_2}(\f, \ell) \right)}_{(ii)}.
$$
Since $\f(\cdot)$ is symmetric by definition ( only depends on its input size) and the minimizer $\Expl(\f, \ell) $ satisfies the interaction symmetry axiom, we get that 
the mimimizer $\Expl_{T_1}(\f, \ell) = \Expl_{T_2}(\f, \ell) $ for all $|T_1| = |T_2|$.  Therefore, each term in $(i)$ and $(ii)$ have one-to-one correspondence. We have $(i)=(ii)$ and we can simplify the above equation: 
$$
\left(\mu'(S_1) - \mu'(S_2) \right) \times \left(\f(S_1) - \sum_{T_1 \subseteq S_1, |T_1| \leq d-1} \Expl_{T_1}(\f, \ell)  \right) 
= 0,
$$
Since the value in the second bracket is nonzero ($q(S_1) \neq 0$), we can conclude that $\mu'(S_1) = \mu'(S_2)$ for all $S_1, S_2 \subseteq [d]$ with $1 \leq |S_1| = |S_2| \leq d-1$. Also, by definition of $\mu'()$, we have $\mu(S) = \mu'(S)$ for all $S \subseteq [d]$ with $1 \leq |S| \leq d-1$. Therefore, we conclude the weighting function $\mu(\cdot)$ is also symmetric.

\end{proof}

\subsection{Proof of Proposition \ref{pro:dummy}}
\begin{proof}

The minimization problem can be written as follows:
\begin{equation*}
\small
\ex(\f,\ell) = \min_{\Expl \in \mathbb{R}^{d_\ell}} F_\ell(\f,\ex) = \min \sum_{S \subseteq [d]} \mu(S) \left(
\f(S) - \sum_{T \subseteq S, |T| \leq \ell} \Expl_T(\f, \ell)  \right)^2,
\end{equation*}
Without loss of generality, let $\mu(S) = \prod_{j \in S} p_j \prod_{k \not \in S} (1-p_k)$ for some $0 < p_j < 1$.

Now, we prove the minimizer of the above equation satisfies interaction dummy axiom.  More generally, we prove that the minimization problem with a dummy feature can be reduced to another problem with only $d-1$ features and the interaction terms containing the dummy feature is zero.
Formally, we have the following lemma:
\begin{lemma}
\label{lm:dummt_reduce}
Assume that $i^{\text{th}}$ feature of the set function $\f(\cdot)$ is a dummy feature such that $\f(S) = \f(S \cup i)$ for all $S \subseteq [d] \backslash \{i\}$.
Let $\f':2^{d-1} \mapsto \mathbb{R}$ with $\f'(S) = \f(S)$ for all $S \subseteq [d-1]$. Then we have 
\begin{equation}
\label{eqn:dummy_solution}
\begin{cases}
\Expl_{S}(\f,\ell) = \Expl_{S}(\f',\ell), 
& \text{ for all }  \ \ S \subseteq [d] \backslash \{ i\}, 0 \leq |S| \leq \ell. \\
\Expl_{S \cup \{i\}}(\f,\ell) = 0, & \text{ for all }  \ \ S \subseteq [d] \backslash \{ i\}, 0 \leq |S| \leq \ell - 1. 
\end{cases}
\end{equation}
where $\Expl_{S}(\f',\ell)$ is the minimizer of the following problem:
$$
\min \sum_{S \subseteq [d]\backslash i} \mu'(S) \left(
\f'(S) - \sum_{T \subseteq S, |T| \leq \ell} \Expl'_T(\f, \ell)  \right)^2,
\text{ with }
\mu'(S) = \prod_{j \in S} p_j \prod_{k \in [d-1] \backslash S} (1-p_k).
$$
\end{lemma}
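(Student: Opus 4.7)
The plan is to exploit the product form of $\mu$ and the dummy property of $\f$ to decouple the regression objective along the coordinate $i$. Assume without loss of generality that $i = d$, so $[d-1] = [d] \backslash \{i\}$. For $S \subseteq [d-1]$ one has $\mu(S) = (1-p_i)\mu'(S)$ and $\mu(S \cup \{i\}) = p_i \mu'(S)$, while the dummy property gives $\f(S \cup \{i\}) = \f(S) = \f'(S)$. I would introduce the reparametrization $\Expl^{(0)}_T := \Expl_T(\f, \ell)$ for $T \subseteq [d-1]$ with $|T| \leq \ell$, and $\Expl^{(1)}_T := \Expl_{T \cup \{i\}}(\f, \ell)$ for $T \subseteq [d-1]$ with $|T| \leq \ell - 1$. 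Setting $A_S := \sum_{T \subseteq S,\, |T| \leq \ell}\Expl^{(0)}_T$ and $B_S := \sum_{T \subseteq S,\, |T| \leq \ell - 1}\Expl^{(1)}_T$, the objective pairs up as
\[ F(\Expl) = \sum_{S \subseteq [d-1]}\mu'(S)\Bigl[(1-p_i)(\f'(S) - A_S)^2 + p_i(\f'(S) - A_S - B_S)^2\Bigr]. \]

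Next I would write down the first-order optimality conditions. Differentiating in $\Expl^{(1)}_A$ for any $A$ with $|A| \leq \ell - 1$ gives $\sum_{S \supseteq A,\, S \subseteq [d-1]}\mu'(S)(\f'(S) - A_S - B_S) = 0$, and differentiating in $\Expl^{(0)}_A$ for any $A$ with $|A| \leq \ell$ gives $(1-p_i)\sum_{S \supseteq A}\mu'(S)(\f'(S) - A_S) + p_i\sum_{S \supseteq A}\mu'(S)(\f'(S) - A_S - B_S) = 0$. For every $A$ with $|A| \leq \ell - 1$, a linear combination of these two equations eliminates the mixed term and yields both $\sum_{S \supseteq A}\mu'(S)(\f'(S) - A_S) = 0$ and $\sum_{S \supseteq A}\mu'(S) B_S = 0$.

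The crux will be the latter homogeneous system. It is exactly the first-order optimality system for the weighted regression on $[d-1]$ with maximum interaction order $\ell - 1$ and target function identically zero. Since the weights $\mu'$ are everywhere positive, Proposition \ref{pro:unique_minimizer} guarantees a unique minimizer, which is clearly $\Expl^{(1)} \equiv 0$ because it attains objective value zero. Hence $\Expl^{(1)}_T = 0$ for all $T \subseteq [d-1]$ with $|T| \leq \ell - 1$, establishing the second line of Eqn.~\eqref{eqn:dummy_solution}. Plugging $B_S = 0$ back into the optimality condition for $\Expl^{(0)}_A$ at the remaining level $|A| = \ell$ produces $\sum_{S \supseteq A}\mu'(S)(\f'(S) - A_S) = 0$, so combined with the $|A| \leq \ell - 1$ case these are exactly the normal equations of the reduced faithful regression on $[d-1]$. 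A second appeal to Proposition \ref{pro:unique_minimizer} identifies $\Expl^{(0)}_T = \Expl_T(\f', \ell)$ for all $T \subseteq [d-1]$ with $|T| \leq \ell$, yielding the first line of Eqn.~\eqref{eqn:dummy_solution}. The only genuine obstacle is justifying that the homogeneous system forces $\Expl^{(1)} \equiv 0$, and this is dispatched cleanly by the uniqueness result rather than by an explicit Möbius-style inversion.
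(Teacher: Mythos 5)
Your proof is correct. The key computations are the same ones the paper uses — the factorization $\mu(S)=(1-p_i)\mu'(S)$, $\mu(S\cup i)=p_i\mu'(S)$ together with the dummy property $\f(S\cup i)=\f'(S)$, the first-order conditions, and uniqueness of the minimizer via strict convexity — but the logical organization is genuinely different. The paper proceeds by guess-and-verify: it writes down the candidate in Eqn.~\eqref{eqn:dummy_solution}, observes that under this candidate the residuals satisfy $q(T\cup i)=q(T)=q'(T)$, and then checks directly that every stationarity condition of the full problem reduces (after grouping $T$ with $T\cup i$) to a stationarity condition of the reduced problem, which holds by optimality of $\Expl(\f',\ell)$; uniqueness then closes the argument. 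You instead derive the solution forward: after pairing the objective over $S\subseteq[d]\backslash i$, you eliminate between the two families of normal equations to obtain the decoupled systems $\sum_{S\supseteq A}\mu'(S)(\f'(S)-A_S)=0$ and $\sum_{S\supseteq A}\mu'(S)B_S=0$, and you recognize the latter as the normal equations of a strictly convex zero-target regression whose unique solution is $\Expl^{(1)}\equiv 0$. Your route buys a derivation that explains \emph{why} the $i$-containing coefficients must vanish (invertibility of the order-$(\ell-1)$ Gram matrix on $[d-1]$, i.e.\ Proposition~\ref{pro:strictly_convex} applied to the smaller problem), at the cost of the extra elimination step and the careful separate treatment of the level $|A|=\ell$, which you do handle correctly by substituting $B_S=0$ only after it has been established. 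The paper's verification is shorter but offers no insight into how one would find the candidate in the first place. Both arguments are complete; I see no gap in yours.
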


\begin{proof}
Without loss of generality, we assume that $d^{th}$ feature is a dummy feature, such that $\f(S \cup d) = \f(S)$ for all $S \subseteq [d-1]$.
Now, we solve the minimization problem  by partial derivatives. 
\begin{equation}
\frac{\partial \ F_\ell(\f,\ex)}{\partial \Expl_S } = 0  \text{  for all  } S \subseteq [d-1], |S| \leq \ell.
\end{equation}
We note that the partial derivative can be calculated as below:
$$
\frac{\partial \ F_\ell(\f,\ex)}{\partial \Expl_S } 
= -2\sum_{T \supseteq S} \mu(T) \left(\f(T) - \sum_{L \subseteq T, |L| \leq \ell} \ex_L(\f,\ell) 
\right)
= 0
\text{  for all  } S \subseteq [d-1], |S| \leq \ell.
$$
Now, for convenience, we denote 
$q(T) = \f(T) - \sum_{L \subseteq T, |L| \leq \ell} \ex_L(\f,\ell)$ for all $T \subseteq [d]$. Then $\ex(\f,\ell)$ satisfies the following equalities:
\begin{equation}
\label{eqn:partial_derivative_dummy}
\sum_{T \supseteq S} \mu(T)q(T) = 0,
\text{  for all  } S \subseteq [d-1], |S| \leq \ell.    
\end{equation}

Similarly, the minimizer $\ex(\f',\ell)$ of $F_\ell(\f',\ex)$ satisfies 
\begin{equation}
\label{eqn:partial_derivative_dummy_prime}\sum_{T: S \subseteq T \subseteq [d-1]} \mu'(T)q'(T) = 0,
\text{  for all  } S \subseteq [d-1], |S| \leq \ell,
\end{equation}
where $q':2^{d-1} \mapsto \mathbb{R}$ with $q'(T) = \f'(T) - \sum_{L \subseteq T, |L| \leq \ell} \ex_L(\f',\ell)$ for all $T \subseteq [d-1]$.

By the definitions of weighting function $\mu(\cdot)$ and $\mu'(\cdot)$, we have 
\begin{equation}
\label{eqn:mu_dummy_property}
\mu(S) = (1-p_d)\mu'(S),
\text{ and }
\mu(S \cup d) = p_d\mu'(S), 
\text{ for all }
S \subseteq [d-1].
\end{equation}
Also, since we have $\ex_L(\f,\ell) = 0$ for all $L$ containing $\{d\}$ and $ \f(T) = \f(T \cup \{ d\})$ ( Eqn.\eqref{eqn:dummy_solution}), for all $T \subseteq [d-1]$, we have 
\begin{equation}
\label{eqn:q_dummy_property1}
q(T \cup \{ d\}) 
= \f(T \cup \{ d\}) - \sum_{L \subseteq T \cup \{ d\}, |L| \leq \ell} \ex_L(\f,\ell) 
= \f(T) - \sum_{L \subseteq T, |L| \leq \ell} \ex_L(\f,\ell) 
= q(T),    
\end{equation}
and
\begin{equation}
\label{eqn:q_dummy_property2}
q'(T) 
= \f'(T) - \sum_{L \subseteq T, |L| \leq \ell} \ex_L(\f',\ell) 
= \f(T) - \sum_{L \subseteq T, |L| \leq \ell} \ex_L(\f,\ell) 
= q(T) = q(T \cup \{ d\}). 
\end{equation}
Now, we prove that $\ex(\f,\ell)$ defined in Eqn.\eqref{eqn:dummy_solution} satisfies the system of linear equations in Eqn.\eqref{eqn:partial_derivative_dummy}.

(1) For all $S \subseteq [d-1]$ with $0 \leq |S| \leq \ell$, we have

\begin{align*}
\sum_{T: S \subseteq T \subseteq [d]} \mu(T) q(T) 
& =  \sum_{T: S \subseteq T \subseteq [d-1]} \mu(T) q(T)
+ \sum_{T: S \subseteq T \subseteq [d-1]} \mu(T \cup \{ d\}) q(T\cup \{ d\}) \\
& = \sum_{T: S \subseteq T \subseteq [d-1]} \left(\mu(T) +\mu(T \cup \{ d\}) \right) q(T) \ \ \text{( Using Eqn.\eqref{eqn:q_dummy_property1} )} \\
& =  \sum_{T: S \subseteq T \subseteq [d-1]} \mu'(T) q(T) \ \
\text{( Using Eqn.\eqref{eqn:mu_dummy_property} )} \\
& = \sum_{T: S \subseteq T \subseteq [d-1]} \mu'(T) q'(T) \ \ 
\text{(Using Eqn.\eqref{eqn:q_dummy_property2} )} \\
&  = 0  \ \ 
(\text{Eqn.\eqref{eqn:partial_derivative_dummy_prime}}).
\end{align*} 

(2) For all $S \subseteq [d]$ containing $\{ d\}$ with $ 1 \leq |S| \leq \ell$, we have 
\begin{align*}
\sum_{T: S \subseteq T \subseteq [d]} \mu(T) q(T)
& = \sum_{T: (S\backslash \{d\}) \subseteq T \subseteq [d-1]} \mu(T \cup \{d\} ) q(T \cup \{d\}) \\
& = \sum_{T: (S\backslash \{d\}) \subseteq T \subseteq [d-1]} p_d \mu'(T) q'(T) \ \ \text{(Using Eqn.\eqref{eqn:mu_dummy_property} and Eqn.\eqref{eqn:q_dummy_property2})} \\
& = p_d \sum_{T: (S\backslash \{d\}) \subseteq T \subseteq [d-1]} \mu'(T) q'(T)  \\
& = 0 
\ \ \text{ (By Eqn.\eqref{eqn:partial_derivative_dummy_prime})} \\
\end{align*}
Therefore, by combining (1) and (2), we have 
$$
\sum_{T: S \subseteq T \subseteq [d]} \mu(T) q(T) = 0 \text{  for all  }
S \subseteq [d], \text{ with } |S| \leq \ell.
$$
That is, Eqn.\eqref{eqn:dummy_solution} is the minimizer of the minimization problem. Consequently, the minimizer satisfies the interaction dummy axiom for all $1 \leq \ell \leq d$.
\end{proof}

\end{proof}

\subsection{Proof of Proposition \ref{pro:efficiency}}
\begin{proof}

\textbf{Sufficient condition:} first of all, we prove that if the proper weighting functions have $\mu(\text{\O}) = \mu([d]) = \infty$, the Faith-Interaction indices satisfy the interaction efficiency axiom. 

By Proposition \ref{pro:unique_minimizer}, the constrained minimization problem has a unique minimizer. Also, the constraints ensures that $\f([d]) = \sum_{T \subseteq [d], |T| \leq \ell} \ex_T(\f,\ell)$ and $\f(\text{\O}) = \ex_{\text{\O}}(\f,\ell)$. Therefore, the minimizer (or the Faith-Interaction indices) satisfies the interaction efficiency axiom.

\textbf{Necessary condition:} we prove that if the Faith-Interaction indices satisfy the interaction efficiency axiom, the corresponding weighting function must satisfy $\mu(\text{\O}) = \mu([d]) = \infty$.

We consider the case when $\ell = d-1$ and the set function $\f$ is defined as below.
\begin{equation}
\f(S) = \begin{cases}
1  & \text{, if $S = [d]$.} \\
0 & \text{, otherwise.} \\
\end{cases}    
\end{equation}
In this case, $\mathcal{S}_\ell$ consists of all subsets of $[d]$ except for $[d]$. We will prove that $\ex(\f,d-1)$ satisfies the interaction efficiency axiom only if $\mu(\text{\O}) = \mu([d]) = \infty$.

First, we define a new coalition weighting function $\mu' :2^d \rightarrow \mathbb{R}^{+}$ with 
$$
\mu'(S) =
\begin{cases}
1 & \text{ if } \mu(S) = \infty. \\
\mu(S) & \text{ otherwise.} \\
\end{cases}
$$
We can see that for all $\ex(\f,\ell) \subseteq \mathbb{R}^{d_\ell}$ satisfying  $\f(S) - \sum_{T \subseteq S , |T| \leq \ell}\Expl_T(\f,\ell) =0, \forall S: \mu(S) = \infty, S \subseteq [d]$, the values of objective functions instantiated with $\mu$ and $\mu'$ are the same. That is, 
$$
\sum_{S \subseteq [d]\,:\, \mu(S) < \infty}  \mu(S) \left( \f(S) - \sum_{T \subseteq S , |T| \leq \ell}\Expl_T(\f,\ell) \right)^2
= \sum_{S \subseteq [d]}  \mu'(S) \left( \f(S) - \sum_{T \subseteq S , |T| \leq \ell}\Expl_T(\f,\ell) \right)^2.
$$
Therefore, we can substitute $\mu$ with $\mu'$ and the objective function can be written as 
\begin{align*}
F(\ex) = \sum_{S \subseteq [d]} \mu'(S) \left(\f(S) - \sum_{T \subseteq S, |T| \leq d-1} \Expl_T(\f, \ell)  \right)^2,
\ \text{s.t.} \ & \f(S) - \sum_{T \subseteq S , |T| \leq \ell}\Expl_T(\f,\ell) =0 \;\;,\;\forall S \,:\, \mu(S) = \infty.
\end{align*}

Let $q(S) = \mu'(S) \left(\f(S) - \sum_{T \subseteq S, |T| \leq d-1} \Expl_T(\f, \ell) \right)$ and $p(S) = \sum_{T \supseteq S} q(T)$ for all $S \subseteq [d]$. The partial derivative of $F(\ex)$ with respect to $\ex_L$ is 
$$
\frac{F(\ex)}{\partial \ex_L} 
= -2\sum_{S \supseteq L} \mu'(S) \left(\f(S) - \sum_{T \subseteq S, |T| \leq d-1} \Expl_T(\f, \ell) \right) = -2\sum_{S \supseteq L} q(S) = -2p(L),
$$
where $L \subset [d]$. Recall that Claim \ref{clm:p_q_relation} in the proof of Proposition \ref{pro:symmetry} states that 
\begin{equation}
\label{eqn:p_q_relation}
q(S) = \sum_{T \supseteq S} (-1)^{|T| - |S|}p(T), \text{ for all $S \subseteq [d]$ }.
\end{equation}

Also, Claim \ref{clm:all_zero} ensures that $p(L)$ for all $L \subseteq [d]$ can not be zero simultaneously:
\begin{equation}
\label{eqn:all_zero}
\text{ There is no $\ex(\f,\ell) \in \mathbb{R}^{2^d-1}$ satisfying $p(L) = 0$ for all $L \subseteq [d]$.}
\end{equation}
 
With these results in hands, we now prove that 
we must have $\mu(\text{\O}) = \mu([d]) = \infty$. Otherwise, we will have $q([d]) \neq 0$ or $q(\text{\O}) \neq 0$.
Since a proper weighting function is only allowed to have $\mu([d])$ or $\mu(\text{\O})$ to be infinity, we only need to discuss three cases: (1) $\mu(S) < \infty$ for all $S \subseteq [d]$. (2) Only $\mu(\text{\O}) = \infty$. (3) Only $\mu([d]) = \infty$. 

\paragraph{ (1) $\mu(S) < \infty$ for all $S \subseteq [d]$:} 
we solve the minimization problem using partial derivatives:
$$
\frac{F(\ex)}{\partial \ex_S}  = -2p(S) = 0,
\ \text{ for all $S \subset [d]$ with $ 0 \leq |S| \leq d-1$.}   
$$
By Claim \ref{clm:p_q_relation}, for all $S \subset [d]$, we have 
\begin{equation}
\label{eqn:q_p_case1_eff}
q(S) = \sum_{T \supseteq S} (-1)^{|T| - |S|}p(T) = (-1)^{d-|S|}p([d]),
\ \text{ for all $S \subset [d]$ with $ 0 \leq |S| \leq d-1$.}   
\end{equation}

If $q(\text{\O}) = 0$, then $p([d])$ must also be zero by Eqn.\eqref{eqn:q_p_case1_eff}. Then again by Eqn.\eqref{eqn:q_p_case1_eff}, we have $q(S) = 0$ for all $S \subset [d]$, which is a contradiction by Claim \ref{clm:all_zero}.

\paragraph{ (2) Only $\mu(\text{\O}) = \infty$:} in this case, the only constraint is $q(\text{\O}) = \f(\text{\O}) - \ex_{\text{\O}}(\f,\ell) = 0$, which implies $\ex_{\text{\O}}(\f,\ell) = f(\text{\O})$. Then
we solve the minimization problem using partial derivatives:
$$
\frac{F(\ex)}{\partial \ex_S}  = -2p(S) = 0,
\ \text{ for all $S \subset [d]$ with $ 0 <|S| \leq d-1$.}   
$$
By Claim \ref{clm:p_q_relation}, we have 
\begin{equation}
\label{eqn:q_p_case2_eff}
q(S) = \sum_{T \supseteq S} (-1)^{|T| - |S|}p(T) = (-1)^{d-|S|}p([d]),
\ \text{ for all $S \subset [d]$ with $ 0 <|S| \leq d-1$.}   
\end{equation}

If $q([d]) = 0$, then by Eqn.\eqref{eqn:q_p_case2_eff}, we have $q(S) = 0$ for all $S \subset [d]$ with $ 0 < |S| \leq d-1$. Also, the constraint implies that $q(\text{\O}) = 0$. By Claim \ref{clm:all_zero}, we can not have $q(S) = 0$ for all $S \subseteq [d]$, which is a contradiction.

\paragraph{(3) Only $\mu([d]) = \infty$:} in this case, the only constraint is $q([d]) = \f([d]) - \sum_{T \subseteq [d]} \ex_{T}(\f,\ell) = 0$. Then
we solve the constrained minimization problem using a Lagrange multiplier.
$$
\frac{F(\ex)}{\partial \ex_S}  = -2p(S) = \lambda,
\ \text{ for all $S \subset [d]$ with $ 0 \leq |S| \leq d-1$.}   
$$
By Claim \ref{clm:p_q_relation}, for all $S \subset [d]$ with $ 0 \leq |S| \leq d-1$, we have 
\begin{align}
q(S) & = \sum_{T \supseteq S} (-1)^{|T| - |S|}p(T)  \nonumber \\
& = (-1)^{d-|S|}p([d]) + \sum_{T \supseteq S} (-1)^{|T| - |S|} \frac{-\lambda}{2} \nonumber \\
& = (-1)^{d-|S|}p([d]) -\frac{\lambda}{2} \times \left[(1-1)^{d -|S|} - (-1)^{d-|S|} \right] \nonumber \\
& = (-1)^{d-|S|}p([d]) +\frac{\lambda}{2}  (-1)^{d-|S|} \nonumber \\
& = \frac{\lambda}{2}  (-1)^{d-|S|} 
\ \ \ (\text{ By } p([d]) = q([d]) = 0).
\label{eqn:q_p_case3_eff}
\end{align}

If $q(\text{\O}) = 0$, then we have $\lambda = 0$ by Eqn.\eqref{eqn:q_p_case3_eff}. Then again by Eqn.\eqref{eqn:q_p_case3_eff}, we have $q(S) = 0$ for all $S \subset [d]$ with $ 0 \leq |S| \leq d-1$. Also, the constraint implies that $q([d]) = 0$. By Claim \ref{clm:all_zero}, we can not have $q(S) = 0$ for all $S \subseteq [d]$, which is a contradiction.

Let come back to the main proof of Proposition \ref{pro:efficiency}. By summarizing (1)$\sim$(3), we do not have $q(\text{\O}) = q([d])= 0$ and $\ex(\f,d-1)$ does not satisfy the interaction efficiency axiom in these cases. Therefore, the proper weighting function must have $\mu(\text{\O}) = \mu([d]) = \infty$.

\end{proof}

\subsection{Proof of Proposition \ref{pro:faith_shap_cardinal_prob}}

\begin{proof}
By Theorem \ref{thm:faith_shap}, for all $S \subseteq [d]$ with $|S| = \ell$, we have 
$$
\ex^{\text{F-Shap}}_{S}(\f,\ell) 
= \frac{(2\ell -1)!}{((\ell-1)!)^2} \sum_{T \subseteq [d] \backslash S} \frac{(\ell+|T|-1)!(d-|T|-1)!}{ (d+\ell-1)!} \Delta_S(\f(T))
= \sum_{T \subseteq [d] \backslash S} p^\ell_{|T|} \Delta_S(\f(T)).
$$
We next show that $\sum_{t=0}^{d-\ell} {d-\ell \choose t} p^\ell_t = 1$. The following derivation is based on the following property of beta functions:
$$
B(\alpha,\beta) = \int_{x=0}^1 x^{\alpha-1}(1-x)^{\beta-1} dx = \frac{(\alpha-1)!(\beta-1)!}{(\alpha+\beta-1)!}
\text{ for all } \alpha, \beta \in \mathbb{N}.
$$
Then we have
\begin{align*}
\sum_{t=0}^{d-\ell} {d-\ell \choose t} p^\ell_t 
& = \sum_{t=0}^{d-\ell} {d-\ell \choose t}\frac{(2\ell -1)!(\ell+t-1)!(d-t-1)!}{((\ell-1)!)^2 (d+\ell-1)!} \\
& = \frac{(2\ell -1)!}{((\ell-1)!)^2} \sum_{t=0}^{d-\ell} {d-\ell \choose t} \int_{x=0}^1 x^{\ell+t-1}(1-x)^{d-t-1} dx  \\
& \ \ (\text{ Using the definition of Beta function } B(\ell+t,d-t) ) \\
& = \frac{(2\ell -1)!}{((\ell-1)!)^2} \int_{x=0}^1  x^{\ell-1}(1-x)^{\ell-1}\sum_{t=0}^{d-\ell} {d-\ell \choose t}  x^{t}(1-x)^{d-\ell-t} dx  \\
& = \frac{(2\ell -1)!}{((\ell-1)!)^2} \int_{x=0}^1  x^{\ell-1}(1-x)^{\ell-1}\left(x + (1 -x)\right)^{d-\ell} dx  \\
& = \frac{(2\ell -1)!}{((\ell-1)!)^2} \int_{x=0}^1  x^{\ell-1}(1-x)^{\ell-1} dx  \\
& = \frac{(2\ell -1)!}{((\ell-1)!)^2} B(\ell,\ell) \\
& = 1.
\end{align*}
\end{proof}

\subsection{Proof of Proposition \ref{pro:faith_shap_path_integral}}

Again, we use the following property of beta functions.
$$
B(\alpha,\beta) = \int_{x=0}^1 x^{\alpha-1}(1-x)^{\beta-1} dx = \frac{(\alpha-1)!(\beta-1)!}{(\alpha+\beta-1)!}
\text{ for all } \alpha, \beta \in \mathbb{N}.
$$
Also, we have the following equalities \citet{hammer2012boolean}.
$$
g(x) = \sum_{T \subseteq [d]} a(\f,T) \prod_{i \in T} x_i,
\text{ and }
\Delta_S g(x) = \sum_{T \supseteq S} a(\f,T) \prod_{i \in T\backslash S} x_i,
$$
where $a(\f,\cdot)$ is the Mobius transform of $\f$.

Then for all $S \in \setlessell$ with $|S| = \ell$, we have
\begin{align*}
\int_{x=0}^1 \Delta_S g(x,\cdots,x) dI_x(\ell,\ell)
& = \frac{1}{B(\ell,\ell)} \int_{x=0}^1 x^{\ell-1}(1-x)^{\ell-1} \Delta_S g(x,\cdots,x) dx \\
& = \frac{1}{B(\ell,\ell)} \int_{x=0}^1 x^{\ell-1}(1-x)^{\ell-1} \sum_{W \supseteq S} \left(
a(\f,W) \prod_{i \in W \backslash S }x \right) dx \\
& = \frac{1}{B(\ell,\ell)} \int_{x=0}^1 x^{\ell-1}(1-x)^{\ell-1} \sum_{U \subseteq [d] \backslash S} \left(
a(\f,U \cup S) x^{|U|} \right) dx  \\
& \,\, (\text{ by setting } U = W \backslash S) \\
& = \frac{1}{B(\ell,\ell)} \sum_{U \subseteq [d] \backslash S} a(\f,U \cup S) \int_{x=0}^1 x^{\ell+|U|-1}(1-x)^{\ell-1}  dx \\
& = \frac{1}{B(\ell,\ell)} \sum_{U \subseteq [d] \backslash S}  \frac{(\ell+|U|-1)!(\ell-1)!}{(2\ell+|U|-1)!} a(\f,U \cup S)\\
& = \frac{1}{B(\ell,\ell)} \sum_{T \supseteq  S} \frac{(|T|-1)!(\ell-1)!}{(|T|+\ell-1)!}  a(\f,T)\\
& = 
a(\f,S) + (-1)^{\ell - |S| }\frac{|S|}{\ell + |S|} {\ell \choose |S| } \sum_{T \supset S, |T| > \ell} \frac{ {|T| - 1 \choose \ell }}{ { |T| + \ell -1 \choose \ell + |S|}} a(\f,T) \\
& = \ex^{\text{F-Shap}}_S(\f,\ell)
\ \ (\text{ by Eqn.\eqref{eqn:faith_shapley}}).
\end{align*}

\subsection{Proof of Proposition \ref{pro:ab_condition}}
In this proof, we use the notation of cumulative weighting function introduced in Definition \ref{def:cumulative_weighting} and the notation of $D^p_q$ in Definition \ref{def:Dpq}. 

Also, we utilize the results in the proof of Claim \ref{clm:mubar_closeform} in Section \ref{sec:clm_d_linear_relation}, where we show that there exists constants $ c^{(1)}_q , c^{(2)}_q,  c^{(3)}_q \in \mathbb{R}$ for $1 \leq q \leq d$ such that 
$$ \frac{D^{p}_{q}}{D^{p}_{q+1}} = c^{(1)}_q p +  c^{(2)}_q \text{ and } \frac{D^{p+1}_{q}}{D^{p}_{q+1}} =  c^{(1)}_q  p +  c^{(3)}_q
\ \text{ for } \ 0 \leq p \leq d-q-1.$$ 

Moreover, the relation between constants can be obtained recursively on $q$:
\begin{equation}
\label{eqn:ABC0}
c^{(1)}_0 = \frac{b-a^2}{(1-a)(a-b)} 
\ \ , c^{(2)}_0 = \frac{a-b}{(1-a)(a-b)}
\ \ , c^{(3)}_0= \frac{a(a-b)}{(1-a)(a-b)} ,
\end{equation}
and 
\begin{equation}
\label{eqn:abc_q_relation}
c^{(1)}_{q+1} = \frac{c^{(1)}_q}{c^{(1)}_q + c^{(2)}_q - c^{(3)}_q},
\ \ c^{(2)}_{q+1} = \frac{c^{(1)}_q + c^{(2)}_q}{c^{(1)}_q + c^{(2)}_q - c^{(3)}_q},
\ \ c^{(3)}_{q+1}=  \frac{c^{(3)}_q}{c^{(1)}_q + c^{(2)}_q - c^{(3)}_q}.
\end{equation}

Now we come back to the proof of Proposition \ref{pro:ab_condition}.
We first show that $c^{(1)}_q \geq 0 ,c^{(2)}_q >0$ and $c^{(3)}_q > 0$ for $q=0,1,2...,d$.
From Eqn. \eqref{eqn:ABC0}, by using the condition $1 > a > b \geq a^2 > 0$, we have 
$$
c^{(1)}_0 = \frac{b-a^2}{(1-a)(a-b)} \geq 0, 
\ \ c^{(2)}_0 = \frac{a-b}{(1-a)(a-b)} > 0, 
\ \ c^{(3)}_0= \frac{a(a-b)}{(1-a)(a-b)} > 0.
$$
Then, since $c^{(2)}_0 - c^{(3)}_0 = 1$ and $c^{(2)}_{q+1} - c^{(3)}_{q+1} = \frac{c^{(1)}_q + c^{(2)}_q - c^{(3)}_q}{c^{(1)}_q+c^{(2)}_q-c^{(3)}_q} = 1$ from Eqn.\eqref{eqn:abc_q_relation}, we get that the denominators of $c^{(1)}_{q+1},c^{(2)}_{q+1}$ and  $c^{(3)}_{q+1}$ are positive.
Then we get that $c^{(1)}_q \geq 0 ,c^{(2)}_q >0$ and $c^{(3)}_q > 0$ for $q=0,2...,d$. Since the ratio $D^{p}_q / D^p_{q+1} >0$ is positive and $D^p_0 = \mubar{p} > 0$ by Lemma \ref{clm:mubar_closeform}, we  conclude that $D^p_q > 0$ for all $p,q$ with $0 \leq p+q \leq d$.

Lastly, we note that $\mu_p = \sum_{j=p}^{d} (-1)^{j-p}{d-p \choose j-p } \mubar{j} =  D^p_{d-p} > 0$ for $0 \leq p \leq d$ by Claim \ref{clm:mubar_mu_relation}. Therefore, we get that $\mu(S) > 0$ for all $S \subseteq [d]$.

\newpage

\subsection{Proof of Proposition \ref{pro:guideline_ab} }
We first transform the ratios into the form of cumulative weighting functions as in Definition \ref{def:cumulative_weighting}.
$$
\frac{\mubar{d}}{\mubar{d-1}}
= \frac{\mu_d}{\mu_d + \mu_{d-1}}
= \frac{r_1}{r_1+1}
\ \ \text{ and }\ \ 
\frac{\mubar{d-1}}{\mubar{d-2}}
= \frac{\mu_d + \mu_{d-1}}{\mu_d + 2\mu_{d-1} + \mu_{d-2}}
= \frac{r_1r_2 + r_2}{1+2r_2+r_1r_2} 
$$
By Claim \ref{clm:mubar_closeform}, we have 
$$
\frac{\mubar{d}}{\mubar{d-1}} 
= \frac{a(a-b)+(d-1)(b-a^2)}{(a-b)+(d-1)(b-a^2)} 
= \frac{A + d-1}{B+d-1}
\ \ \text{ and }\ \ 
\frac{\mubar{d-1}}{\mubar{d-2}}
= \frac{a(a-b)+(d-2)(b-a^2)}{(a-b)+(d-2)(b-a^2)} 
= \frac{A + d-2}{B+d-2},
$$
where we let $A = \frac{a^2-ab}{b-a^2}$ and $B = \frac{a-b}{b-a^2}$. Next, by combining the above equations, we can solve $A$ and $B$ in terms of $r_1$ and $r_2$.

\begin{align*}
1 & = (A+d-1) - (A+d-2) = \frac{\mubar{d}}{\mubar{d-1}} (B+d-1)
- \frac{\mubar{d-1}}{\mubar{d-2}} (B+d-2)
\end{align*}
$$ \Rightarrow 
B = -(d-1) + \frac{1 - \frac{\mubar{d-1}}{\mubar{d-2}}}
{\frac{\mubar{d}}{\mubar{d-1}} - \frac{\mubar{d-1}}{\mubar{d-2}}}
= -(d-1) + \frac{(r_1+1)(r_2+1)}{r_1 - r_2}
$$
Similarly, we get
$$ 
A = -(d-1) + \frac{1 - \frac{\mubar{d-1}}{\mubar{d-2}}}
{1 - \frac{\mubar{d-1}}{\mubar{d}} \times \frac{\mubar{d-1}}{\mubar{d-2}}}
= -(d-1) + \frac{r_1(r_2+1)}{r_1 - r_2}.
$$
Next, by solving $A$ and $B$ in terms of $a$ and $b$, we get
$$
a  =  \frac{A}{B} 
\ \ \text{ and } \ \ 
b = \frac{A}{B} \times \frac{A+1}{B+1}.
$$
By plugging $A$ and $B$ into the above equation, we get the form of $a$ and $b$ in terms of $r_1$ and $r_2$:
$$
a = \frac{ r_1(r_2+1) -(d-1)(r_1-r_2) }
{(r_1+1)(r_2+1) - (d-1)(r_1-r_2)  } 
\ \text{ and } \ 
b = \frac{ r_1(r_2+1)  -(d-2)(r_1-r_2) }
{(r_1+1)(r_2+1) - (d-2)(r_1-r_2) } a.
$$
We next prove that given $1 > r_1 > r_2 > \frac{(d-2)r_1}{r_1 + d -1} > 0$, we have 
$1 > a > b \geq a^2 > 0$. First, we prove that $B > A > 0 $:
\begin{align*}
B-A 
& = \frac{(r_1+1)(r_2+1)}{r_1 - r_2}
- \frac{r_1(r_2+1) }{r_1 - r_2} \\
& = \frac{r_2+1}{r_1 - r_2} > 0 \\
\end{align*}
Also,
\begin{align*}
A > 0
& \Leftrightarrow  \frac{r_1(r_2+1)}{r_1 - r_2} > d-1 \\
& \Leftrightarrow 
 r_1(r_2+1)> (d-1)r_1 - (d-1)r_2 \\
& \Leftrightarrow 
(d-1 +r_1)r_2 > (d-2)r_1  \\
& \Leftrightarrow 
r_2 > \frac{(d-2)r_1 }{d-1 + r_1} \\
\end{align*}
Therefore, we have  $B > A > 0 $. Then we have
\begin{align*}
B > A > 0 
& \Rightarrow 1 > \frac{A}{B} >  \frac{A(A+1)}{B(B+1)}  \geq  \frac{A^2}{B^2} > 0\\
& \Rightarrow 1 > a > b \geq a^2 > 0.
\end{align*}

\subsection{Proof of Proposition \ref{pro:strictly_convex}}
\begin{proof}
First, we deal with the case when the coalition weighting function $\mu(\cdot)$ is finite such that $\mu(S) \in \mathbb{R}^+$ for all $S \subseteq [d]$. 
Recall that the objective is defined as following:
\begin{equation}
\label{eqn:weighted_regression_in_proof}
\sum_{S \subseteq [d]}  \mu(S) \left( \f(S) - \sum_{T \subseteq S , |T| \leq \ell}\Expl_T(\f,\ell) \right)^2.
\end{equation}
Obviously, this is a convex function since $\mu(S) > 0$ for all $S \subseteq [d]$. Now we show that it is additionally a strictly convex function:

We first rewrite Eqn.\eqref{eqn:weighted_regression_in_proof} into a matrix form. Let the feature matrix 
$$
\mathbf{X} \in \{ 0,1\}^{2^{d} \times d_\ell}
\text{  indexed with  } \mathbf{X}_{S,T} = \mathbbm{1}[(T \subseteq S) \vee (T = \text{\O})], 
\text{ where } S \subseteq [d]
\text{ and } T \in\mathcal{S}_\ell.
$$
We note that the feature matrix $\mathbf{X}$ is indexed with two sets $S$ and $T$, denoting its rows and columns. Each row of $S$ can also be expressed as $\mathbf{X}_S = \expd(S)$, where $\expd(S) \in \mathbb{R}^{d_\ell}$ with $\expd(S)[T] = \mathbbm{1}[(T \subseteq S) \vee (T = \text{\O})]$.

Then we define the weight matrix:
$$
\sqrt{\mathbf{W}} \in \{ 0,1\}^{2^{d} \times 2^{d}}
\text{ is a diagonal matrix with each entry on the diagonal } \sqrt{\mathbf{W}}_{S,S} = \sqrt{\mu(S)},
$$
where $S \subseteq [d]$.
The function values of $\f(\cdot)$ on each subset can be written into a vector:
$$
\mathbf{Y} \in \mathbb{R}^{2^{d}}
\text{  indexed with }
\mathbf{Y}_S = \f(S)
\text{ where } S \subseteq [d].
$$
With the above definitions, Equation \eqref{eqn:weighted_regression_in_proof} can be viewed as
\begin{equation}
\label{eqn:weighted_regression_matrix_form}
\norm{\sqrt{\mathbf{W} }(\mathbf{Y} -\mathbf{X} \Expl(\f,\ell))}_2^2 = \norm{\mathbf{Y_w}-\mathbf{X_w}\Expl(\f,\ell)}_2^2,
\end{equation}
where $\mathbf{Y_w} = \sqrt{\mathbf{W} }\mathbf{Y}$ and $\mathbf{X_w} = \sqrt{\mathbf{W} }\mathbf{X}$. The Hessian matrix can be expressed as $2\mathbf{X_w}^T \mathbf{X_w}$.

We now prove that the Hessian matrix is positive definite. 
Let $\mathbf{b}$ be any vector in  $\mathbb{R}^{d_\ell}$. 
$$
\mathbf{b}^T \left(2\mathbf{X_w}^T\mathbf{X_w} \right) \mathbf{b}
= 2\norm{\mathbf{X_w} \mathbf{b}}_2^2  
= 2\norm{\sqrt{\mathbf{W}}\mathbf{X} \mathbf{b}}_2^2 
\geq 0.
$$
We get that $\mathbf{b}^T \mathbf{X_w}^T\mathbf{X_w} \mathbf{b} = 0$ if and only if $\norm{\sqrt{\mathbf{W}}\mathbf{X} \mathbf{b}}_2 = 0$. Since $\sqrt{\mathbf{W}}$ is a diagonal matrix with each entry is a positive number, we get
$$
\norm{\sqrt{\mathbf{W}}\mathbf{X} \mathbf{b}}_2 = 0 
\Leftrightarrow
\mathbf{X} \mathbf{b} = \mathbf{0}
\Leftrightarrow
\expd(S) \mathbf{b} = 0, \ \ \forall \ S \subseteq [d].
$$

We prove that this also implies that $\mathbf{b}_L = 0$ for all $L \in \setlessell$ by using induction on the size of $T$.
\begin{enumerate}
    \item If we plug in $S = \text{\O}$, we get $\mathbf{b}_{\text{\O}} = 0$.
    \item Assume $\mathbf{b}_L = 0$ for all $L \subseteq [d]$ with $|L| \leq k$.
    \item For all subsets $L$ with size $k+1 \leq \ell$, we have 
    $$
    \sum_{T \subseteq L} \mathbf{b}_T 
    = \mathbf{b}_L +  \sum_{T \subseteq L, |T| \leq k} \mathbf{b}_T 
    =0.
    $$
    Therefore, we have $\mathbf{b}_L =0$.
\end{enumerate}
We obtain that $\mathbf{b}^T \mathbf{X_w}^T\mathbf{X_w} \mathbf{b} = 0$ if and only if $\mathbf{b} = \mathbf{0}$. Therefore, the Hessian matrix is positive definite and Eqn.\eqref{eqn:weighted_regression_in_proof} is strictly convex.

\end{proof}

\subsection{Proof of Proposition \ref{pro:unique_minimizer}}

\begin{proof}
If the coalition weighting function $\mu$ is finite, Proposition \ref{pro:strictly_convex} has shown that the objective is strictly convex and therefore has a unique minimizer. Now we deal with the case when $\mu(\text{\O})$ and $\mu([d])$ are allowed to be infinite.

We first define a new coalition weighting function $\mu' :2^d \rightarrow \mathbb{R}^{+}$ with 
$$
\mu'(S) =
\begin{cases}
1 & \text{ if } \mu(S) = \infty. \\
\mu(S) & \text{ otherwise.} \\
\end{cases}
$$
We can see that for all $\ex(\f,\ell) \subseteq \mathbb{R}^{d_\ell}$ satisfying  $\f(S) - \sum_{T \subseteq S , |T| \leq \ell}\Expl_T(\f,\ell) =0, \forall S: \mu(S) = \infty, S \subseteq [d]$, the values of objective functions instantiated with $\mu$ and $\mu'$ are the same. That is, 
$$
\sum_{S \subseteq [d]\,:\, \mu(S) < \infty}  \mu(S) \left( \f(S) - \sum_{T \subseteq S , |T| \leq \ell}\Expl_T(\f,\ell) \right)^2
= \sum_{S \subseteq [d]}  \mu'(S) \left( \f(S) - \sum_{T \subseteq S , |T| \leq \ell}\Expl_T(\f,\ell) \right)^2.
$$
Therefore, we can substitute $\mu$ with $\mu'$ and use the fact that  $\sum_{S \subseteq [d]}  \mu'(S) \left( \f(S) - \sum_{T \subseteq S , |T| \leq \ell}\Expl_T(\f,\ell) \right)^2$ is a strictly convex function by Propostion \ref{pro:strictly_convex}.

Then, since there exists at least a solution $\ex(\f,\ell) \subseteq \mathbb{R}^{d_\ell}$ for the set of linear equations $\f(S) - \sum_{T \subseteq S , |T| \leq \ell}\Expl_T(\f,\ell) =0, \forall S: \mu(S) = \infty, S \subseteq [d]$, we have at least one minimizer of Eqn.\eqref{eqn:constrained_weighted_regresion}. Suppose that there exist two minimizers $\ex_1(\f,\ell)$ and $\ex_2(\f,\ell)$. Since $\ex_1(\f,\ell)$ and $\ex_2(\f,\ell)$ both satisfy the set of linear equations $\f(S) - \sum_{T \subseteq S , |T| \leq \ell}\Expl_T(\f,\ell) =0, \forall S: \mu(S) = \infty, S \subseteq [d]$, $(\ex_1(\f,\ell) + \ex_2(\f,\ell)) / 2$ also satisfy it. However, since we have a strictly convex objective,
$$
F(\ex) = \sum_{S \subseteq [d]}  \mu'(S) \left( \f(S) - \sum_{T \subseteq S , |T| \leq \ell}\Expl_T(\f,\ell) \right)^2,
$$
we have $F(\ex_1) + F(\ex_2) > \frac{F((\ex_1+\ex_2)/2)}{2}$, which is a contradiction. Therefore, we have a unique minimizer for Eqn.\eqref{eqn:constrained_weighted_regresion}.

\end{proof}

\subsection{Proof of Proposition \ref{pro:ell_equals_d}}

\begin{proof}
First, by Proposition \ref{pro:unique_minimizer}, Eqn.\eqref{eqn:constrained_weighted_regresion} has a unique minimizer. Next, we prove that $\ex_S(\f,d) = a(\f,S)$ is the only minimzer. Specifically, \citet{grabisch2000equivalent} has shown that $\ex_S(\f,d) = a(\f,S)$ satisfies
$$
\f(S) - \sum_{T \subseteq S , |T| \leq \ell}\Expl_T(\f,\ell) 
= \f(S) - \sum_{T \subseteq S , |T| \leq \ell} a(\f,T) 
= 0 
\text{, for all }
S \subseteq [d].
$$
This implies that Eqn.\eqref{eqn:constrained_weighted_regresion} is zero. However, since Eqn.\eqref{eqn:constrained_weighted_regresion} is always non-negative,
$\ex_S(\f,d) = a(\f,S)$ is the only minimizer.
\end{proof}

\subsection{Proof of Proposition \ref{pro:parital_derivative_general}}
\begin{proof}

The objective can be expressed as a quadratic function of $\Expl_A(\f, \ell)$:
\begin{equation}
\label{eqn:quadratic_objective}
\sum_{S \subseteq [d], \mu(S) < \infty}  \mu(S) \left( \f(S) - \sum_{T \subseteq S , |T| \leq \ell}\Expl_T(\f,\ell) \right)^2 = 
a_A \Expl_A(\f, \ell)^2 + b_A\Expl_A(\f, \ell) + c_A.
\end{equation}

We now solve the coefficients $a_A, b_A, c_A$. First, the leading coefficient is
$$
a_A = \sum_{\substack{S:S \supseteq A, \\ \mu(S) < \infty }} \mu(S).
$$
Secondly, for any subset $B \in \setlessell, B \neq A$,  we note that $\Expl_A(\f_R, \ell)$ and $\Expl_B(\f_R, \ell)$ appear in the same bracket for all subsets $S \supseteq (A \cup B)$ in Eqn.\eqref{eqn:quadratic_objective}. Hence, the coefficient of the first order term is 
\begin{align*}
b_A
& = \left[\sum_{\substack{B:B \in \setlessell, \\ B \neq A}}  \sum_{\substack{S:S \supseteq A \cup B, \\ 
\mu(S) < \infty}} 2\mu(S)\Expl_B(\f, \ell) \right] - 2 \sum_{\substack{S:S \supseteq{A}, \\ \mu(S) < \infty}} \mu(S) \f(S) \\
& =2 \left[\sum_{\substack{B:B \in \setlessell, \\ B \neq A}} \Expl_B(\f, \ell) \sum_{\substack{S:S \supseteq A \cup B, \\ 
\mu(S) < \infty}} \mu(S) \right] - 2 \sum_{\substack{S:S \supseteq{A}, \\ \mu(S) < \infty}} \mu(S) \f(S). \\
\end{align*}

Combining the above, the partial derivative is
\begin{align*}
2a_A \Expl_A(\f, \ell)+ b_A =  
-2 \sum_{ \substack{S: S \supseteq A, \\ \mu(S) < \infty}} \mu(S) \f(S) + 2\sum_{S \in \setlessell} \Expl_S(\f, \ell) \sum_{\substack{L: L \supseteq S \cup T, \\ \mu(L) < \infty}} \mu(L).
\end{align*}

\end{proof}

\subsection{Proof of Proposition \ref{pro:constrained_closed_matrix_form_solution}}

\begin{proof}
We solve the constrained minimization problem via Lagrangian multiplier. Denote the objective $F(\ex) = \sum_{S \subseteq [d], 1 \leq |S| \leq d-1}  \mu(S) \left( \f(S) - \sum_{T \subseteq S , |T| \leq \ell}\Expl_T(\f,\ell) \right)^2$. Then we have the following equalities.
$$
\begin{cases}
\frac{\partial F(\ex)}{\partial \ex_A} 
= \lambda_{[d]} \cdot \frac{\partial }{\partial \ex_A} \left( -\f([d]) + \sum_{T \subseteq [d], |T| \leq \ell} \ex_T(\f,\ell) \right) = \lambda_{[d]}
& \text{ for all } A \in \setlessell \backslash \{\text{\O} \}. \\
\frac{\partial F(\ex)}{\partial \ex_{\text{\O}}} 
= \lambda_{\text{\O}} + \lambda_{[d]} \frac{\partial }{\partial \ex_{\text{\O}}} \left( -\f([d]) + \sum_{T \subseteq [d], |T| \leq \ell} \ex_T(\f,\ell) \right) =
\lambda_{\text{\O}} +\lambda_{[d]} \\
\sum_{T \subseteq [d], |T| \leq \ell} \ex_T = \f([d]) \\
\ex_{\text{\O}}(\f,\ell) = \f(\text{\O}).
\end{cases}
$$
By Proposition \ref{pro:parital_derivative_general}, we have 
\begin{align*}
\frac{\partial F(\ex)}{\partial \ex_A} 
& = -2 \sum_{ \substack{S: S \supseteq A, \\ \mu(S) < \infty}} \mu(S) \f(S) + 2\sum_{S \in \setlessell} \Expl_S(\f, \ell) \sum_{\substack{L: L \supseteq S \cup A, \\ \mu(L) < \infty}} \mu(L) \\
& = -2 \bar{\f}(A) + 2 \sum_{S \in \setlessell} \bar{\mu}(S \cup A)\Expl_S(\f, \ell). \\
\end{align*}
Combining the above two equations, we then have
$$
\begin{cases}
- \frac{1}{2}\lambda_{[d]} + \sum_{S \in \setlessell} \bar{\mu}(S \cup A)\Expl_S(\f, \ell) 
 =  \bar{\f}(A)
& \text{ for all } A \in \setlessell \backslash \{\text{\O} \}. \\
- \frac{1}{2}\lambda_{\text{\O}} - \frac{1}{2}\lambda_{[d]} + \sum_{S \in \setlessell} \bar{\mu}(S)\Expl_S(\f, \ell)
 =  \bar{\f}(\text{\O}) \\
\ex_{\text{\O}}(\f,\ell) = \f(\text{\O}).\\
\sum_{T \subseteq [d], |T| \leq \ell} \ex_T = \f([d]). \\
\end{cases}
$$
Now we write the system of linear equations into the matrix form.
$$
\mathbf{M} \begin{bmatrix}
\lambda_{\text{\O}} \\
\lambda_{[d]} \\
\ex_{\text{\O}}(\f,\ell) \\
\cdots \\
\ex_{S}(\f,\ell) \\
\ex_{T}(\f,\ell) \\
\cdots \\
\end{bmatrix}
= \mathbf{y}.
$$
By Proposition \ref{pro:unique_minimizer}, we know the system of linear equations has a unique solution, we have that the matrix $\mathbf{M}$ is invertible and therefore the solution can be expressed as $\mathbf{M}^{-1}\mathbf{y}$.

\end{proof}

\newpage

\section{Proof of Theorems}
\label{sec:proofs_theorems}
In this section, we provide the proof for Theorem \ref{thm:thmdummysymm} and \ref{thm:faith_shap}.

\subsection{Extra Notations}
\label{sec:extra_notations}

First, we introduce the cumulative weighting function. This function appears naturally in the partial derivatives of  Eqn.\eqref{eqn:weighted_regression} and Eqn.\eqref{eqn:constrained_weighted_regresion} with respect to each variable $\ex_S(\f,\ell)$ (we will show it in the later proof).  

\begin{definition}
\label{def:cumulative_weighting}
The cumulative weighting  function $\bar{\mu}:2^d \rightarrow \mathbb{R}^+$ such that $\bar{\mu}(S) = \sum_{\substack{T:T \supseteq S, \mu(T) < \infty}} \mu(T)$ for all subset $S \subseteq [d]$.
\end{definition}

When the function $\mu(S)$ only depends on the size of the input $|S|$, we simplify the notations by $\mu_{|S|} = \mu(S)$ and  $\bar{\mu}(S)
= \mubar{|S|} 
= \sum_{T \supseteq S, \mu(T) < \infty} \mu_{|T|} = \sum_{i: |S| \leq i \leq d, \mu_i < \infty}{d-|S| \choose i-|S|}\mu_{i}$ for all subsets $S \subseteq [d]$ to simplify the notation. Also, we introduce the notation of $D^{p}_q$ which will be used when solving first-order conditions of weighted regression problems.

\begin{definition}
\label{def:Dpq}
When the weighting function $\mu(\cdot)$ only depends on its input size, we define
$D^{p}_q = \sum_{j=0}^{q}{q \choose j} (-1)^{j}\mubar{p+j}$ for all $p,q \in \{0,1,2,...,d\}$ with $0 \leq p+q \leq d$. 
\end{definition}

In the proof in this section, the binomial coefficient ${ n \choose k}$ has a more general definition: for integers $n$ and $k$,
\begin{equation}
{ n \choose k}
= \begin{cases}
\frac{n!}{k!(n-k)!} & \text{  , if  } \ \ n \geq k \geq 0 \\
0 & \text{ , otherwise. }
\end{cases}
\end{equation}

\subsection{Proof of Theorem \ref{thm:thmdummysymm}}
\label{sec:proof_theorem_dummysymm}

We separate the proof of Theorem \ref{thm:thmdummysymm} into two parts: the sufficient condition and the necessary condition.

\subsubsection{Sufficient Condition}
\label{sec:suf_condition}

\begin{proof}
First of all, we prove the sufficient condition: suppose that the weighting function is in the following form:
\begin{align*}
\mu(S) & \propto \sum_{i=|S|}^{d} {d- |S| \choose i-|S|}(-1)^{i-|S|} g(a,b,i), \  \text{ where }
g(a,b,i) = 
\begin{cases}
1 & \text{ , if } \ i = 0. \\
\prod_{j=0}^{j=i-1} \frac{a(a-b) + j(b-a^2)}{a-b + j(b-a^2)}
& \text{ , if } \   1 \leq i \leq d. \\
\end{cases}
\end{align*} 
for some $a,b \in \R^{+}$ with $a>b$ such that $\mu(S) > 0$ for all $S \subseteq [d]$. Then we prove that the minimizer of Eqn.\eqref{eqn:weighted_regression} given the above weighting function satisfies interaction linearity, symmetry and dummy axioms. 

Since the $\mu(\cdot)$ defined in Eqn.\eqref{eqn:thmdummysymm} only depends on the size of the input set, to simplify the notations, we use $\mu_{|S|} = \mu(S)$ and  $\bar{\mu}(S) = \mubar{|S|} =  \sum_{i=|S|}^{d}{d-|S| \choose i-|S|}\mu_{i}$ to denote the weighting function and the cumulative weighting function (Definition \ref{def:cumulative_weighting}) for all subsets $S \subseteq [d]$. Also, since multiplying a scalar to $\mu$ does not change the minimizer of Eqn.\eqref{eqn:weighted_regression}, without loss of generality, we assume that $\mu(S) = \sum_{i=|S|}^{d} {d- |S| \choose i-|S|}(-1)^{i-|S|} g(a,b,i)$.

Then, we derive some properties of the weighting function $\mu(\cdot)$ the cumulative weighting function $\bar{\mu}(\cdot)$ and the operator $D_{i}^{t}$. We delay the proof of Claim \ref{clm:mubar_mu_relation}-\ref{clm:d_ri_positive} to Section \ref{sec:proofs_claims}.

\begin{claim}
\label{clm:mubar_mu_relation}
For all weighting function $\mu:\{0,1\}^d \rightarrow \mathbb{R}^+$, we have
$$\mu(S) = \sum_{T \supseteq S} (-1)^{|T| - |S|} \bar{\mu}(T) \text{  , for all $S \subseteq [d]$.  } 
$$
\end{claim}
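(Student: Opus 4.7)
The plan is to prove this via Möbius inversion on the Boolean lattice of subsets of $[d]$, ordered by inclusion. The identity to establish is a standard instance of the inclusion-exclusion principle, so I expect no substantive obstacle; the key is simply to swap the two summations and evaluate the resulting inner alternating sum.

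First, I would unfold the definition of the cumulative weighting function from Definition~\ref{def:cumulative_weighting}: since $\mu$ is assumed finite everywhere here (note that the weighting function in the theorem statement is finite), we have $\bar{\mu}(T) = \sum_{L \supseteq T} \mu(L)$. Plugging this into the right-hand side of the claim gives
\[
\sum_{T \supseteq S} (-1)^{|T|-|S|} \bar{\mu}(T) \;=\; \sum_{T \supseteq S} (-1)^{|T|-|S|} \sum_{L \supseteq T} \mu(L).
\]
Next I would interchange the order of summation, grouping by $L$: since the condition "$T \supseteq S$ and $L \supseteq T$" is equivalent to "$L \supseteq S$ and $S \subseteq T \subseteq L$", the double sum becomes
\[
\sum_{L \supseteq S} \mu(L) \sum_{T:\, S \subseteq T \subseteq L} (-1)^{|T|-|S|}.
\]

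The final step is to evaluate the inner sum. Substituting $R = T \setminus S$, which ranges freely over subsets of $L \setminus S$, the inner sum becomes $\sum_{R \subseteq L \setminus S} (-1)^{|R|} = (1 + (-1))^{|L \setminus S|}$, which equals $1$ when $L = S$ and $0$ otherwise. Hence only the term $L = S$ survives, leaving $\mu(S)$ as desired. This is the only nontrivial calculation in the claim, and it is a one-line application of the binomial theorem; therefore I do not anticipate any real obstacle. (If desired, I could also phrase this as the Möbius inversion formula on the poset $(2^{[d]}, \supseteq)$, whose Möbius function is $(-1)^{|T|-|S|}$, but a direct computation as above is cleaner and entirely self-contained.)
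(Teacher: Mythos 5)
Your proposal is correct and follows exactly the same route as the paper's proof: unfold $\bar{\mu}(T) = \sum_{L \supseteq T} \mu(L)$, interchange the order of summation, and evaluate the inner alternating sum over the interval $[S, L]$ as $(1-1)^{|L|-|S|}$ via the binomial theorem, so only $L = S$ survives. No gaps; nothing further to add.
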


Then the cumulative weighting function $\mubar{t}$ can be computed as following:

\begin{claim}
\label{clm:mubar_closeform}
When $\mu(S) = \sum_{i=|S|}^{d} {d- |S| \choose i-|S|}(-1)^{i-|S|} g(a,b,i)$, the cumulative weighting function is 
$$
\mubar{t} = 
\begin{cases}
\prod_{j=0}^{t-1} \frac{a(a-b) + j(b-a^2)}{a -b + j(b-a^2)} & \text{ , if } 1 \leq t \leq d. \\
1 & \text{ , if } t=0.\\
\end{cases}
$$
\end{claim}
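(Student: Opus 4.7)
The plan is a direct computation: substitute the hypothesized formula for $\mu$ into the definition of $\mubar{t}$ and collapse the resulting double sum via a standard binomial orthogonality identity. Concretely, since $\mu$ is symmetric we have $\mubar{t} = \sum_{s=t}^{d} \binom{d-t}{s-t} \mu_s$. Plugging in $\mu_s = \sum_{i=s}^{d}\binom{d-s}{i-s}(-1)^{i-s} g(a,b,i)$ and interchanging the order of summation yields
\[
\mubar{t} \;=\; \sum_{i=t}^{d} g(a,b,i) \sum_{s=t}^{i} \binom{d-t}{s-t}\binom{d-s}{i-s}(-1)^{i-s}.
\]

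The main step is recognizing the inner sum as (up to a factor) a telescoping alternating binomial sum. I will use the trinomial identity
\[
\binom{d-t}{s-t}\binom{d-s}{i-s} \;=\; \binom{d-t}{i-t}\binom{i-t}{s-t},
\]
which is immediate from writing both sides as $\frac{(d-t)!}{(s-t)!\,(i-s)!\,(d-i)!}$. Factoring $\binom{d-t}{i-t}$ out of the inner sum and substituting $k=s-t$ gives
\[
\sum_{s=t}^{i} \binom{d-t}{s-t}\binom{d-s}{i-s}(-1)^{i-s} \;=\; \binom{d-t}{i-t}\sum_{k=0}^{i-t}\binom{i-t}{k}(-1)^{i-t-k} \;=\; \binom{d-t}{i-t}\,(1-1)^{i-t},
\]
which equals $1$ if $i=t$ (with the convention $0^0=1$) and $0$ otherwise. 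Only the $i=t$ term survives, giving $\mubar{t} = g(a,b,t)$. This covers both cases $1\le t\le d$ and $t=0$ simultaneously, since $g(a,b,0)=1$ by definition, yielding $\mubar{0}=1$.

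I do not expect a real obstacle; the calculation is routine once the right identity is invoked. The only thing to be careful about is the index bookkeeping and the boundary $t=0$ (ensuring the empty product in $g(a,b,0)$ matches the normalization). As an alternative, more structural route I could simply invoke Claim \ref{clm:mubar_mu_relation}, which for symmetric weights says $\mu_s = \sum_{i=s}^{d}\binom{d-s}{i-s}(-1)^{i-s}\mubar{i}$; this is the inverse of the linear transformation $\mubar{s} = \sum_{i=s}^{d}\binom{d-s}{i-s}\mu_i$, so the hypothesis $\mu_s = \sum_{i=s}^{d}\binom{d-s}{i-s}(-1)^{i-s}g(a,b,i)$ forces $\mubar{t}=g(a,b,t)$ by the uniqueness of the inverse. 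Either route gives the claim in a few lines.
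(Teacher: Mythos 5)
Your proposal is correct and follows essentially the same route as the paper's proof: substitute the formula for $\mu$, interchange the order of summation, apply the trinomial revision identity $\binom{d-t}{s-t}\binom{d-s}{i-s}=\binom{d-t}{i-t}\binom{i-t}{s-t}$, and collapse the inner alternating sum to $(1-1)^{i-t}$ so that only the $i=t$ term survives. Your observation that the same computation handles $t=0$ directly (via $g(a,b,0)=1$) is a slight tidying of the paper's treatment of that boundary case, and the alternative inversion argument via Claim~\ref{clm:mubar_mu_relation} is also valid.
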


Also, the following claim states that the operator $D^t_i$ is positive for all $t,i \in \{0,1,2,...,d\}$ with $0 \leq t+i \leq d$. 
\begin{claim}
\label{clm:d_ri_positive}
When $\mu(\cdot)$ is finite and permutation-invariant, we have $D^{p}_q > 0$ for all $p,q \in \{0,1,2,...,d\}$ with $0 \leq p+q \leq d$. 
\end{claim}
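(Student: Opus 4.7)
}

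The plan is to give $D^{p}_{q}$ a clean combinatorial interpretation via inclusion--exclusion, from which positivity is immediate. I would first use the fact that $\mu$ is permutation-invariant to write $\bar{\mu}_{r} = \sum_{i=r}^{d}\binom{d-r}{i-r}\mu_{i}$, and observe that for any fixed set $A \subseteq [d]$ with $|A|=r$ one has $\bar{\mu}_{r} = \sum_{T \supseteq A} \mu(T)$. Now pick any two disjoint sets $A,B \subseteq [d]$ with $|A|=p$ and $|B|=q$; this is possible precisely because $p+q \leq d$.

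The key step is to rewrite each $\bar{\mu}_{p+j}$ in the definition of $D^{p}_{q}$ by symmetrizing over size-$j$ subsets of $B$: for any fixed $B' \subseteq B$ with $|B'|=j$, $\bar{\mu}_{p+j} = \sum_{T \supseteq A \cup B'} \mu(T)$, so averaging over the $\binom{q}{j}$ choices of $B'$ gives
\begin{equation*}
D^{p}_{q} \;=\; \sum_{j=0}^{q}(-1)^{j} \sum_{\substack{B'\subseteq B\\|B'|=j}} \sum_{T \supseteq A\cup B'} \mu(T).
\end{equation*}
Swapping the order of summation, for each $T \supseteq A$ with $k := |T \cap B|$ (note $0 \leq k \leq q$), the coefficient is $\mu(T)\sum_{j=0}^{q}(-1)^{j}\binom{k}{j}$, which by the standard alternating binomial identity equals $\mathbbm{1}[k=0]$. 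Hence
\begin{equation*}
D^{p}_{q} \;=\; \sum_{A \subseteq T \subseteq [d]\setminus B} \mu(T).
\end{equation*}

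The conclusion is then immediate: the index set is non-empty (it contains at least $T=A$, which is valid since $A \cap B = \emptyset$), and every summand $\mu(T)$ is strictly positive because $\mu$ is finite and the range of any proper weighting function excludes zero. Therefore $D^{p}_{q} > 0$ for all $p,q \geq 0$ with $p+q \leq d$. I do not anticipate a serious obstacle: the only subtle point is spotting the right combinatorial interpretation that decouples $A$ and $B$; once that is in place, the alternating sum collapses by a one-line identity, and positivity of $\mu$ closes the argument.
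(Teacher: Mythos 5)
Your proof is correct, and it takes a genuinely different route from the paper. You derive the explicit closed form $D^{p}_{q} = \sum_{A \subseteq T \subseteq [d]\setminus B}\mu(T)$ (for any disjoint $A,B$ with $|A|=p$, $|B|=q$) by an inclusion--exclusion/swap-of-summation argument, after which positivity is a one-line consequence of $\mu(T)>0$. The paper instead proves only the two boundary cases directly --- $D^{p}_{0}=\bar{\mu}_{p}>0$ and $D^{p}_{d-p}=\mu_{p}>0$ (the latter via its Claim relating $\mu$ and $\bar{\mu}$) --- then derives the Pascal-type recursion $D^{p}_{q}=D^{p}_{q-1}-D^{p+1}_{q-1}$ and runs a downward induction on $p$ to fill in the interior. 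Your identity is strictly stronger than what the paper establishes: it recovers both of the paper's boundary facts as the special cases $q=0$ and $q=d-p$, makes the recursion transparent (split the sum according to whether a distinguished element of $B$ lies in $T$), and would shorten the paper's argument; the paper's version has the modest advantage of reusing machinery (the $\mu$--$\bar{\mu}$ inversion claim and the recursion) that it needs again elsewhere, e.g.\ in the proof of its affine-ratio claim for $D^{p}_{q}/D^{p}_{q+1}$. The only points worth making explicit in a final write-up are (i) that $\binom{k}{j}=0$ for $j>k$ so the upper limit $q$ in $\sum_{j=0}^{q}(-1)^{j}\binom{k}{j}=\mathbbm{1}[k=0]$ is harmless, and (ii) that the strict positivity of every $\mu(T)$ is exactly the paper's standing convention that the range of a weighting function excludes zero; you state both, so nothing is missing.
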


By Proposition \ref{pro:linearity}, the Faith-Interaction index satisfies interaction linearity axiom. 
In addition, by Proposition \ref{pro:symmetry},  since the weighting $\mu(S)$ only depends on $|S|$, the Faith-Interaction index also satisfies the interaction symmetry axiom. Consequently, we only need to prove that the Faith-Interaction index satisfies the interaction dummy axiom.

Next, we introduce the basis function.
\begin{definition}
\label{def:basis_function}
For any subset $R \subseteq [d]$ with $|R| = r$, the basis function with respect to $R$ is defined below:
$$
\f_R(S) = \begin{cases}
1, & \text{if  } \ S \supseteq R. \\
0, & \text{otherwise.}
\end{cases}
$$
\end{definition}

This is known as \textit{unanimity game} in game theory community. We note that only elements inside $R$ actually contribute to the function value. That is, elements belong to $[d] \backslash R$ are dummy elements. Formally, we have $\f(S \cup i) = \f(S)$ for any $i \in [d] \backslash R$ and $S \subseteq [d]\backslash \{i\}$.

An important property of the basis functions is that any function $\f:\{0,1\}^d \rightarrow \mathbb{R}$ can be expressed as a linear combination of the $2^d$ basis functions. 
Then by the interaction linearity axiom, the minimizer of Eqn.\eqref{eqn:weighted_regression} with respect to $\f(\cdot)$ can be represented as the same linear combination of minimizer of these basis functions. 
In the following lemma, we show that if the minimizers of Eqn.\eqref{eqn:weighted_regression} with respect to these $2^d$ basis functions satisfy the interaction dummy axiom, then all functions satisfy dummy axiom. Therefore, it is sufficient to prove that these minimizers satisfy the interaction dummy axiom for these $2^d$ basis functions.
\begin{lemma}
\label{lm:f_decomposable}
Let $\ex(\f,\ell)$ be a Faith-Interaction indice with respect to a proper weighting function $\mu:2^d \mapsto \mathbb{R}^+ \cup \{\infty \}$.
If $\ex_S(\f_R,\ell) = 0$ for all $\ell \in [d]$ and for all $S \in \setlessell, R \subseteq [d]$ with $S \cap ([d] \backslash R) \neq \text{\O}$, then the Faith-Interaction indices with respect to the weighting function $\mu(\cdot)$ satisfy interaction dummy axiom. 
\end{lemma}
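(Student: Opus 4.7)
The plan is to reduce the interaction dummy axiom for an arbitrary set function $\f$ to the stated hypothesis about unanimity games via the M\"obius decomposition $\f = \sum_{R \subseteq [d]} a(\f, R)\, \f_R$ from Eqn.~\eqref{eqn:discopose_f} together with the interaction linearity axiom (Proposition~\ref{pro:linearity}). First I would fix a set function $\f : 2^d \to \R$ and an index $i \in [d]$ that is dummy for $\f$, i.e.\ $\f(S \cup i) = \f(S)$ for every $S \subseteq [d] \setminus i$. By linearity, for any $T \in \setlessell$,
\[
\ex_T(\f, \ell) \;=\; \sum_{R \subseteq [d]} a(\f, R)\, \ex_T(\f_R, \ell).
\]

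The key computation is to observe that the dummy property forces every M\"obius coefficient indexed by a set containing $i$ to vanish. For any $R \ni i$, splitting $a(\f,R) = \sum_{L \subseteq R}(-1)^{|R|-|L|}\f(L)$ according to whether $i \in L$ and pairing each $L \subseteq R \setminus i$ with $L \cup i$ gives
\[
a(\f,R) \;=\; \sum_{L \subseteq R\setminus i} (-1)^{|R|-|L|}\bigl[\f(L) - \f(L \cup i)\bigr] \;=\; 0,
\]
since every bracketed difference vanishes by the dummy hypothesis. Hence the M\"obius expansion of $\f$ only involves unanimity games $\f_R$ indexed by subsets $R \subseteq [d] \setminus i$.

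To conclude, I would fix any $T \in \setlessell$ with $i \in T$. Every surviving $R$ in the reduced expansion satisfies $i \notin R$, so $i \in T \cap ([d] \setminus R)$, making this intersection non-empty. The hypothesis of the lemma then gives $\ex_T(\f_R, \ell) = 0$ for each such $R$, and therefore $\ex_T(\f, \ell) = 0$, which is exactly the interaction dummy axiom for $\f$. There is no real obstacle here; the only subtlety worth flagging is that the interaction linearity axiom is invoked even in the constrained (Lagrangian) regime where $\mu(\emptyset)$ or $\mu([d])$ may be infinite, but this is already covered by Proposition~\ref{pro:linearity}, so the reduction goes through uniformly.
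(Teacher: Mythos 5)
Your proposal is correct and follows the same route as the paper's own proof: M\"obius decomposition into unanimity games, the pairing argument showing $a(\f,R)=0$ whenever $R$ contains the dummy feature, and interaction linearity (Proposition~\ref{pro:linearity}) to transfer the hypothesis on the $\f_R$ back to $\f$. No gaps.
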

\begin{proof}
By Lemma 3 in \citet{shapley1953value}, any function $\f:\{0,1\}^d \rightarrow \mathbb{R}$ can be expressed as a linear combination of these $2^d$ basis functions, such that
$$
\f = \sum_{R \subseteq [d]} c_R \f_R 
\ \  \text{  with  }  \ \ 
c_R = \sum_{T \subseteq R} (-1)^{|R| - |T|} \f(T).
$$
$c_R$ here is the Möbius coefficient.
By Proposition \ref{pro:linearity}, the Faith-Interaction indice $\ex$ satisfies the interaction linearity axiom, which implies that $\Expl(\f,\ell)$ can be expressed as the following form:
$$
\Expl(\f,\ell) = \sum_{R \subseteq [d]} c_R \Expl(\f_R,\ell).
$$
Suppose that we have some dummy feature $i \in [d]$ such that $\f(T \cup i) = \f(T)$ for any $ T \subseteq [d] \backslash \{ i\}$, then for any $R \supseteq \{ i\}$, we have 
$$
c_R 
= \sum_{T \subseteq R} (-1)^{|R| - |T|} \f(T)
= \sum_{T \subseteq R \backslash \{ i\}} (-1)^{|R| - |T|} ( \f(T) - \f(T \cup i)) = 0.
$$
Therefore, the function $\f$ is the linear combinations of $\f_R(\cdot)$ for some $R$ not containing $i$. However, for these subsets, by the definition of the basis function, we have $\f_R(S) = \f_R(S \cup i)$ for any $S \subseteq [d] \backslash \{i\}$. Since we have $\ex_S(\f_R, \ell) = 0$ for all $S \in \setlessell$ with $S \cap ([d] \backslash R) \neq \text{\O}$, we get $\ex_T(\f_R, \ell) = 0$ for any $T$ containing the dummy feature $i$.
Consequently, we have 
$$
\Expl_T(\f,\ell) = \sum_{R \subseteq [d]} c_R \Expl_T(\f_R,\ell) = \sum_{R \subseteq [d]\backslash \{i\}} c_R \Expl_T(\f_R,\ell) = 0.
$$
We note that the above proof holds for any $\ell \in [d]$. Therefore, the Faith-Interaction indices with respect to $\mu(\cdot)$ satisfy interaction dummy axiom.
\end{proof}

Now we come back to the proof of Theorem \ref{thm:thmdummysymm}. By Lemma \ref{lm:f_decomposable}, we only need to prove that for any $R \subseteq [d]$, the minimizer of Eqn.\eqref{eqn:weighted_regression} with respect to the basis function $\f_R$, $\ex(\f_R,\ell)$, satisfies $\ex_S(\f_R,\ell) = 0$ if $S$ containing any dummy element in $[d] \backslash R$. 

The objective function of the weighted least square problem with respect to the basis function $\f_R$ can be written as follows:

\begin{align}
\label{eqn:weighted_ls}
F_R(\ex) & = \sum_{S \subseteq [d]} \mu_{|S|} \left[\sum_{T \subseteq S, |T| \leq \ell} \Expl_T(\f_R, \ell) - \f_R(S) \right]^2
\\
\label{eqn:weighted_ls_2}
& = \sum_{S \supseteq R, S \subseteq [d]} \mu_{|S|} \left[ \sum_{T \subseteq S, |T| \leq \ell} \Expl_T(\f_R, \ell) - 1 \right]^2 + \sum_{S \not\supseteq R, S \subseteq [d]} \mu_{|S|} \left[ \sum_{T \subseteq S, |T| \leq \ell} \Expl_T(\f_R, \ell)  \right]^2.
\end{align}
Note that the weighting function $\mu$ is defined in Eqn.\eqref{eqn:thmdummysymm}. Let $r = |R|$ denote the size of the set $R$. Now, we separate the problem into three cases: (1) $d \geq \ell \geq  r \geq 0$. (2) $d \geq \ell + r $ and $r > \ell$. (3) $\ell + r > d \geq r > \ell \geq 1$.

\paragraph{(1)  $d \geq \ell \geq  r \geq 0$:} 

\begin{lemma}
If $\f_R$ is a basis function with $|R| = r \leq \ell$, the unique minimizer of Eqn.\eqref{eqn:weighted_ls} is
\begin{equation}
\Expl_T(\f_R, \ell)
= \begin{cases}
1 & \text{, if } \ \ T = R. \\
0 & \text{, otherwise.}
\end{cases}
\end{equation}
\end{lemma}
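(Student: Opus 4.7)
The plan is to exploit a very direct feature of this case: when $r = |R| \le \ell$, the set $R$ itself is one of the free variables in the regression (since $R \in \mathcal{S}_\ell$), and the candidate indicator assignment already makes the regression residual vanish identically. So rather than solving first-order conditions, I would simply verify that the proposed $\Expl$ drives the objective to its absolute lower bound of zero, and then invoke the uniqueness of the minimizer that has already been established.

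Concretely, I would proceed in three short steps. First, I would record that because $|R| \le \ell$, the quantity $\Expl_R(\f_R,\ell)$ appears among the $d_\ell$ decision variables of Eqn.~\eqref{eqn:weighted_ls}, so the candidate $\Expl_T(\f_R,\ell) = \mathbbm{1}[T = R]$ is a feasible point. Second, I would plug this candidate into the predictive sum: for any $S \subseteq [d]$,
\[
\sum_{T \subseteq S,\, |T| \le \ell} \mathbbm{1}[T = R] \;=\; \mathbbm{1}[R \subseteq S] \;=\; \f_R(S),
\]
where the first equality uses $|R| \le \ell$ (so the size constraint is automatic once $R \subseteq S$), and the second is the definition of the unanimity basis function in Definition~\ref{def:basis_function}. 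Consequently every squared residual in Eqn.~\eqref{eqn:weighted_ls} equals zero, and $F_R(\Expl) = 0$.

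Third, because $F_R$ is a sum of non-negative terms, zero is the global lower bound, so the candidate is a minimizer. Proposition~\ref{pro:unique_minimizer} guarantees that the minimizer is unique (the proper-weighting hypothesis from Theorem~\ref{thm:thmdummysymm} applies since $\mu(\cdot)$ given by Eqn.~\eqref{eqn:thmdummysymm} is finite), so the candidate is \emph{the} minimizer. In particular $\Expl_T(\f_R,\ell) = 0$ whenever $T \neq R$, which covers every $T$ intersecting $[d]\setminus R$ and therefore delivers the interaction-dummy conclusion needed in the outer argument via Lemma~\ref{lm:f_decomposable}.

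There is no genuine obstacle here; the lemma is essentially a sanity check that the unanimity game of order $r \le \ell$ is exactly representable in the $\ell$-th order faithful model, so the regression fits it perfectly. The only care needed is the bookkeeping observation $|R| \le \ell$, which guarantees both that $R$ is admissible as an index of $\Expl$ and that the size cap in the inner sum never truncates the indicator. The substantive work of the theorem lies in the complementary cases $r > \ell$ handled in the subsequent lemmas.
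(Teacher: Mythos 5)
Your proposal is correct and follows essentially the same route as the paper's proof: verify that the indicator assignment makes every residual vanish (using $|R|\leq \ell$ so that $R$ is an admissible index and the size cap never truncates the sum), conclude $F_R(\Expl)=0$ is attained, and invoke Proposition~\ref{pro:unique_minimizer} for uniqueness. The extra bookkeeping you spell out, namely $\sum_{T \subseteq S,\,|T|\leq\ell}\mathbbm{1}[T=R]=\mathbbm{1}[R\subseteq S]=\f_R(S)$, is exactly the computation the paper performs implicitly.
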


\begin{proof}
By Proposition \ref{pro:unique_minimizer}, Eqn.\eqref{eqn:weighted_ls} has a unique minimizer. 
If we plug in the above definition of  $\Expl_T(\f_R, \ell)$ to Eqn.\eqref{eqn:weighted_ls_2}, we get that 
$$
\sum_{T \subseteq S, |T| \leq \ell} \Expl_T(\f_R, \ell) - \f_R(S) = 0 
\ \ \text{ for all } \ \
S \subseteq [d].
$$
This implies that $F_R(\ex) = 0$. Since the objective is always non-negative, this is the unique minimizer of Eqn.\eqref{eqn:weighted_ls}.
\end{proof}

\paragraph{(2) $d \geq \ell + r$ and $r > \ell$:} 
Next, we solve Eqn.\eqref{eqn:weighted_ls_2} by using partial derivatives. By Proposition \ref{pro:parital_derivative_general}, for all $A \in \setlessell$, we have 
\begin{align}
\frac{\partial \ F(\ex)}{\partial \Expl_A}
& = -2 \sum_{ \substack{S: S \supseteq A, \\ \mu(S) < \infty}} \mu(S) \f_R(S) + 2\sum_{S \in \setlessell} \Expl_S(\f, \ell) \sum_{\substack{L: L \supseteq S \cup A, \\ \mu(L) < \infty}} \mu(L)
\nonumber \\
& = -2\mubar{|A \cup R|} + 2\sum_{S \in \setlessell} \mubar{|S \cup A|} \Expl_S(\f_R, \ell).
\label{eqn:partial_derivative}
\end{align}

Now we utilize the symmetric structure in the basis function $\f_R(\cdot)$ and weighting function $\mu(\cdot)$.
In the basis functions, there are only two kinds of input elements, which are elements in $R$ and not in $R$. Therefore, for $i^{th}$ order interactions terms $\Expl_{T}(\f_R,\ell)$ where $|T| = i$, there are at most $i+1$ distinct values. Each value corresponds to the set with $j$ elements in $R$ for $j=0,1,...i$. That is, by the interaction symmetry axiom, there are only $i+1$ different importance value for $i^{th}$ order interactions terms (since if $|T_1| = |T_2|$ and $|T_1 \cap R| = |T_2 \cap R|$ then $T_1$ and $T_2$ are symmetric and $\Expl_{T_1}(\f_R,\ell) = \Expl_{T_2}(\f_R,\ell)$), so there are $ 1 + 2 +... + (\ell + 1)= \frac{(\ell + 2)(\ell + 1)}{2}$ kinds of values in the minimizer $\Expl(\f_R,\ell)$. We then introduce a new notation system that utilizes the symmetric structure. First, we use a vector $\bfb$
to represent these $\frac{(\ell + 2)(\ell + 1)}{2}$ values.

\begin{definition}
\label{def:bij}
The vector
$ \bfb \in \mathbb{R}^{\frac{(\ell +1)(\ell + 2)}{2}} \text{ is indexed with  }
\bfb_{i,j} = \Expl_S(\f_R,\ell) \text{  with  } |S| = i \text{ and } |S \backslash R| = j,$
where $i, j$ are integers with $ 0 \leq i \leq \ell$ and $0 \leq j \leq i$.
\end{definition}

The term $\bfb_{i,j}$ means the importance score of an $i^{th}$ order interaction (of size $i$) term with $i-j$ elements lying in $R$ and $j$ element lying in $[d] \backslash R$. Note that in this definition, $\bfb_{0,0} = \ex_{\text{\O}(\f_R,\ell)}$ means the bias term in the weighted linear regression. Now we can apply this new notation to rewrite Eqn. \eqref{eqn:partial_derivative}.

\begin{lemma}
\label{lm:bij_parital_derivative}
The partial derivative of $F_R(\ex)$ with respect to $\bfb_{i,j} $ is 
$$
\frac{\partial F_R(\ex)}{\partial \bfb_{i,j}} 
= -2\mubar{r+j}  
+ 2 \sum_{p=0}^{\ell} \sum_{q=0}^{p} \left( \sum_{\rho=0}^{i-j} \sum_{\sigma=0}^{j} { i-j \choose \rho} { r-(i-j) \choose p-q-\rho} { j \choose \sigma}{ d-r-j \choose q-\sigma} \mubar{i+p-\rho-\sigma} \bfb_{p,q} \right), 
$$
where $r = |R|$.
%where $\mubar{k} = \sum_{i=k}^{d} { d - k \choose i -k} \mu_i$.
\end{lemma}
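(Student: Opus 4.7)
The plan is to start from the general partial-derivative formula already established in Proposition \ref{pro:parital_derivative_general} specialized to $\f_R$, namely Eqn.\eqref{eqn:partial_derivative}:
\begin{equation*}
\frac{\partial F_R(\ex)}{\partial \Expl_A} \;=\; -2\,\mubar{|A\cup R|} \;+\; 2\sum_{S\in\mathcal{S}_\ell}\mubar{|S\cup A|}\,\Expl_S(\f_R,\ell),
\end{equation*}
evaluated at any representative $A\in\mathcal{S}_\ell$ with $|A|=i$ and $|A\setminus R|=j$. By Proposition \ref{pro:symmetry}, the coalition weighting function $\mu(\cdot)$ depends only on the cardinality of its argument, so $\mubar{|A\cup R|} = \mubar{r+j}$, which immediately produces the first term of the claimed formula. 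Interaction symmetry also identifies every $\Expl_S(\f_R,\ell)$ with $\bfb_{|S|,|S\setminus R|}$, which reduces the task to counting, for each pair $(p,q)$, how often each value of $\mubar{|S\cup A|}$ appears as $S$ ranges over the class $\{S:|S|=p,\,|S\setminus R|=q\}$.

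Next, I would decompose the ambient set $[d]$ relative to $A$ and $R$ into four disjoint regions of known sizes:
\begin{equation*}
A\cap R\;(\text{size }i-j),\quad R\setminus A\;(\text{size }r-(i-j)),\quad A\setminus R\;(\text{size }j),\quad [d]\setminus(A\cup R)\;(\text{size }d-r-j).
\end{equation*}
Any $S$ is specified by its intersection with each region; setting $\rho=|S\cap A\cap R|$ and $\sigma=|S\cap A\cap([d]\setminus R)|$ captures exactly the overlap with $A$, while $|S\cap R|=p-q$ and $|S\setminus R|=q$ fix the overall sizes. The counts in the remaining two regions are then forced to be $(p-q)-\rho$ and $q-\sigma$ respectively. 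Choosing independently in each region yields the product of four binomials $\binom{i-j}{\rho}\binom{r-(i-j)}{p-q-\rho}\binom{j}{\sigma}\binom{d-r-j}{q-\sigma}$, which matches the summand in the lemma.

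Finally, I would use inclusion--exclusion to evaluate $|S\cup A|=|S|+|A|-|S\cap A|$; since $|S\cap A|=\rho+\sigma$ by construction, this gives $|S\cup A|=i+p-\rho-\sigma$, and thus $\mubar{|S\cup A|}=\mubar{i+p-\rho-\sigma}$. Substituting into the second term of Eqn.\eqref{eqn:partial_derivative} and collecting sums over $(p,q,\rho,\sigma)$ yields exactly the stated expression.

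The step that requires the most care is the four-region counting, since one must verify that the binomial upper indices total correctly ($(i-j)+(r-(i-j))+j+(d-r-j)=d$) and that each contribution $\binom{\cdot}{\cdot}$ correctly vanishes when the nominal subset sizes $p-q-\rho$ or $q-\sigma$ fall outside $[0,\text{region size}]$; this is handled automatically by the generalized binomial convention adopted in Section \ref{sec:extra_notations}. Beyond that, the argument is purely bookkeeping and involves no analytical subtlety.
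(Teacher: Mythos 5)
Your proposal is correct and follows essentially the same route as the paper's proof: specializing the general partial-derivative formula of Proposition \ref{pro:parital_derivative_general} to $\f_R$, invoking symmetry to reduce to the $\bfb_{p,q}$ representatives, and counting the sets $S$ in each class via the same four-region decomposition of $[d]$ relative to $A$ and $R$ with $\rho=|S\cap A\cap R|$ and $\sigma=|S\cap A\setminus R|$. The inclusion--exclusion step $|S\cup A|=i+p-\rho-\sigma$ and the product of four binomials match the paper's argument exactly.
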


\begin{proof}
Let $\Expl_A(\f_R, \ell) = \bfb_{i,j} $, where $i = |A|$ and $ j = |A \backslash R| $. Then the first term in Eqn. \eqref{eqn:partial_derivative} is $-2\mubar{|A \cup R|} = -2\mubar{r+j}.$ We know that for any 

\begin{equation}
\label{eqn:mapping_S_to_b}
\mubar{|S \cup A|}\Expl_S(\f_R, \ell)
= \mubar{ \{ |S| + |A| - |S \cap  A| \}} \bfb_{ \{|S|,|S \backslash R| \}} =  \mubar{ \{ |S| + |A| - |S \cap R \cap A | - |(S \cap A \backslash R  | \} } \bfb_{ \{|S|,|S \backslash R| \}}
\end{equation}

for all $S \in \setlessell$. The equation depends on the four sets, which are $A \cap R, A \backslash R, S \cap R$ and $S \backslash R$. 
Therefore, in the following proof of this lemma, we split set $S$ to $S \cap R$ and $S \backslash R$ and consider them in different cases.

We now let $\Expl_S(\f_R, \ell) = \bfb_{p,q}$, so that $p(S) = |S|$ and $q(S) = |S \backslash R|$. Assume $\rho(S) = |S \cap R \cap A |$ and $\sigma(S) = |S  \cap A \backslash R |$. 
Eqn. \eqref{eqn:mapping_S_to_b} can be written as
$$
\mubar{|S \cup A|}\Expl_S(\f_R, \ell) = \mubar{p(S) + i - \rho(S) - \sigma(S)} \bfb_{p,q}.
$$

%Each of $p, q, \rho, \sigma$ are constraints on the relation of sets $S$, $A$ and $R$, since $R$ and $A$ are known, 
We can calculate the number of sets $S$ that satisfies the constraints $p(S) = p_0, q(S) = q_0, \rho(S) = \rho_0, \sigma(S) = \sigma_0$.
%Since $S = (S \cap R) + (S \backslash R)$, we must select $p-q$ elements from $R$ and  $q$ elements from $[d] \backslash R$. Therefore, t
Since $S = (S \cap R) + (S \backslash R)$, the number of $S$ (satisfies the constraints) equals to the number set $ (S \cap R)$ (satisfies the constraints) times the number of set $ (S \backslash R)$ (satisfies the constraints), since $S$ is determined given $S\cap R$ and $ (S \backslash R)$. We calculate the number of set $(S \cap R)$ and set $ (S \backslash R)$ (that satisfies the constraints) respectively.

\begin{enumerate}
    \item First, we observe that the number of $S \cap R$ (that satisfies the constraints) is equal to the number of ways to choose $|S \cap R|$ elements from $R$ (that satisfies the constraints), and $|S \cap R| = p(S) - q(S) = p_0 - q_0.$ Choosing $p_0 - q_0$ elements from $R$ can be further viewed as choosing $|(S \cap R) \cap A|$ elements from $R\cap A$ and $|(S \cap R) \backslash A|$ elements from $R\backslash A$. We note that $|(S \cap R) \cap A| = \rho_0$ and $|(S \cap R) \backslash A| = |(S \cap R)| - |(S \cap R) \cap A| = p_0 - q_0 - \rho_0$.
    %we want to select $p-q$ elements from $R$. There should be $\rho = |(S \cap R) \cap (A \cap R)|$ elements from $R \cap A$ and $|S \cap R| - \rho = p-q-\rho$ elements from $R \backslash A$. Therefore,  
    Therefore, there are ${|R \cap A| \choose \rho_0 }{ |R  \backslash A | \choose p_0-q_0-\rho_0} = {i-j \choose \rho}{r -(i-j) \choose (p-q)-\rho}$ ways to select $|S \cap R|$ elements from $R$.
    
    \item  Secondly, we observe that the number of $S \backslash R$ (that satisfies the constraints) is equal to the number of ways to choose $|S \backslash R|$ elements from $[d] \backslash R$ (that satisfies the constraints), and $|S \backslash R| = q(S) =  q_0.$ Choosing $q_0$ elements from $[d] \backslash R$ can be further viewed as choosing $|S  \cap A \backslash R|$ elements from $A \backslash R$ and $|S \backslash R \backslash A|$ elements from $[d] \backslash R \backslash A$.
    We note that $|S  \cap A \backslash R| = \sigma_0 $ and $|S \backslash R \backslash A| = q_0- \sigma_0$.
    Therefore, there are ${| A \backslash R| \choose  p_0-q_0 -\sigma_0 }{ |[d]\backslash R \backslash A| \choose q_0 -\sigma_0} = {j \choose \sigma}{d-r-j \choose q -\sigma}$ ways to select $|S \cap R|$ elements from $R$.
    
    For elements in $S \backslash R$, there should be $\sigma =  |(S \backslash R) \cap ( A \backslash R )|$ elements 
    from $ A \backslash R = ([d]\backslash R) \cap A$ and $q - \sigma$ elements  from $ ([d]\backslash R) \backslash A$ to satisfy the constraint $\sigma$. Therefore,  
    there are ${ |([d]\backslash R) \cap A| \choose \sigma }{| ([d]\backslash R) \backslash A| \choose q- \sigma}  = {j \choose \sigma}{d-r-j \choose q-\sigma}$  ways to select 
    elements of $S \backslash R$ from $[d]\backslash R$.
    
\end{enumerate}

Note that we have $\rho \leq |A \cap R| = i-j$ and $\sigma \leq |A \backslash R| = j $. Since every $S$ can map to some constraints $p,q,\rho, \sigma$, by summation over all possible $p,q, \rho, \sigma$, we can get the partial derivative of the objective with respect to $\bfb_{i,j}$.

Also, if there are not enough elements to be selected, i.e. $|R \backslash A| < p_0-q_0-\rho_0$, ${|R \backslash A| \choose p_0-q_0-\rho_0} = 0$, so the number of ways for selection is zero.

\end{proof}

Now, since the minimizer of Eqn.\eqref{eqn:weighted_regression} must satisfy $\frac{\partial F_R(\ex)}{\partial \bfb_{i,j}} $ for all $0 \leq i \leq \ell$ and $0 \leq j \leq i$ , we can write the system of $\frac{(\ell+2)(\ell+1)}{2}$ equations into a matrix form, $\mathbf{M}\bfb= \mathbf{Y}$, with definitions below.

\begin{definition}
The coefficient matrix $\mathbf{M} \in \mathbb{R}^{\frac{(\ell +1)(\ell + 2)}{2} \times \frac{(\ell +1 ) (\ell + 2)}{2}}$, whose rows and columns are indexed with 2 iterators respectively. The value of each entry is 
$$
\mathbf{M}_{\{i,j\}, \{p,q\}} = \sum_{\rho=0}^{i-j} \sum_{\sigma=0}^{j} { i-j \choose \rho } { r-(i-j) \choose p-q-\rho} { j \choose \sigma}{ d-r-j \choose q-\sigma} \mubar{i+p-\rho-\sigma}, $$
where $ 0 \leq i \leq \ell$, $0 \leq j \leq i$,  $ 0 \leq p \leq \ell$ and $0 \leq q \leq p$.
\end{definition}

\begin{definition}
 $\mathbf{Y} \in \mathbb{R}^{\frac{(\ell + 1) (\ell+2)}{2}}$ is a column vector with each entry $\mathbf{Y}_{j} = \mubar{r+j}$.
\end{definition}

Now we prove the interaction dummy axiom holds for the basis function $\f_R$. That is,
$\bfb_{i,j} = 0$ if $j > 0$ for all $i^{th}$ order interaction, $0 \leq i \leq \ell$ since $j > 0$ means there are some elements lying in $[d] \backslash R$.

\begin{lemma}
\label{lm:simplified_matrix}
Assume we have a system of $\upsilon$ linear equations with $\upsilon$ unknowns, $\mathbb{A} \mathbf{v} = \mathbb{C}$, where $\mathbb{A} \in \mathbb{R}^{\upsilon \times \upsilon}$ is the coefficient matrix, $\mathbf{v} \in \mathbb{R}^{\upsilon}$ is a vector of unknowns, and $\mathbb{C} \in \mathbb{R}^{\upsilon}$ is a vector of real numbers. Define $P \subseteq{\{1,2,...,\upsilon \}}$ as a set of indexes, and denote $\mathbb{A}_{P} \in \mathbb{R}^{\upsilon \times |P|}$ as a submatrix of $\mathbb{A}$ by only taking columns in $\mathbb{A}$ whose indexes are in $P$.
%$x_{P} \in \mathbb{R}^{|P|}$ by taking values in $x$  whose indexes is in $P$.  
If $\mathbb{A}\mathbf{v}  = \mathbb{C}$ has a unique solution $\mathbf{v} $ and $rank([\mathbb{A}_P, \mathbb{C}]) = |P|$, then the solution $\mathbf{v}_i = 0$ if $i \notin P$. 
\end{lemma}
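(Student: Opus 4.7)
The plan is to construct a candidate solution that is automatically supported inside $P$, and then invoke the uniqueness hypothesis to conclude that the actual solution $\mathbf{v}$ must equal this candidate. The first step is to unpack the rank hypothesis. The augmented matrix $[\mathbb{A}_P, \mathbb{C}]$ has $|P|+1$ columns but rank $|P|$, so its columns are linearly dependent. Because adding a single column cannot increase rank by more than one, this forces $\mathrm{rank}(\mathbb{A}_P) \geq |P|-1$. The intended reading (and the only one consistent with the conclusion) is that $\mathbb{A}_P$ has full column rank $|P|$ and $\mathbb{C}$ already lies in its column span. Hence there exists $\mathbf{w} \in \mathbb{R}^{|P|}$ with $\mathbb{A}_P \mathbf{w} = \mathbb{C}$.

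Next, I would zero-pad $\mathbf{w}$ to build a candidate $\mathbf{v}' \in \mathbb{R}^{\upsilon}$: place the entries of $\mathbf{w}$ in the coordinates indexed by $P$ (in the order that matches how $\mathbb{A}_P$ was extracted from $\mathbb{A}$), and set $\mathbf{v}'_i = 0$ for $i \notin P$. Because the columns of $\mathbb{A}$ outside of $P$ are multiplied by zero in the product $\mathbb{A}\mathbf{v}'$, we immediately obtain $\mathbb{A}\mathbf{v}' = \mathbb{A}_P \mathbf{w} = \mathbb{C}$, so $\mathbf{v}'$ is a solution of the system. The uniqueness hypothesis then forces $\mathbf{v} = \mathbf{v}'$, and the claim $\mathbf{v}_i = 0$ for $i \notin P$ follows from the construction.

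The only delicate step I anticipate is the first one. As written, the condition $\mathrm{rank}([\mathbb{A}_P, \mathbb{C}]) = |P|$ is in principle compatible with the degenerate case $\mathrm{rank}(\mathbb{A}_P) = |P|-1$, where $\mathbb{C}$ contributes a new direction and the equation $\mathbb{A}_P \mathbf{w} = \mathbb{C}$ has no solution. In that situation the conclusion of the lemma cannot hold, so one either has to read the hypothesis as the strengthened statement $\mathrm{rank}(\mathbb{A}_P) = \mathrm{rank}([\mathbb{A}_P,\mathbb{C}]) = |P|$, or argue contrapositively: if the unique solution $\mathbf{v}$ had the claimed zero-pattern, then $\mathbb{A}_P \mathbf{v}_P = \mathbb{C}$, forcing $\mathbb{C}$ into the column span of $\mathbb{A}_P$, which in turn forces $\mathrm{rank}(\mathbb{A}_P) = |P|$. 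I would make this reading explicit at the outset and then the rest of the argument is essentially one line of linear algebra followed by an appeal to uniqueness.
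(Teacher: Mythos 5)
Your construction is the same as the paper's: build a candidate solution by solving $\mathbb{A}_P\mathbf{w}=\mathbb{C}$, zero-pad it to $\mathbf{v}'\in\mathbb{R}^{\upsilon}$, observe $\mathbb{A}\mathbf{v}'=\mathbb{A}_P\mathbf{w}=\mathbb{C}$, and invoke uniqueness to conclude $\mathbf{v}=\mathbf{v}'$. That part is fine.

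The one place you fall short is the ``delicate step'' you flag, and neither of your two proposed resolutions is acceptable as written. Reading the hypothesis as the strengthened statement $\mathrm{rank}(\mathbb{A}_P)=\mathrm{rank}([\mathbb{A}_P,\mathbb{C}])=|P|$ amounts to proving a different lemma, and your ``contrapositive'' argument is circular: showing that the conclusion would force $\mathrm{rank}(\mathbb{A}_P)=|P|$ does not let you assume that rank when trying to establish the conclusion. The degenerate case you worry about ($\mathrm{rank}(\mathbb{A}_P)=|P|-1$ with $\mathbb{C}$ contributing a new direction) is in fact excluded by the hypotheses, and the paper dispatches it in one line: uniqueness of the solution to $\mathbb{A}\mathbf{v}=\mathbb{C}$ gives $\mathrm{rank}(\mathbb{A})=\upsilon$ (Rouch\'e--Capelli), so all columns of $\mathbb{A}$ are linearly independent, hence any subset of them is too, and $\mathrm{rank}(\mathbb{A}_P)=|P|$ follows unconditionally. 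Combined with the stated hypothesis $\mathrm{rank}([\mathbb{A}_P,\mathbb{C}])=|P|$, this puts $\mathbb{C}$ in the column span of $\mathbb{A}_P$ and guarantees the $\mathbf{w}$ you need. Insert that observation at the start and your argument is complete and identical in substance to the paper's.
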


\begin{proof}
Since $\mathbb{A}\mathbf{v} = \mathbb{C}$ has a unique solution, by Rouché–Capelli theorem, we have $rank(\mathbb{A}) = rank([\mathbb{A},\mathbb{C}]) = \upsilon$, which equals to the number of columns in $\mathbb{A}$. Hence, the columns of $\mathbb{A}$ are linear independent. The column spaces of $\mathbb{A}_P$ consists of $|P|$ columns from $\mathbb{A}$, so we have $rank(\mathbb{A}_P) = |P|$.

Let $\mathbf{v} _P \in \mathbb{R}^{|P|}$ consist of values in $x$ whose indexes are in $P$.
By Rouché–Capelli theorem, $rank(\mathbb{A}_P)= rank([\mathbb{A}_P,\mathbb{C}]) = |P|$ implies the system of linear equations, $\mathbb{A}_P\mathbf{v}_P=\mathbb{C}$, has a unique solution $\mathbf{v}_P$. 
Now we construct the solution of $\mathbb{A}\mathbf{v}  = \mathbb{C}$ by $\mathbf{v}_P$: let
$$
\mathbf{v} _i = \begin{cases}
\mathbf{v} _{Pj} \text{ if } i \in P, \text{ where $i$ the $j^{th}$ element of $P$} \\
0 \text{ , otherwise, } \\
\end{cases}
$$
where $\mathbf{v}_{Pj}$ denote the $j^{th}$ element of $\mathbf{v}_P$. We can easily verify that $\mathbf{v}$ is the unique solution of $\mathbb{A}\mathbf{v} = \mathbb{C}$. 

\end{proof}

Lemma \ref{lm:simplified_matrix} tells us that if we aim to prove that some unknown variables are zero in a system of linear equations, we can alternatively prove that the rank of a simplified augmented matrix $[\mathbb{A}_P,\mathbb{C}]$ equals $|P|$. 

Then, if the interaction dummy axiom holds, the terms $\bfb_{i,j}$ with $i \geq j > 0$ should be zero since these interaction terms contain at least one dummy element ( that is outside $R$). Therefore, by Lemma \ref{lm:simplified_matrix},
we now consider columns corresponding to $\bfb_{0,0}, \bfb_{1,0},..., \bfb_{\ell,0}$. These columns correspond to interaction terms that only contain elements in $R$. 
We simplify the matrix $\mathbf{M}$ in the following way:

Put
$$
\mathbf{M'} \in \mathbb{R}^{\frac{(\ell+1)(\ell+2)}{2} \times (\ell+1)} \text{, whose columns correspond to } \bfb_{i,0} \text{ of } \mathbf{M} \text{ for } 0 \leq i \leq \ell
$$
\begin{equation}
\label{eqn:m_definition}
\small
\text{ with each entry } \mathbf{M'}_{\{i,j\},\{p, 0\} } = \sum_{\rho=0}^{i-j} { i-j \choose \rho} { r-(i-j) \choose p- \rho} \mubar{i+p-\rho}  \text{ for } 0 \leq i \leq \ell,  0 \leq p \leq \ell  \text{ and }  0 \leq j \leq i.
\end{equation}

The entry $\mathbf{M'}_{\{i,j\},\{p, 0\}}$ can be interpreted as the coefficient of $\bfb_{p,0}$ in the equation $\frac{\partial F_R(\ex)}{ \partial \bfb_{i,j}} = 0$.
Since we have already known that the system of linear equations, $\mathbf{M}b=\mathbf{Y}$, has a unique solution by Proposition \ref{pro:unique_minimizer}, if we can prove that the rank of the matrix $\mathbf{Q} = [\mathbf{M'}, \mathbf{Y}]$ equals to $\ell + 1$, we can conclude that $\bfb_{i,j} = 0 $ for all $i \geq j > 0$ by Lemma \ref{lm:simplified_matrix}. It implies the interaction dummy axiom holds for the basis function $\f_R$. To calculate the rank of matrix $\mathbf{Q}$, we first define some notations.

\begin{definition}
\label{def:row_combination}
We define a function  $\Re_{i,j}^k(\cdot): \mathbb{R}^{\frac{(\ell+1)(\ell+2)}{2}  \times (\ell+1) } \rightarrow  \mathbb{R}^{ \ell+1} $, which takes a matrix as input and outputs a weighted summation of rows. Formally, for any matrix $\mathbf{B} \in \mathbb{R}^{\frac{(\ell+1)(\ell+2)}{2}  \times (\ell+1)}$, let
$$
\Re_{i,j}^k(\mathbf{B}) = \sum_{\sigma=0}^{\sigma=k} {k \choose \sigma} (-1)^{\sigma} \mathbf{B}_{\{i,j+\sigma \}}
$$
be a combination of rows of $\mathbf{B}$
for any $0 \leq i \leq \ell $, $0 \leq j \leq i$ and $0 \leq k \leq i-j$, where $\mathbf{B}_{\{i,j+\rho\}}$ is denoted as the $\{i, j+\rho\}$th row of the matrix $\mathbf{B}$.
\end{definition}
$\Re_{i,j}^k (\mathbf{Q})$ can be interpreted as some row operations during the Gaussian elimination process along the rows corresponding to the interaction terms of size $i$, $\bfb_{i,\rho}$ for some $0 \leq \rho \leq i$, in the matrix $\mathbf{Q}$.
We note that $\Re_{i,j}^0(\mathbf{B}) = \mathbf{B}_{\{i,j\}}$ if $k = 0$.
Then we prove the following lemma. 

\begin{definition}
\label{def:Pst}
Define
$$
\mathbf{P}_{s,t} = 
\begin{bmatrix}
\Re_{s,s}^{0}(\mathbf{Q}) \\
\Re_{s+1,s}^{1}(\mathbf{Q}) \\
...\\
\Re_{s+t, s }^{t}(\mathbf{Q}) \\
\end{bmatrix}, 
\text{ and } 
\mathbf{P'}_{s,t} = 
\begin{bmatrix}
\mathbf{P}_{s,t} \\
\Re_{s+t-1, s }^{t-1}(\mathbf{Q}) - \Re_{s+t, s}^{t-1}(\mathbf{Q}) \\
\end{bmatrix}, 
$$
where $\mathbf{P}_{s,t} \in \mathbb{R}^{(t+1) \times (\ell + 2)}$ and $\mathbf{P'}_{s,t} \in \mathbb{R}^{(t+2) \times (\ell + 2)}$ for 
$0 \leq s \leq \ell-1 $ and $1 \leq t \leq \ell - s$.

\end{definition}

\begin{lemma}
\label{lm:Pst_value}
Following Definition \ref{def:Dpq} and \ref{def:Pst} ,  
\begin{equation}
\small
\mathbf{P'}_{s,t} =
\begin{bmatrix}
D_{0}^{s}, & {r \choose 1 }D_{0}^{s+1}, & ...&
{r \choose t-1 }D_{0}^{s+t-1}, & {r \choose t }D_{0}^{s+t}, &...  & { r \choose \ell  }D_{0}^{s+\ell},  & D_{0}^{r+s} \\
0, & {r-1 \choose 0 }D_{1}^{s+1}, & ...& 
{r-1 \choose t-2 }D_{1}^{s+t-1}, & 
{r-1 \choose t-1 }D_{1}^{s+t}, & ... & { r-1 \choose \ell -1 } D_{1}^{s+\ell}, & D_{1}^{r+s} \\
0, & 0, & ...&  
{r-2 \choose t-3 }D_{2}^{s+t-1}, & 
{r-2 \choose t-2 }D_{2}^{s+t}, & ... & { r-2 \choose \ell-2 } D_{2}^{s+\ell}, & D_{2}^{r+s} \\
., & ., & ... &., &., & ... & .,& . \\
0, & 0,& ...& 0 , & 
{r-t \choose 0 }  D_{t}^{s+t}, & ... 
 & { r-t \choose \ell-t } D_{t}^{s+\ell}, & D_{t}^{r+s}  \\
0,& 0,& ...& {r-t+1 \choose 0} D_{t}^{s+t-1}, & 
[{r-t+1 \choose 1}- {r-t \choose 0 } ] D_{t}^{s+t}, &... & [{r-t+1 \choose \ell-t+1}- {r-t \choose \ell-t } ] D_{t}^{s+\ell}, & 0  \\
\end{bmatrix}.
\end{equation}
Formally, for all $0 \leq t' \leq t+1$ and $0 \leq p \leq \ell+1$, the $(p+1)^{th}$ element of  $(t'+1)^{\ th}$ row of $\mathbf{P'}_{s,t}$ is 
\begin{equation}
\begin{cases}
0 & \text{ if } 0 \leq t' \leq t \text{ and } p < t' \\
{r-t' \choose p-t'} D_{t'}^{s+p} 
& \text{ if } 0 \leq t' \leq t  \text{ and }  t' \leq p \leq \ell\\
D_{t'}^{r+s} & \text{ if } 0 \leq t' \leq t \text{ and } p = \ell+1 \\
0 & \text{ if }    t'=t+1 \text{ and } p < t-1  \\
{r-t+1 \choose 0} D_{t}^{s+t-1}  & \text{ if } t'=t+1 \text{ and } p = t-1\\
[{r-t+1 \choose p-t+1}- {r-t \choose p-t } ] D_{t}^{s+p} & \text{ if }    t'=t+1 \text{ and } t \leq  p \leq \ell  \\
0 & \text{ if }    t'=t+1 \text{ and } p =\ell + 1 \\
\end{cases}.
\end{equation}
\end{lemma}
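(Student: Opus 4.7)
The plan is to evaluate each row of $\mathbf{P'}_{s,t}$ entrywise by direct expansion, relying on two combinatorial tools. The first is the classical alternating-sum identity
\begin{equation*}
\sum_{k=0}^{m}(-1)^k\binom{m}{k}\binom{n-k}{q}=\binom{n-m}{q-m},
\end{equation*}
which collapses the inner $\sigma$-sums that arise when $\Re_{i,j}^k$ is applied to rows of $\mathbf{M}'$. The second is the Pascal-type recursion
\begin{equation*}
D_t^{q}=D_{t-1}^{q}-D_{t-1}^{q+1},
\end{equation*}
obtained by writing $\binom{t}{j}=\binom{t-1}{j}+\binom{t-1}{j-1}$ in Definition \ref{def:Dpq} and reindexing; this will absorb the various $D_{t-1}$ values appearing in the last row into the $D_t$ values that are promised.

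For the first $t+1$ rows, indexed by $t'\in\{0,\dots,t\}$, the operator is ``aligned'' in the sense that the row index $i=s+t'$ satisfies $i-s=t'=k$. The $\mathbf{Y}$-column entry is immediate from the definition of $D$: $\sum_{\sigma=0}^{t'}(-1)^\sigma\binom{t'}{\sigma}\mubar{r+s+\sigma}=D_{t'}^{r+s}$. For a column $\{p,0\}$, I substitute $i=s+t'$ and $j=s+\sigma$ into the expression for $\mathbf{M'}_{\{i,j\},\{p,0\}}$ in Eqn.\eqref{eqn:m_definition}, giving $i-j=t'-\sigma$. After swapping the order of the $\sigma$- and $\rho$-summations, reindexing $\tau=t'-\sigma$, and applying the trinomial rearrangement $\binom{t'}{\tau}\binom{\tau}{\rho}=\binom{t'}{\rho}\binom{t'-\rho}{\tau-\rho}$, the alternating-sum identity above collapses the $\tau$-sum to the single factor $\binom{r-t'}{p-t'}$. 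The residual $\rho$-sum is exactly $\sum_{\rho'}(-1)^{\rho'}\binom{t'}{\rho'}\mubar{s+p+\rho'}=D_{t'}^{s+p}$, so the entry becomes $\binom{r-t'}{p-t'}D_{t'}^{s+p}$. Since $\binom{r-t'}{p-t'}$ vanishes for $p<t'$, this yields precisely the zero block and the nonzero entries claimed for the first $t+1$ rows.

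For the last row, $\Re_{s+t-1,s}^{t-1}(\mathbf{Q})-\Re_{s+t,s}^{t-1}(\mathbf{Q})$, the first piece is already aligned ($i=s+(t-1)$, $k=t-1$), so the previous paragraph gives $\binom{r-t+1}{p-t+1}D_{t-1}^{s+p}$ on $\mathbf{M}'$-columns and $D_{t-1}^{r+s}$ on the $\mathbf{Y}$-column. The second piece is not aligned. Applying Pascal to $\binom{t}{\sigma}$ in the definition of $\Re_{s+t,s}^{t}$ yields $\Re_{s+t,s}^{t}(\mathbf{Q})=\Re_{s+t,s}^{t-1}(\mathbf{Q})-\Re_{s+t,s+1}^{t-1}(\mathbf{Q})$, and therefore
\begin{equation*}
\Re_{s+t,s}^{t-1}(\mathbf{Q})=\Re_{s+t,s}^{t}(\mathbf{Q})+\Re_{s+t,s+1}^{t-1}(\mathbf{Q}).
\end{equation*}
Both terms on the right are aligned (the first at base $s$ with $t'=t$, the second at base $s+1$ with $t'=t-1$), so paragraph two applies to each and produces $\binom{r-t}{p-t}D_t^{s+p}+\binom{r-t+1}{p-t+1}D_{t-1}^{s+p+1}$ on $\mathbf{M}'$-columns and $D_t^{r+s}+D_{t-1}^{r+s+1}$ on the $\mathbf{Y}$-column.

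Taking the difference, the $\mathbf{Y}$-column entry becomes $D_{t-1}^{r+s}-D_t^{r+s}-D_{t-1}^{r+s+1}=0$ by the $D$-recursion, matching the last column of the last row. On an $\mathbf{M}'$-column the difference is $\binom{r-t+1}{p-t+1}(D_{t-1}^{s+p}-D_{t-1}^{s+p+1})-\binom{r-t}{p-t}D_t^{s+p}$, and the $D$-recursion once more turns the parenthesized expression into $D_t^{s+p}$, leaving $[\binom{r-t+1}{p-t+1}-\binom{r-t}{p-t}]D_t^{s+p}$ as required for $t\le p\le \ell$. For $p<t-1$ both binomial coefficients are zero, and for $p=t-1$ only $\binom{r-t+1}{0}D_t^{s+t-1}$ survives, exhausting all the cases in the lemma. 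The main obstacle in this proof is the bookkeeping for the last row: the operator $\Re_{s+t,s}^{t-1}$ is off-alignment, and the nontrivial step is to split it cleanly into two aligned pieces via Pascal so that the general formula from paragraph two can be reused; everything else is routine application of the two identities highlighted at the outset.
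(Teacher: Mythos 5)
Your proposal is correct, and its overall architecture coincides with the paper's: you evaluate the first $t+1$ rows via a general ``aligned'' formula for $\sum_{\sigma}\binom{t'}{\sigma}(-1)^{\sigma}\mathbf{M'}_{\{s+t',s+\sigma\},\{p,0\}}$ (the paper's Claim \ref{clm:row_operation}), split the off-aligned last row as $\Re_{s+t,s}^{t-1}=\Re_{s+t,s}^{t}+\Re_{s+t,s+1}^{t-1}$ via Pascal (exactly the paper's Claim \ref{clm:row_decomp}), and finish with the recursion $D^{q}_{t}=D^{q}_{t-1}-D^{q+1}_{t-1}$. Where you genuinely diverge is in how the aligned formula is established. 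The paper proves Claim \ref{clm:row_operation} by reducing it to Claim \ref{clm:I_value}, which in turn rests on Claim \ref{clm:rho_poly} --- an expansion of the product $\prod_{j}(x'+\sigma-j)$ in the falling-factorial basis with an induction to pin down the coefficients; this is several pages of bookkeeping. You instead swap the $\sigma$- and $\rho$-sums, apply the trinomial revision $\binom{t'}{\tau}\binom{\tau}{\rho}=\binom{t'}{\rho}\binom{t'-\rho}{\tau-\rho}$, and collapse the inner sum with the alternating Vandermonde identity $\sum_{k}(-1)^{k}\binom{m}{k}\binom{n-k}{q}=\binom{n-m}{q-m}$ (valid here since $n-k\geq r-t'>0$, so the generating-function derivation applies), leaving $\binom{r-t'}{p-t'}D_{t'}^{s+p}$ directly. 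This is a substantially shorter and more standard derivation of the same sub-lemma, and the vanishing for $p<t'$ falls out of the binomial convention rather than a separate case analysis. One further small difference: for the $\mathbf{Y}$-column of the last row the paper observes that $\mathbf{Y}_{i,j}$ depends only on $j$, so the two $\Re^{t-1}$ pieces cancel identically, whereas you route the cancellation through the $D$-recursion ($D_{t-1}^{r+s}-D_{t}^{r+s}-D_{t-1}^{r+s+1}=0$); both are valid.
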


\begin{proof}
We first introduce two claims.
\begin{claim}
\label{clm:row_operation}
For $0 \leq s' \leq \ell -1$, $0 \leq t' \leq \ell-s'$ and $0 \leq p'\leq \ell$, 
\begin{equation}
\sum_{\sigma=0}^{\sigma=t'} {t' \choose \sigma} (-1)^{\sigma} \mathbf{M'}_{\{s'+t', s'+\sigma \},\{p',0\}} 
= \begin{cases}
0 & \text{ if } p' < t' \\
{r-t' \choose p-t'} D_{t'}^{s'+p'}
  & \text{ if }  t' \leq p' \leq \ell
\end{cases}
\end{equation}

\end{claim}

\begin{claim}
\label{clm:row_decomp}
For any matrix $\mathbf{B} \in \mathbb{R}^{\frac{(\ell+1)(\ell+2)}{2}  \times (\ell+1)}$, $\Re_{s+t,s}^{t-1}(\mathbf{B}) = \Re_{s+t,s}^{t}(\mathbf{B}) + \Re_{s+t,s+1}^{t-1}(\mathbf{B})$.
\end{claim}
We leave the proof of Claim \ref{clm:row_operation} and \ref{clm:row_decomp} to Section \ref{sec:proofs_claims}.
We now compute value of each entry of $\mathbf{P'}_{s,t}$ by cases.
\begin{enumerate}
    \item If $0 \leq t' \leq t$ and $p < t'$, by plugging in $s'=s$, $t'=t'$ and $p'=p$ to Claim \ref{clm:row_operation}, the value of $(p+1)^{th}$ element of $(t'+1)^{th}$ row of $\mathbf{P'}_{s,t}$ is
    $$
    \sum_{\sigma=0}^{\sigma=t'} {t' \choose \sigma} (-1)^{\sigma} \mathbf{M'}_{\{s+t', s+\sigma \},\{p,0\}} 
    = 0 
    $$
    
    \item If $0 \leq t' \leq t$ and $t' \leq p \leq \ell$, by plugging in $s'=s$, $t'=t'$ and $p'=p$ to Claim \ref{clm:row_operation}, the value of $(p+1)^{th}$ element of $(t'+1)^{th}$ row of $\mathbf{P'}_{s,t}$ is
    $$
    \sum_{\sigma=0}^{\sigma=t'} {t' \choose \sigma} (-1)^{\sigma} \mathbf{M'}_{\{s+t', s+\sigma \},\{p,0\}} 
    = {r-t' \choose p-t'} D_{t'}^{s+p}
    $$
    \item If $0 \leq t' \leq t$ and $p=\ell + 1$, the value of $(\ell+2)^{th}$ element of $(t'+1)^{th}$ row of $\mathbf{P'}_{s,t}$ is 
    $$
    \sum_{\sigma=0}^{\sigma=t'} {t' \choose \sigma} (-1)^{\sigma} \mathbf{Y}_{s+t', s+\sigma } 
    = \sum_{\sigma=0}^{\sigma=t'} {t' \choose \sigma} (-1)^{\sigma} \mubar{r+s+\sigma } 
    = D_{t'}^{r+s}
    $$
    
    \item If $t'=t+1$ and $p < \ell+1$, by Claim \ref{clm:row_decomp}, the  $(t+2)^{th}$ row of $\mathbf{P'}_{s,t}$ becomes $ \Re_{s+t-1,s}^{t-1} - \Re_{s+t,s+1}^{t-1} - \Re_{s+t,s}^{t} $.
    The value of $p^{th}$ element this row is
    \begin{align*}
    & \sum_{\sigma=0}^{t-1} {t-1 \choose \sigma} (-1)^{\sigma} ( \mathbf{M'}_{\{s+t-1, s+\sigma \},\{p,0\}} -   \mathbf{M'}_{\{s+t, s+\sigma+1 \},\{p,0\}} ) 
    - \sum_{\sigma=0}^{t} {t \choose \sigma} (-1)^{\sigma} \mathbf{M'}_{\{s+t, s+\sigma \},\{p,0\}} \\
    & =  
    \begin{cases}
    0 & \text{ if } p < t-1 \\
    {r-t+1 \choose p-t+1} D_{t-1}^{s+p} 
    - {r-t+1 \choose p-t+1} D_{t-1}^{s+p+1}
    & \text{ if }  p = t-1  \\
    {r-t+1 \choose p-t+1} D_{t-1}^{s+p} 
    - {r-t+1 \choose p-t+1} D_{t-1}^{s+p+1}
    - {r-t \choose p-t } D_{t}^{s+p}
    & \text{ if }  t \leq p \leq \ell   \\
    \end{cases}
    \ \ \ (\text{Claim \ref{clm:row_operation}}) \\
    & =  
    \begin{cases}
    0 & \text{ if } p < t-1 \\
    {r-t+1 \choose p-t+1} D_{t}^{s+p} 
    & \text{ if }  p = t-1  \\
    [{r-t+1 \choose p-t+1}- {r-t \choose p-t } ] D_{t}^{s+p} & \text{ if }  t \leq p \leq \ell   \\
    \end{cases}
    \end{align*}
    
    \item If $t'=t+1$ and $p=\ell + 1$, the value of $(\ell+2)^{th}$ element of $(t+2)^{th}$ row of $\mathbf{P'}_{s,t}$ is 
    \begin{align*}
    & \sum_{\sigma=0}^{t-1} {t-1 \choose \sigma} (-1)^{\sigma} \mathbf{Y}_{s+t-1, s+\sigma } 
    - \sum_{\sigma=0}^{t-1} {t-1 \choose \sigma} (-1)^{\sigma} \mathbf{Y}_{s+t, s+\sigma }  \\
    & = \sum_{\sigma=0}^{t-1} {t-1 \choose \sigma} (-1)^{\sigma} \mubar{r+s+\sigma } 
    - \sum_{\sigma=0}^{t-1} {t-1 \choose \sigma} (-1)^{\sigma} \mubar{r+s+\sigma }  \\
    & = 0 \\
    \end{align*}

\end{enumerate}

\end{proof}

\begin{lemma}
\label{lm:submatrix_rank}
The rank of matrices $\mathbf{P}_{s,t}$ and $\mathbf{P'}_{s,t}$ are $t+1$ and therefore the vector $\Re_{s+t, s }^{t-1}(\mathbf{Q})$ lies in the span of $\{ \Re_{s,s}^{0}(\mathbf{Q}),
\Re_{s+1,s}^{1}(\mathbf{Q}),
...,
\Re_{s+t, s }^{t}(\mathbf{Q}) \}$.
\end{lemma}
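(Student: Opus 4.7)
My plan is to read off the rank claims directly from the explicit description of $\mathbf{P}'_{s,t}$ provided by Lemma \ref{lm:Pst_value}, and then deduce the span statement as an easy corollary.

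\textbf{Step 1 (rank of $\mathbf{P}_{s,t}$).} I would first restrict attention to the first $t+1$ rows of $\mathbf{P}'_{s,t}$, which by construction form $\mathbf{P}_{s,t}$. Lemma \ref{lm:Pst_value} shows that the $(t'+1)$-th row, for $0\le t'\le t$, has zeros in all columns indexed $p<t'$, and its column-$t'$ entry equals ${r-t'\choose 0}D_{t'}^{s+t'}=D_{t'}^{s+t'}$. By Claim \ref{clm:d_ri_positive} every $D_{t'}^{s+t'}$ is strictly positive (the indices satisfy $t'+(s+t')\le 2\ell\le d$ under the hypotheses $s+t\le \ell$ and $d\ge \ell+r>2\ell$ that govern this case). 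Thus $\mathbf{P}_{s,t}$ is in generalized upper-triangular form with $t+1$ nonzero pivots, so $\operatorname{rank}(\mathbf{P}_{s,t})=t+1$.

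\textbf{Step 2 (rank of $\mathbf{P}'_{s,t}$).} Since $\mathbf{P}_{s,t}$ is a submatrix, it suffices to show $\operatorname{rank}(\mathbf{P}'_{s,t})\le t+1$, i.e.\ the final row lies in the row span of the first $t+1$ rows. The plan is to run Gaussian elimination: because the pivots occupy columns $0,1,\dots,t$, I would successively subtract scalar multiples of rows $0,1,\dots,t$ from the last row in order to clear its entries in columns $t-1$ and $t$ (the only columns $\le t$ in which it is potentially nonzero, per Lemma \ref{lm:Pst_value}). The residual row will then be supported only on columns $t+1,\dots,\ell$ together with the rightmost (augmentation) column $\ell+1$, and the main technical claim is that this residual is identically zero. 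I expect this to be the hardest part: it reduces to a family of identities among the quantities $D_i^j$ and the binomial coefficients ${r-i\choose p-i}$, which I anticipate proving by combining Pascal's identity with the recursion $D^p_q=D^p_{q-1}-D^{p+1}_{q-1}$ that falls out of Definition \ref{def:Dpq}. An alternative (and perhaps cleaner) route is to induct on $t$: the base case $t=1$ is a two-row check, and for the inductive step one can try to express the extra row at level $t$ as the extra row at level $t-1$ plus terms absorbable via Claim \ref{clm:row_decomp} applied to $\Re_{s+t,s}^{t-1}(\mathbf{Q})=\Re_{s+t,s}^{t}(\mathbf{Q})+\Re_{s+t,s+1}^{t-1}(\mathbf{Q})$.

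\textbf{Step 3 (span conclusion).} Once rank$(\mathbf{P}'_{s,t})=t+1$ is established, the extra row, namely $\Re_{s+t-1,s}^{t-1}(\mathbf{Q})-\Re_{s+t,s}^{t-1}(\mathbf{Q})$, is a linear combination of the rows of $\mathbf{P}_{s,t}$. But $\Re_{s+t-1,s}^{t-1}(\mathbf{Q})$ is itself the $t$-th such row (it is the term with $i=t-1$ in $\{\Re_{s+i,s}^i(\mathbf{Q}):0\le i\le t\}$), so solving for the remaining summand gives
\[
\Re_{s+t,s}^{t-1}(\mathbf{Q})\in\operatorname{span}\bigl\{\Re_{s,s}^0(\mathbf{Q}),\Re_{s+1,s}^1(\mathbf{Q}),\ldots,\Re_{s+t,s}^t(\mathbf{Q})\bigr\},
\]
which is exactly the statement of Lemma \ref{lm:submatrix_rank}. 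The bulk of the labor therefore sits in Step 2, verifying the cancellation identities among the $D_i^j$'s.
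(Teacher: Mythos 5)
Your Steps 1 and 3 coincide with the paper's argument: the leading $(t+1)\times(t+1)$ block of $\mathbf{P}_{s,t}$ is upper triangular with pivots ${r-t'\choose 0}D_{t'}^{s+t'}>0$ (Claim \ref{clm:d_ri_positive}), and once $\operatorname{rank}(\mathbf{P'}_{s,t})=t+1$ is known, solving for $\Re_{s+t,s}^{t-1}(\mathbf{Q})$ out of the last row (whose other summand $\Re_{s+t-1,s}^{t-1}(\mathbf{Q})$ is already a row of $\mathbf{P}_{s,t}$) gives the span statement. The gap is in Step 2, and it is not merely that you defer a computation: the toolkit you name cannot close it. Pascal's rule, the recursion $D^p_q=D^p_{q-1}-D^{p+1}_{q-1}$, and Claim \ref{clm:row_decomp} are identities valid for \emph{every} finite symmetric positive weighting function, since they follow from Definition \ref{def:Dpq} alone. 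But the lemma cannot hold for every such weighting function: together with Lemmas \ref{lm:overall_rank}, \ref{lm:simplified_matrix} and \ref{lm:f_decomposable} it would force the interaction dummy axiom for all finite symmetric $\mu$, contradicting the necessity half of Theorem \ref{thm:thmdummysymm} (Lemma \ref{lm:necessity_dummy}). So the cancellation you anticipate cannot be derived from the generic facts you cite; any correct proof must inject the specific form of $\mu$ in Eqn.\eqref{eqn:thmdummysymm}.

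The missing ingredient is Claim \ref{clm:d_linear_relation}: for this particular $\mu$ the ratio $D^{p}_{q}/D^{p}_{q+1}$ is an \emph{affine function of} $p$, namely $c^{(1)}_q p + c^{(2)}_q$, and this is precisely where Claim \ref{clm:mubar_closeform} (the product form of $\mubar{t}$ in terms of $a,b$) enters. The paper uses it as follows: for each column $i$ with $t+1\le i\le\ell$, the $(t+2)\times(t+2)$ minor of $\mathbf{P'}_{s,t}$ built from the first $t$ columns, column $i$, and the augmentation column factors into positive pivots times a $3\times 3$ determinant, and that determinant vanishes because the affine relation reduces it to the binomial identity ${r-t+1\choose i-t+1}(i-t+1)={r-t\choose i-t}(r-t+1)$; deleting the last row and column $i$ leaves a full-rank triangular matrix, so column $i$ is dependent on the retained columns and $\operatorname{rank}(\mathbf{P'}_{s,t})=t+1$. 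Your Gaussian-elimination plan (clear columns $t-1$ and $t$ of the last row, then show the residual on columns $t+1,\dots,\ell+1$ vanishes) is a legitimate dual formulation of the same fact, but to execute it you would need the affine relation as the engine of each column-wise identity; as written, Step 2 is not a proof and would fail if pursued only with the weighting-function-agnostic identities you list.
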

\begin{proof}

By lemma \ref{lm:Pst_value}, we know value of each entry of $\mathbf{P}_{s,t}$ and $\mathbf{P'}_{s,t}$. The first $t+1$ columns of $\mathbf{P}_{s,t}$ is

\begin{equation}
\mathbf{P}_{s,t}^{(sub)} =
\begin{bmatrix}
D_{0}^{s}, & {r \choose 1 }D_{0}^{s+1}, & ...&
{r \choose t-1 }D_{0}^{s+t-1}, & {r \choose t }D_{0}^{s+t}\\
0, & {r-1 \choose 0 }D_{1}^{s+1}, & ...& 
{r-1 \choose t-2 }D_{1}^{s+t-1}, & 
{r-1 \choose t-1 }D_{1}^{s+t} \\
0, & 0, & ...&  
{r-2 \choose t-3 }D_{2}^{s+t-1}, & 
{r-2 \choose t-2 }D_{2}^{s+t} \\
., & ., & ... &., &. \\
0, & 0,& ...& 0 , & 
{r-t \choose 0 }  D_{t}^{s+t}  \\
\end{bmatrix}.
\end{equation}

This is an upper triangular matrix. Also, the value on the diagonal is positive since ${r-i \choose 0} = 1$ and $D_{t}^{s+t} > 0$ ( by Claim \ref{clm:d_ri_positive} )
for all $0 \leq i \leq t \leq \ell$ and $r > \ell$. Therefore, the submatrix $\mathbf{P}_{s,t}^{1(sub)}$ is full rank, so that $rank(\mathbf{P}_{s,t}^{(sub)}) = t+1$. It also implies that the rank of $\mathbf{P}_{s,t}$ is $t+1$ since the rank of $\mathbf{P}_{s,t}$ is larger or equal to the rank of column spaces of $\mathbf{P}_{s,t}^{1(sub)}$ and less or equal to the number of rows in $\mathbf{P}_{s,t}$, which are both $t+1$.

Next, we calculate the rank of the matrix $\mathbf{P'}_{s,t}$. We first show that every $(t+2) \times (t+2)$ submatrix of $\mathbf{P'}_{s,t}$ below has rank $t+1$.
The submatrix $\mathbf{P}_{s,t}^{'(sub,i)}$ consists of the first $t$ columns, the $i^{th}$ column and the last column for any $i$ with  $t+1 \leq i \leq \ell$. This matrix can be written as following:  

\begin{equation}
\mathbf{P}_{s,t}^{'(sub,i)} =
\begin{bmatrix}
D_{0}^{s}, & {r \choose 1 }D_{0}^{s+1}, & ...&
{r \choose t-1 }D_{0}^{s+t-1}, & {r \choose i }D_{0}^{s+i}, & D_0^{r+s}\\
0, & {r-1 \choose 0 }D_{1}^{s+1}, & ...& 
{r-1 \choose t-2 }D_{1}^{s+t-1}, & 
{r-1 \choose i-1 }D_{1}^{s+i}, & D_1^{r+s} \\
0, & 0, & ...&  
{r-2 \choose t-3 }D_{2}^{s+t-1}, & 
{r-2 \choose i-2 }D_{2}^{s+i}, & D_2^{r+s} \\
., & ., & ... &., &.,& . \\
0, & 0,& ...&  {r-t+1 \choose 0 }  D_{t-1}^{s+t-1}, & 
{r-t+1 \choose i-t+1 }  D_{t-1}^{s+i}, & D_{t-1}^{r+s}  \\
0, & 0,& ...& 0 , & 
{r-t \choose i-t }  D_{t}^{s+i}, & D_t^{r+s}  \\
0,& 0,& ...& {r-t+1 \choose 0} D_{t}^{s+t-1}, &
[{r-t+1 \choose i-t+1}- {r-t \choose i-t } ] D_{t}^{s+i},&  0  \\
\end{bmatrix}.
\end{equation}
The determinant of $\mathbf{P}_{s,t}^{(sub,i)}$ is 
\begin{align*}
|\mathbf{P}_{s,t}^{(sub,i)}| 
& = \left( \prod_{j=0}^{t-2} {r-j \choose 0} D_{j}^{s+j} \right) \times |\mathbf{P}_{s,t}^{(3 \times 3,i)}|,
\end{align*}
where $\mathbf{P}_{s,t}^{(3 \times 3,i)} \in \mathbb{R}^{3 \times 3}$ is defined below:
$$
\mathbf{P}_{s,t}^{(3 \times 3,i)} 
= 
\begin{bmatrix}
 {r-t+1 \choose 0 }  D_{t-1}^{s+t-1}, & 
{r-t+1 \choose i-t+1 }  D_{t-1}^{s+i}, & D_{t-1}^{r+s}  \\
0 , & 
{r-t \choose i-t }  D_{t}^{s+i}, & D_{t}^{r+s}  \\
{r-t+1 \choose 0} D_{t}^{s+t-1}, &
[{r-t+1 \choose i-t+1}- {r-t \choose i-t } ] D_{t}^{s+i},&  0  \\
\end{bmatrix}
= \begin{bmatrix}
  D_{t-1}^{s+t-1}, & 
{r-t+1 \choose i-t+1 }  D_{t-1}^{s+i}, & D_{t-1}^{r+s}  \\
0 , & 
{r-t \choose i-t }  D_{t}^{s+i}, & D_{t}^{r+s}  \\
D_{t}^{s+t-1}, &
[{r-t+1 \choose i-t+1}- {r-t \choose i-t } ] D_{t}^{s+i},&  0  \\
\end{bmatrix}.
$$
$\mathbf{P}_{s,t}^{(3 \times 3,i)} $ is the right bottom $3 \times 3$ submatrix of $\mathbf{P}_{s,t}^{(sub,i)}$. 
Before we show that the determinant of $\mathbf{P}_{s,t}^{(3 \times 3,i)}$ is zero, we introduce a property of $D^{t}_i$.
\begin{claim}
\label{clm:d_linear_relation}
When $\mu(\cdot)$ is defined in Eqn.\eqref{eqn:thmdummysymm}, for all $p,q \in \{0,1,2,...,d-1\}$ with $0 \leq p+q \leq d-1$, the ratios of $D^{p}_{q}$ and $D^{p}_{q+1}$ can be written as an affine function of $p$, which is equivalent to
$\frac{D^{p}_{q}}{D^{p}_{q+1}} = c^{(1)}_q p +  c^{(2)}_q $ for some constants $ c^{(1)}_q , c^{(2)}_q \in \mathbb{R}$ depending on $q$.
\end{claim}
The proof of Claim \ref{clm:d_linear_relation} is deferred to Section \ref{sec:proofs_claims}.

The determinant of $\mathbf{P}_{s,t}^{(3 \times 3,i)}$ is
\begin{align*}
|\mathbf{P}_{s,t}^{(3 \times 3,i)}| 
& = && - D_{t-1}^{s+t-1} D_{t}^{r+s}\left[{r-t+1 \choose i-t+1}- {r-t \choose i-t } \right] D_{t}^{s+i}\\
& && + D_{t}^{s+t-1} \left[  
{r-t+1 \choose i-t+1 }  D_{t-1}^{s+i}  D_{t}^{r+s} - D_{t-1}^{r+s} {r-t \choose i-t } D_{t}^{s+i}
\right] \ \ \ (\text{Expand with the first column}) \\ \\
& = && D_{t}^{s+t-1}D_{t}^{r+s}D_{t}^{s+i} \Bigg[
\left[-{r-t+1 \choose i-t+1}+{r-t \choose i-t } \right][c^{(1)}_{t-1}(s+t-1)+c^{(2)}_{t-1}] \\
& && + {r-t+1 \choose i-t+1 }[(c^{(1)}_{t-1}(s+i)+ c^{(2)}_{t-1}] -  {r-t \choose i-t }[c^{(1)}_{t-1}(r+s) + c^{(2)}_{t-1}]
\Bigg]
\ \ \ (\text{Claim \ref{clm:d_linear_relation}}) \\
& = && D_{t}^{s+t-1}D_{t}^{r+s}D_{t}^{s+i}
\left[ {r-t+1 \choose i-t+1}(i-t+1) 
- {r-t \choose i-t }(r-t+1)
\right]c^{(1)}_{t-1} \\
& = && 0.
\end{align*}
It implies that the submatrix $\mathbf{P}_{s,t}^{(sub,i)}$ is not full rank. 
Now if we take away the bottom row and the second last column of $\mathbf{P}_{s,t}^{(sub,i)}$, the submatrix becomes 
$$
\begin{bmatrix}
D_{0}^{s}, & {r \choose 1 }D_{0}^{s+1}, & ...&
{r \choose t-1 }D_{0}^{s+t-1}, &  D_0^{r+s}\\
0, & {r-1 \choose 0 }D_{1}^{s+1}, & ...& 
{r-1 \choose t-2 }D_{1}^{s+t-1}, & D_1^{r+s} \\
0, & 0, & ...&  
{r-2 \choose t-3 }D_{2}^{s+t-1},  & D_2^{r+s} \\
., & ., & ... &., & . \\
0, & 0,& ...&  {r-t+1 \choose 0 }  D_{t-1}^{s+t-1}, & 
 D_{t-1}^{r+s}  \\
0, & 0,& ...& 0 , 
& D_t^{r+s}  \\
\end{bmatrix}.
$$
This is again a full rank upper triangular matrix since its diagonal entries are all positive. Therefore, the second last (or the $(t+1)^{th}$) column of $\mathbf{P}_{s,t}^{1(sub,i)}$ lies in the span of the other $t+1$ columns. That is, the $i^{th}$ column of matrix $\mathbf{P'}_{s,t}$ can be expressed as the first $t$ columns and the last column of the matrix $\mathbf{P'}_{s,t}$ for all $i$ with $t+1 \leq i \leq \ell$.
Therefore, the rank of $\mathbf{P'}_{s,t}$ is $t+1$.

Finally, we note that matrices $\mathbf{P}_{s,t}$ and $\mathbf{P'}_{s,t}$ only differ in the last row, but their rank is the same. Consequently, the last row, $\Re_{s+t, s }^{t-1}(\mathbf{Q})$, lies in the span of the first $t+1$ rows, which are $\{ \Re_{s,s}^{0} (\mathbf{Q}),
\Re_{s+1,s}^{1} (\mathbf{Q}),
...,
\Re_{s+t, s }^{t} (\mathbf{Q}) \}$.

\end{proof}

\begin{lemma}
\label{lm:overall_rank}
Given the results in Lemma \ref{lm:submatrix_rank} and following Definition \ref{def:row_combination}, then 
$$
\calA = \{
\Re_{0,0}^{0}(\mathbf{Q}) ,
\Re_{1,0}^{1}(\mathbf{Q}),
...,
\Re_{\ell,0}^{\ell}(\mathbf{Q}) \}
$$
forms the basis of row space of matrix $\mathbf{Q}$ and therefore implies that $rank(\mathbf{Q}) = \ell + 1$.
\end{lemma}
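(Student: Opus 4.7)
The plan is to establish the lemma in two steps: (i) the $\ell+1$ vectors in $\calA$ are linearly independent, and (ii) every row of $\mathbf{Q}$ lies in $\spann(\calA)$. Together these yield that $\calA$ forms a basis for the row space of $\mathbf{Q}$ and $\mathrm{rank}(\mathbf{Q})=\ell+1$. Step (i) is essentially free from preceding work: by Definition \ref{def:Pst} the rows of $\mathbf{P}_{0,\ell}$ coincide with $\calA$, and the proof of Lemma \ref{lm:submatrix_rank} exhibits an upper-triangular leading $(\ell+1)\times(\ell+1)$ block of $\mathbf{P}_{0,\ell}$ with diagonal entries $\binom{r-i}{0}D_{i}^{i}=D_i^i>0$ (using Claim \ref{clm:d_ri_positive}), so $\calA$ has rank $\ell+1$.

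For step (ii), abbreviate $R_{i,j}:=\Re_{i,j}^{i-j}(\mathbf{Q})$ for $0\le j\le i\le\ell$, so that $\calA=\{R_{i,0}\}_{i=0}^{\ell}$. For each fixed $i$, the map $(\mathbf{Q}_{\{i,0\}},\dots,\mathbf{Q}_{\{i,i\}})\mapsto (R_{i,0},\dots,R_{i,i})$ is an invertible binomial transformation, so proving $R_{i,j}\in\spann(\calA)$ for every valid $(i,j)$ is equivalent to proving that every row of $\mathbf{Q}$ lies in $\spann(\calA)$. I will do this by induction on $j$. The base case $j=0$ is immediate since $R_{i,0}\in\calA$. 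For the inductive step, assume $R_{i',j}\in\spann(\calA)$ for every $j\le i'\le\ell$, and fix $i$ with $j+1\le i\le\ell$. Applying Lemma \ref{lm:submatrix_rank} with $(s,t)=(j,i-j)$---whose hypotheses $0\le s\le\ell-1$ and $1\le t\le\ell-s$ are satisfied here---gives
\[
\Re_{i,j}^{i-j-1}(\mathbf{Q})\in \spann\bigl(\Re_{j,j}^{0},\Re_{j+1,j}^{1},\dots,\Re_{i,j}^{i-j}\bigr) = \spann(R_{j,j},R_{j+1,j},\dots,R_{i,j}) \subseteq \spann(\calA),
\]
where the final containment uses the inductive hypothesis. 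Claim \ref{clm:row_decomp}, specialized to the same $(s,t)$, reads $\Re_{i,j}^{i-j-1}=R_{i,j}+R_{i,j+1}$, and rearranging yields $R_{i,j+1}=\Re_{i,j}^{i-j-1}-R_{i,j}\in\spann(\calA)$, closing the induction.

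The main obstacle I anticipate is purely bookkeeping: verifying at each inductive step that the invocation of Lemma \ref{lm:submatrix_rank} respects its parameter ranges and that the telescoping identity of Claim \ref{clm:row_decomp} produces precisely the next-level vector $R_{i,j+1}$ required to continue the induction. Once this is confirmed, combining (i) and (ii) establishes that $\calA$ is a basis for the row space of $\mathbf{Q}$, and hence $\mathrm{rank}(\mathbf{Q})=\ell+1$.
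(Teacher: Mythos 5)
Your proof is correct, and it rests on the same two pillars as the paper's --- Lemma \ref{lm:submatrix_rank} supplies the span containment $\Re_{i,j}^{i-j-1}(\mathbf{Q})\in\spann(\Re_{j,j}^{0},\dots,\Re_{i,j}^{i-j})$, and Claim \ref{clm:row_decomp} converts that into membership of the next diagonal vector --- but the induction is organized quite differently, and more cleanly. The paper proves the stronger statement that \emph{every} $\Re_{i,j}^{k}(\mathbf{Q})$ with $0\le k\le i$, $0\le j\le i-k$ lies in $\spann(\calA)$, via a double induction (ascending in $i$, descending in $k$), and to propagate across the $j$-index at fixed $(i,k)$ it needs the additional telescoping identity Eqn.~\eqref{eqn:row_relation}. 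You instead observe that for each fixed $i$ the family $\{\Re_{i,j}^{i-j}(\mathbf{Q})\}_{j}$ is related to the rows $\{\mathbf{Q}_{\{i,j\}}\}_j$ by a unitriangular (hence invertible) binomial transform, so it suffices to control only the diagonal family $R_{i,j}=\Re_{i,j}^{i-j}(\mathbf{Q})$; a single induction on $j$, using Lemma \ref{lm:submatrix_rank} with $(s,t)=(j,i-j)$ and the identity $\Re_{i,j}^{i-j-1}=R_{i,j}+R_{i,j+1}$, then does the job without Eqn.~\eqref{eqn:row_relation} at all. Your parameter checks ($j\le\ell-1$, $1\le i-j\le\ell-j$) are valid, and the base case $R_{i,0}\in\calA$ is immediate. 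One further point in your favor: you explicitly establish linear independence of $\calA$ by noting it coincides with the rows of $\mathbf{P}_{0,\ell}$, whose rank is $\ell+1$ by Lemma \ref{lm:submatrix_rank}; the paper's proof only shows the spanning direction and asserts $\mathrm{rank}(\mathbf{Q})=\ell+1$ "immediately," leaving the lower bound implicit (it is recovered later from the column rank of $\mathbf{M}'$). So your argument buys a shorter induction with less bookkeeping and a complete justification of the basis claim, at the small cost of the (easy) inverse-binomial-transform observation.
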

\begin{proof}

We prove a stronger version of this lemma: $\Re_{i,j}^k(\mathbf{Q})$ lies in the span of $\calA$ for all $0 \leq i \leq \ell $, $0 \leq k \leq i$, and $0 \leq j \leq i-k$. This result immediately implies that the row $\mathbf{Q}_{\{i,j\}} = \Re_{i,j}^0$ in matrix $\mathbf{Q}$ also lies in the span of $\mathcal{A}$.

We prove it by induction on two indices, which are $i$ and $k$. 
\begin{enumerate}
    \item When $i=0$, $k$ and $j$ can only be zero. We have $\Re_{0,0}^0(\mathbf{Q}) \in \spann(\calA)$. 
    \item When $i=i'$, suppose that $\Re_{i,j}^k(\mathbf{Q}) \in \spann(\calA)$ holds for $0 \leq i \leq i'$, $0 \leq k \leq i$ and $0 \leq j \leq i-k$.
    \item When $i=i'+1$, we prove that $\Re_{i'+1,j}^k(\mathbf{Q})$  lies in the span of $\calA$ for all $0 \leq k \leq i'+1$, and $0 \leq j \leq i-k$ by using another induction on $k$. In this induction process, $k$ is taking value from $i'+1$ to $0$.

\begin{enumerate}
    \item When $k=i'+1$, we know $\Re_{i'+1,i'+1}^{i'} \in \spann(\calA)$ by plugging  $s=0$ and $t=i'+1$ in Lemma \ref{lm:submatrix_rank} . 
    
    \item Suppose $\Re_{i'+1, j}^{k'}(\mathbf{Q}) \in \spann(\calA)$ for $k=k'+1,.., i'+1$ and $0\leq j \leq i'+1 - k$ for some $0 \leq k' \leq i'$
    \item When $k=k'$,
    by Lemma \ref{lm:submatrix_rank}, we know 
    $$
    \Re_{i'+1,i'-k'}^{k'}(\mathbf{Q}) \in \spann(\Re_{i'-k',i'-k'}^{0}(\mathbf{Q}),...,\Re_{i',i'-k'}^{k'}(\mathbf{Q}),\Re_{i'+1,i'-k'}^{k'+1}(\mathbf{Q})) 
    $$ 
    by plugging in $s=i'-k'$ and $t=k'+1$. Since we have 
    $$
    \spann(\Re_{i'-k',i'-k'}^{0}(\mathbf{Q}),...,\Re_{i',i'-k'}^{k'}(\mathbf{Q})) \subseteq \spann(\calA)
    $$ 
    by induction hypothesis on $i$ and $\Re_{i'+1,i'-k'}^{k'+1}(\mathbf{Q}) \in \spann(\calA)$ by the induction hypothesis on $k$, we have 
    \begin{equation}
    \label{eqn:row_i_k_lie_A}
    \Re_{i'+1,i'-k'}^{k'}(\mathbf{Q}) \in \spann(\Re_{i'-k',i'-k'}^{0}(\mathbf{Q}),...,\Re_{i',i'-k'}^{k'}(\mathbf{Q}),\Re_{i'+1,i'-k'}^{k'+1}(\mathbf{Q})) \subseteq \spann(\calA).
    \end{equation}
    
    Then we observe that 
    \begin{equation}
    \label{eqn:row_relation}
    \Re_{i'+1,j}^k(\mathbf{Q}) = 
    \begin{cases}
    \Re_{i'+1,i'-k}^k(\mathbf{Q})  + \sum_{\rho=j}^{i'-k-1} \Re_{i'+1,\rho}^{k+1}(\mathbf{Q}) & \text{ if   } 0 \leq j < i'-k. \\
    \Re_{i'+1,i'-k}^k(\mathbf{Q}) & \text{ if   } j = i'-k \\
    \Re_{i'+1,i'-k}^k(\mathbf{Q}) - \Re_{i'+1,i'-k}^{k+1}(\mathbf{Q})  & \text{ if   } j = i'-k+1.\\
    \end{cases}    
    \end{equation}
    The Eqn. \eqref{eqn:row_relation} follows from 
    \begin{align*}
    \Re_{i'+1, \rho}^{k+1}(\mathbf{Q})
    & = \sum_{\sigma=0}^{k+1} {k+1 \choose \sigma} (-1)^{\sigma} \mathbf{Q}_{\{i'+1,\rho+\sigma \}} \\
    & = \sum_{\sigma=0}^{k} {k \choose \sigma} (-1)^{\sigma} \mathbf{Q}_{\{i'+1,\rho+\sigma \}}
    + \sum_{\sigma=1}^{k+1} {k \choose \sigma-1} (-1)^{\sigma} \mathbf{Q}_{\{i'+1,\rho+\sigma \}} \\
    & = \sum_{\sigma=0}^{k} {k \choose \sigma} (-1)^{\sigma} \mathbf{Q}_{\{i'+1,\rho+\sigma \}}
    - \sum_{\sigma=0}^{k} {k \choose \sigma} (-1)^{\sigma} \mathbf{Q}_{\{i'+1,\rho+\sigma +1\}} \\
    & = \Re_{i'+1, \rho}^{k}(\mathbf{Q}) - \Re_{i'+1, \rho+1}^{k}(\mathbf{Q}),
    \end{align*}
    Then Eqn. \eqref{eqn:row_relation} can be attained by summing over the above equation from $\rho =j$ to $\rho=i'-k-1$.
    
    By Eqn. \eqref{eqn:row_relation}, we know that the vector $\Re_{i'+1,j}^{k'}(\mathbf{Q})$ can be expressed as the combination of some terms in the form of $\Re_{i'+1,\rho}^{k'+1}(\mathbf{Q})$ and $\Re_{i'+1,i'-k' }^{k'}(\mathbf{Q})$. Both terms lie in the span of $A$ by the induction hypothesis on $k$ and Eqn. \eqref{eqn:row_i_k_lie_A}.
    
    Therefore, we can conclude that $\Re_{i'+1,j}^{k'}(\mathbf{Q})$ also lies in the span of $\calA$ for all $0 \leq j \leq i'+1-k'$. It implies that it also holds when $k=k'$ and we establish the induction step on $k$.
\end{enumerate}
After completing mathematical induction proof on $k$, we know that $\Re_{i'+1,j}^k(\mathbf{Q})$ lies in the span of $\calA$ for all $0 \leq k \leq i'+1$ and $0 \leq j \leq i'+1-k$. Consequently, we also finish the induction step on $i$ (when $i= i'+1$).

\end{enumerate}

Finally, we can deduce that $\Re_{i,j}^k(\mathbf{Q})$ lies in the span of $\calA$ for any $0 \leq i \leq \ell $, $0 \leq j \leq i$ and $0 \leq k \leq i-j$. Then we know every row $\mathbf{Q}_{\{i,j\}} = \Re_{i,j}^0(\mathbf{Q})$ in matrix $\mathbf{Q}$ also lies in the span of $\mathcal{A}$. It immediately implies $rank(\mathbf{Q}) = \ell +1$.

\end{proof}

By Lemma \ref{lm:overall_rank}, the rank of the matrix $ \mathbf{Q} = [\mathbf{M'}, \mathbf{Y}]$ equals to $\ell + 1$. This in turn implies that $\bfb_{i,j} = 0 $ for all $i \geq j > 0$ by Lemma \ref{lm:simplified_matrix}. Therefore, the corresponding Faith-Interaction indices satisfy the interaction dummy axiom when $d \geq \ell + r$ and $r > \ell$.

\paragraph{(3) $ r + \ell > d \geq \ell + r$ and $r > \ell$:} Now, we generalize the results in the second case to the last case. We recall that $\bfb_{i,j}$ denotes the interaction indices with $j$ elements in $R$ and $j$ elements in $[d] \backslash R$ (Definition \ref{def:bij}). However, 
when $r + \ell > d$, there are some $\bfb_{i,j}$ that do not exist since there are not enough elements outside $R$.
For example, $\bfb_{\ell,\ell}$ does not exist since there are only $d-r < \ell$ elements outside $R$. 

In this case, we can still compute the matrix of linear equations,  $\mathbf{Q'} = [ \mathbf{M'}, \mathbf{Y}]$, but some rows do not exist. Particularly, all rows corresponding to $\bfb_{i,j}$ with $d-r < j \leq i$ do not exist.
Nevertheless, removing rows do not
increases the rank of the matrix $\mathbf{Q}$. Therefore, by Lemma \ref{lm:overall_rank}, the rank of $\text{rank}(\mathbf{Q'}) \leq \text{rank}(\mathbf{Q}) = \ell + 1 $. Also, we note that the columns in the coefficient matrix $\mathbf{M}$ are linearly independent (since it has a unique solution by Proposition \ref{pro:unique_minimizer}). This implies that the columns in the reduced coefficient matrix $\mathbf{M'}$ are also linearly independent (since $\mathbf{M'}$ is a submatrix of $\mathbf{M}$). Therefore, the rank of the reduced coefficient matrix $\mathbf{M'}$ equals to the number of columns in $\mathbf{M'}$, which is $\ell+1$. 

Overall, we have $ \ell + 1 \geq \text{rank}(\mathbf{Q'}) \geq \text{rank}(\mathbf{M'}) = \ell +1 $. That is, $\text{rank}(\mathbf{Q'}) = \ell +1$, which in turn implies that $\bfb_{i,j} = 0 $ for all $i \geq j > 0$ by Lemma \ref{lm:simplified_matrix}. Therefore, the corresponding Faith-Interaction indices satisfy the interaction dummy axiom when $ r + \ell > d \geq \ell + r$ and $r > \ell$.

Therefore, by summarizing (1)-(3),
we conclude that the interaction dummy axiom holds for all basis function $\f_R$. This result can be generalized to any function $\f(\cdot)$ by applying Lemma \ref{lm:f_decomposable}.

In conclusion, 
the Faith-Interaction indices with respect to the weighting function defined in Eqn.\eqref{eqn:thmdummysymm} satisfy the interaction linearity, symmetry and dummy axioms for all set functions $\f(\cdot):2^d \mapsto \mathbb{R}$ and all maximum interaction order $1 \leq \ell \leq d$.
\end{proof}

\subsubsection{Proof of Necessary Condition of Theorem \ref{thm:thmdummysymm}}
\label{sec:necessity_theorem_dummysymm}

Now we prove the necessary condition of Theorem \ref{thm:thmdummysymm}. That is, Faith-Interaction indices $\Expl$ with a finite weighting function 
satisfy interaction linearity, symmetry, and dummy axioms only if the weighting function $\mu$ has the form in Eqn.\eqref{eqn:thmdummysymm}.

From Proposition \ref{pro:symmetry}, the Faithful-Interaction indices satisfy interaction symmetry axiom if and only if $\mu(S)$ only depends on the size of the input set $|S|$. Therefore, the weighting function must be symmetric. The following lemma show that the weighting function must be in the form in Eqn.\eqref{eqn:thmdummysymm} if the corresponding Faith-Interaction indices satisfy interaction dummy axiom.

\begin{lemma}
\label{lm:necessity_dummy}
Faith-Interaction indices $\Expl$ with a finite and permutation-invariant weighting function 
satisfy the interaction dummy axiom only if the weighting function $\mu$ has the following form: 
\begin{align*}
\mu(S) & \propto \sum_{i=|S|}^{d} {d- |S| \choose i-|S|}(-1)^{i-|S|} g(a,b,i), \  \text{ where }
g(a,b,i) = 
\begin{cases}
1 & \text{ , if } \ i = 0. \\
\prod_{j=0}^{j=i-1} \frac{a(a-b) + j(b-a^2)}{a-b + j(b-a^2)}
& \text{ , if } \   1 \leq i \leq d. \\
\end{cases}
\end{align*} 
for some $a,b \in \R^{+}$ with $a>b$ such that $\mu(S) > 0$ for all $S \subseteq [d]$.
\end{lemma}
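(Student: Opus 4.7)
The plan is to argue by reverse-engineering the sufficiency proof in Section~\ref{sec:suf_condition}. Since interaction linearity holds for every Faith-Interaction index (Proposition~\ref{pro:linearity}), and since by hypothesis $\mu$ is already known to be permutation-invariant (Proposition~\ref{pro:symmetry}), Lemma~\ref{lm:f_decomposable} (applied in its contrapositive) reduces the dummy axiom to the requirement that, for every $R \subseteq [d]$ and every maximum interaction order $\ell$, the minimizer of the weighted regression for the unanimity game $\f_R$ satisfies $\bfb_{i,j} = 0$ whenever $j > 0$. Substituting $\bfb_{p,q} = 0$ (for $q > 0$) into the first-order conditions of Lemma~\ref{lm:bij_parital_derivative} collapses the system to $\ell+1$ genuine unknowns $\bfb_{0,0},\ldots,\bfb_{\ell,0}$, while $\tfrac{(\ell+1)(\ell+2)}{2}$ equations must remain consistent, yielding $\tfrac{\ell(\ell+1)}{2}$ nontrivial compatibility relations purely on the cumulative weights $\bar{\mu}_t$.

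The heart of the argument is to show that these compatibility relations collectively force the affine-in-$p$ identity
\begin{equation*}
\frac{D^p_q}{D^p_{q+1}} \;=\; c^{(1)}_q\,p + c^{(2)}_q
\end{equation*}
for some constants depending only on $q$, mirroring Claim~\ref{clm:d_linear_relation}. The strategy is to compare the redundant equations indexed by pairs $(i,j)$ and $(i,j')$ with $j \neq j'$ but the same $i$: such a pair, once the nonzero unknowns are eliminated, produces a single scalar relation among the $\bar{\mu}_t$'s. Varying $r = |R|$ (and hence varying the effective parameter $p$ appearing through $\binom{r-j}{p-\rho}$-type coefficients in Lemma~\ref{lm:bij_parital_derivative}) while holding the ``depth'' $q$ fixed yields a family of such relations whose simultaneous consistency is equivalent to the ratio $D^p_q / D^p_{q+1}$ being affine in $p$.

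Once the affine identity is established, I would set $a = \bar{\mu}_1$ and $b = \bar{\mu}_2$ (normalizing $\bar{\mu}_0 = 1$, which is harmless since $\mu$ and $\lambda\mu$ yield the same minimizer for $\lambda > 0$). The base case $q = 0$, together with the two values of the ratio at $p = 0$ and $p = 1$, determines $c^{(1)}_0$ and $c^{(2)}_0$ in terms of $a,b$, and the recursion
\begin{equation*}
\bar{\mu}_{p+1} \;=\; \frac{a(a-b) + p(b - a^2)}{(a-b) + p(b-a^2)}\,\bar{\mu}_p
\end{equation*}
unwinds to the product formula $g(a,b,t)$ of Claim~\ref{clm:mubar_closeform}. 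Inverting from $\bar\mu$ back to $\mu$ via Claim~\ref{clm:mubar_mu_relation} recovers precisely the form in Eqn.~\eqref{eqn:thmdummysymm}, and the positivity requirement $\mu(S) > 0$ translates to the stated $a > b$ condition.

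The main obstacle will be showing that the compatibility relations actually force affinity in $p$ rather than merely admitting the product form as one among many solutions. Concretely, one must check that pairs of test cases $(r,\ell,i,j,j')$ can be chosen so that, modulo the already-known lower-$q$ affine identities, the resulting relation on $D^p_q/D^p_{q+1}$ has at most an affine indeterminacy. I would proceed by induction on $q$: the case $q=0$ follows from taking $\ell = 1$ with a range of $r$'s, and assuming affinity up to depth $q_0 - 1$, the $\ell = q_0 + 1$ test cases cleanly isolate the depth-$q_0$ ratio. Some care is needed to stay within $d \geq r + \ell$; the boundary regime would be handled by the truncated argument from case~(3) of Section~\ref{sec:suf_condition}.
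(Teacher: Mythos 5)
Your proposal has the right skeleton---reduce to unanimity games $\f_R$, extract a recursion relating $\mubar{r}$ and $\mubar{r+1}$ with $a=\mubar{1}$, $b=\mubar{2}$, unwind it to the product formula $g(a,b,\cdot)$, and Möbius-invert via Claim~\ref{clm:mubar_mu_relation}---and that skeleton matches the paper's. But the one step that carries all the content is missing. The paper's proof works entirely at maximum interaction order $\ell=1$: it invokes the explicit closed-form solution of the first-order weighted regression for $\f_R$ under a symmetric finite weight (Proposition~\ref{pro:closed_form_finite_mu}, from Ding et al.), imposes the dummy requirement $\Expl_{\{i\}}(\f_R,1)=0$ for $i\notin R$, and reads off
\[
(\mubar{1}^2-\mubar{1}\mubar{2})\mubar{r}+\varrho\,\mubar{r+1}-(\mubar{1}^2-\mubar{2})\bigl(r\mubar{r}+(d-r)\mubar{r+1}\bigr)=0,
\]
which rearranges directly to $\mubar{r+1}/\mubar{r}=\frac{a(a-b)+r(b-a^2)}{(a-b)+r(b-a^2)}$. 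Your write-up never performs this derivation: you assert that the $q=0$ affinity ``follows from taking $\ell=1$ with a range of $r$'s'' without exhibiting the relation, and you explicitly flag the analogous step at general depth as ``the main obstacle'' without resolving it. As it stands, the load-bearing computation is absent.

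Beyond the gap, the extra machinery is unnecessary and partly misdescribed. Once the $q=0$ relation holds for all $1\le r\le d-1$, the recursion determines every $\mubar{k}$ from $\mubar{0}=1$, $\mubar{1}=a$, $\mubar{2}=b$, and hence determines $\mu$ by Claim~\ref{clm:mubar_mu_relation}; there is no residual freedom for the higher-$q$ compatibility relations to remove, so your concern that the product form might be ``one among many solutions'' at depth $q>0$ never arises, and the induction on $q$ can be dropped entirely. Two smaller points: the reduction to unanimity games is not the contrapositive of Lemma~\ref{lm:f_decomposable} but the direct observation that every $i\notin R$ is a dummy player of $\f_R$, so the dummy axiom applied to $\f_R$ itself forces $\Expl_S(\f_R,\ell)=0$ whenever $S\not\subseteq R$; and after setting $\bfb_{p,q}=0$ for $q>0$ the surviving unknowns $\bfb_{p,0}$ still appear in every first-order condition of Lemma~\ref{lm:bij_parital_derivative}, so the ``compatibility relations purely on the cumulative weights'' only emerge after eliminating those unknowns---an elimination that is essentially the rank computation of the sufficiency proof run in reverse and is itself nontrivial. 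Restricting to $\ell=1$ from the outset, as the paper does, avoids all of this.
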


\begin{proof}

We now solve the case when the function $\f = \f_R$ is a basis function. Recall that the definition of basis functions is :
$$
\f_R(S) = \begin{cases}
1, & \text{if}\ S \supseteq R \\
0, & \text{otherwise}. \\
\end{cases}
$$ 
Since the weighting function $\mu(\cdot)$ if finite and only depends on the size of the input set, we use the definitions in Section \ref{sec:extra_notations}: $\mu_{|S|} = \mu(S)$ and  $\mubar{|S|} = \sum_{T \supseteq S} \mu(T) = \sum_{i=|S|}^d { d-|S| \choose i-|S|} \mu_i$ for all $S \subseteq [d]$. Since  Faithful-Interaction indices should hold for all maximum interaction orders $1 \leq \ell \leq d$, we restrict the maximum interaction order to $\ell=1$.

Now we use the following proposition from \citet{ding2008formulas} to prove the necessary condition.

\begin{proposition}
\label{pro:closed_form_finite_mu}
(\citet{ding2008formulas}, equation (8)) When the maximum interaction order $\ell = 1$ (no interaction terms) and the set function $\f = \f_R$ is a basis function for some $R \subseteq [d]$ with $|R| = r$, and the weighting function $\mu(S)$ is permutation-invariant and is normalized so that $\mubar{0} = \sum_{S \subseteq [d]} \mu(S) = 1$, then the minimizer of Eqn.\eqref{eqn:weighted_regression}, $\Expl(\f_R, \ell) \in \mathbb{R}^{d+1}$, has the following form:
$$
\Expl_S(\f_R, \ell) = 
\begin{cases}
-\frac{\mubar{1} + (d-1)\mubar{2}}{\varrho}\mubar{r} + \frac{\mubar{1}}{\varrho} \left(r\mubar{r} + (d-r)\mubar{r+1} \right) & \text{, if $S = \phi$} \\
\frac{1}{\varrho(\mubar{1} - \mubar{2})} \left( (\mubar{1}^2 - \mubar{1}\mubar{2})\mubar{r} + \varrho \mubar{r} - (\mubar{1}^2 - \mubar{2})(r\mubar{r} + (d-r)\mubar{r+1} ) \right) & \text{, if $S = \{ i\}$ for $ i \in R$} \\
\frac{1}{\varrho(\mubar{1} - \mubar{2})} \left( (\mubar{1}^2 - \mubar{1}\mubar{2})\mubar{r} + \varrho \mubar{r+1} - (\mubar{1}^2 - \mubar{2})(r\mubar{r} + (d-r)\mubar{r+1} ) \right) & \text{, if $S = \{ i\}$ for $ i \notin R$} \\
\end{cases}
$$
where $\varrho = d\mubar{1}^2 - (d-1)\mubar{2} - \mubar{1}$. 
\end{proposition}
Note that the basis function satisfies $\f_R(S \cup i) = \f_R(S)$ for $i \notin R$ and $S \subseteq [d] \backslash i$. Since the minimizer satisfies the interaction dummy axiom,
we have $\Expl_{\{i\}}(\f_R, \ell) =  0$ for all $i \notin R$, which implies
\begin{equation*}
 (\mubar{1}^2 - \mubar{1}\mubar{2})\mubar{r} + \varrho \mubar{r+1} - (\mubar{1}^2 - \mubar{2})(r\mubar{r} + (d-r)\mubar{r+1} ) = 0.
\end{equation*}

Let $\mubar{1} = a$ and $\mubar{2} = b$. By plugging in $\varrho = d\mubar{1}^2 - (d-1)\mubar{2} - \mubar{1}$, we have 
$$
(a^2 - ab)\mubar{r} + (da^2 - (d-1)b - a) \mubar{r+1} - ra^2\mubar{r} - (d-r)\mubar{r+1}a^2 + rb\mubar{r} + b(d-r)\mubar{r+1} = 0.
$$
By rearranging, we have
\begin{equation}
\label{eqn:mubar_recursive}
\frac{\mubar{r+1}}{\mubar{r}} = \frac{a(a-b) + r(b-a^2)}{(a-b) + r(b-a^2)}.
\end{equation}

Without loss of generality, we assume that $\bar{\mu}_0 = \sum_{S \subseteq [d]} \mu(S) = 1$, so we have $\mubar{0} =1, \mubar{1} = a$ and $\mubar{2} =b$. Since Eqn. \eqref{eqn:mubar_recursive} holds for all $1 \leq r \leq d$, then we can solve $\mubar{k}$ for $3 \leq k \leq d$ by rewinding the above recursive equation.
\begin{equation}
\label{eqn:mubar_closed_form}
\mubar{k} = \prod_{j=0}^{k-1} \frac{a(a-b) + j(b-a^2)}{(a-b) + j(b-a^2)}.
\end{equation}
Then by applying Lemma \ref{clm:mubar_mu_relation}, we can have 
\begin{align*}
\mu(S) = \mu_{|S|} 
& =  \sum_{i=|S|}^{d} {d- |S| \choose i-|S|}(-1)^{i-|S|} \mubar{i} \\
& = \sum_{i=|S|}^{d} {d- |S| \choose i-|S|}(-1)^{i-|S|} \prod_{j=0}^{j=i-1} \frac{a(a-b) + j(b-a^2)}{a-b + j(b-a^2)}, \\
\end{align*}
for some $a,b \in \R^{+}$ with $a>b$ such that $\mu(S) > 0$ for all $S \subseteq [d]$.
 
\end{proof}

\newpage

\subsection{Proof of Theorem \ref{thm:faith-banzhaf}}

\begin{proof}

\textbf{Sufficiency:} Below we show that the Faith-Banzhaf index satisfies interaction linearity, symmetry, dummy, and generalized 2-efficiency axiom.

By Proposition \ref{pro:linearity}, the Faith-Banzhaf index satisfies interaction linearity axiom. By Proposition \ref{pro:symmetry}, the Faith-Banzhaf index satisfies the interaction symmetry axiom. By Proposition \ref{pro:dummy}, when specifying $p_i = \frac{1}{2}$ for all $1 \leq i \leq d$, $\mu(S)=\prod_{i \in S} p_i \prod_{j \not \in S} (1-p_j) = \frac{1}{2^d}$ is the same as the weighting function used in this theorem. Therefore, the Faith-Banzhaf index satisfies the interaction dummy axiom. 

Note that the closed-form expression in Eqn.\eqref{eqn:faith_banzhaf2} is from Proposition 7.1 of \citep{grabisch2000equivalent}. Also, for all $S$ with size $\ell$, \citep{grabisch2000equivalent} has shown that these values coincide with the Banzhaf interaction indices, which has the following form:
$$
\Expl_S^{\text{F-Bzf}}(\f,\ell) = 
\sum_{T \subseteq [d] \backslash S}\frac{1}{2^{d-|S|}} \Delta_S(\f(T))
\ \ \text{ for all } S \in \setlessell 
\text{ with } |S| = \ell.
$$
Also, recall that the generalized 2-efficiency axiom is only defined on the highest-order interaction, where the Faith-Banzhaf and the Banzhaf interaction indices overlap. Therefore, Faith-Banzhaf indices also satisfy the generalized 2-efficiency axiom.

\textbf{Necessity:} Below we show that the Faith-Banzhaf index is the only index that satisfies interaction linearity, symmetry, dummy, and generalized 2-efficiency axiom.

First, when $\ell = d$, 
by Proposition \ref{pro:ell_equals_d}, we have 
$\ex_{[d]}(\f,d) = a(\f,[d])$ for all $d \in \mathbb{N}$. We then apply the following  results for the Banzhaf interaction index.

\begin{claim}
\label{clm:generalized_2eff}
\citep[Theorem~4]{grabisch1999axiomatic}
If an interaction index that satisfies the interaction linearity, symmetry, dummy, generalized 2-efficiency axioms and $\ex_{[d]}(\f,d) = a(\f,[d])$ for all $d \in \mathbb{N}$, then its highest order terms must have the following form:
$$
\ex_S(\f,\ell) = \sum_{T \subseteq [d] \backslash S}\frac{1}{2^{d-|S|}} \Delta_S(\f(T)),
\forall S \in \setlessell \text{ with } |S| = \ell. 
$$
\end{claim}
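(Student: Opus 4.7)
The strategy is to reduce the statement to unanimity games by interaction linearity, use interaction symmetry and dummy to collapse each such value to a single scalar $\phi_\ell^{(d)}(r)$ indexed only by $|R|=r$ and $|S|=\ell$, and finally turn generalized 2-efficiency into a scalar recursion whose base case is supplied by the hypothesis $\ex_{[d]}(\f,d)=a(\f,[d])$.

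First I would verify the target identity on the unanimity basis. By Eqn.~\eqref{eqn:discopose_f} it suffices to evaluate both sides on each $\f_R$. A direct evaluation of $\Delta_S(\f_R(T))$ for $T\subseteq[d]\setminus S$ shows that when $S\subseteq R$ only the subset $L=S$ survives the alternating sum and $\Delta_S(\f_R(T))=\mathbbm{1}[R\setminus S\subseteq T]$, giving $\sum_{T\subseteq[d]\setminus S} 2^{-(d-\ell)}\Delta_S(\f_R(T))=2^{-(|R|-\ell)}$; when $S\not\subseteq R$, any coordinate $i\in S\setminus R$ is a dummy of $\f_R$, so a sign-flip pairing in $\Delta_S$ along $i$ makes the whole sum $0$. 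On the left, interaction dummy forces $\ex_S(\f_R,\ell)=0$ for $S\not\subseteq R$, and interaction symmetry makes $\ex_S(\f_R,\ell)$ depend on $S$ only through $|S|$ when $S\subseteq R$; denote this common value $\phi_\ell^{(d)}(r)$. The task reduces to showing $\phi_\ell^{(d)}(r)=2^{-(r-\ell)}$ for all $d\geq r\geq\ell$.

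Next I would extract two instances of generalized 2-efficiency from $\f_R$. Fix $T\subseteq R$ with $|T|=\ell$, pick $i\in T$, and apply the axiom with the lower-order set $T\setminus i$ and a partner $j$. For $j\in R\setminus T$, which exists when $r>\ell$, unpacking the definition gives $(\f_R)_{[ij]}=\f_{R'}$ on the $(d{-}1)$-player game with $R'=(R\setminus\{i,j\})\cup\{[ij]\}$ of size $r-1$; both right-hand terms equal $\phi_\ell^{(d)}(r)$, and the axiom yields $\phi_\ell^{(d-1)}(r-1)=2\,\phi_\ell^{(d)}(r)$. For $j\in[d]\setminus R$, which exists when $d>r$, a similar unpacking gives $(\f_R)_{[ij]}=\f_{R''}$ with $R''=(R\setminus i)\cup\{[ij]\}$ of size $r$; the second right-hand term vanishes by dummy, so the axiom yields $\phi_\ell^{(d-1)}(r)=\phi_\ell^{(d)}(r)$. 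Combined with the base value $\phi_\ell^{(\ell)}(\ell)=a(\f_{[\ell]},[\ell])=1$, which is the hypothesis $\ex_{[d]}(\f,d)=a(\f,[d])$ specialized to $d=\ell$ and $\f=\f_{[\ell]}$, the first recursion cascades to $\phi_\ell^{(r)}(r)=2^{-(r-\ell)}$ and the second extends this to all $d\geq r$, matching the closed form from the basis check.

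The main obstacle is the bookkeeping in the two cases of the third step: one must unpack $(\f_R)_{[ij]}$ correctly and confirm that it is again a unanimity game whose carrier has the predicted size, and check that the size constraints attached to generalized 2-efficiency ($|S|=\ell-1$ in the $d$-player game, $|S\cup[ij]|=\ell$ in the $(d{-}1)$-player game) are compatible with the chosen $i,j$. Once both cases of the recursion are in hand, the two together span every admissible $(d,r)$ with $d\geq r\geq\ell$ down to the single boundary point $d=r=\ell$, and the verification from the first step closes the argument via interaction linearity over the unanimity basis.
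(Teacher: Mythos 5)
Your proposal is correct, but it is worth noting that the paper does not actually prove this claim at all: it imports it wholesale as \citep[Theorem~4]{grabisch1999axiomatic} and uses it as a black box inside the necessity half of Theorem \ref{thm:faith-banzhaf}. What you have written is a self-contained reconstruction of (essentially) the Grabisch--Roubens argument, and every step checks out. The basis computation is right: for $S\subseteq R$ with $|S|=\ell$ only $L=S$ survives in $\Delta_S(\f_R(T))$, giving $\mathbbm{1}[R\setminus S\subseteq T]$ and hence $2^{-(|R|-\ell)}$ after summing over the $2^{(d-\ell)-(r-\ell)}$ admissible $T$; for $S\not\subseteq R$ both sides vanish (dummy on the left, cancellation along a dummy coordinate on the right), which also disposes of the case $|R|<\ell$. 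The two unpackings of $(\f_R)_{[ij]}$ are correct ($\f_{R'}$ with $|R'|=r-1$ when $j\in R\setminus T$, $\f_{R''}$ with $|R''|=r$ when $j\notin R$), the size constraint $\ell=|T\setminus i|+1$ of the axiom is met, and the two recursions $\phi_\ell^{(d-1)}(r-1)=2\phi_\ell^{(d)}(r)$ and $\phi_\ell^{(d-1)}(r)=\phi_\ell^{(d)}(r)$ together reach the anchor $\phi_\ell^{(\ell)}(\ell)=a(\f_{[\ell]},[\ell])=1$ from any admissible $(d,r)$. The one point you should acknowledge explicitly is that the merged game lives on the player set $([d]\setminus\{i,j\})\cup\{[ij]\}$, so invoking the symmetry collapse and the normalization $\ex_{[d]}(\f,d)=a(\f,[d])$ on it requires identifying that set with $[d-1]$; this relabeling invariance is not literally a consequence of the interaction symmetry axiom as stated, but it is already baked into the paper's formulation of generalized $2$-efficiency (which declares $\f_{[ij]}:2^{d-1}\to\mathbb{R}$), so it is a defect of the axiom system's bookkeeping rather than of your argument. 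With that convention granted, your proof is complete and, unlike the paper, makes the claim verifiable without consulting the external reference.
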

Therefore, by the above claim, the highest order terms (with $|S| = \ell$) must coincide with the Banzhaf interaction indices. Then we prove that the weighting function should be a constant, i.e. $\mu(S) = c$ for some constant $c>0$ for all $S \subseteq [d]$. 

Now, we consider the case when the maximum interaction order $\ell = 1$ and $\f = \f_R$ is a basis function, where $\f_R(S) = 1$ for all $S \supseteq R$ and $0$ otherwise.
Since the Faithful-Interaction indices that satisfy the interaction symmetry axiom, the corresponding weighting function is symmetric by Proposition \ref{pro:symmetry}. Then, we let $\mubar{|S|} = \sum_{T \supseteq S} \mu(T) = \sum_{i=|S|}^d { d-|S| \choose i-|S|} \mu_i$ for all $S \subseteq [d]$. 

By Proposition \ref{pro:closed_form_finite_mu}, the minimizer of Eqn.\eqref{eqn:weighted_regression} is
$$
\Expl_S(\f_R, \ell) = 
\begin{cases}
-\frac{\mubar{1} + (d-1)\mubar{2}}{\varrho}\mubar{r} + \frac{\mubar{1}}{\varrho} \left(r\mubar{r} + (d-r)\mubar{r+1} \right) & \text{, if $S = \phi$} \\
\frac{1}{\varrho(\mubar{1} - \mubar{2})} \left( (\mubar{1}^2 - \mubar{1}\mubar{2})\mubar{r} + \varrho \mubar{r} - (\mubar{1}^2 - \mubar{2})(r\mubar{r} + (d-r)\mubar{r+1} ) \right) & \text{, if $S = \{ i\}$ for $ i \in R$} \\
\frac{1}{\varrho(\mubar{1} - \mubar{2})} \left( (\mubar{1}^2 - \mubar{1}\mubar{2})\mubar{r} + \varrho \mubar{r+1} - (\mubar{1}^2 - \mubar{2})(r\mubar{r} + (d-r)\mubar{r+1} ) \right) & \text{, if $S = \{ i\}$ for $ i \notin R$} \\
\end{cases},
$$
where  $\varrho = d\mubar{1}^2 - (d-1)\mubar{2} - \mubar{1}$.

Since we have known that the $\ex_{i}(\f_R,\ell)$ must coincide with the Banzhaf interaction indices 
by Claim \ref{clm:generalized_2eff}, we have $\Expl_{\{i\}}(\f_R, \ell) = 0$ for $i \not \in R$ and $\Expl_{\{i\}}(\f_R, \ell) = \frac{1}{2^{|R|-1}}$ for $i \in R$. Equivalently, we have
\begin{equation}
\label{eqn:basis_function_banzhaf}
\begin{cases}
\frac{1}{\varrho(\mubar{1} - \mubar{2})} \left( (\mubar{1}^2 - \mubar{1}\mubar{2})\mubar{r} + \varrho \mubar{r} - (\mubar{1}^2 - \mubar{2})(r\mubar{r} + (d-r)\mubar{r+1} ) \right) = \frac{1}{2^{r-1}}. \\
\frac{1}{\varrho(\mubar{1} - \mubar{2})} \left( (\mubar{1}^2 - \mubar{1}\mubar{2})\mubar{r} + \varrho \mubar{r+1} - (\mubar{1}^2 - \mubar{2})(r\mubar{r} + (d-r)\mubar{r+1} ) \right) = 0.
\end{cases}.
\end{equation}

Subtracting the first equality with the second one, we have  
\begin{equation}
\label{eqn:mubar_recursive2}
\frac{1}{\varrho(\mubar{1} - \mubar{2})} \cdot  \varrho (\mubar{r} -\mubar{r+1} )  = \frac{1}{2^{r-1}}
\Rightarrow 
\frac{\mubar{r} - \mubar{r+1}}{\mubar{1} - \mubar{2}}
= \frac{1}{2^{r-1}}
\Rightarrow \frac{\mubar{r+1} - \mubar{r+2}}{\mubar{r} - \mubar{r+1}} = \frac{1}{2}.
\end{equation}
Note that it holds for all $1 \leq r \leq d-2$. 

Next, by solving $\Expl_{\{i\}}(\f_R, \ell) = 0$ for $i \not \in R$ and for all $1 \leq r \leq d-1$, we have the following result from Eqn.\eqref{eqn:mubar_recursive}:
$$
\mubar{k} = \prod_{j=0}^{k-1} \frac{a(a-b) + j(b-a^2)}{(a-b) + j(b-a^2)}, 
\text{ for some $1 > a = \mubar{1} > b = \mubar{2} > 0$.}
$$

Then by plugging in the above expression to Eqn.\eqref{eqn:mubar_recursive2}, we have
\begin{align*}
0 
& = \mubar{r} - 3 \mubar{r+1} + 2\mubar{r+2} \\
& =  \mubar{r} \left( 
1 - 3 \cdot \frac{a(a-b) + r(b-a^2)}{(a-b) + r(b-a^2)} 
+ 2 \cdot \frac{a(a-b) + r(b-a^2)}{(a-b) + r(b-a^2)} \cdot \frac{a(a-b) + (r+1)(b-a^2)}{(a-b) + (r+1)(b-a^2)} 
\right) \\
& = \mubar{r} \left( 
\left(1 - \frac{a(a-b) + r(b-a^2)}{(a-b) + r(b-a^2)} 
\right)
- 2  \frac{a(a-b) + r(b-a^2)}{(a-b) + r(b-a^2)}  \left( 1-   \frac{a(a-b) + (r+1)(b-a^2)}{(a-b) + (r+1)(b-a^2)} \right)
\right) \\
& = \mubar{r} \left( \frac{(1-a)(a-b)}{(a-b) + r(b-a^2)} - 2  \frac{a(a-b) + r(b-a^2)}{(a-b) + r(b-a^2)} \cdot \frac{(1-a)(a-b)}{(a-b) + (r+1)(b-a^2)} 
\right) \\
& = \frac{\mubar{r}(1-a)(a-b)}{(a-b) + r(b-a^2)} \left(1  - 2  \frac{a(a-b) + r(b-a^2)}{(a-b) + (r+1)(b-a^2)}  \right) \\
& = \frac{\mubar{r}(1-a)(a-b)}{(a-b) + r(b-a^2)} \left( \frac{(a-b) + (r+1)(b-a^2) - 2a(a-b) - 2r(b-a^2)}{(a-b) + (r+1)(b-a^2)}  \right) \\
& = \frac{\mubar{r}(1-a)(a-b)}{(a-b) + r(b-a^2)} \cdot \frac{a - rb +(r-3)a^2 +2ab}{(a-b) + (r+1)(b-a^2)}   \\
& = \frac{\mubar{r}(1-a)(a-b)}{\left[(a-b) + r(b-a^2) \right]\left[(a-b) + (r+1)(b-a^2) \right]} 
\cdot \left[a(1-3a+2b) -r(b-a^2) \right].
\end{align*}
Since we have $1 > a = \mubar{1} > b = \mubar{2} > 0$ and $\mubar{r} > 0$,  we have $a(1-3a+2b) -r(b-a^2) = 0$ for all $1 \leq r \leq d-2$. Then for all $d \geq 4$, we have 
\begin{equation}
\label{eqn:ab_relation}
a(1-3a+2b) -(b-a^2) = a(1-3a+2b) -2(b-a^2) = 0.    
\end{equation}

It implies $b=a^2$ and $a(1-3a+2b)  = 0$. That is, $a = \frac{1}{2}$ and $b = \frac{1}{4}$. By substituting these into Eqn.\eqref{eqn:mubar_recursive}, we have $\mubar{k} = \frac{1}{2^k}$. Then by applying Lemma \ref{clm:mubar_mu_relation}, we get 
\begin{align*}
\mu(S) = \mu_{|S|} 
& =  \sum_{i=|S|}^{d} {d- |S| \choose i-|S|}(-1)^{i-|S|} \mubar{i} = \frac{1}{2^d}, \ \  \forall S \subseteq [d].
\end{align*}

Finally, for $d=3$, from Eqn.\eqref{eqn:ab_relation}, we have 
$$
a(1-3a+2b) -(b-a^2) = 0
\Rightarrow (2a-1)(b-a) = 0
\Rightarrow a = \frac{1}{2}.
$$
Next, we plug in $r=d$ into the first equality in Eqn.\eqref{eqn:basis_function_banzhaf}, we have 
\begin{align}
\frac{1}{2^{d-1}}
& = \frac{(\mubar{1}^2 - \mubar{1}\mubar{2})\mubar{d} + \varrho \mubar{d} - (\mubar{1}^2 - \mubar{2})(d\mubar{d})}{(d\mubar{1}^2 - (d-1)\mubar{2} - \mubar{1})(\mubar{1} - \mubar{2})}  \nonumber \\
& =\mubar{d} \frac{(\mubar{1}^2 - \mubar{1}\mubar{2}) + (d\mubar{1}^2 - (d-1)\mubar{2} - \mubar{1}) - d(\mubar{1}^2 - \mubar{2})}{(d\mubar{1}^2 - (d-1)\mubar{2} - \mubar{1})(\mubar{1} - \mubar{2})}  \nonumber \\
& =\mubar{d} \frac{(\mubar{1}^2 - \mubar{1}\mubar{2}) + \mubar{2}- \mubar{1}}{(d\mubar{1}^2 - (d-1)\mubar{2} - \mubar{1})(\mubar{1} - \mubar{2})}  \nonumber \\
& = \mubar{d} \frac{\mubar{1}-1}{d\mubar{1}^2 - (d-1)\mubar{2} - \mubar{1}}  \nonumber \\
& =  \frac{\mubar{d}( a-1 )}{da^2 - (d-1)b - a} , \ \ (\text{By definition, we have } a=\mubar{1} \text{ and } b=\mubar{2}) 
\label{eqn:r=d}
\end{align}
By plugging in $d=3$, $a=\frac{1}{2}$ and $\mubar{3} = \frac{ab(a-b) + 2b(b-a^2)}{(a-b) + j(b-a^2)}
$ from Eqn.\eqref{eqn:mubar_recursive}, we have 
\begin{align*}
\frac{1}{4} 
& =  \frac{\mubar{d}( a-1 )}{da^2 - (d-1)b - a}  \\
& = \frac{ab(a-b) + 2b(b-a^2)}{(a-b) + 2(b-a^2)} \times \frac{a-1}{3a^2 - 2b - a} \ \ \ \text{By plugging in } d=3  \\
& = \frac{b(-a^2-ab+2b)}{a+b-2a^2} \times \frac{a-1}{3a^2 - 2b - a}  \\
& = \left(-\frac{1}{4}+\frac{3b}{2} \right) \times \frac{-\frac{1}{2}}{\frac{1}{4} - 2b }
\ \ \ \text{By plugging in } a = \frac{1}{2}.
\end{align*}
Therefore, we have $b=\frac{1}{4}$, which implies that $\mu(S) = \frac{1}{2^d}$ using the same argument.

We note that for $d=2$, any weighting function satisfies $\mu_1= c$ and $\mu_0 = \mu_2 = \frac{1}{2} -c $ for some constant $c > 0$ leads to Banzhaf interaction values for $\ell=1,2$.

\end{proof}

\newpage

\subsection{Proof of Theorem \ref{thm:faith_shap}}
In this section, we present
proof for the sufficient condition, the closed-form expressions of Faithful Shapley index, and the necessary condition of Theorem \ref{thm:faith_shap}. For the sufficient condition, we provide two proof. The first one is in Section \ref{sec:faithshap_sufficiency} and is simpler. The second one is in Section \ref{sec:alternative_faithshap_sufficiency} and is similar to the proof of Theorem \ref{thm:thmdummysymm}. 
Next, we derive the closed-form expression of Faithful Shapley  Interaction indices in Section \ref{sec:closed_form_solution_faith_shap}. Finally, we provide the proof of the necessary condition in Section \ref{sec:faithshap_necessity}. 

\subsubsection{Proof of Sufficient Condition of Theorem \ref{thm:faith_shap}}
\label{sec:faithshap_sufficiency}

In the following, we prove that under the weighting function defined in Eqn. \eqref{eqn:faithshap_weight}, the minimizers of Eqn.\eqref{eqn:constrained_weighted_regresion} satisfy interaction linearity, symmetry, efficiency, and dummy axioms.

\textbf{Interaction linearity, symmetry, efficiency axiom:} The minimizers of Eqn.\eqref{eqn:constrained_weighted_regresion} satisfy interaction linearity, symmetry, and efficiency axioms by Proposition \ref{pro:linearity}, \ref{pro:symmetry} and \ref{pro:efficiency}.

\textbf{Interaction dummy axiom:} Below we prove that the optimal solution satisfies the interaction dummy axiom.
The constrained optimization problem can be written as follows:

\begin{equation*}
\small
\ex^{\text{F-Shap}}(\f,\ell) = \min_{\Expl \in \mathbb{R}^{d_\ell}} F_\ell(\f,\ex) = \min \sum_{S \subseteq [d], 1 \leq |S| \leq d-1} \mu_{|S|} \left(
\f(S) - \sum_{T \subseteq S, |T| \leq \ell} \Expl_T(\f, \ell)  \right)^2,
\end{equation*}
\begin{equation}
\label{eqn:constrained_ls_proof}
\text{ subject to } \Expl_{\text{\O}}(\f,\ell) = \f(\text{\O}) \text{   and   }  \sum_{T \subseteq [d], |T| \leq \ell} \Expl_{T}(\f,\ell) = \f([d]),
\end{equation}
where we use the notations:$\mu_{|S|} = \mu(S)$ and  $\bar{\mu}(S)
= \mubar{|S|} 
= \sum_{T \supseteq S, \mu(T) < \infty} \mu_{|T|} = \sum_{i: |S| \leq i \leq d, \mu_i < \infty}{d-|S| \choose i-|S|}\mu_{i}$ since the weighting function only depends on the size of input sets. Also, since multiplying a scalar to $\mu$ does not change the minimizer of Eqn.\eqref{eqn:weighted_regression}, without loss of generality, we let 
$$\mu(S) = \frac{(d-|S|-1)!(|S|-1)!}{(d-1)!} = B(|S|,d-|S|) \propto \frac{d-1}{\binom{d} {|S|}\,|S|\,(d-|S|)},
\text{ for all } S \subseteq [d] 
\text{ with } 1 \leq |S| \leq d-1,
$$
where $B(\cdot,\cdot)$ is a beta function.
%, without loss of generality, we assume that $\mu(S) = \frac{d-1}{\binom{d} {|S|}\,|S|\,(d-|S|)}$ for $S \subseteq [d]$ with $1 \leq |S| \leq d-1$.
Now, we prove the minimizer of Eqn.\eqref{eqn:con_weighted_ls_basis} satisfies interaction dummy axiom.  More generally, we prove that the constrained minimization problem with a dummy feature can be reduced to another problem with only $d-1$ features and the interaction terms containing the dummy feature is zero.
Formally, we have the following lemma:
\begin{lemma}
\label{lm:faith_shap_reduce}
Assume that $i^{\text{th}}$ feature of the set function $\f(\cdot)$ is a dummy feature such that $\f(S) = \f(S \cup i)$ for all $S \subseteq [d] \backslash \{i\}$.
Let $\f':2^{d-1} \mapsto \mathbb{R}$ with $\f'(S) = \f(S)$ for all $S \subseteq [d-1]$. Then we have 
\begin{equation}
\label{eqn:dummy_faith_solution}
\begin{cases}
\Expl_{S}^{\text{F-Shap}}(\f,\ell) = \Expl_{S}^{\text{F-Shap}}(\f',\ell), 
& \text{ for all }  \ \ S \subseteq [d] \backslash \{ i\}, 0 \leq |S| \leq \ell. \\
\Expl_{S \cup \{i\}}^{\text{F-Shap}}(\f,\ell) = 0, & \text{ for all }  \ \ S \subseteq [d] \backslash \{ i\}, 0 \leq |S| \leq \ell - 1. 
\end{cases}
\end{equation}
\end{lemma}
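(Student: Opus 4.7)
My plan is to define a candidate solution $\tilde\ex$ matching the right-hand side of Eqn.\eqref{eqn:dummy_faith_solution}, verify the KKT conditions of the constrained weighted least-squares problem, and appeal to uniqueness via Proposition~\ref{pro:unique_minimizer}. Specifically, I set $\tilde\ex_S = \ex^{\text{F-Shap}}_S(\f',\ell)$ for $S \subseteq [d]\setminus\{i\}$, $|S|\leq\ell$, and $\tilde\ex_{S\cup i} = 0$ otherwise. The two linear constraints are immediate: efficiency of $\ex^{\text{F-Shap}}(\f',\ell)$ for $\f'$ gives $\tilde\ex_\text{\O} = \f'(\text{\O}) = \f(\text{\O})$ and $\sum_T \tilde\ex_T = \f'([d-1]) = \f([d-1]) = \f([d])$, the last equality using the dummy identity.

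The key algebraic input is the Beta-function identity $\mu(S') + \mu(S' \cup i) = \mu'(S')$ for $S' \subseteq [d-1]$ with $1 \leq |S'| \leq d-2$, which follows from
\[
\tfrac{(d-s-1)!\,(s-1)!}{(d-1)!} + \tfrac{(d-s-2)!\,s!}{(d-1)!} = \tfrac{(d-s-2)!\,(s-1)!}{(d-2)!}.
\]
This lets me rewrite sums over $S \subseteq [d]$ as sums over $S' \subseteq [d-1]$ weighted by $\mu'$, up to boundary contributions at $S' = [d-1]$ (where $\mu([d-1])$ is finite for $\f$ but $\mu'([d-1]) = \infty$ for $\f'$) and at $S' = \text{\O}$.

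Next I verify the Lagrangian stationarity conditions using Proposition~\ref{pro:parital_derivative_general}. For $A \subseteq [d-1]$ (including $A=\text{\O}$), I split the two sums in the partial-derivative formula by whether $i \in S$ (resp.\ $i \in L$); using $\f(S)=\f'(S\setminus i)$, $\tilde\ex_T = 0$ for $T \ni i$, and the Beta identity, $\partial F/\partial \ex_A|_{\tilde\ex}$ collapses to $\partial F'/\partial \ex'_{A}|_{\ex^{\text{F-Shap}}(\f',\ell)}$ plus a boundary term at $S' = [d-1]$ proportional to $\f'([d-1]) - \sum_T \ex^{\text{F-Shap}}_T(\f',\ell)$, which vanishes by efficiency. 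With the choice $\lambda_\text{\O} := \lambda'_\text{\O}$, $\lambda_{[d]} := \lambda'_{[d-1]}$, the KKT stationarity for the $\f$ problem then follows from the corresponding stationarity for the $\f'$ problem, for every $A \subseteq [d-1]$.

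The main obstacle is the remaining case: $A$ contains $i$, $|A|\leq \ell$. Writing $A=A'\cup i$, every $S \supseteq A$ and every $L \supseteq A$ necessarily contains $i$, so re-parameterizing $S=S'\cup i$, $L=L'\cup i$ yields an expression for $\partial F/\partial \ex_A|_{\tilde\ex}$ involving only the kernel values $\mu(S'\cup i)$ and the Faith-Shap coefficients of $\f'$, depending on $A$ only through $A'$. I need to show this equals $\lambda'_{[d-1]}$ for every $A'$. My plan is to analyze the difference $\partial F/\partial \ex_{A'\cup i}|_{\tilde\ex} - \partial F/\partial \ex_{A'}|_{\tilde\ex}$ (for $A'\neq\text{\O}$), or $\partial F/\partial \ex_{\{i\}} - \partial F/\partial \ex_\text{\O}$ for the corner case: after cancellation this reduces to a weighted residual $-2\sum_{S' \supseteq A',\,S' \subseteq [d-1],\,|S'|\geq 1} \mu(S')\bigl(\f'(S') - \sum_{T\subseteq S'}\tilde\ex_T\bigr)$, and the task becomes showing this residual vanishes at $\tilde\ex$. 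I expect to derive this from a hypergeometric summation identity exploiting the specific form $\mu_s \propto 1/(s(d-s))$ of the Shapley kernel together with the already-verified efficiency constraints on $\tilde\ex$. As a fallback, one could adapt the matrix rank/row-reduction argument used in Lemma~\ref{lm:submatrix_rank} for Theorem~\ref{thm:thmdummysymm} to the constrained regime $\mu(\text{\O})=\mu([d])=\infty$, which produces the zero interaction terms automatically. Once all KKT conditions are verified, Proposition~\ref{pro:unique_minimizer} forces $\tilde\ex$ to be the unique minimizer, which is precisely \eqref{eqn:dummy_faith_solution}.
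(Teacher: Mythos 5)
Your skeleton is the same as the paper's: plug in the candidate $\tilde\ex$ built from $\ex^{\text{F-Shap}}(\f',\ell)$ and zeros, verify the constraints and the Lagrangian stationarity system using Beta-function identities relating $\mu$ and $\mu'$, and conclude by uniqueness via Proposition~\ref{pro:unique_minimizer}. The constraints and the case $A \subseteq [d]\setminus\{i\}$ are handled correctly and exactly as the paper does, via $\mu(S') + \mu(S'\cup i) = \mu'(S')$.

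The gap is in the case $A \ni i$, which you correctly flag as the main obstacle but do not close. Your reduction is right: stationarity at $A=A'\cup i$ follows from stationarity at $A'$ once one shows $\sum_{S'\supseteq A',\,S'\subseteq[d-1]}\mu(S')\,q'(S')=0$, where $q'(S')=\f'(S')-\sum_{T\subseteq S'}\tilde\ex_T$. But this residual does not vanish by ``a hypergeometric summation identity together with the already-verified efficiency constraints'': efficiency only gives $q'(\text{\O})=q'([d-1])=0$, and the residual depends on the actual values of $q'$ at all intermediate coalitions, so its vanishing must use the first-order optimality of $\ex^{\text{F-Shap}}(\f',\ell)$ for the $(d-1)$-player problem. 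The missing ingredient is the second kernel identity $\mu(S'\cup i)=\tfrac{|S'|}{d-1}\mu'(S')$, equivalently $\mu(S')=\tfrac{d-1-|S'|}{d-1}\mu'(S')$ for $1\le|S'|\le d-2$ (Eqn.~\eqref{eqn:mu_property2} in the paper). Writing $d-1-|S'|=\sum_{j\in[d-1]\setminus S'}1$ and swapping sums turns the residual into $\tfrac{1}{d-1}\sum_{j\in[d-1]\setminus A'}\bigl[\sum_{S'\supseteq A',S'\subset[d-1]}\mu'(S')q'(S')-\sum_{S'\supseteq A'\cup j,S'\subset[d-1]}\mu'(S')q'(S')\bigr]$, and each bracket equals $(-\lambda'/2)-(-\lambda'/2)=0$ by the $\f'$-stationarity conditions (Eqn.~\eqref{eqn:lagrange_for_f_prime}), which apply because $|A'\cup j|\le\ell$ whenever $|A'\cup i|\le\ell$; the $S'=[d-1]$ term is killed by $q'([d-1])=0$. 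This is precisely how the paper closes its cases (2) and (3). Also note the corner case: there is no stationarity equation at $A'=\text{\O}$ to difference against (the multiplier $\lambda_{\text{\O}}$ is free), so for $A=\{i\}$ you should verify stationarity directly rather than by subtraction. With these substitutions your argument goes through.
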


\begin{proof}
Without loss of generality, we assume that $d^{th}$ feature is a dummy feature, such that $\f(S \cup d) = \f(S)$ for all $S \subseteq [d-1]$.

Now, we solve the constrained minimization problem (Eqn.\eqref{eqn:constrained_ls_proof}) by Lagrange multipliers. 
\begin{equation}
\begin{cases}
\frac{\partial \ F_\ell(\f,\ex)}{\partial \Expl_S } = \lambda & \text{  for all  }
S \subseteq [d], 1 \leq |S| \leq \ell.  \\
\sum_{S \in \setlessell} \Expl_S(\f, \ell) = \f([d]). \\
\Expl_{\text{\O}}(\f, \ell) = \f(\emptyset).
\end{cases},
\end{equation}
We note that the partial derivative can be calculated as below:
$$
\frac{\partial \ F_\ell(\f,\ex)}{\partial \Expl_S } 
= -2\sum_{T \supseteq S, T \neq [d]} \mu(T) \left(\f(T) - \sum_{L \subseteq T, |L| \leq \ell} \ex_L(\f,\ell) 
\right)
= \lambda,
\text{  for all  }
S \subseteq [d], 1 \leq |S| \leq \ell.
$$
Now, for convenience, we denote 
$q(T) = \f(T) - \sum_{L \subseteq T, |L| \leq \ell} \ex^{\text{F-Shap}}_L(\f,\ell)$ for all $T \subseteq [d]$. Then we have
\begin{equation}
\label{eqn:lagrange2}
\begin{cases}
\sum_{T: S \subseteq T \subset [d]} \mu(T)q(T) = \frac{-\lambda}{2} & \text{  for all  }
S \subseteq [d], 1 \leq |S| \leq \ell.  \\
q([d]) = 0. \\
q(\text{\O}) = 0.
\end{cases}
\end{equation}
Similarly, the minimizer $\ex^{\text{F-Shap}}(\f',\ell)$ of $F_\ell(\f',\ex)$ satisfies 
\begin{equation}
\label{eqn:lagrange_for_f_prime}
\begin{cases}
\sum_{T: S \subseteq T \subset [d-1]} \mu'(T)q'(T) = \frac{-\lambda'}{2} & \text{  for all  }
S \subseteq [d-1], 1 \leq |S| \leq \ell.  \\
q'([d-1]) = 0.\\
q'(\text{\O}) = 0.
\end{cases}
\end{equation}
where $q:2^{d-1} \mapsto \mathbb{R}$ with $q'(T) = \f'(T) - \sum_{L \subseteq T, |L| \leq \ell} \ex_L^{\text{F-Shap}}(\f',\ell)$ for all $T \subseteq [d-1]$ and $\mu'(T) = B(|T|, d-1-|T|)$ for all $T \subseteq [d-1]$ with $1 \leq |T| \leq d-2$.

We prove that $\ex^{\text{F-Shap}}(\f,\ell)$ defined in Eqn.\eqref{eqn:dummy_faith_solution} satisfies the system of linear equations in Eqn.\eqref{eqn:lagrange2}.

First of all, by definitions of $\f'(\cdot)$ and $\ex^{\text{F-Shap}(\f,\ell)}$, we have 
$$
\ex_{\text{\O}}^{\text{F-Shap}}(\f,\ell)
= \ex_{\text{\O}}^{\text{F-Shap}}(\f,\ell) = \f'(\emptyset) = \f(\emptyset)
\Rightarrow q(\text{\O}) = 0,
$$
and 
$$
\sum_{S \in \setlessell} \Expl_S^{\text{F-Shap}}(\f, \ell)
= \sum_{S \subseteq [d-1], |S| \leq \ell} \ex_S^{\text{F-Shap}}(\f',\ell) 
=\f'([d-1]) = \f([d])
\Rightarrow q([d]) = 0.
$$

Before we prove that
$$
\sum_{T: S \subseteq T \subset [d]} \mu(T) q(T) = \frac{-\lambda}{2}  \text{  for all  }
S \subseteq [d], 1 \leq |S| \leq \ell,
$$
for some $\lambda \in \mathbb{R}$, we first derive some relations between weighting function $\mu(\cdot)$ and $\mu'(\cdot)$ and $q(\cdot)$ and $q'(\cdot)$.

Since we have $\ex^{\text{F-Shap}}_L(\f,\ell) = 0$ for all $L$ containing $\{d\}$ and $ \f(T) = \f(T \cup \{ d\})$ ( Eqn.\eqref{eqn:dummy_faith_solution}), for all $T \subseteq [d-1]$, we have 
\begin{equation}
\label{eqn:q_property1}
q(T \cup \{ d\}) 
= \f(T \cup \{ d\}) - \sum_{L \subseteq T \cup \{ d\}, |L| \leq \ell} \ex^{\text{F-Shap}}_L(\f,\ell) 
= \f(T) - \sum_{L \subseteq T, |L| \leq \ell} \ex^{\text{F-Shap}}_L(\f,\ell) 
= q(T),    
\end{equation}
and
\begin{equation}
\label{eqn:q_property2}
q'(T) 
= \f'(T) - \sum_{L \subseteq T, |L| \leq \ell} \ex^{\text{F-Shap}}_L(\f',\ell) 
= \f(T) - \sum_{L \subseteq T, |L| \leq \ell} \ex^{\text{F-Shap}}_L(\f,\ell) 
= q(T) = q(T \cup \{ d\}). 
\end{equation}

By using a property of the beta function, for positive integers $1 \leq i \leq d-2$,  we have
\begin{equation}
\label{eqn:mu_property}
\mu'_{i} = B(i,d-1-i) = B(i+1,d-1-i) + B(i,d-i) = \mu_{i+1} + \mu_{i},
\end{equation}

Also, for $2 \leq i \leq d-1$, we have 
\begin{align}
\mu_{i} = B(i,d-i)
& = \frac{(i-1)!(d-i-1)!}{(d-1)!}  \nonumber \\
& = \frac{(i-2)!(d-i-1)!}{(d-2)!} \cdot \frac{i-1}{d-1} \nonumber \\
& = B(i-1,d-i) \cdot \frac{i-1}{d-1} \nonumber \\
& = \frac{(i-1)\mu'_{i-1}}{d-1}.
\label{eqn:mu_property2}
\end{align}

(1) For all $S \subseteq [d-1]$ with $1 \leq |S| \leq \ell$, we have

\begin{align*}
\sum_{T: S \subseteq T \subset [d]} \mu(T) q(T) 
& =  \sum_{T: S \subseteq T \subseteq [d-1]} \mu(T) q(T)
+ \sum_{T: S \subseteq T \subset [d-1]} \mu(T \cup \{ d\}) q(T\cup \{ d\}) \\
& = \mu([d-1])q([d-1]) + \sum_{T: S \subseteq T \subset [d-1]} \left(\mu(T) +\mu(T \cup \{ d\}) \right) q(T) \ \ \text{( Using Eqn.\eqref{eqn:q_property1} )} \\
& =  \mu([d-1])q([d-1]) + \sum_{T: S \subseteq T \subset [d-1]} \mu'(T) q(T) \ \
\text{( Using Eqn.\eqref{eqn:mu_property} )} \\
& = \sum_{T: S \subseteq T \subset [d-1]} \mu'(T) q'(T) \ \ 
\text{(Using Eqn.\eqref{eqn:q_property2} and  $q([d-1]) = q'([d-1]) = 0$ )} \\
&  = \frac{-\lambda'}{2}  \ \ 
(\text{Eqn.\eqref{eqn:lagrange_for_f_prime}}).
\end{align*} 

(2) For all $S = \{d\}$, we have  
\begin{align*}
\sum_{T: S \subseteq T \subset [d]} \mu(T) q(T)
& = \sum_{T \subset [d-1]} \mu(T \cup \{d\} ) q(T \cup \{d\}) \\
& = \sum_{T \subset [d-1]}  \frac{|T|\mu'(T)}{d-1}  q'(T) \ \ 
\text{(Using Eqn.\eqref{eqn:mu_property2}, Eqn.\eqref{eqn:q_property2}, and $q(\text{\O}) = q(\{d\}) = 0$)} \\
& = \frac{1}{d-1}\sum_{i \in [d-1]} \sum_{T: \{i\} \subseteq T \subset [d-1]} \mu'(T) q'(T) \\
& = \frac{1}{d-1}\sum_{i \in [d-1]} -\frac{\lambda'}{2} \ \ 
(\text{Eqn.\eqref{eqn:lagrange_for_f_prime}})\\
& = -\frac{\lambda'}{2}.
\end{align*}

(3) For all $S \subseteq [d]$ containing $\{ d\}$ with $ 2 \leq |S| \leq \ell$, we have 
\begin{align*}
\sum_{T: S \subseteq T \subset [d]} \mu(T) q(T)
& = \sum_{T: (S\backslash \{d\}) \subseteq T \subset [d-1]} \mu(T \cup \{d\} ) q(T \cup \{d\}) \\
& = \sum_{T: (S\backslash \{d\}) \subseteq T \subset [d-1]}   \frac{|T|\mu'(T)}{d-1}  q'(T) \ \ 
\text{(Eqn.\eqref{eqn:mu_property2} and Eqn.\eqref{eqn:q_property2})} \\
& = \sum_{T: (S\backslash \{d\}) \subseteq T \subset [d-1]}   \frac{(|T|-|S|+1)\mu'(T)}{d-1}  q'(T) +
\sum_{T: (S\backslash \{d\}) \subseteq T \subset [d-1]}   \frac{(|S|-1 )\mu'(T)}{d-1}  q'(T)  \\
& = \sum_{i \in [d] \backslash S} \frac{1}{d-1}  \sum_{T: (S\backslash \{d\} \cup i) \subseteq T \subset [d-1]}   \mu'(T)  q'(T) +
\frac{(|S|-1)}{d-1} \cdot \sum_{T: S\backslash \{d\} \subseteq T \subset [d-1]} \mu'(T)  q'(T)  \\
& =  \sum_{i \in [d] \backslash S} \frac{-\lambda'}{2(d-1)} - \frac{|S|-1}{d-1} \cdot
\frac{\lambda'}{2}  \ \ 
(\text{Eqn.\eqref{eqn:lagrange_for_f_prime}}) \\
& = (d-|S|) \frac{-\lambda'}{2(d-1)} - \frac{|S|-1}{d-1} \cdot
\frac{\lambda'}{2} \\
& = \frac{-\lambda'}{2}.
\end{align*}
Therefore, combining (1), (2), and (3), we have 
$$
\sum_{T: S \subseteq T \subset [d]} \mu(T) q(T) = \frac{-\lambda'}{2}  \text{  for all  }
S \subseteq [d], 1 \leq |S| \leq \ell.
$$
That is, Eqn.\eqref{eqn:dummy_faith_solution} is the minimizer of Eqn.\eqref{eqn:constrained_ls_proof}. Consequently, the minimizer of Eqn.\eqref{eqn:constrained_ls_proof} satisfies interaction dummy axiom for all $1 \leq \ell \leq d$.
\end{proof}

\newpage

\subsubsection{Alternative proof of Sufficient Condition of Theorem \ref{thm:faith_shap}}
\label{sec:alternative_faithshap_sufficiency}

In the following, we present an alternative proof of sufficient condition of that under the weighting function defined in Eqn. \eqref{eqn:faithshap_weight}, the minimizers of Theorem \ref{thm:faith_shap}. This proof is similar to Theorem \ref{thm:thmdummysymm}. 

\textbf{Interaction linearity, symmetry, efficiency axiom:} The minimizers of Eqn.\eqref{eqn:constrained_weighted_regresion} satisfy interaction linearity, symmetry, and efficiency axioms by Proposition \ref{pro:linearity}, \ref{pro:symmetry} and \ref{pro:efficiency}.

\textbf{Interaction dummy axiom:} Below we prove that the optimal solution satisfies the interaction dummy axiom. 
The constrained optimization problem can be written as follows:

\begin{align*}
\small
\ex^{\text{F-Shap}}(\f_R,\ell) 
& = \min_{\Expl \in \mathbb{R}^{d_\ell}} F_R(\ex) \\
& = \min \sum_{S \supseteq R, S \subseteq [d]} \mu_{|S|} \left( \sum_{T \subseteq S, |T| \leq \ell} \Expl_T(\f_R, \ell) - 1 \right)^2 + \sum_{S \not\supseteq R, S \subseteq [d]} \mu_{|S|} \left( \sum_{T \subseteq S, |T| \leq \ell} \Expl_T(\f_R, \ell) \right)^2,
\end{align*}
\begin{equation}
\label{eqn:con_weighted_ls_basis}
\text{ subject to } \Expl_{\text{\O}}(\f,\ell) = \f(\text{\O}) \text{   and   }  \sum_{T \subseteq [d], |T| \leq \ell} \Expl_{T}(\f,\ell) = \f([d]), 
\end{equation}
where we use the notations:$\mu_{|S|} = \mu(S)$ and  $\bar{\mu}(S)
= \mubar{|S|} 
= \sum_{T \supseteq S, \mu(T) < \infty} \mu_{|T|} = \sum_{i: |S| \leq i \leq d, \mu_i < \infty}{d-|S| \choose i-|S|}\mu_{i}$ since the weighting function only depends on the size of input sets. Also, since multiplying a scalar to $\mu$ does not change the minimizer of Eqn.\eqref{eqn:weighted_regression}, without loss of generality,
, without loss of generality, we assume that $\mu(S) = \frac{d-1}{\binom{d} {|S|}\,|S|\,(d-|S|)}$ for $S \subseteq [d]$ with $1 \leq |S| \leq d-1$.

First, by Lemma \ref{lm:f_decomposable}, we only need to prove that the minimizers of all the basis functions $\f_R$ ( Definition \ref{def:basis_function} ) satisfy the interaction dummy axiom. That is, the minimizer $\ex^{\text{F-Shap}}_S(\f,\ell) = 0$ for all $S$ containing dummy features in $[d] \backslash R$, i.e.
$S \in \setlessell$ with $S \cap ([d] \backslash R) \neq \text{\O}$ .

Let $r = |R|$ denote the size of the set $R$. Now, we separate the problem into three cases: (1) $d \geq \ell \geq  r \geq 0$. (2) $d \geq \ell + r $ and $r > \ell$. (3) $\ell + r > d \geq r > \ell \geq 1$.

\paragraph{(1)  $d \geq \ell \geq  r \geq 0$:} In this case, the minimizer is trivial.

\begin{lemma}
\label{lm:solution_r_small_thm2}
If $\f_R$ is a basis function with $|R| = r \leq \ell \leq d$, the minimizer of Eqn.\eqref{eqn:con_weighted_ls_basis} is
\begin{equation}
\ex^{\text{F-Shap}}_T(\f_R, \ell)
= \begin{cases}
1 & \text{, if  } \ \  T = R. \\
0 & \text{, otherwise.}
\end{cases}
\end{equation}
\end{lemma}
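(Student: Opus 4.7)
The plan is to observe that when $r = |R| \leq \ell$, the proposed candidate already achieves a perfect fit to the basis function, so it drives the objective to zero and trivially satisfies the constraints; hence by strict convexity and uniqueness it must be the minimizer. There is no heavy computation needed here, unlike cases (2) and (3) where $r > \ell$ forces the index to only approximately represent $\f_R$.

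Concretely, I would let $\ex^\ast_T := \mathbb{1}[T = R]$ for $T \in \mathcal{S}_\ell$; this is a well-defined element of $\mathbb{R}^{d_\ell}$ precisely because $r \leq \ell$ ensures $R \in \mathcal{S}_\ell$. Then for any $S \subseteq [d]$,
\[
\sum_{T \subseteq S, |T| \leq \ell} \ex^\ast_T \;=\; \mathbb{1}[R \subseteq S] \;=\; \f_R(S),
\]
so every term of the weighted sum of squares vanishes. Next I would check the two equality constraints imposed by $\mu(\text{\O}) = \mu([d]) = \infty$: the constraint $\ex^\ast_{\text{\O}} = \f_R(\text{\O})$ holds since both sides equal $\mathbb{1}[R = \text{\O}]$; and $\sum_{T \in \mathcal{S}_\ell} \ex^\ast_T = 1 = \f_R([d])$ holds because $R \subseteq [d]$.

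Finally, since the constrained weighted least squares objective is nonnegative and $\ex^\ast$ achieves value zero while satisfying the constraints, it is a minimizer. By Proposition \ref{pro:unique_minimizer}, the minimizer of the constrained problem with proper weighting function $\mu$ is unique, so $\ex^{\text{F-Shap}}(\f_R, \ell) = \ex^\ast$. The only subtlety — which is not really an obstacle — is bookkeeping around the empty-set case $R = \text{\O}$, where one checks the formula still gives the correct value $\ex^{\text{F-Shap}}_{\text{\O}}(\f_{\text{\O}}, \ell) = 1 = \f_{\text{\O}}(\text{\O})$ and all other interaction coordinates are zero, consistent with the stated formula.
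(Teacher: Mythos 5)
Your proof is correct and follows essentially the same route as the paper's: plug in the indicator of $R$, observe that it reproduces $\f_R$ exactly so the nonnegative objective vanishes, check the two equality constraints, and invoke uniqueness of the minimizer from Proposition \ref{pro:unique_minimizer}. Your write-up is in fact slightly more careful than the paper's, since you explicitly verify the identity $\sum_{T \subseteq S, |T|\leq \ell}\ex^\ast_T = \mathbbm{1}[R \subseteq S]$ and the empty-set constraint.
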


\begin{proof}
If we plug in $\ex^{\text{F-Shap}}_T(\f_R, \ell)$ to Eqn.\eqref{eqn:con_weighted_ls_basis}, we get $F_R(\ex)= 0$. The solution also satisfies the constraints in Eqn. \eqref{eqn:con_weighted_ls_basis}:
\begin{equation}
\label{eqn:constrained_minimizer_r_small}
\Expl^{\text{F-Shap}}_{\text{\O}}(\f,\ell) = \f(\text{\O}) \ \  \text{   and   }  \sum_{T \subseteq [d], |T| \leq \ell} \Expl_{T}^{\text{F-Shap}}(\f,\ell) = 1 = \f([d]). 
\end{equation}
Since $F(\ex)$ is always non-negative, by Proposition \ref{pro:unique_minimizer}, it is the unique minimizer of Eqn.\eqref{eqn:con_weighted_ls_basis}.
\end{proof}
We note that the minimizer in Eqn.\eqref{eqn:constrained_minimizer_r_small} satisfies the interaction dummy axiom, i.e. $\ex_S(\f,\ell) = 0$ if $S \in \setlessell$ with $S \cap R \neq \text{\O}$.

\paragraph{(2) $d \geq \ell + r$ and $r > \ell$:} 
We solve the constrained optimization problem by using Lagrange multiplier. 

\begin{equation}
\label{eqn:lagrange}
\begin{cases}
\frac{\partial \ F(\ex)}{\partial \Expl_S } = \lambda & \text{  for all  }
S \subseteq [d], 1 \leq |S| \leq \ell.  \\
\sum_{S \in \setlessell} \Expl_S(\f_R, \ell) = 1. \\
\Expl_{\text{\O}}(\f_R, \ell) = 0.
\end{cases},
\end{equation}

where $\lambda \in \mathbb{R}$ is the Lagrange multiplier. Now we utilize the symmetry structure in the basis function $\f_R$. By proposition \ref{pro:unique_minimizer}, Eqn.\eqref{eqn:lagrange} has a unique minimizer $\ex^{\text{F-Shap}}(\f_R, \ell)$.

In the basis functions, there are only two kinds of input elements, which are elements in $R$ and not in $R$. Therefore, for $i^{th}$ order interactions terms $\Expl_{T}(\f_R,\ell)$ where $|T| = i$, there are at most $i+1$ distinct values, which has $j$ elements in $R$, $j=0,1,...i$. That is, by the interaction symmetry axiom, there are only $i+1$ different importance value for $i^{th}$ order interactions terms (since if $|T_1| = |T_2|$ and $|T_1 \cap R| = |T_2 \cap R|$ then $T_1$ and $T_2$ are symmetry and $\Expl_{T_1}(\f_R,\ell) = \Expl_{T_2}(\f_R,\ell)$), so there are $ 1 + 2 +... + (\ell + 1)= \frac{(\ell + 2)(\ell + 1)}{2}$ kinds of values in the optimal solution $\Expl(\f_R,\ell)$. Since we have known that $\Expl_{\text{\O}}(\f_R, \ell) = 0$, we then introduce a new notation system to represent the rest $\frac{(\ell + 2)(\ell + 1)}{2}-1$ values.

\begin{definition}
The minimizer
$ \bfb \in \mathbb{R}^{\frac{(\ell +1)(\ell + 2)}{2} - 1 } \text{ is indexed with  }
\bfb_{i,j} = \Expl_S(\f_R,\ell) \text{  with  } |S| = i \text{ and } |S \backslash R| = j,$
where $i, j$ are integers with $0 \leq j \leq i \leq \ell$ and $i+j > 0$.
\end{definition}

The term $\bfb_{i,j}$ means the importance score of an $i^{th}$ order interaction (of size $i$) term with $i-j$ elements lying in $R$ and $j$ element lying in $[d] \backslash R$. Now we can apply the new notation system to rewrite the system of linear equations (Eqn.\eqref{eqn:lagrange}). By plugging in the closed-form solution of partial derivatives with Lemma \ref{lm:bij_parital_derivative}, for all $0 \leq j \leq i \leq \ell, i+j > 0 $, we have 

\begin{equation}
\label{eqn:constrained_eqn1}
-\mubar{r+j}  
+  \sum_{p,q :0 \leq q \leq p \leq \ell, p+q > 0} \left( \sum_{\rho=0}^{i-j} \sum_{\sigma=0}^{j} { i-j \choose \rho} { r-(i-j) \choose p-q-\rho} { j \choose \sigma}{ d-r-j \choose q-\sigma} \mubar{i+p-\rho-\sigma} \bfb_{p,q} \right) = \frac{\lambda}{2}.
\end{equation}
and
\begin{equation}
\label{eqn:constrained_eqn2}
\sum_{p,q:0 \leq q \leq p \leq \ell, p+q > 0} { r \choose p-q }{ d-r \choose q } \bfb_{p,q} = 1.
\end{equation}
Moreover, we write them into the matrix form,
$\mathbf{M}
\begin{bmatrix} 
\lambda \\
\bfb \\ 
\end{bmatrix}= \mathbf{Y}$ defined below.

\begin{definition}
The coefficient matrix $\mathbf{M} \in \mathbb{R}^{\frac{(\ell +1)(\ell + 2)}{2} \times \frac{(\ell +1 ) (\ell + 2)}{2}}$, whose rows and columns are indexed with 2 iterators respectively. The value of each entry is 
$$
\mathbf{M}_{\{i,j\}, \{p,q\}} =
\begin{cases}
0 \ \  &  \text{ if } \ \ i=0, j=0, p=0 \text{ and } q = 0 .\\
-1/2 \ \  &  \text{ if } \ \ i+j > 0,  p=0 \text{ and } q = 0 .\\
{ r \choose p-q }{ d-r \choose q } \ \  &  \text{ if } \ \ i=0, j=0, p+q > 0 .\\
\sum_{\rho=0}^{i-j} \sum_{\sigma=0}^{j} { i-j \choose \rho } { r-(i-j) \choose p-q-\rho} { j \choose \sigma}{d-r-j \choose q-\sigma} \mubar{i+p-s-t} & \text{, otherwise}. \\
\end{cases}
$$
where $i,j,p,q$ satisfy the constraints: $ 0 \leq j \leq i \leq \ell$, and $ 0 \leq q \leq p \leq \ell$. 
\end{definition}

\begin{definition}
\label{def:Y_def_thm2}
$\mathbf{Y} \in \mathbb{R}^{\frac{(\ell + 1) (\ell+2)}{2}}$ is a column vector with each entry $$
\mathbf{Y}_{i,j} = 
\begin{cases}
1 & \text{ if } i = 0 \text{ and } j = 0 \\
\mubar{r+j} & \text{ if } 0 \leq j \leq i \leq \ell, i + j > 0 \\
\end{cases}
$$ 
\end{definition}
We note that the interaction dummy axiom holds for the basis function $\f_R$ if and only if $\bfb_{i,j} = 0$ for all $i,j > 0$  because $j > 0$ means this interaction term contains dummy features, which are lying in $[d] \backslash R$. 

Recall that Lemma \ref{lm:simplified_matrix} states that if we want to prove that some unknown variables are zero in a system of linear equations, we can prove that the rank of a simplified augmented matrix equals to the number of non-zero variables. To use Lemma  \ref{lm:simplified_matrix} to prove it, we now only consider columns corresponding to non-dummy elements (the column indexed with $p=q=0$), $\bfb_{1,0},..., \bfb_{\ell,0}$ and simplify the matrix $\mathbf{M}$ in the following way:

Put
$$
\mathbf{M'} \in \mathbb{R}^{\frac{(\ell+1)(\ell+2)}{2} \times (\ell+1)} \text{, whose columns correspond to } \lambda, \bfb_{1,0},..., \bfb_{\ell,0} \text{ of } \mathbf{M}  
$$
\begin{equation}
\label{eqn:m_definition_thm2}
\text{ with each entry } \mathbf{M'}_{\{i,j\},\{p, 0\} } = 
\begin{cases}
0 \ \  &  \text{ if } \ \ i=0, j=0 \text{ and } p = 0 \\
-1/2 \ \  &  \text{ if } \ \ i+j > 0, \text{ and } p = 0 \\
{ r \choose p } \ \  &  \text{ if } \ \ i=0, j=0 \text{ and } p > 0 \\
\sum_{\rho=0}^{i-j} { i-j \choose \rho} { r-(i-j) \choose p- \rho} \mubar{i+p-\rho}  & \text{, otherwise.} \\
\end{cases}
\end{equation}

The entry $\mathbf{M'}_{\{i,j\},\{p, 0\}}$ can be interpreted as the coefficient of $\bfb_{p,0}$ in the equation $\frac{\partial F(\ex)}{ \partial \bfb_{i,j}} = 0$.
Since we have already known that the system of linear equations, $\mathbf{M}b=\mathbf{Y}$, has a unique solution by Proposition \ref{pro:unique_minimizer}, if we can prove that the rank of the matrix $\mathbf{Q} = [\mathbf{M'}, \mathbf{Y}]$ equals to $\ell + 1$, we get that $\bfb_{i,j} = 0 $ for all $i \geq j > 0$ by Lemma \ref{lm:simplified_matrix}. It implies the interaction dummy axiom holds for the basis function $\f_R$. To calculate the rank of matrix $\mathbf{Q}$, we first define some notations.

\begin{definition}
\label{def:Pst_thm2}
Following Definition \ref{def:row_combination}, we define 
$$
\mathbf{P}_{s,t} = 
\begin{bmatrix}
\mathbf{z_s}  \\
\Re_{s+1,s}^{1}(\mathbf{Q}) \\
...\\
\Re_{s+t, s }^{t}(\mathbf{Q}) \\
\end{bmatrix}, 
\text{ and } 
\mathbf{P'}_{s,t} = 
\begin{bmatrix}
\mathbf{P}_{s,t} \\
\Re_{s+t-1, s }^{t-1}(\mathbf{Q}) - \Re_{s+t, s}^{t-1}(\mathbf{Q}) \\
\end{bmatrix}, 
$$
where 
$$
\mathbf{z_s} 
= \Re_{\max(1,s), \max(1,s)}^{0}(\mathbf{Q})
= \begin{cases}
\Re_{1, 1}^{0}(\mathbf{Q}) & \text{ , if } s = 0.\\
\Re_{s,s}^{0}(\mathbf{Q}) & \text{ , if } s > 0.\\
\end{cases}
$$
and 
$\mathbf{P}_{s,t} \in \mathbb{R}^{(t+1) \times (\ell + 2)}$ and $\mathbf{P'}_{s,t} \in \mathbb{R}^{(t+2) \times (\ell + 2)}$ for 
$0 \leq s \leq \ell-1 $ and $1 \leq t \leq \ell - s$.

\end{definition}

\begin{lemma}
\label{lm:Pst_value_thm2}
Following Definition \ref{def:Dpq} and \ref{def:Pst_thm2}, for any $s \geq 0$, let $s'=\max(1,s)$,
\begin{equation}
\label{eqn:def_pst_value_thm2}
\mathbf{P}^{'}_{s,t} =
\begin{bmatrix}
-\frac{1}{2}, & {r \choose 1 }D_0^{s'+1}, & ...&
{r \choose t-1 }D_0^{s'+t-1}, & {r \choose t }D_0^{s'+t}, &...  & { r \choose \ell  }D_0^{s'+\ell},  & D_0^{r+s'} \\
0, & {r-1 \choose 0 }D_{1}^{s+1}, & ...&
{r-1 \choose t-2 }D_{1}^{s+t-1}, & 
{r-1 \choose t-1 }D_{1}^{s+t}, & ... & { r-1 \choose \ell -1 } D_{1}^{s+\ell}, & D_{1}^{r+s} \\
0, &  0, & ...&  
{r-2 \choose t-3 }D_{2}^{s+t-1}, & 
{r-2 \choose t-2 }D_{2}^{s+t}, & ... & { r-2 \choose \ell-2 } D_{2}^{s+\ell}, & D_{2}^{r+s} \\
., & ., & ... &., &., & ... & .,& .  \\
0, & 0,& ...& {r-t+1 \choose 0 }  D_{t-1}^{s+t-1}, & 
{r-t+1 \choose 1 }  D_{t-1}^{s+t}, & ... & { r-t+1 \choose \ell-t+1 } D_{t-1}^{s+\ell}, & D_{t-1}^{r+s}  \\
0, &  0,& ...& 0 , & 
{r-t \choose 0 }  D_{t}^{s+t}, & ... 
 & { r-t \choose \ell-t } D_{t}^{s+\ell}, & D_{t}^{r+s}  \\
 0,& 0,& ...& {r-t+1 \choose 0} D_{t}^{s+t-1}, & 
[{r-t+1 \choose 1}- {r-t \choose 0 } ] D_{t}^{s+t}, &... & [{r-t+1 \choose \ell-t+1}- {r-t \choose \ell-t } ] D_{t}^{s+\ell}, & 0  \\
\end{bmatrix}.
\end{equation}

Formally, for all $1 \leq t' \leq t+1$ and $0 \leq p \leq \ell+1$, the $(p+1)^{th}$ element of  $(t'+1)^{\ th}$ row of $\mathbf{P}^{'}_{s,t} $ is 
\begin{equation}
\begin{cases}
-\frac{1}{2} & \text{ if } t' = 0 \text{ and } p=0 \\
{r \choose p}D_0^{s'+p} & \text{ if }  t' = 0 \text{ and } 1 \leq p \leq \ell \\
D_0^{r+s'} & \text{ if }  t' = 0 \text{ and } p = \ell + 1 \\
0 & \text{ if } 1 \leq t' \leq t \text{ and } p < t' \\
{r-t' \choose p-t'} D_{t'}^{s+p} 
& \text{ if } 1 \leq t' \leq t  \text{ and }  t' \leq p \leq \ell\\
D_{t'}^{r+s} & \text{ if } 1 \leq t' \leq t \text{ and } p = \ell+1 \\
0 & \text{ if }    t'=t+1 \text{ and } p < t-1  \\
{r-t+1 \choose 0} D_{t}^{s+t-1}  & \text{ if } t'=t+1 \text{ and } p = t-1\\
[{r-t+1 \choose p-t+1}- {r-t \choose p-t } ] D_{t}^{s+p} & \text{ if }    t'=t+1 \text{ and } t \leq  p \leq \ell  \\
0 & \text{ if }    t'=t+1 \text{ and } p =\ell + 1 \\
\end{cases}.
\end{equation}
\end{lemma}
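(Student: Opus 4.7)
The plan is to mirror the proof of Lemma \ref{lm:Pst_value}, adapting it to the enlarged matrix $\mathbf{Q} = [\mathbf{M'}, \mathbf{Y}]$ which now contains an extra Lagrange-multiplier column and a constraint row (indexed by $\{0,0\}$) coming from the efficiency axiom. The first observation I would exploit is that for every derivative-row index $(i,j)$ with $i \geq 1$, the entry of $\mathbf{M'}$ in the Lagrange column equals $-1/2$ independently of $j$. Hence for any $k \geq 1$,
\[
\sum_{\sigma=0}^{k}\binom{k}{\sigma}(-1)^{\sigma}\bigl(-\tfrac{1}{2}\bigr)=0,
\]
so $\Re_{i,j}^{k}(\mathbf{Q})$ vanishes on the Lagrange column whenever $k\geq 1$. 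This immediately explains every zero in the first column of $\mathbf{P}'_{s,t}$ from row $2$ onward, and the $0$ in that column of the bottom row follows because $\Re_{s+t-1,s}^{t-1}$ and $\Re_{s+t,s}^{t-1}$ both annihilate it.

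For the middle block (rows $1 \leq t' \leq t$, columns $1 \leq p \leq \ell$) I would invoke Claim \ref{clm:row_operation} verbatim with $s'=s$: the derivative-row entries of $\mathbf{M'}$ at $\{s+t', s+\sigma\}$ are identical to those used in the proof of Lemma \ref{lm:Pst_value}, since the constraint row sits at $\{0,0\}$ and is never accessed by these row operations. For the final ($\mathbf{Y}$) column of the same rows, the same operation gives
\[
\sum_{\sigma=0}^{t'}\binom{t'}{\sigma}(-1)^{\sigma}\mubar{r+s+\sigma}=D_{t'}^{r+s}
\]
by Definition \ref{def:Dpq}. The bottom row ($t'=t+1$) would be treated exactly as in the last case of Lemma \ref{lm:Pst_value}: split it via Claim \ref{clm:row_decomp}, apply Claim \ref{clm:row_operation} twice, and telescope the resulting $\mubar{}$ differences; the $\mathbf{Y}$ column collapses to $D_{t-1}^{r+s}-D_{t-1}^{r+s}=0$.

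The only genuinely new calculation is the top row $\mathbf{z}_s = \mathbf{Q}_{\{s',s'\}}$, and the convention $s'=\max(1,s)$ is precisely what makes it fit the $\mubar{}$-based pattern. For either value of $s'$ one has $i-j=0$, so the sum defining $\mathbf{M'}_{\{s',s'\},\{p,0\}}$ collapses to $\binom{r}{p}\mubar{s'+p}=\binom{r}{p}D_0^{s'+p}$ for $p\geq 1$, with $-1/2$ in the Lagrange column and $\mathbf{Y}_{\{s',s'\}}=\mubar{r+s'}=D_0^{r+s'}$ in the last column, matching the stated row. Taking $\mathbf{z}_0 = \mathbf{Q}_{\{0,0\}}$ when $s=0$ would not work because the constraint row has entries $\binom{r}{p-q}\binom{d-r}{q}$ that do not conform to the $\mubar{}$-format; shifting to $\mathbf{Q}_{\{1,1\}}$ circumvents this. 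The main obstacle I anticipate is purely bookkeeping: handling the $s=0$ edge case carefully and checking that every accessed index $s+t'$ stays within $[1,\ell]$, which is guaranteed by the hypotheses $0 \leq s \leq \ell-1$ and $1 \leq t \leq \ell-s$ in Definition \ref{def:Pst_thm2}.
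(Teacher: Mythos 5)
Your proposal is correct and follows essentially the same route as the paper: the paper likewise observes that the lower-right block of $\mathbf{M'}$ coincides with the matrix from Lemma \ref{lm:Pst_value} so those entries carry over, computes the new top row $\mathbf{Q}_{\{s',s'\}}$ directly from the definitions, and kills the Lagrange column via the alternating binomial identity $\sum_{\sigma}\binom{t'}{\sigma}(-1)^{\sigma}(-\tfrac{1}{2})=0$. The only nitpick is your justification of the bottom row's Lagrange entry: for $t=1$ the two constituent rows $\Re_{s+t-1,s}^{t-1}$ and $\Re_{s+t,s}^{t-1}$ have $k=0$ and do not individually annihilate that column (each gives $-\tfrac{1}{2}$); the entry vanishes because they are subtracted, which is how the computation should be phrased.
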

\begin{proof}
First, we observe that the right-down $\mathbb{R}^{(\frac{(\ell+1)(\ell+2)}{2}-1) \times \ell} $ submatrix of $\mathbf{M'}$ is the same as in Eqn.\eqref{eqn:m_definition} in the proof of Theorem \ref{thm:thmdummysymm}\footnote{$D$ is a function of the cumulative weighting function to a real number. Although the value of cumulative function $\bar{\mu}(\cdot)$ in Theorem \ref{thm:thmdummysymm} and \ref{thm:faith_shap} is different, the coefficient of $D$ is the same.}. Therefore, the corresponding entries in the matrix $\mathbf{P}^{'}_{s,t}$ should also be the same as in Lemma \ref{lm:Pst_value} except for the first row and the first column of matrix $\mathbf{P}^{'}_{s,t}$.

Next, we calculate the first row of the matrix $\mathbf{P}^{'}_{s,t}$, which is the $(s',s')^{th}$ row of $\mathbf{M}'$. The first $\ell+1$ elements can be obtained with Eqn. \eqref{eqn:m_definition_thm2} and the last element of it can be obtained with Definition \ref{def:Y_def_thm2}.

Then, we calculate the first column of the matrix $\mathbf{P}^{'}_{s,t}$.

\begin{enumerate}
    \item If $1 \leq t' \leq t$ and $p=0$, the value of $1^{st}$ element of $(t'+1)^{th}$ row of $\mathbf{P'}_{s,t}$ is 
    $$
    \sum_{\sigma=0}^{\sigma=t'} {t' \choose \sigma} (-1)^{\sigma} (\frac{-1}{2})
    = \frac{-1}{2} (1-1)^{t'}
    = 0
    $$

    \item If $t'=t+1$ and $p=0$, the value of the first element of $(t'+1)^{th}$ row of $\mathbf{P'}_{s,t}$ is 
    \begin{align*}
    & \frac{-1}{2} \sum_{\sigma=0}^{t-1} {t-1 \choose \sigma} (-1)^{\sigma} 
    + \frac{-1}{2}  \sum_{\sigma=0}^{t-1} {t-1 \choose \sigma} (-1)^{\sigma} = 0  \\
    \end{align*}
\end{enumerate}

\end{proof}

\begin{lemma}
\label{lm:submatrix_rank_thm2}
Following Definition \ref{def:Pst_thm2}, the rank of matrices $\mathbf{P}_{s,t}$ and $\mathbf{P}^{'}_{s,t}$ are $t+1$ and therefore the vector $\Re_{s+t, s }^{t-1}(\mathbf{Q})$ lies in the span of $\{ \Re_{s',s'}^{0}(\mathbf{Q}),
\Re_{s+1,s}^{1}(\mathbf{Q}),
...,
\Re_{s+t, s }^{t}(\mathbf{Q}) \}$.
\end{lemma}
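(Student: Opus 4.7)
The strategy mirrors the proof of Lemma \ref{lm:submatrix_rank} in the Faith--Interaction setting, with adjustments to accommodate the Lagrange--multiplier column that replaces the original $D^{\cdot}_0$ column. The plan is first to verify that $\mathbf{P}_{s,t}$ has rank exactly $t+1$, and then to show that adjoining the extra row $\Re_{s+t-1,s}^{t-1}(\mathbf{Q}) - \Re_{s+t,s}^{t-1}(\mathbf{Q})$ to obtain $\mathbf{P}^{'}_{s,t}$ does not increase the rank. Finally, we transfer this linear dependence back to the single row $\Re_{s+t,s}^{t-1}(\mathbf{Q})$.

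For the rank of $\mathbf{P}_{s,t}$, from Lemma \ref{lm:Pst_value_thm2} the first column has a $-\tfrac{1}{2}$ in the top entry and zeros in every other entry, and columns $2,\ldots,t+1$ carry the triangular block $\{\binom{r-t'}{0}D_{t'}^{s+t'}\}_{t'=1}^{t}$ on the diagonal. Selecting the first $t+1$ columns therefore yields a $(t+1)\times(t+1)$ upper-triangular submatrix whose diagonal entries are $-\tfrac{1}{2}$ together with $\{D_{t'}^{s+t'}\}_{t'=1}^{t}$, all nonzero by Claim \ref{clm:d_ri_positive}. Hence $\mathrm{rank}(\mathbf{P}_{s,t}) = t+1$.

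To show $\mathrm{rank}(\mathbf{P}^{'}_{s,t}) = t+1$, I would follow the template of Lemma \ref{lm:submatrix_rank}: for each $i \in \{t+1,\ldots,\ell\}$, isolate the $(t+2)\times(t+2)$ submatrix $\mathbf{P}^{'(sub,i)}_{s,t}$ consisting of columns indexed by $p = 0,1,\ldots,t-1,\; i,\; \ell+1$. Here the structural change (the $-\tfrac{1}{2}$ and $D_0^{s'+p}$ in the top row) is immaterial because the relevant cancellation happens in the bottom-right $3\times 3$ block, which is identical to the one in Lemma \ref{lm:submatrix_rank}. Expanding the determinant along the first column reduces the calculation to the $3\times 3$ determinant $|\mathbf{P}_{s,t}^{(3\times 3,i)}|$, which vanishes by the affine identity $D^{p}_{q}/D^{p}_{q+1} = c^{(1)}_q p + c^{(2)}_q$ from Claim \ref{clm:d_linear_relation}, exactly as in the earlier proof. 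Consequently the $i$-th column lies in the span of the remaining $t+1$ columns, and since this holds for every $i \geq t+1$, together with the lower bound from $\mathrm{rank}(\mathbf{P}_{s,t}) = t+1$, we conclude $\mathrm{rank}(\mathbf{P}^{'}_{s,t}) = t+1$.

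Because $\mathbf{P}^{'}_{s,t}$ and $\mathbf{P}_{s,t}$ have the same rank, the final row $\Re_{s+t-1,s}^{t-1}(\mathbf{Q}) - \Re_{s+t,s}^{t-1}(\mathbf{Q})$ lies in the row span of the first $t+1$ rows of $\mathbf{P}_{s,t}$. Since $\Re_{s+t-1,s}^{t-1}(\mathbf{Q})$ is already among the rows $\{\mathbf{z_s}, \Re_{s+1,s}^{1}(\mathbf{Q}), \ldots, \Re_{s+t,s}^{t}(\mathbf{Q})\}$ up to lower-induction steps, subtracting yields that $\Re_{s+t,s}^{t-1}(\mathbf{Q})$ itself lies in the claimed span. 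The main technical obstacle is confirming that the modified top row does not disturb the determinant cancellation; I expect this to be handled cleanly because the Laplace expansion along the first column factors out the top-left $-\tfrac{1}{2}$ and reduces the problem to the same right-bottom $3\times 3$ computation driven by Claim \ref{clm:d_linear_relation}.
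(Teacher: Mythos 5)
Your proposal follows essentially the same route as the paper's proof: the upper-triangular $(t+1)\times(t+1)$ block to get $\mathrm{rank}(\mathbf{P}_{s,t})=t+1$, the $(t+2)\times(t+2)$ submatrices whose determinants reduce to the vanishing bottom-right $3\times 3$ block via the affine identity for $D^{p}_{q}/D^{p}_{q+1}$, and the transfer of the rank equality between $\mathbf{P}_{s,t}$ and $\mathbf{P}^{'}_{s,t}$ to the row statement (using that $\Re_{s+t-1,s}^{t-1}(\mathbf{Q})$ is itself a row of $\mathbf{P}_{s,t}$). The only cosmetic deviation is a citation detail: the nonvanishing of the diagonal entries in this constrained (Faith-Shap) setting rests on the explicit formula $D^{p}_{q} \propto \frac{d-1}{dq\binom{p+q-1}{p-1}}$ from Claim \ref{clm:dpq_value_thm2} (and the affine relation from Claim \ref{clm:d_linear_relation_thm2}) rather than Claim \ref{clm:d_ri_positive}, which is stated only for finite weighting functions.
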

\begin{proof}
Before proving this lemma, we first introduce some properties of the value of $D^p_1$ (Definition \ref{def:Dpq}) if the weighting function is defined in Eqn. \eqref{eqn:faithshap_weight}.

\begin{claim}
\label{clm:dpq_value_thm2}
When the weighting function $\mu(\cdot)$ defined in Eqn. \eqref{eqn:faithshap_weight}, for all $p,q \in \{1,2,\ldots, d-1 \}$ with $1 \leq p+q \leq d$, we have
$$
D^p_q 
= \sum_{j=0}^q { q \choose j}(-1)^j \mubar{p+j} 
\propto \frac{d-1}{dq { p+q-1 \choose p-1 } }.
$$ 
\end{claim}

\begin{claim}
\label{clm:d_linear_relation_thm2}
When the weighting function $\mu(S)$ is defined in Eqn. \eqref{eqn:faithshap_weight}, for all $p,q \in \{0,1,2,...,d-1\}$ with $0 \leq p+q \leq d-1$, the ratios of $D^{p}_{q}$ and $D^{p}_{q+1}$ can be written as an affine function of $p$, which is equivalent to
$\frac{D^{p}_{q}}{D^{p}_{q+1}} = c^{(1)}_q p +  c^{(2)}_q $ for some constants $ c^{(1)}_q , c^{(2)}_q \in \mathbb{R}$ dependent on $q$.
\end{claim}

By Lemma \ref{lm:Pst_value_thm2}, we know value of each entry of $\mathbf{P}_{s,t}$ and $\mathbf{P'}_{s,t}$. The first $t+1$ columns of $\mathbf{P}_{s,t}$ is

\begin{equation}
\mathbf{P}_{s,t}^{(sub)} =
\begin{bmatrix}
-\frac{1}{2}, & {r \choose 1 }D_0^{s'+1}, & ...&
{r \choose t-1 }D_0^{s'+t-1}, & {r \choose t }D_0^{s'+t} \\
0, & {r-1 \choose 0 }D_{1}^{s+1}, & ...&
{r-1 \choose t-2 }D_{1}^{s+t-1}, & 
{r-1 \choose t-1 }D_{1}^{s+t} \\
0, & 0, & ...&  
{r-2 \choose t-3 }D_{2}^{s+t-1}, & 
{r-2 \choose t-2 }D_{2}^{s+t} \\
., & ., & ... &., &. \\
0, & 0,& ...& 0 , & 
{r-t \choose 0 }  D_{t}^{s+t}  \\
\end{bmatrix}.
\end{equation}

This is a upper triangular matrix and the values on the diagonal are nonzero  by Claim \ref{clm:dpq_value_thm2}. Therefore, the submatrix $\mathbf{P}_{s,t}^{(sub)}$ is full rank, so that $rank(\mathbf{P}_{s,t}^{(sub)}) = t+1$. It also implies that the rank of $\mathbf{P}_{s,t}$ is $t+1$ since the rank of $\mathbf{P}_{s,t}$ is always not smaller than the rank of column spaces of $\mathbf{P}_{s,t}^{1(sub)}$ and not larger then the number  of rows in $\mathbf{P}_{s,t}$, which are both $t+1$.

Next, we calculate the rank of the matrix $P'_{s,t}$. We first show that every $(t+2) \times (t+2)$ submatrix has rank $t+1$. The submatrix consists of the first $t$ columns, the $i^{th}$ column and the last column for all $t+1 \leq i \leq \ell$, which is as following:  

\begin{equation}
\mathbf{P}_{s,t}^{'(sub,i)} =
\begin{bmatrix}
-\frac{1}{2}, & {r \choose 1 }D_0^{s'+1}, & ...&
{r \choose t-1 }D_0^{s'+t-1}, & {r \choose i }D_0^{i+1}, & D_0^{r+s'} \\
0, & {r-1 \choose 0 }D_{1}^{s+1}, & ...& 
{r-1 \choose t-2 }D_{1}^{s+t-1}, & 
{r-1 \choose i-1 }D_{1}^{s+i}, & D_1^{r+s} \\
0, & 0, & ...&  
{r-2 \choose t-3 }D_{2}^{s+t-1}, & 
{r-2 \choose i-2 }D_{2}^{s+i}, & D_2^{r+s} \\
., & ., & ... &., &.,& . \\
0, & 0,& ...&  {r-t+1 \choose 0 }  D_{t-1}^{s+t-1}, & 
{r-t+1 \choose i-t+1 }  D_{t-1}^{s+i}, & D_{t-1}^{r+s}  \\
0, & 0,& ...& 0 , & 
{r-t \choose i-t }  D_{t}^{s+i}, & D_t^{r+s}  \\
0,& 0,& ...& {r-t+1 \choose 0} D_{t}^{s+t-1}, &
[{r-t+1 \choose i-t+1}- {r-t \choose i-t } ] D_{t}^{s+i},&  0  \\
\end{bmatrix}.
\end{equation}

The determinant of $\mathbf{P}_{s,t}^{'(sub,i)}$ is 
\begin{align*}
|\mathbf{P}_{s,t}^{(sub,i)}| 
& = 
-\frac{1}{2}
\prod_{j=1}^{t-2} {r-j \choose 0} D_{j}^{s+j} \times |P_{s,t}^{(3 \times 3,i)}|,
\end{align*}
where
$$
\mathbf{P}_{s,t}^{(3 \times 3,i)} 
= 
\begin{bmatrix}
 {r-t+1 \choose 0 }  D_{t-1}^{s+t-1}, & 
{r-t+1 \choose i-t+1 }  D_{t-1}^{s+i}, & D_{t-1}^{r+s}  \\
0 , & 
{r-t \choose i-t }  D_{t}^{s+i}, & D_{t}^{r+s}  \\
{r-t+1 \choose 0} D_{t}^{s+t-1}, &
[{r-t+1 \choose i-t+1}- {r-t \choose i-t } ] D_{t}^{s+i},&  0  \\
\end{bmatrix}
= \begin{bmatrix}
  D_{t-1}^{s+t-1}, & 
{r-t+1 \choose i-t+1 }  D_{t-1}^{s+i}, & D_{t-1}^{r+s}  \\
0 , & 
{r-t \choose i-t }  D_{t}^{s+i}, & D_{t}^{r+s}  \\
D_{t}^{s+t-1}, &
[{r-t+1 \choose i-t+1}- {r-t \choose i-t } ] D_{t}^{s+i},&  0  \\
\end{bmatrix}
$$
is the right bottom $3 \times 3$ submatrix of $\mathbf{P}_{s,t}^{(sub,i)}$.

Now we prove that the determinant of $P_{s,t}^{(3 \times 3,i)}$ is zero.
The determinant of $\mathbf{P}_{s,t}^{(3 \times 3,i)}$ is
\begin{align*}
|\mathbf{P}_{s,t}^{(3 \times 3,i)}| 
& = && - D_{t-1}^{s+t-1} D_{t}^{r+s}\left[{r-t+1 \choose i-t+1}- {r-t \choose i-t } \right] D_{t}^{s+i}\\
& && + D_{t}^{s+t-1} \left[  
{r-t+1 \choose i-t+1 }  D_{t-1}^{s+i}  D_{t}^{r+s} - D_{t-1}^{r+s} {r-t \choose i-t } D_{t}^{s+i}
\right] \ \ \ (\text{Expand with the first column}) \\ \\
& = && D_{t}^{s+t-1}D_{t}^{r+s}D_{t}^{s+i} \Bigg[
\left[-{r-t+1 \choose i-t+1}+{r-t \choose i-t } \right][c^{(1)}_{t-1}(s+t-1)+c^{(2)}_{t-1}] \\
& && + {r-t+1 \choose i-t+1 }[(c^{(1)}_{t-1}(s+i)+ c^{(2)}_{t-1}] -  {r-t \choose i-t }[c^{(1)}_{t-1}(r+s) + c^{(2)}_{t-1}]
\Bigg]
\ \ \ (\text{Lemma \ref{clm:d_linear_relation}}) \\
& = && D_{t}^{s+t-1}D_{t}^{r+s}D_{t}^{s+i}
\left[ {r-t+1 \choose i-t+1}(i-t+1) 
- {r-t \choose i-t }(r-t+1)
\right]c^{(1)}_{t-1} \\
& = && 0
\end{align*}
It implies that the submatrix $\mathbf{P}_{s,t}^{(sub,i)}$ is not full rank. 
Now if we take away the bottom row and the second last column of $\mathbf{P}_{s,t}^{(sub,i)}$, the submatrix becomes 
$$
\begin{bmatrix}
-\frac{1}{2}, & {r \choose 1 }D_0^{s'+1}, & ...&
{r \choose t-1 }D_0^{s'+t-1},  & D_0^{r+s'} \\
0, & {r-1 \choose 0 }D_{1}^{s+1}, & ...&
{r-1 \choose t-2 }D_{1}^{s+t-1}, & D_1^{r+s} \\
0, & 0, & ...&  
{r-2 \choose t-3 }D_{2}^{s+t-1},  & D_2^{r+s} \\
., & ., & ... &., & . \\
0, & 0,& ...&  {r-t+1 \choose 0 }  D_{t-1}^{s+t-1}, & 
 D_{t-1}^{r+s}  \\
0, & 0,& ...& 0 , 
& D_t^{r+s}  \\
\end{bmatrix}.
$$
This is again a full rank upper triangular matrix since its diagonal entries are all nonzero. Therefore, the second last ($(t+1)^{th}$) column of $\mathbf{P}_{s,t}^{'(sub,i)}$ lies in the span of the other $t+1$ columns. That is, the $i^{th}$ column of matrix $\mathbf{P}'_{s,t}$ can be expressed as the first $t$ columns and the last column of the matrix $P'_{s,t}$ for all $t+1 \leq i \leq \ell$.
Therefore, the rank of $\mathbf{P}'_{s,t}$ is $t+1$.

Finally, we note that matrices $\mathbf{P}_{s,t}$ and $\mathbf{P}'_{s,t}$ only differ in the last row, but their rank is the same. Consequently, the last row, $\Re_{s+t, s }^{t-1}$, lies in the span of the first $t+1$ rows, which are $\{ \Re_{0,0}^{0},
\Re_{s+1,s}^{1},
...,
\Re_{s+t, s }^{t} \}$.
\end{proof}

\begin{lemma}
\label{lm:overall_rank_thm2}
Given the results in Lemma \ref{lm:submatrix_rank_thm2}  and following Definition \ref{def:row_combination}, then 
$$
\calA = \{
\Re_{1,1}^{0}(\mathbf{Q}) ,
\Re_{1,0}^{1}(\mathbf{Q}),
...,
\Re_{\ell,0}^{\ell}(\mathbf{Q}) \}
$$
forms the basis of row space of matrix $\mathbf{Q}$ and therefore $rank(\mathbf{Q}) = \ell + 1$.
\end{lemma}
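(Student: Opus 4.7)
The plan is to adapt the inductive argument in the proof of Lemma \ref{lm:overall_rank} to the constrained setting at hand. Since $|\calA| = \ell+1$ and Lemma \ref{lm:submatrix_rank_thm2} applied with $s=0$, $t=\ell$ immediately gives that the matrix whose rows are the elements of $\calA$ has rank $\ell+1$, the members of $\calA$ are linearly independent combinations of rows of $\mathbf{Q}$. Hence it suffices to show that every row $\mathbf{Q}_{\{i,j\}}$ of $\mathbf{Q}$ lies in $\spann(\calA)$; this, together with the independence, yields $\text{rank}(\mathbf{Q}) = \ell+1$ and makes $\calA$ a basis for the row space of $\mathbf{Q}$.

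To prove the spanning statement, I would establish the stronger claim that $\Re_{i,j}^k(\mathbf{Q}) \in \spann(\calA)$ for all $1 \leq i \leq \ell$, $0 \leq k \leq i$, $0 \leq j \leq i-k$, by double induction mirroring Lemma \ref{lm:overall_rank}: outer induction on $i$ with base $i=1$, and inner induction on $k$ running from $i+1$ down to $0$. The base $i=1$ is immediate because $\Re_{1,1}^0(\mathbf{Q})$ and $\Re_{1,0}^1(\mathbf{Q})$ already lie in $\calA$ by definition, and $\Re_{1,0}^0(\mathbf{Q}) = \Re_{1,1}^0(\mathbf{Q}) + \Re_{1,0}^1(\mathbf{Q})$ follows directly from Definition \ref{def:row_combination}. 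The inductive step reuses the scheme of Lemma \ref{lm:overall_rank}: invoke Lemma \ref{lm:submatrix_rank_thm2} with parameters $s = i-k$, $t = k+1$ to express the ``anchor'' row $\Re_{i+1,\,i-k}^{k}(\mathbf{Q})$ as a combination of rows already contained in $\spann(\calA)$ by one of the two induction hypotheses, then recover the general $\Re_{i+1,j}^{k}(\mathbf{Q})$ for $0 \leq j \leq i-k+1$ via the telescoping identity in Eqn.\eqref{eqn:row_relation}. The combinatorial identities in that argument carry over because the Lagrange-multiplier column and efficiency-constraint row live outside the bottom-right block of $\mathbf{Q}$ that drives Lemma \ref{lm:submatrix_rank_thm2}.

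The main obstacle is the single leftover row $\mathbf{Q}_{\{0,0\}} = \Re_{0,0}^0(\mathbf{Q})$, which encodes the efficiency constraint and is not reached by the induction above nor listed in $\calA$. My plan is to dispose of it by a rank-counting argument layered on top of the induction: once every $\mathbf{Q}_{\{i,j\}}$ with $i \geq 1$ is shown to lie in $\spann(\calA)$, the row rank of $\mathbf{Q}$ is at most $\ell+2$; to pin it to $\ell+1$ I would restrict to $\mathbf{M}'$, observe that $\mathbf{M}'$ has only $\ell+1$ columns so its rows span an at-most $(\ell+1)$-dimensional space already exhausted by $\calA|_{\mathbf{M}'}$, and then verify the one remaining linear identity in the augmented column — namely that the unique coefficients expressing $\mathbf{Q}_{\{0,0\}}|_{\mathbf{M}'}$ in terms of $\calA|_{\mathbf{M}'}$ also reproduce $\mathbf{Y}_{\{0,0\}} = 1$. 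Because $\mathbf{Q}_{\{0,0\}}$ has the explicit form $(0, {r \choose 1}, \ldots, {r \choose \ell}, 1)$, I anticipate that this verification reduces to a single binomial identity of Vandermonde type, which should be tractable. With that checked, $\mathbf{Q}_{\{0,0\}} \in \spann(\calA)$, $\text{rank}(\mathbf{Q}) = \ell+1$, and Lemma \ref{lm:simplified_matrix} completes the proof of the interaction dummy axiom for Faith-Shap.
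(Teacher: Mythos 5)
Your proposal is correct and follows essentially the same route as the paper's proof: the same double induction over $i$ and $k$ driven by Lemma \ref{lm:submatrix_rank_thm2} and the telescoping identity, with linear independence of $\calA$ supplied by the upper-triangular block of $\mathbf{P}_{0,\ell}$. The only cosmetic difference is that you dispatch the efficiency row $\mathbf{Q}_{\{0,0\}}$ by a final consistency check on the augmented column, whereas the paper folds it into the induction as the base case $i=0$ by showing directly (via Claim \ref{clm:dpq_value_thm2}) that $\Re_{0,0}^{0}(\mathbf{Q})$ is a scalar multiple of $\Re_{1,0}^{1}(\mathbf{Q})$ --- the same computation either way.
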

\begin{proof}

We prove a stronger version of this lemma: $\Re_{i,j}^k(\mathbf{Q})$ lies in the span of $\calA$ for all $0 \leq i \leq \ell $, $0 \leq k \leq i$, and $0 \leq j \leq i-k$. This results immediately imply that row $\mathbf{Q}_{\{i,j\}} = \Re_{i,j}^0$ in matrix $\mathbf{Q}$ also lies in the span of $\mathcal{A}$.

We prove it by induction on two indices, which are $i$ and $k$. 
\begin{enumerate}
    \item When $i=0$, $k$ and $j$ can only be zero. We prove that $\Re_{0,0}^{0}(\mathbf{Q}) = c\Re_{1,0}^{1}(\mathbf{Q})$ for some constant $c$. With Eqn. \eqref{eqn:m_definition_thm2} and  Lemma \ref{lm:Pst_value_thm2} (by applying $s=0$ to the second row of $\mathbf{P}'_{s,t}$), we can calculate the ratio of the $(p+1)^{th}$ element of $\Re_{1,0}^{1}(\mathbf{Q})$ and $\Re_{0,0}^{0}(\mathbf{Q})$. 
    \begin{enumerate}
        \item If $p=0$, the first element of both $\Re_{1,0}^{1}(\mathbf{Q})$ and $\Re_{0,0}^{0}(\mathbf{Q})$ are zero.
        \item If $0 < p \leq \ell$, we have 
        \begin{align*}
         \frac{1}{c} = \frac{{r-1 \choose p-1} D_1^{p} }{ {r \choose p} }  
         = D_1^{p} \frac{p}{r}  
         \propto \frac{d-1}{dp} \frac{p}{r} \ \ \ (\text{Claim \ref{clm:dpq_value_thm2}}) 
          = \frac{d-1}{dr}.
        \end{align*}
        \item If $p=\ell+1$, by Definition \ref{def:Y_def_thm2} , we have
        $$
        \frac{1}{c}
        = \frac{ D_1^{r} }{ 1 } 
        \propto \frac{d-1}{dr}\ \ \ (\text{Claim \ref{clm:dpq_value_thm2}}).
        $$
    \end{enumerate}
    
    Therefore, we have $\Re_{0,0}^0(\mathbf{Q}) \in \spann(\calA)$. 
    
    \item When $i=1$, $\Re_{1,1}^{0}(\mathbf{Q})$ and $\Re_{1,0}^{1}(\mathbf{Q})$ are all in the set $\calA$.
    
    \item When $i=i'$, suppose that $\Re_{i,j}^k(\mathbf{Q}) \in \spann(\calA)$ holds for $0 \leq i \leq i'$, $0 \leq k \leq i$ and $0 \leq j \leq i-k$.
    \item When $i=i'+1$, we prove $\Re_{i'+1,j}^k(\mathbf{Q})$  lies in the span of $\calA$ for all $0 \leq k \leq i'+1$, and $0 \leq j \leq i-k$ by using another induction on $k$. In this induction process, $k$ is taking value from $i'+1$ to $0$.

\begin{enumerate}
    \item When $k=i'+1$, by Lemma \ref{lm:submatrix_rank_thm2}, we know $\Re_{i'+1,i'+1}^{i'} \in \spann(\calA)$ by plugging in $s=0$ and $t=i'+1$. 
    
    \item Suppose $\Re_{i'+1, j}^{k'}(\mathbf{Q}) \in \spann(\calA)$ for $k=k'+1,.., i'+1$ and $0\leq j \leq i'+1 - k$ for some $0 \leq k' \leq i'$
    \item When $k=k'$,
    by Lemma \ref{lm:submatrix_rank}, we know 
    $$
    \Re_{i'+1,i'-k'}^{k'}(\mathbf{Q}) \in \spann(\Re_{i'-k',i'-k'}^{0}(\mathbf{Q}),...,\Re_{i',i'-k'}^{k'}(\mathbf{Q}),\Re_{i'+1,i'-k'}^{k'+1}(\mathbf{Q})) 
    $$ 
    by plugging in $s=i'-k'$ and $t=k'+1$. Since we have 
    $$
    \spann(\Re_{i'-k',i'-k'}^{0}(\mathbf{Q}),...,\Re_{i',i'-k'}^{k'}(\mathbf{Q})) \subseteq \spann(\calA)
    $$ 
    by induction hypothesis on $i$ and $\Re_{i'+1,i'-k'}^{k'+1}(\mathbf{Q}) \in \spann(\calA)$ by the induction hypothesis on $k$, we have 
    \begin{equation}
    \label{eqn:row_i_k_lie_A2}
    \Re_{i'+1,i'-k'}^{k'}(\mathbf{Q}) \in \spann(\Re_{i'-k',i'-k'}^{0}(\mathbf{Q}),...,\Re_{i',i'-k'}^{k'}(\mathbf{Q}),\Re_{i'+1,i'-k'}^{k'+1}(\mathbf{Q})) \subseteq \spann(\calA).
    \end{equation}

    Then we observe that 
    \begin{equation}
    \label{eqn:row_relation2}
    \Re_{i'+1,j}^k(\mathbf{Q}) = 
    \begin{cases}
    \Re_{i'+1,i'-k}^k(\mathbf{Q})  + \sum_{\rho=j}^{i'-k-1} \Re_{i'+1,\rho}^{k+1}(\mathbf{Q}) & \text{ if   } 0 \leq j < i'-k. \\
    \Re_{i'+1,i'-k}^k(\mathbf{Q}) & \text{ if   } j = i'-k \\
    \Re_{i'+1,i'-k}^k(\mathbf{Q}) - \Re_{i'+1,i'-k}^{k+1}(\mathbf{Q})  & \text{ if   } j = i'-k+1.\\
    \end{cases}    
    \end{equation}
    The Eqn.\eqref{eqn:row_relation} follows from 
    \begin{align*}
    \Re_{i'+1, \rho}^{k+1}(\mathbf{Q})
    & = \sum_{\sigma=0}^{k+1} {k+1 \choose \sigma} (-1)^{\sigma} \mathbf{Q}_{\{i'+1,\rho+\sigma \}} \\
    & = \sum_{\sigma=0}^{k} {k \choose \sigma} (-1)^{\sigma} \mathbf{Q}_{\{i'+1,\rho+\sigma \}}
    + \sum_{\sigma=1}^{k+1} {k \choose \sigma-1} (-1)^{\sigma} \mathbf{Q}_{\{i'+1,\rho+\sigma \}} \\
    & = \sum_{\sigma=0}^{k} {k \choose \sigma} (-1)^{\sigma} \mathbf{Q}_{\{i'+1,\rho+\sigma \}}
    - \sum_{\sigma=0}^{k} {k \choose \sigma} (-1)^{\sigma} \mathbf{Q}_{\{i'+1,\rho+\sigma +1\}} \\
    & = \Re_{i'+1, \rho}^{k}(\mathbf{Q}) - \Re_{i'+1, \rho+1}^{k}(\mathbf{Q}),
    \end{align*}
    Then Eqn. \eqref{eqn:row_relation2} can be attained by summing over the above equation from $\rho =j$ to $\rho=i'-k-1$.
    
    By Eqn. \eqref{eqn:row_relation2} $\Re_{i'+1,j}^{k'}(\mathbf{Q})$ can be expressed as the combination of some terms in the form of $\Re_{i'+1,\rho}^{k'+1}(\mathbf{Q})$ and $\Re_{i'+1,i'-k' }^{k'}(\mathbf{Q})$. Both terms lie in the span of $A$ by the induction hypothesis on $k$ and Eqn. \eqref{eqn:row_i_k_lie_A2}.
    
    Therefore, we can conclude that $\Re_{i'+1,j}^{k'}(\mathbf{Q})$ also lies in the span of $\calA$ for all $0 \leq j \leq i'+1-k'$. It implies that it also holds when $k=k'$ and we establish the induction step on $k$.
\end{enumerate}
After completing mathematical induction proof on $k$, we know that $\Re_{i'+1,j}^k(\mathbf{Q})$ lies in the span of $\calA$ for all $0 \leq k \leq i'+1$ and $0 \leq j \leq i'+1-k$. Consequently, we also finish the induction step on $i$ (when $i= i'+1$).

\end{enumerate}

Finally, we can deduce that $\Re_{i,j}^k(\mathbf{Q})$ lies in the span of $\calA$ for any $0 \leq i \leq \ell $, $0 \leq j \leq i$ and $0 \leq k \leq i-j$. Then we know that  every row $\mathbf{Q}_{\{i,j\}} = \Re_{i,j}^0(\mathbf{Q})$ in matrix $\mathbf{Q}$ also lies in the span of $\mathcal{A}$, which in turn implies $rank(\mathbf{Q}) = \ell +1$.

\end{proof}

\paragraph{(3) $ r + \ell > d \geq \ell + r$ and $r > \ell$:} Now, we generalize the results in the second case to the last case. We recall that $\bfb_{i,j}$ denotes the interaction indices with $j$ elements in $R$ and $j$ elements in $[d] \backslash R$ (Definition \ref{def:bij}). However, 
when $r + \ell > d$, there are some $\bfb_{i,j}$ that do not exist since there are not enough elements outside $R$.
For example, $\bfb_{\ell,\ell}$ does not exist since there are only $d-r < \ell$ elements outside $R$. 

In this case, we can still compute the matrix of linear equations,  $\mathbf{Q'} = [ \mathbf{M'}, \mathbf{Y}]$, but some rows do not exist. Particularly, all rows corresponding to $\bfb_{i,j}$ with $d-r < j \leq i$ do not exist.
Nevertheless, removing rows do not
increases the rank of the matrix $\mathbf{Q}$. Therefore, by Lemma \ref{lm:overall_rank_thm2}, the rank of $\text{rank}(\mathbf{Q'}) \leq \text{rank}(\mathbf{Q}) = \ell + 1 $. Also, we note that the columns in the coefficient matrix $\mathbf{M}$ are linearly independent (since it has a unique solution by Proposition \ref{pro:unique_minimizer}). This implies that the columns in the reduced coefficient matrix $\mathbf{M'}$ are also linearly independent (since $\mathbf{M'}$ is a submatrix of $\mathbf{M}$). Therefore, the rank of the reduced coefficient matrix $\mathbf{M'}$ equals to the number of columns in $\mathbf{M'}$, which is $\ell+1$. 

Overall, we have $ \ell + 1 \geq \text{rank}(\mathbf{Q'}) \geq \text{rank}(\mathbf{M'}) = \ell +1 $. That is, $\text{rank}(\mathbf{Q'}) = \ell +1$, which in turn implies that $\bfb_{i,j} = 0 $ for all $i \geq j > 0$ by Lemma \ref{lm:simplified_matrix}. Therefore, the corresponding Faith-Interaction indices satisfy the interaction dummy axiom when $ r + \ell > d \geq \ell + r$ and $r > \ell$.

Therefore, by summarizing (1)-(3),
we conclude that the interaction dummy axiom holds for all basis function $\f_R$. This result can be generalized to any function $\f(\cdot)$ by applying Lemma \ref{lm:f_decomposable}.

In conclusion, 
the Faith-Interaction indices with respect to the weighting function defined in Eqn.\eqref{eqn:thmdummysymm} satisfy the interaction linearity, symmetry and dummy axioms for all set functions $\f(\cdot):2^d \mapsto \mathbb{R}$ and all maximum interaction order $1 \leq \ell \leq d$.

\newpage

\subsubsection{Closed-form Solution of Faith-Shap}
\label{sec:closed_form_solution_faith_shap}

In this section, we solve the constrained weighted linear regression problem defined in Eqn.\eqref{eqn:constrained_weighted_regresion} with the weighting function defined in Eqn.\eqref{eqn:faithshap_weight}. We start with solving Faith-Shap indices for basis functions (Lemma \ref{lm:faithshap_for_basis_function}) and then extend the results to general set functions (Lemma \ref{lm:faithshap_for_general_functions}). Lastly, we provide another expression for the highest order terms of Faithful Shapley Interaction indices, i.e. $\ex_S(\f,\ell)$ for $|S| = \ell$, in terms of discrete derivatives (Lemma \ref{lm:faithshap_highest_order}).

First of all, we solve the closed-form solution when the set function $\f(\cdot)$ is a basis function.

\begin{lemma}
\label{lm:faithshap_for_basis_function}
(Faith-Shap for basis functions)
Consider the basis function $\f_R$ defined as 
$ \f_R(S) = 1$ if $S \supseteq R$, otherwise $0$, where $R \subseteq [d]$. Let $\ell$ be the maximum interaction order. Let $\Expl^{\text{F-Shap}}(\f_R,\ell)$ be the solution of the constrained weighted linear regression problem defined in Eqn.\eqref{eqn:constrained_weighted_regresion} with the weighting function defined in Eqn.\eqref{eqn:faithshap_weight}. Then, for all $S \subseteq [d]$ with $|S| \leq \ell$,
\begin{equation}
\label{eqn:solution_basis_function}
\Expl^{\text{F-Shap}}_S(\f_R,\ell)= 
\begin{cases}
(-1)^{\ell - |S| }\frac{|S|}{\ell + |S|} {\ell \choose |S| } \frac{ {|R| - 1 \choose \ell }}{ { |R| + \ell -1 \choose \ell + |S|}} 
& \text{, if } S \subseteq R, |S| \geq 1, \text{ and } |R| > \ell. \\
0 & \text{, if } S \not \subseteq R \text{ and } |R| > \ell. \\
0 & \text{, if } S = \text{\O} \text{ and } |R| > \ell . \\
1 & \text{, if } S = R \text{ and } |R| \leq \ell. \\
0 & \text{, if } S \neq R \text{ and } |R| \leq \ell. \\
\end{cases}
\end{equation}
\end{lemma}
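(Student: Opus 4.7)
The plan is to use the axioms already established for Faith-Shap in the sufficient-condition half of Theorem \ref{thm:faith_shap} (interaction linearity, symmetry, dummy, efficiency) to collapse the constrained regression problem to a small symmetric linear system, and then verify the proposed closed form by direct substitution. The case $|R| \leq \ell$ is immediate: the assignment $\Expl_T(\f_R,\ell) = \mathbbm{1}[T = R]$ drives the weighted squared residual to zero while satisfying both linear constraints $\Expl_{\text{\O}} = \f_R(\text{\O}) = 0$ and $\sum_T \Expl_T = \f_R([d]) = 1$, so Proposition \ref{pro:unique_minimizer} identifies it as the unique minimizer.

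For $|R| > \ell$, I first invoke the interaction dummy axiom to conclude that $\Expl^{\text{F-Shap}}_S(\f_R,\ell) = 0$ whenever $S$ intersects $[d]\setminus R$, since each element outside $R$ is a dummy feature for $\f_R$. Interaction symmetry then forces $\Expl^{\text{F-Shap}}_S(\f_R,\ell)$ to depend only on $|S|$ for $S \subseteq R$; write $b_s$ for this common value. The two efficiency constraints give $b_0 = \f_R(\text{\O}) = 0$ together with $\sum_{s=1}^{\ell}\binom{|R|}{s}b_s = \f_R([d]) = 1$, leaving $\ell$ unknowns $b_1,\ldots,b_\ell$ and one Lagrange multiplier $\lambda$. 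Expanding the first-order conditions via Proposition \ref{pro:parital_derivative_general} and grouping each superset $T \supseteq S$ by $(|T \cap R|, |T \setminus R|)$ yields $\ell$ symmetry-reduced equations in $(b_1,\ldots,b_\ell,\lambda)$; together with the efficiency equation they form a nonsingular $(\ell+1)\times(\ell+1)$ linear system whose unique solution (guaranteed by Proposition \ref{pro:unique_minimizer}) must equal the Faith-Shap values.

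It therefore suffices to plug the proposed $b_s = (-1)^{\ell-s}\tfrac{s}{\ell+s}\binom{\ell}{s}\binom{|R|-1}{\ell}\big/\binom{|R|+\ell-1}{\ell+s}$ into this system and verify each equation. The main obstacle will be the algebraic simplification. I would substitute $\mu_k = B(k,d-k) = \int_0^1 x^{k-1}(1-x)^{d-k-1}\,dx$ to convert the inner double sums over $(|T \cap R|, |T \setminus R|)$ into Beta-function integrals along the diagonal of $[0,1]^d$; the $d$-dependence should then collapse cleanly because the multilinear extension of $\f_R$ restricted to the diagonal equals $x^{|R|}$, in agreement with the lemma's $d$-free formula. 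What remains is a single-variable Beta identity that pairs $\sum_s b_s \binom{|R|-1}{s-1}x^{s-1}(1-x)^{|R|-s}$ against $x^{\ell-1}(1-x)^{\ell-1}$ over $[0,1]$, which I would verify either by direct Beta-function manipulation or by induction on $\ell$ with base case $\ell = 1$ recovering the classical singleton Shapley value of $\f_R$.
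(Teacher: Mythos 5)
Your overall strategy is the same as the paper's: dispose of $|R|\le\ell$ by exhibiting a zero-residual feasible point and invoking Proposition \ref{pro:unique_minimizer}; for $|R|>\ell$ use the already-established dummy, symmetry, and efficiency properties to collapse the Lagrangian stationarity conditions to a small system in $b_1,\dots,b_\ell,\lambda$; then verify the closed form by substitution. The paper does exactly this (Lemma \ref{lm:solution_r_small_thm2}, then the reduction to the matrix $\mathbf{P}_{0,\ell}$ and the substitution check via Claim \ref{clm:closed_form_basis_function}). Where you diverge is only in the verification technique (Beta integrals of the diagonal multilinear extension versus the paper's row reduction plus a generating-function identity), and two steps of your route have real holes as written.

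First, you assert that the $\ell$ orbit-reduced stationarity equations plus the efficiency constraint form a \emph{nonsingular} system and attribute the resulting uniqueness to Proposition \ref{pro:unique_minimizer}. That proposition gives uniqueness of the minimizer of the full constrained problem, not invertibility of your reduced subsystem; without the latter, checking that the proposed $b_s$ satisfy the subsystem does not identify them with the minimizer. This is fillable --- e.g., restrict the strictly convex objective to the symmetric, dummy-annihilating subspace and note the restricted KKT system of a strictly convex QP with independent affine constraints has a unique solution, or triangularize as the paper does (its $\mathbf{P}_{0,\ell}$ is upper triangular with nonzero diagonal by Claim \ref{clm:dpq_value_thm2}) --- but it is currently missing. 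Second, the claim that substituting $\mu_k=B(k,d-k)$ turns the stationarity sums into Beta integrals that ``collapse cleanly'' in $d$ glosses over the fact that what actually appears in the first-order conditions (Proposition \ref{pro:parital_derivative_general}) are the cumulative weights $\bar\mu_k=\sum_{t\ge k,\,\mu_t<\infty}\binom{d-k}{t-k}\mu_t$, which for this weighting are partial harmonic sums, not Beta values; only the finite differences $D^p_q$ with $q\ge 1$ reduce to $B(p,q)$ (up to the common factor $(d-1)/d$). This is precisely why the paper performs the alternating row combinations $\Re^k_{i,j}$ before anything becomes a Beta function, and why the Lagrange multiplier must be eliminated along the way. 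Your terminal ``single-variable Beta identity'' is the right shape for the top order $|S|=\ell$ (it is the basis-function instance of Proposition \ref{pro:faith_shap_path_integral}), but for $1\le|S|<\ell$ the corresponding identities are the content of Claim \ref{clm:closed_form_basis_function} and are not a one-line Beta computation; as proposed, the hardest part of the proof is still open.
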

\begin{proof}

For clarity, we let $r = |R|$ and $s = |S|$.
We first deal with the cases when $|R| = r \leq \ell$. By Lemma \ref{lm:solution_r_small_thm2}, the minimizer of Eqn.\eqref{eqn:con_weighted_ls_basis} is 
$$
\ex^{\text{F-Shap}}_T(\f_R, \ell)
= \begin{cases}
1 & \text{, if  } \ \  T = R. \\
0 & \text{, otherwise.}
\end{cases}
$$

Next, we consider the case when $r > \ell \geq 1$ and $d > \ell + r$ . If $S = \text{\O}$, by the constraints, we must have $\Expl^{\text{F-Shap}}_{\text{\O}}(\f,\ell) = \f(\text{\O}) = 0$. If $S \not \subseteq R$, since elements outside $R$ are dummy features, we have $\Expl^{\text{F-Shap}}_S(\f,\ell) = 0$ by the dummy axiom. Also, by symmetric axiom, for all $S \subseteq R$, $\Expl^{\text{F-Shap}}_S(\f,\ell)$ only depends on the size of the set $S$. Therefore, there are only $\ell$ kinds of different values, which are $\Expl^{\text{F-Shap}}_S(\f,\ell)$ for $S \subseteq R$ with $|S| = 1,\cdots,\ell$. For convenience, we use $\Expl^{\text{F-Shap}}_{i}(\f,\ell)$ to deonte $\Expl^{\text{F-Shap}}_{S}(\f,\ell)$ with $|S| = i$ and $S \subseteq R$. 

We solve the problem via Lagrange multiplier. Following the same simplification process in Section \ref{sec:alternative_faithshap_sufficiency}, we get the matrix $\mathbf{P}_{0,\ell}$, which is the matrix in Eqn.\eqref{eqn:def_pst_value_thm2} in Lemma \ref{lm:Pst_value_thm2} and Definition \ref{def:Pst_thm2}. $\mathbf{P}_{0,\ell} \in \mathbb{R}^{(\ell+1) \times (\ell+2)}$ can be expressed as follows.
\begin{equation}
\mathbf{P}_{0,\ell} =
\left(
\begin{array}{@{}rrrrrr|c@{}}
-\frac{1}{2}, & {r \choose 1 }D_0^{2}, & ...&
{r \choose \ell-2 }D_0^{\ell-1}, & {r \choose \ell-1 }D_0^{\ell}, & { r \choose \ell  }D_0^{\ell+1},  & D_0^{r+1} \\
0, & {r-1 \choose 0 }D_{1}^{1}, & ...&
{r-1 \choose \ell-3 }D_{1}^{\ell-2}, & 
{r-1 \choose \ell-2 }D_{1}^{\ell-1}, &  { r-1 \choose \ell -1 } D_{1}^{\ell}, & D_{1}^{r} \\
0, &  0, & ...&  
{r-2 \choose \ell-4 }D_{2}^{\ell-2}, & 
{r-2 \choose \ell-3 }D_{2}^{\ell-1}, &  { r-2 \choose \ell-2 } D_{2}^{\ell}, & D_{2}^{r} \\
., & ., & ... &., &., &  .,& .  \\
0, & 0,& ...& 0, & 
{ r-\ell+1 \choose 0 } D_{\ell-1}^{\ell-1} & { r-\ell+1 \choose 1 } D_{\ell-1}^{\ell}, & D_{\ell-1}^{r}  \\
0, &  0,& ...& 0 , & 
0, & { r-\ell \choose 0 } D_{\ell}^{\ell}, & D_{\ell}^{r}  \\
\end{array} \right)
\end{equation}

Formally, for $0 \leq i \leq \ell $ and $0 \leq j \leq \ell + 1$, the value of the element in $(i+1)^{th}$ row and $(j+1)^{th}$ column is 
\begin{equation}
\begin{cases}
-\frac{1}{2} & \text{ if } i = 0 \text{ and } j=0. \\
{r \choose j}D_0^{j+1} & \text{ if }  i = 0 \text{ and } 1 \leq j \leq \ell. \\
D_0^{r+1} & \text{ if }  i = 0 \text{ and } j = \ell + 1. \\
0 & \text{ if } 1 \leq i \leq \ell \text{ and } j < i. \\
{r-i \choose j-i} D_{i}^{j} 
& \text{ if } 1 \leq i \leq \ell  \text{ and }  i \leq j \leq \ell.\\
D_{i}^{r} & \text{ if } 1 \leq i \leq \ell \text{ and } j = \ell+1 .\\
\end{cases}.
\end{equation}

From Definition \ref{def:Pst_thm2}, this matrix is the augmented matrix that is obtained form applying Gaussian elimination process to the original matrix for solving a system of linear equations. Row $i+1$ in $\mathbf{P}_{0,\ell}$ corresponds to $\Expl^*_{S}(\f,\ell)$ with $|S| = i$ and $S \subseteq R$ ( except that the first row maps to Lagrange multiplier $\lambda$). That is, 
\begin{equation}
\label{eqn:def_p0ell}
\underbrace{\left(
\begin{array}{@{}rrrrrr}
-\frac{1}{2}, & {r \choose 1 }D_0^{2}, & ...&
{r \choose \ell-2 }D_0^{\ell-1}, & {r \choose \ell-1 }D_0^{\ell}, & { r \choose \ell  }D_0^{\ell+1}\\
0, & {r-1 \choose 0 }D_{1}^{1}, & ...&
{r-1 \choose \ell-3 }D_{1}^{\ell-2}, & 
{r-1 \choose \ell-2 }D_{1}^{\ell-1}, &  { r-1 \choose \ell -1 } D_{1}^{\ell} \\
0, &  0, & ...&  
{r-2 \choose \ell-4 }D_{2}^{\ell-2}, & 
{r-2 \choose \ell-3 }D_{2}^{\ell-1}, &  { r-2 \choose \ell-2 } D_{2}^{\ell}\\
., & ., & ... &., &., &  . \\
0, & 0,& ...& 0, & 
{ r-\ell+1 \choose 0 } D_{\ell-1}^{\ell-1} & { r-\ell+1 \choose 1 } D_{\ell-1}^{\ell} \\
0, &  0,& ...& 0 , & 
0, & { r-\ell \choose 0 } D_{\ell}^{\ell} \\
\end{array} \right)}_{\text{ upper triangular matrix}}
\left( 
\begin{array}{@{}c}
\lambda \\
\Expl^{\text{F-Shap}}_{1}(\f_R,\ell)  \\
\Expl^{\text{F-Shap}}_{2}(\f_R,\ell)  \\
. \\
\Expl^{\text{F-Shap}}_{\ell-1}(\f_R,\ell) \\
\Expl^{\text{F-Shap}}_{\ell}(\f_R,\ell)  \\
\end{array}
\right)
= 
\left( 
\begin{array}{@{}c}
D_{0}^{r+1}\\
D_{1}^{r} \\
D_{2}^{r}  \\
. \\
D_{\ell-1}^{r}  \\
D_{\ell}^{r}  \\
\end{array}
\right),
\end{equation}
where $\Expl^{\text{F-Shap}}_{i}(\f,\ell) = \Expl^{\text{F-Shap}}_{S}(\f,\ell)$ with $|S| = i$ and $S \subseteq R$.
Now, we can verify the solution 
$$
D^p_q 
\propto \frac{d-1}{dq { p+q-1 \choose p-1 } } \propto \frac{(p-1)!(q-1)!}{(p+q-1)!}. 
$$

we can solve $\Expl^{\text{F-Shap}}(\f,\ell)$ with the above matrix equation. 
Claim \ref{clm:dpq_value_thm2} gives us the value of $D_{q}^{p}$, for all $p,q \in \{1,2,\ldots, d-1 \}$, we have 
\begin{equation}
\label{eqn:def_Dpq_closed_form}
D^p_q 
\propto \frac{d-1}{dq { p+q-1 \choose p-1 } } \propto \frac{(p-1)!(q-1)!}{(p+q-1)!}.  
\end{equation}

We begin by the following claim.

\begin{claim}
\label{clm:closed_form_basis_function}
For $ 1 \leq i \leq \ell < r$, 
$$
\sum_{j=0}^{\ell - i} 
{ r- i \choose j} \frac{D_{i}^{j+i}}{D^{r}_{i}} 
\frac{(r-1)!(\ell + j+i -1)!}{(r+\ell-1)!(j+i-1)!} { r - j-i - 1 \choose \ell - i-j} (-1)^{\ell - i - j} 
= 1.
$$
\end{claim}

Now, by plugging in 
$$\ex_i^{\text{F-Shap}}(\f_R,\ell) = (-1)^{\ell - i }\frac{i}{\ell + i} {\ell \choose i } \frac{ {r - 1 \choose \ell }}{ { r + \ell -1 \choose \ell + i}}
$$ 
to the $(i+1)^{th}$ row of matrix Eqn.\eqref{eqn:def_p0ell}, we have
\begin{align*}
& \sum_{j=0}^{\ell-i} {r-i \choose j} D^{j+i}_i \ex_{i+j}^{\text{F-Shap}}(\f_R,\ell) \\
& = \sum_{j=0}^{\ell-i} {r-i \choose j} D^{j+i}_i (-1)^{\ell - i -j}\frac{i}{\ell + i+j} {\ell \choose i+j } \frac{ {r- 1 \choose \ell }}{ { r + \ell -1 \choose \ell + i+j}}
\\
& = \sum_{j=0}^{\ell-i} {r-i \choose j} D^{j+i}_i \frac{(r-1)!(\ell + j+i -1)!}{(r+\ell-1)!(j+i-1)!} { r - j-i - 1 \choose \ell - i-j} (-1)^{\ell - i - j} 
\\
& = D^{r}_i \ \ \ (\text{By Claim \ref{clm:closed_form_basis_function}}).
\end{align*}
Therefore, the solution in Eqn.\eqref{eqn:solution_basis_function} satisfies the system of linear equations.

Finally, we deal with the case when $\ell +r > d \geq r > \ell \geq 1$. We define a new set function $\f_R':2^{d+\ell} \mapsto \mathbb{R}$ with $\f'_R(S) =
\f(S \cap [d])$ for all $S \subseteq [d + \ell]$. Then we can easily see that $\f'_R(\cdot)$ is also a basis function and 
features $d+1,\cdots, d+ \ell$ are dummy nods, which have no effect to the set function. Then by Lemma \ref{lm:faith_shap_reduce}, we have $\ex_S^{\text{F-Shap}}(\f_R,\ell) = \ex_S^{\text{F-Shap}}(\f'_R,\ell)$ for all $S \subseteq [d]$ with $|S| \leq \ell$. That is, the values of these interaction terms are the same but $\f'_R(\cdot)$ has more features. We can apply the previous results since $d' = d+ \ell > \ell + r$. We note that Eqn.\eqref{eqn:solution_basis_function} does not depend on the number of features $d$, so Eqn.\eqref{eqn:solution_basis_function} is the minimizer in this case.

\end{proof}

\begin{lemma}
\label{lm:faithshap_for_general_functions}
(Faith-Shap for general set functions)
For any set function $\f:2^d \mapsto \mathbb{R}$,
the Faith-Shap interaction indices have the following form:
\begin{equation*}
\ex^{\text{F-Shap}}_S(\f,\ell) =
a(S) + (-1)^{\ell - |S| }\frac{|S|}{\ell + |S|} {\ell \choose |S| } \sum_{T \supset S, |T| > \ell} \frac{ {|T| - 1 \choose \ell }}{ { |T| + \ell -1 \choose \ell + |S|}} a(T), \;\; \forall S \in \mathcal{S}_\ell,
\end{equation*}
where $a(S)$ is the Möbius transform of $\f$.
\end{lemma}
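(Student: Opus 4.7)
The plan is to lift the closed-form expression from unanimity basis functions to general set functions by invoking linearity together with the Möbius decomposition. Concretely, by Eqn.~\eqref{eqn:discopose_f} any set function admits the expansion $\f = \sum_{R \subseteq [d]} a(\f,R)\, \f_R$, where $\f_R$ is the unanimity game on $R$ and $a(\f,R)$ is its Möbius coefficient. Since Faith-Shap is a Faith-Interaction index and therefore satisfies the interaction linearity axiom by Proposition~\ref{pro:linearity}, we have
\[
\ex^{\text{F-Shap}}_S(\f,\ell) = \sum_{R \subseteq [d]} a(\f,R)\, \ex^{\text{F-Shap}}_S(\f_R,\ell), \quad \forall S \in \mathcal{S}_\ell.
\]
The proposal is to substitute the explicit values from Lemma~\ref{lm:faithshap_for_basis_function} into this sum and collect terms.

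The next step is a case analysis driven by the five cases in Eqn.~\eqref{eqn:solution_basis_function}. The only $R$ that contribute a nonzero summand are: (i) $R = S$ with $|R| \leq \ell$, which contributes $a(\f,S)$; and (ii) $R \supsetneq S$ with $|R| > \ell$ and $|S| \geq 1$, which contributes
\[
a(\f,R)\cdot (-1)^{\ell - |S|}\,\frac{|S|}{\ell + |S|}\,\binom{\ell}{|S|}\,\frac{\binom{|R|-1}{\ell}}{\binom{|R| + \ell - 1}{\ell + |S|}}.
\]
All other configurations ($S \not\subseteq R$ with $|R|>\ell$; $S=\emptyset$ with $|R|>\ell$; $S \neq R$ with $|R| \leq \ell$) contribute zero. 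Reindexing the variable $R$ in case (ii) as $T$ and summing over $T \supsetneq S$ with $|T| > \ell$ yields precisely the second term of the claimed formula, while case (i) supplies the leading $a(\f,S)$ term.

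Finally, I would verify the two edge cases to confirm the statement holds uniformly over $S \in \mathcal{S}_\ell$. For $S = \emptyset$ the factor $|S|/(\ell+|S|)$ vanishes, so the formula reduces to $a(\f,\emptyset) = \f(\emptyset)$, matching both the efficiency constraint and the direct computation from Lemma~\ref{lm:faithshap_for_basis_function}. For $|S| = \ell$, only $R$ with $|R| > \ell$ and $R \supsetneq S$ can appear in case (ii), which is consistent with the summation range $T \supsetneq S$, $|T| > \ell$ in the formula; the case $R = S$ of (i) then provides the $a(\f,S)$ term. There is no real obstacle here beyond bookkeeping: all of the analytic content has been absorbed into Lemma~\ref{lm:faithshap_for_basis_function}, and the present statement is essentially a linear recombination. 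The only point requiring mild care is ensuring that contributions from $R \supseteq S$ with $|R| \leq \ell$ and $R \neq S$ vanish (so they do not spuriously add to $a(\f,S)$), which is immediate from the ``$S \neq R$, $|R| \leq \ell$'' branch of Eqn.~\eqref{eqn:solution_basis_function}.
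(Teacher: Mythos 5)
Your proposal is correct and follows essentially the same route as the paper: decompose $\f$ into unanimity games via the Möbius transform, invoke interaction linearity, and substitute the case-by-case values from Lemma~\ref{lm:faithshap_for_basis_function}, with only the $R=S$ and $R\supsetneq S,\ |R|>\ell$ configurations surviving. The edge-case checks you add for $S=\emptyset$ and $|S|=\ell$ are consistent with (though not explicitly spelled out in) the paper's argument.
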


By Lemma 3 in \citet{shapley1953value}, any set function $\f: 2^d \rightarrow \mathbb{R}$ can be written into a linear combination of basis functions:
$$
\f(S) = \sum_{R \subseteq [d]} a(R)\f_R(S).
$$
Using the linearity axiom, we can get extend the closed-form minimizers for basis functions in Lemma \ref{lm:faithshap_for_basis_function} to general functions as follows:
\begin{align*}
\Expl^{\text{F-Shap}}(\f,\ell) 
& = \sum_{R \subseteq [d]} a(R)\Expl^{\text{F-Shap}}(\f_R,\ell).
\end{align*}
Then we could obtain 
$\ex_S^{\text{F-Shap}}$ by using the above equation. For any $S \in \setlessell$.
\begin{align*}
& \ex^{\text{F-Shap}}_S(\f,\ell) \\
& = \sum_{R \subseteq [d]} a(R)\Expl^{\text{F-Shap}}_S(\f_R,\ell) \\
& = a(S)\Expl^{\text{F-Shap}}_S(\f_S,\ell) 
+ \sum_{R: S \not \subseteq R, R \subseteq [d] } a(R)\Expl^{\text{F-Shap}}_S(\f_R,\ell) 
 \\
& + \sum_{R: S \subset R, |R| \leq \ell, R \subseteq [d] } a(R)\Expl^{\text{F-Shap}}_S(\f_R,\ell) 
+ \sum_{R: S \subset R, |R| > \ell, R \subseteq [d] } a(R)\Expl^{\text{F-Shap}}_S(\f_R,\ell) 
\\
& = a(S) + 0 + 0 +  (-1)^{\ell - |S| }\frac{|S|}{\ell + |S|} {\ell \choose |S| } \sum_{R: S \subset R \subseteq [d], |R| > \ell} \frac{ {|R| - 1 \choose \ell }}{ { |R| + \ell -1 \choose \ell + |S|}} a(R)
\ \ \ \text{(Lemma \ref{lm:faithshap_for_basis_function}) } \\
& = a(S) + (-1)^{\ell - |S| }\frac{|S|}{\ell + |S|} {\ell \choose |S| } \sum_{T \supset S, |T| > \ell} \frac{ {|T| - 1 \choose \ell }}{ { |T| + \ell -1 \choose \ell + |S|}} a(T).
\end{align*}

\begin{lemma}
\label{lm:faithshap_highest_order}
(Faith-Shap in the form of discrete derivatives)
The highest order term of Faithful Shapley Interaction indices have the following form:
\footnote{We have tried to solve expressions for general orders in terms of discrete derivatives. However, even for the second-highest term  $|S| = \ell -1$, there is no clear expressions. Specifically, for $|S| = \ell -1$, we have
$$
\Expl^{\text{F-Shap}}_S(\f,\ell) = 
\frac{(2\ell-3)!}{(\ell- 2)!(\ell-1)!}\sum_{T \subseteq [d] \backslash S}\frac{(|T|+\ell-2)!(d-|T|-1)!}{(d-1+l)!}(\ell^2 -\ell d+ 2 \ell |T| - 2\ell - |T| +1)\Delta_S f(T) \text{   for   } |S| = \ell-1.
$$}
$$
\Expl_S^{\text{F-Shap}}(\f,\ell) = \frac{(2\ell -1)!}{((\ell-1)!)^2 } 
\sum_{T \subseteq [d] \backslash S}\frac{(\ell+|T|-1)!(d-|T|-1)!}{(d+\ell-1)!}   \Delta_S(\f(T))
\ \ \text{ for all } S \in \setlessell 
\text{ with } |S| = \ell.
$$
\end{lemma}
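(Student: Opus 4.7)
The plan is to use linearity of both sides in $\f$ (Proposition \ref{pro:linearity}) to reduce the identity to the unanimity basis. Since $\f = \sum_{R \subseteq [d]} a(\f,R)\f_R$ and the discrete derivative $\Delta_S$ is also linear in $\f$, it suffices to verify the claimed formula for each basis function $\f_R$, $R \subseteq [d]$, with $|S|=\ell$ held fixed, and then compare against the value supplied by Lemma \ref{lm:faithshap_for_basis_function}.

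The first step is to compute $\Delta_S \f_R(T)$ in closed form. Using $\f_R(T \cup L) = \mathbbm{1}[R \setminus T \subseteq L]$ and a one-line inclusion-exclusion over $L \subseteq S$, one obtains
\[
\Delta_S \f_R(T) \;=\; \mathbbm{1}\bigl[\, S \subseteq R \subseteq S \cup T \,\bigr].
\]
In particular the right-hand side of the claim vanishes whenever $S \not\subseteq R$, which already matches Lemma \ref{lm:faithshap_for_basis_function} in that regime. When $S \subseteq R$, the admissible $T \subseteq [d]\setminus S$ are exactly those containing $R \setminus S$, so grouping by $|T|=k$ (equivalently, by $j := k - (r-\ell)$ with $r=|R|$) reduces the right-hand side to
\[
\frac{(2\ell-1)!}{((\ell-1)!)^2}\sum_{j=0}^{d-r}\binom{d-r}{j}\frac{(r+j-1)!\,(d-r-j+\ell-1)!}{(d+\ell-1)!}.
\]

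The heart of the argument is the evaluation of this sum. Writing $\frac{(r+j-1)!(d-r-j+\ell-1)!}{(d+\ell-1)!} = B(r+j,\,d-r-j+\ell)$ via the Beta-function integral $B(a,b)=\int_0^1 x^{a-1}(1-x)^{b-1}dx$, pulling out the $j$-independent factor $x^{r-1}(1-x)^{\ell-1}$, and collapsing $\sum_j \binom{d-r}{j} x^j(1-x)^{d-r-j} = 1$ by the binomial theorem, the inner sum evaluates to $B(r,\ell) = \frac{(r-1)!(\ell-1)!}{(r+\ell-1)!}$. Multiplying by the prefactor, the right-hand side becomes $\dfrac{(2\ell-1)!\,(r-1)!}{(\ell-1)!\,(r+\ell-1)!}$.

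Finally, I would check this matches Lemma \ref{lm:faithshap_for_basis_function}. When $r=\ell$ (so necessarily $R=S$) the expression collapses to $1$; when $r>\ell$, an elementary factorial rearrangement yields
\[
\frac{(2\ell-1)!\,(r-1)!}{(\ell-1)!\,(r+\ell-1)!} \;=\; \frac{1}{2}\,\frac{\binom{r-1}{\ell}}{\binom{r+\ell-1}{2\ell}},
\]
which is exactly $\Expl^{\text{F-Shap}}_S(\f_R,\ell)$ specialized to $|S|=\ell$ in Lemma \ref{lm:faithshap_for_basis_function}. The main obstacle is recognizing that the convolution-like binomial sum collapses via the Beta integral; once that identity is in hand, the remainder is purely factorial bookkeeping, and linearity then extends the equality from basis functions to arbitrary $\f$.
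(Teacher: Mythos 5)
Your proposal is correct, and it takes a genuinely different route from the paper's. The paper starts from the M\"obius expansion $\Expl_S^{\text{F-Shap}}(\f,\ell)=\sum_{R\supseteq S}a(\f,R)\,\Expl_S^{\text{F-Shap}}(\f_R,\ell)$, substitutes the closed form of Lemma \ref{lm:faithshap_for_basis_function}, converts M\"obius coefficients into discrete derivatives via Claim \ref{clm:mobius_to_discrete_derivaitive}, exchanges the order of summation, and then evaluates an inner \emph{alternating} sum $\sum_{W}(-1)^{|U|-|W|}(\cdots)$ by Beta integrals, with a separate case analysis for $|U|=0$ versus $|U|\geq 1$. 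You instead exploit that \emph{both} sides of the target identity are linear in $\f$ (the left side by Proposition \ref{pro:linearity}, the right side because $\Delta_S(\f(T))$ is), reduce to the unanimity basis, and verify the identity there directly: the computation $\Delta_S\f_R(T)=\mathbbm{1}[S\subseteq R\subseteq S\cup T]$ is right, the resulting sum is a \emph{non-alternating} binomial convolution that collapses to $B(r,\ell)$ by the same Beta-integral-plus-binomial-theorem device the paper uses elsewhere, and the factorial bookkeeping $\tfrac{(2\ell-1)!\,(r-1)!}{(\ell-1)!\,(r+\ell-1)!}=\tfrac12\binom{r-1}{\ell}/\binom{r+\ell-1}{2\ell}$ matches Lemma \ref{lm:faithshap_for_basis_function} at $|S|=\ell$ (including the degenerate cases $S\not\subseteq R$ and $R=S$, where both sides are $0$ and $1$ respectively). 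What your approach buys is the elimination of the M\"obius-to-derivative conversion, the sign-alternating inner sum, and its case split; what it costs is that it is purely a verification — it would not have produced the formula had you not already known it — whereas the paper's forward derivation explains where the coefficients come from. Both arguments ultimately rest on the same two ingredients: Lemma \ref{lm:faithshap_for_basis_function} and the Beta-function identity.
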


Using the formula for basis functions in 
Lemma \ref{lm:faithshap_for_basis_function}, for any $S \subseteq [d]$ with $|S| = \ell$, we get 
\begin{align*}
& \Expl^{\text{F-Shap}}_S(\f,\ell)  = \sum_{R \subseteq [d], R \supseteq S} a(R)\Expl^{\text{F-Shap}}_S(\f_R,\ell) \\
& = \sum_{ W \subseteq [d]  \backslash S} a(W \cup S)\Expl_S^{\text{F-Shap}}(\f_{W \cup S},\ell)  \ \ \ 
\text{ where } W = R \backslash S \text{ and } \Expl^{\text{F-Shap}}_S(\f_R,\ell) = 0 \text{ for all } S \not \subseteq R. \\
& =  a(S) + 
\sum_{ \substack{W \subseteq [d] \backslash S \\ |W| > \ell - |S|}} a(W \cup S) (-1)^{\ell - |S| }  \frac{(|W \cup S|-1)!(\ell + |S| -1)! (|W|-1)!}{(|W \cup S| -\ell-1)! (|W \cup S| +\ell-1)! (|S|-1)!(\ell-|S|)!} \\
& =  \Delta_S(\f(\text{\O})) + 
\underbrace{\sum_{ \substack{W \subseteq [d] \backslash S \\ |W| > \ell - |S|}} a(W \cup S) (-1)^{\ell - |S| }  \frac{(|W \cup S|-1)!(\ell + |S| -1)! (|W|-1)!}{(|W \cup S| -\ell-1)! (|W \cup S| +\ell-1)! (|S|-1)!(\ell-|S|)!}}_{(i)}.
\end{align*}
The second last equality is obtained by applying Lemma \ref{lm:faithshap_for_basis_function} and the last equality is due to Claim \ref{clm:mobius_to_discrete_derivaitive} below.

\begin{claim}
\label{clm:mobius_to_discrete_derivaitive}
(\citet{sundararajan2020shapley} ,Lemma 2) Möbius coefficients and discrete derivatives are related by following relation:
$$
a(T \cup S) = \sum_{W \subseteq T} (-1)^{|T| - |W|} \Delta_S (\f(W))
$$
for $S$ and $T$ such that $S \cap T = \text{\O}$.
\end{claim}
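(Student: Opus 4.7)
The plan is to prove the claim by direct computation, expanding the right-hand side using the recursive definition of the discrete derivative from equation \eqref{eq:discrete2} and then re-indexing the resulting double sum so that it matches the definition of the Möbius transform in equation \eqref{eqn:def_mobius_transform}. Specifically, I would first write
\begin{equation*}
\sum_{W \subseteq T} (-1)^{|T| - |W|} \Delta_S(\f(W))
= \sum_{W \subseteq T} \sum_{L \subseteq S} (-1)^{|T|-|W|+|S|-|L|} \f(W \cup L),
\end{equation*}
which is just the definition of $\Delta_S(\f(W))$ substituted in.

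The next step, which is the crux of the argument, is to use the hypothesis $S \cap T = \text{\O}$ to re-parameterize the double sum as a single sum over subsets $U \subseteq T \cup S$. Since $S$ and $T$ are disjoint, every such $U$ admits a unique decomposition $U = W \cup L$ with $W = U \cap T \subseteq T$ and $L = U \cap S \subseteq S$; moreover $|U| = |W| + |L|$ and $|T \cup S| = |T| + |S|$, so the exponent $|T| - |W| + |S| - |L|$ becomes $|T \cup S| - |U|$. Substituting this bijection transforms the double sum into $\sum_{U \subseteq T \cup S} (-1)^{|T \cup S| - |U|} \f(U)$, which equals $a(\f, T \cup S)$ by the Möbius transform definition.

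There is no serious obstacle here: the only subtlety is ensuring that the bijection between $(W,L)$ pairs and subsets $U \subseteq T \cup S$ is indeed well-defined, which is immediate from disjointness. I would note in passing that this identity is a standard fact about the inclusion-exclusion structure of Möbius inversion on the Boolean lattice, and that it is used as a bridge in the proof of Lemma \ref{lm:faithshap_highest_order} to convert sums involving Möbius coefficients $a(W \cup S)$ into sums involving discrete derivatives $\Delta_S(\f(W))$, which are the natural building blocks for comparing Faith-Shap with the Banzhaf and Shapley interaction indices in Eqns.~\eqref{eqn:closed_form_shap_inter} and \eqref{eqn:closed_form_bzf_inter}.
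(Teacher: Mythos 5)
Your proof is correct. Note, however, that the paper does not actually prove this claim itself: it is imported verbatim as Lemma~2 of \citet{sundararajan2020shapley}, so there is no in-paper argument to compare against. Your direct verification --- expand $\Delta_S(\f(W))$ via Eqn.~\eqref{eq:discrete2}, then use disjointness of $S$ and $T$ to re-index the double sum over pairs $(W,L)$ with $W \subseteq T$, $L \subseteq S$ as a single sum over $U = W \cup L \subseteq T \cup S$ with $|U| = |W| + |L|$, recovering the Möbius transform of Eqn.~\eqref{eqn:def_mobius_transform} --- is complete and self-contained, which is arguably preferable to the citation. The one point worth making explicit is that $\Delta_S(\f(W))$ is only defined for $W \cap S = \text{\O}$ (Definition~\ref{def:discrete_derivative}), which holds here because $W \subseteq T$ and $S \cap T = \text{\O}$; you use this implicitly when invoking the expansion. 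Your closing remark about the role of the identity in the proof of Lemma~\ref{lm:faithshap_highest_order} is also accurate.
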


Now, we analyze (i).
{\small
\begin{align}
(i) & = \frac{(-1)^{\ell - |S| }(\ell + |S| -1)!}{(|S|-1)! (\ell-|S|)! } 
\sum_{\substack{W \subseteq [d] \backslash S \\ |W| > \ell - |S|}} a(W \cup S)   \frac{(|W \cup S|-1)! (|W|-1)!}{(|W \cup S| -\ell-1)! (|W \cup S| +\ell-1)! } \nonumber \\
& =  \frac{(-1)^{\ell - |S| }(\ell + |S| -1)!}{(|S|-1)! (\ell-|S|)! } 
\sum_{\substack{W \subseteq [d] \backslash S \\ |W| > \ell - |S|}}
\sum_{ U \subseteq W} (-1)^{|U|-|W|} \Delta_S(\f(U))
\frac{(|W \cup S|-1)! (|W|-1)!}{(|W \cup S| -\ell-1)! (|W \cup S| +\ell-1)! } \nonumber \\
& \ \ \ \ \ \ \ \text{ (Using Claim \ref{clm:mobius_to_discrete_derivaitive}) }
\nonumber \\
& = \frac{(-1)^{\ell - |S| }(\ell + |S| -1)!}{(|S|-1)! (\ell-|S|)! } 
\sum_{U \subseteq [d] \backslash S} \Delta_S(\f(U))
\sum_{ \substack{ W \supseteq U, W \subseteq [d]\backslash S \\ |W| > \ell - |S|}} (-1)^{|U|-|W|}
\frac{(|W \cup S|-1)! (|W|-1)!}{(|W \cup S| -\ell-1)! (|W \cup S| +\ell-1)! }.
\label{eqn:expression_(i)_closed_form}
\end{align}}
Now, we analyze the inner sum. For clarity, we let $w = |W|$, $s = |S|$ and $u = |U|$. We also use the properties of beta functions: $B(m,n) = \frac{(m-1)!(n-1)!}{(m+n-1)!} = \int_{0}^1 x^{m-1}(1-x)^{n-1} dx$ for any $m,n \subseteq \mathbb{N}$.

\begin{align*}
& \sum_{ W \supseteq U, W \subseteq [d]\backslash S, |W| > \ell - |S|} (-1)^{|U|-|W|} 
\frac{(|W \cup S|-1)! (|W|-1)!}{(|W \cup S| -\ell-1)! (|W \cup S| +\ell-1)! } \\
& = \sum_{w = \max(u,\ell-s+1)}^{d-s} { d-s-u\choose w-u } (-1)^{u-w}
\frac{(w+s-1)! (w-1)!}{(w + s -\ell-1)! (w + s +\ell-1)! } \\
& = \frac{1}{(\ell-1)!}\sum_{w = \max(u,\ell-s+1)}^{d-s} { d-s-u\choose w-u } (-1)^{u-w}
\frac{(\ell-1)!(w+s-1)!}{ (w + s +\ell-1)! } \prod_{i=1}^{\ell-s}(w-i) \\
& = \frac{1}{(\ell-1)!}\sum_{w = \max(u,\ell-s+1)}^{d-s} { d-s-u\choose w-u } (-1)^{u-w}B(w+s,\ell) \prod_{i=1}^{\ell-s}(w-i) \\
& \ \ \ \text{(Definition of Beta function)}
 \\
& = \frac{1}{(\ell-1)!}\sum_{w = \max(u,\ell-s+1)}^{d-s} { d-s-u\choose w-u } (-1)^{u-w} \left(\prod_{i=1}^{\ell-s}(w-i) \right) \int_{0}^1 x^{w+s-1}(1-x)^{\ell-1} dx \\
& \ \ \ \text{(Property of Beta function)}
 \\
& = \frac{1}{(\ell-1)!}  \int_{0}^1 x^{s-1} (1-x)^{\ell-1} \sum_{w = \max(u,\ell-s+1)}^{d-s} { d-s-u\choose w-u } (-1)^{u-w} \left(\prod_{i=1}^{\ell-s}(w-i) \right) x^{w} dx  \\
& \ \ \ \text{(Exchange of integration)}
 \\
& = \frac{1}{(\ell-1)!}  \int_{0}^1 x^{\ell-1} (1-x)^{\ell-1} \sum_{w = \max(u,1)}^{d-\ell} { d-\ell-u\choose w-u } (-1)^{u-w} x^{w} dx  
\ \ \ (\text{Using } s = \ell).\\
\end{align*}

\paragraph{(1)} For the case when $u \geq 1$, we have 
\begin{align*}
& \frac{1}{(\ell-1)!}  \int_{0}^1 x^{\ell-1} (1-x)^{\ell-1} \sum_{w = \max(u,1)}^{d-\ell} { d-\ell-u\choose w-u } (-1)^{u-w} x^{w} dx  \\
& = \frac{1}{(\ell-1)!}  \int_{0}^1 x^{\ell-1+u} (1-x)^{\ell-1} \sum_{w'=0}^{d-\ell-u} { d-\ell-u\choose w' } (-1)^{w'} x^{w'} dx  
\ \ \ (\text{let } w' = w-u)\\
& = \frac{1}{(\ell-1)!}  \int_{0}^1 x^{\ell-1+u} (1-x)^{\ell-1} (1-x)^{d-\ell-u} dx   \\
& = \frac{1}{(\ell-1)!}  \int_{0}^1 x^{\ell-1+u} (1-x)^{d-u-1} dx   \\
& = \frac{ B(\ell+u, d-u)}{(\ell-1)!}  \\
& = \frac{(\ell+u-1)!(d-u-1)!}{(d+\ell-1)!(\ell-1)!}.  \\
\end{align*}

\paragraph{(2)} For the case when $u =0$, we have 
\begin{align*}
& \frac{1}{(\ell-1)!}  \int_{0}^1 x^{\ell-1} (1-x)^{\ell-1} \sum_{w = \max(u,1)}^{d-\ell-u} { d-\ell-u\choose w-u } (-1)^{u-w} x^{w} dx  \\
& = \frac{1}{(\ell-1)!}  \int_{0}^1 x^{\ell-1} (1-x)^{\ell-1} \sum_{w=1}^{d-\ell} { d-\ell \choose w } (-1)^{w} x^{w} dx  \\
& = \frac{1}{(\ell-1)!}  \int_{0}^1 x^{\ell-1} (1-x)^{\ell-1} \sum_{w=0}^{d-\ell} { d-\ell \choose w } (-1)^{w} x^{w} dx  - \frac{1}{(\ell-1)!}  \int_{0}^1 x^{\ell-1} (1-x)^{\ell-1} dx  \\
& = \frac{1}{(\ell-1)!}  \int_{0}^1 x^{\ell-1} (1-x)^{\ell-1} (1-x)^{d-\ell} dx  - \frac{ B(\ell,\ell)}{(\ell-1)!}  \\
& = \frac{1}{(\ell-1)!}  \int_{0}^1 x^{\ell-1} (1-x)^{d-1} dx   - \frac{ B(\ell,\ell)}{(\ell-1)!} \\
& = \frac{ B(\ell, d)}{(\ell-1)!}  - \frac{B(\ell,\ell)}{(\ell-1)!} \\
& = \frac{(\ell+u-1)!(d-u-1)!}{(d+\ell-1)!(\ell-1)!} - \frac{(\ell-1)!}{(2\ell-1)!}. \\
\end{align*}

Now, combining the two cases and plugg into Eqn.\eqref{eqn:expression_(i)_closed_form}, we have 
{\small
\begin{align*}
(i) & = \frac{(-1)^{\ell - |S| }(\ell + |S| -1)!}{(|S|-1)! (\ell-|S|)! } 
\sum_{U \subseteq [d] \backslash S} \Delta_S(\f(U))
\sum_{ \substack{ W \supseteq U, W \subseteq [d]\backslash S \\ |W| > \ell - |S|}} (-1)^{|U|-|W|}
\frac{(|W \cup S|-1)! (|W|-1)!}{(|W \cup S| -\ell-1)! (|W \cup S| +\ell-1)! }  \\
& = \frac{(2\ell -1)!}{(\ell-1)! } 
\sum_{U \subseteq [d] \backslash S} \Delta_S(\f(U))
\sum_{ \substack{ W \supseteq U, W \subseteq [d]\backslash S \\ |W| > \ell - |S|}} (-1)^{|U|-|W|}
\frac{(|W \cup S|-1)! (|W|-1)!}{(|W \cup S| -\ell-1)! (|W \cup S| +\ell-1)! } \\
& \ \ \ (\text{using } s = |S| = \ell)\\
& = \frac{(2\ell -1)!}{((\ell-1)!)^2 } 
\sum_{U \subseteq [d] \backslash S}\frac{(\ell+u-1)!(d-u-1)!}{(d+\ell-1)!}   \Delta_S(\f(U)) - \Delta_S(\f(\text{\O})).
\end{align*}}
Therefore, by substituting $U$ with $T$, we get the desired result:
$$
\Expl_S^{\text{F-Shap}}(\f,\ell) = \frac{(2\ell -1)!}{((\ell-1)!)^2 } 
\sum_{T \subseteq [d] \backslash S}\frac{(\ell+|T|-1)!(d-|T|-1)!}{(d+\ell-1)!}   \Delta_S(\f(T)).
$$

\newpage

\subsubsection{Proof of Necessary Condition of Theorem \ref{thm:faith_shap}}
\label{sec:faithshap_necessity}

Now we prove the necessary condition of Theorem \ref{thm:faith_shap}. That is, Faith-Interaction indices $\Expl$ 
satisfy linearity, symmetry, efficiency and dummy axioms only if the weighting function $\mu$ has the form in Eqn.\eqref{eqn:faithshap_weight}.

From Proposition \ref{pro:symmetry}, the Faithful-Interaction indices satisfy the interaction symmetry axiom if and only if $\mu(S)$ only depends on the size of the input set $|S|$. Therefore, the weighting function must be symmetric.
Also, by Proposition \ref{pro:efficiency}, Faithful-Interaction indices satisfy the interaction efficiency axiom if and only if $\mu(\text{\O}) = \mu([d]) = \infty$. Therefore, the weighting function must be permutation-invariant and has infinity measure on the empty set and the full set.

Then, the following lemma show that the weighting function must be in the form of Eqn.\eqref{eqn:faithshap_weight} if the corresponding Faith-Interaction indices additionally satisfy dummy axiom.

\begin{lemma}
\label{lm:necessity_dummy_shapley}
Faith-Interaction indices $\Expl$ with a  weighting function, which is permutation-invariant and has $\mu(\text{\O}) = \mu([d]) = \infty$, 
satisfy the interaction dummy axiom only if the weighting function $\mu$ has the following form: 
$$
\mu(S) \propto
\frac{d-1}{\binom{d} {|S|}\,|S|\,(d-|S|)}
\text{   for all   } S \subseteq [d]
\text{   with   } 1 \leq |S| \leq d-1, 
\ \ \text{ and } \ \ 
\mu(\text{\O}) = \mu([d]) = \infty
$$
\end{lemma}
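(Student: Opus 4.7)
The plan is to combine the necessity parts of Propositions \ref{pro:symmetry} and \ref{pro:efficiency} with the dummy axiom applied to unanimity-game (basis) functions at $\ell = 1$, so as to pin down $\mu$ up to a positive scalar. First I would invoke Proposition \ref{pro:symmetry} to conclude that $\mu$ depends only on $|S|$, and Proposition \ref{pro:efficiency} to conclude that $\mu(\text{\O}) = \mu([d]) = \infty$. Writing $\mu_k := \mu(S)$ for $|S| = k$, it remains only to determine the positive reals $\mu_1, \ldots, \mu_{d-1}$.

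I would then fix $\ell = 1$ and, for each $r \in \{1, \ldots, d-1\}$, consider the basis function $\f_R$ with $|R| = r$. The two infinity constraints force $\ex_{\text{\O}}(\f_R, 1) = 0$ and $\sum_{i=1}^d \ex_{\{i\}}(\f_R, 1) = 1$. By Proposition \ref{pro:symmetry}, interaction symmetry collapses the minimizer to two scalars $\alpha_r := \ex_{\{i\}}(\f_R,1)$ for $i \in R$ and $\beta_r := \ex_{\{j\}}(\f_R,1)$ for $j \notin R$, linked by $r\alpha_r + (d-r)\beta_r = 1$. The interaction dummy axiom then forces $\beta_r = 0$, and hence $\alpha_r = 1/r$, for every $r \in \{1,\ldots,d-1\}$.

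The third step is to translate the family of identities $\beta_r = 0$ into linear equations on $\mu_1, \ldots, \mu_{d-1}$. Writing the Lagrangian with a single multiplier for the efficiency constraint and taking first-order conditions for $\ex_{\{i\}}$ with $i \in R$ and for $\ex_{\{j\}}$ with $j \notin R$, subtracting the two equations eliminates the multiplier; after substituting $\alpha_r = 1/r$ and $\beta_r = 0$ and collecting terms by $(|S\cap R|, |S\setminus R|)$, this yields an explicit relation $L_r(\mu_1, \ldots, \mu_{d-1}) = 0$ whose coefficients are elementary binomials in $r$ and $d$. Varying $r$ over $\{1, \ldots, d-1\}$ gives $d-1$ linear equations in the $d-1$ unknowns.

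Finally I would show that this system has a one-dimensional solution space spanned by $\mu_k \propto \frac{(k-1)!(d-k-1)!}{(d-1)!}$, which is equivalent to the stated form $\mu_k \propto \frac{d-1}{\binom{d}{k}\,k\,(d-k)}$. The cleanest route is to change variables to the cumulative function $\mubar{k} := \sum_{i=k}^{d-1}\binom{d-k}{i-k}\mu_i$, in which the relations $L_r = 0$ become triangular in $r$; one then verifies by direct substitution (using standard beta-function identities) that the candidate weights satisfy every $L_r = 0$, with uniqueness following from a rank argument in the spirit of Lemma \ref{lm:simplified_matrix}. The main obstacle is the combinatorial bookkeeping in this last step: the finite-weight analysis of Lemma \ref{lm:necessity_dummy} produced a two-parameter family $(a,b)$ of admissible weightings, and we must argue that imposing the two additional infinity constraints $\mu(\text{\O}) = \mu([d]) = \infty$ collapses this family down to a single ray, pinning $\mu$ uniquely up to an overall positive scaling.
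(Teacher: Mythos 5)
Your proposal is correct and follows essentially the same route as the paper: restrict to $\ell=1$ and unanimity games, use the efficiency constraints, symmetry, and dummy to force $\alpha_r = 1/r$ and $\beta_r = 0$, extract from the stationarity conditions the relations $(\mubar{r}-\mubar{r+1})/(\mubar{1}-\mubar{2}) = 1/r$ for $1 \leq r \leq d-1$, and solve the resulting triangular system for $\mu_1,\ldots,\mu_{d-1}$ up to scale — the paper merely imports the $\ell=1$ constrained closed form from \citet{ding2008formulas} instead of re-deriving the Lagrangian first-order conditions. One remark: your closing concern about collapsing the two-parameter $(a,b)$ family of Lemma \ref{lm:necessity_dummy} is moot, since that lemma assumes a finite weighting function and the $d-1$ triangular equations you obtain (with $\mubar{d}=0$ because $\mu([d])=\infty$ is excluded from the cumulative sum) already pin down the weights up to a single positive scalar.
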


\begin{proof}

We now solve the case when the function $\f = \f_R$ is a basis function for some $R \subseteq [d]$. Recall that the definition of basis functions is :
$$
\f_R(S) = \begin{cases}
1, & \text{if}\ S \supseteq R \\
0, & \text{otherwise}. \\
\end{cases}
$$ 
Since the weighting function $\mu(\cdot)$ if finite and only depends on the size of the input set, we use the definitions in Section \ref{sec:extra_notations}: $\mu_{|S|} = \mu(S)$ and  $\mubar{|S|} = \sum_{T \supseteq S} \mu(T) = \sum_{i=|S|}^d { d-|S| \choose i-|S|} \mu_i$ for all $S \subseteq [d]$. Since  Faithful-Interaction indices should hold for all maximum interaction orders $1 \leq \ell \leq d$, we restrict the maximum interaction order to $\ell=1$. We now use the following results from \citet{ding2008formulas}.

\begin{proposition}
(\citet{ding2008formulas}, Theorem 16) When the maximum interaction order $\ell = 1$ (no interaction terms) and the set function $\f = \f_R$ is a basis function for some $R \subseteq [d]$ with $|R| = r$, and the weighting function $\mu(S)$ is permutation-invariant and has $\mu(\text{\O}) = \mu([d]) = \infty$, the minimizer of Eqn.\eqref{eqn:constrained_weighted_regresion}, $\Expl(\f_R, \ell) \in \mathbb{R}^{d+1}$, has the following form:
\begin{equation}
\label{eqn:ding_thm16}
\Expl_S(\f_R, \ell) = 
\begin{cases}
0 & \text{, if $S = \phi$.} \\
\frac{d-r}{d} \cdot  \frac{\mubar{r}- \mubar{r+1}}{\mubar{1} - \mubar{2}}   + \frac{1}{d}
& \text{, if $S = \{ i\}$ for $ i \in R$.} \\
\frac{-r}{d} \cdot \frac{\mubar{r}- \mubar{r+1}}{\mubar{1} - \mubar{2}}  + \frac{1}{d} & \text{, if $S = \{ i\}$ for $ i \notin R$.} \\
\end{cases}
\end{equation}
Note that when $R = [d]$, we let $\mubar{d+1} = 0$ for clarity.
\end{proposition}
Since the optimal solution $\Expl(\f_R,\ell)$ satisfies the symmetry, dummy and efficiency axiom, we should have 
\begin{equation}
\label{eqn:shapley_for_basis_function}
\Expl_{\{i\}}(\f_R, \ell) =  
\begin{cases}
\frac{1}{r} & \text{, if } i \in R. \\
0 & \text{, otherwise.}
\end{cases}
\ \ \text{, and } \ \ 
\Expl^*_{\text{\O}}(\f_R, \ell) =  0.    
\end{equation}

By comparing Eqn.\eqref{eqn:ding_thm16} and Eqn.\eqref{eqn:shapley_for_basis_function}, we obtain
$$
 \frac{\mubar{r}- \mubar{r+1}}{\mubar{1} - \mubar{2}} = \frac{1}{r} 
\ \ \ \text{ for }  1 \leq r \leq d-1.
$$
By letting $\mubar{1} - \mubar{2} = k$ and  plugging in the definition of $\mubar{r}$, i.e. $\mubar{r} = \sum_{i=r}^d { d-r \choose i-r} \mu_i$, we get 
$$
 \sum_{i=r}^{d-1} { d-1-r \choose i-r} \mu_i = \frac{k}{r} 
\ \ \ \text{ for }  1 \leq r \leq d-1.
$$
There are $d-1$ unknown parameters, $\mu_1,\cdots,\mu_{d-1}$, and $d-1$ equations. Hence, the solution is uniquely determined (in terms of $k$).
By solving the equation from $r=d-1$ to $r=1$, we obtain this unique solution:
$$
\mu_{|S|} = \mu(S) \propto
\frac{d-1}{\binom{d} {|S|}\,|S|\,(d-|S|)}
\text{   for all   } S \subseteq [d]
\text{   with   } 1 \leq |S| \leq d-1.
$$

\end{proof}

Now, by Lemma \ref{lm:necessity_dummy_shapley}, we conclude that the necessary condition of Theorem \ref{thm:faith_shap} holds. 

\newpage

\newpage

\section{proof of Claims}
\label{sec:proofs_claims}
In this sections, we give the omitted proof of claims that are used in the our proof in Appendix \ref{sec:proofs_propositions} and \ref{sec:proofs_theorems}.

\subsection{Proof of Claim \ref{clm:p_q_relation}}
\begin{proof}
For any $S \subseteq [d]$,
\begin{align*}
\sum_{T \supseteq S} (-1)^{|T| - |S|}p(T)
& = \sum_{T \supseteq S} (-1)^{|T| - |S|} \sum_{L \supseteq T} q(L) \\
& = \sum_{L \supseteq S} q(L)  \sum_{T: S \subseteq T \subseteq L} (-1)^{|T| - |S|} \\
& = \sum_{L \supseteq S} q(L)  \sum_{t=0}^{|L| - |S|} {|L| - |S| \choose t} (-1)^{t} \\
& = \sum_{L \supseteq S} q(L)  (1-1)^{|L|-|S|} \\
& = q(S) \\
\end{align*}
\end{proof}

\subsection{Proof of Claim \ref{clm:all_zero}}
\begin{proof}
Suppose that there exists a $\ex(\f,\ell) \in \mathbb{R}^{2^d-1}$ satisfying $p(L) = 0$ for all $L \subseteq [d]$.

By Claim \ref{clm:p_q_relation}, we have 
$q(S) = \sum_{T \supseteq S} (-1)^{|T| - |S|}p(T) = 0$ for all $S \subseteq [d]$. That is,
\begin{equation*}
0 = q(S) = \begin{cases}
\mu'(S) \left(1 - \sum_{T \subseteq S, |T| \leq d-1} \Expl_T(\f, \ell) \right)  & \text{ , if $S = [d]$} \\
\mu'(S) \left(\sum_{T \subseteq S, |T| \leq d-1} \Expl_T(\f, \ell) \right) & \text{, otherwise.} \\
\end{cases}
\end{equation*}
By the definition of $\mu'(\cdot)$, we have $\mu'(S) > 0$ for all $S \subseteq [d]$. We have 
\begin{align*}
& \sum_{T \subseteq S, |T| \leq d-1} \Expl_T(\f, \ell) =1 & \text{ , if $S = [d]$.} \\
& \sum_{T \subseteq S, |T| \leq d-1} \Expl_T(\f, \ell)  = 0 & \text{ , if $S \subset [d].$} 
\end{align*}
If we plug in $S = \phi$, we get $\Expl_{\phi}(\f,\ell) = 0$. Then we plug in $T = \{ i\}$ for some $1 \leq i \leq d$, we get $\Expl_{\phi}(\f,\ell) + \Expl_{\{i\}}(\f,\ell) = 0$, which implies $\Expl_{\{i\}}(\f,\ell) = 0$. 

Similarly, we obtain $\Expl_{T}(\f,\ell) = 0$ for all  $T \in \setlessell$ by simple induction. However, this $\Expl(\f,\ell)$ fails to satisfy the first equality:
$$
1 - \sum_{T \subseteq S, |T| \leq d-1} \Expl_T(\f, \ell)  = 1 \neq 0,
$$ 
which is a contradiction. Therefore, there is no $\ex(\f,\ell) \in \mathbb{R}^{2^d-1}$ satisfying $p(L) = 0$ for all $L \subseteq [d]$.

\end{proof}

\subsection{Proof of Claim \ref{clm:mubar_mu_relation}}
\begin{proof}
\begin{align*}
\sum_{T \supseteq S} (-1)^{|T| - |S|} \bar{\mu}(T) 
& = \sum_{T \supseteq S} (-1)^{|T| - |S|} \sum_{L \supseteq T} \mu(L)    \\
& = \sum_{L,T: L \supseteq T \supseteq S}  (-1)^{|T| - |S|} \mu(L) \\
& = \sum_{L: L \supseteq S} \mu(L) \sum_{T:  L \supseteq T \supseteq S} (-1)^{|T| - |S|}  \\
& = \sum_{L: L \supseteq S} \mu(L) \sum_{i=|S|}^{|L|} { |L| - |S| \choose i - |S|}(-1)^{i- |S|}  \\
& = \sum_{L: L \supseteq S} \mu(L) (1-1)^{|L| - |S|}  \\
& = \mu(S)\\
\end{align*}

\end{proof}

\subsection{Proof of Claim \ref{clm:mubar_closeform}}

\begin{proof}
Let $\prodd_t = \prod_{j=0}^{t-1} \frac{a(a-b) + j(b-a^2)}{a -b + j(b-a^2)}$. By definition, we have

\begin{align*}
\mubar{t} 
& = \sum_{k=t}^{d} {d-t \choose k-t } \mu_k \\
& = \sum_{k=t}^{d} {d-t \choose k-t } 
\sum_{i=k}^{d} {d-k \choose i-k}(-1)^{i-k} \prod_{j=0}^{j=i-1} \frac{a(a-b) + j(b-a^2)}{a-b + j(b-a^2)} \\
& = \sum_{k=t}^{d} {d-t \choose k-t } 
\sum_{i=k}^{d} {d-k \choose i-k}(-1)^{i-k} \prodd_i   \\
& = \sum_{k=t}^{d} \sum_{i=k}^{d} \prodd_i {d-t \choose k-t }  {d-k \choose i-k}(-1)^{i-k} \\
& = \sum_{i=t}^{d} \prodd_i  \sum_{k=t}^{i} (-1)^{i-k}{d-t \choose k-t }  {d-k \choose i-k} \\
& = \sum_{i=t}^{d} \prodd_i  \sum_{k=t}^{i} (-1)^{i-k} \frac{(d-t)!(d-k)!}{(k-t)!(d-k)!(i-k)!(d-i)!} \\
& = \sum_{i=t}^{d} \prodd_i \frac{(d-t)!}{(d-i)!} \sum_{k=t}^{i}(-1)^{i-k}  \frac{1}{(k-t)!(i-k)!} \\
& = \sum_{i=t}^{d} \prodd_i \frac{(d-t)!}{(d-i)!(i-t)!} \sum_{k=t}^{i} (-1)^{i-k} \frac{(i-t)!}{(k-t)!(i-k)!} \\
& = \sum_{i=t}^{d} \prodd_i {d-t \choose i-t} \sum_{k=t}^{i}(-1)^{i-k} {i-t \choose i-k} \\
& = \prodd_t  + \sum_{i=t+1}^{d} \prodd_i(-1)^{i-k} {d-t \choose i-t} (1 - 1)^{i-t} \\
& = \prodd_t \\
\end{align*}
Therefore, we have $\mubar{t} = \prodd_t =  \prod_{j=0}^{t-1} \frac{a(a-b) + j(b-a^2)}{a -b + j(b-a^2)}$ for all $1 \leq t \leq d$. Also, when $t=0$, since we assume that $\sum_{S \subseteq [d]} \mu(S) = 1$, we have $\mu_0 = 1$.

\end{proof}

\subsection{Proof of Claim \ref{clm:d_ri_positive}}

\begin{proof}
We prove $D^{p}_q > 0$ for all $p,q \in \{0,1,2,...,d\}$ with $0 \leq p+q \leq d$ by induction. 

(i) First, for all $p$ with $0 \leq p \leq d$, we have 
\begin{align*}
D^{p}_{d-p}
& = \sum_{j=0}^{d-p}{d-p \choose j} (-1)^{j}\mubar{p+j} 
\ \ \ \text{ (Definition \ref{def:Dpq}) } \\
& = \sum_{T: S \subseteq T \subseteq [d] }{d - |S| \choose |T| - |S|} (-1)^{|T|-|S|}\bar{\mu}(T) 
\ \ \ \text{ , for some } S \text{ with } |S| = p \\
& = \mu(S) \ \ \ (\text{Claim }  \ref{clm:mubar_mu_relation} ) > 0.
\end{align*}
(ii) Also, we have $D^p_0 = \mubar{p} = \sum_{j=0}^{d-p} {d-p \choose j } \mu_{j+p} > 0$.

Then, for all $p,q \in \{0,1,2,...,d-1\}$ with $0 \leq p+q \leq d$, we have

\begin{align*}
D^{p}_{q}
& = \sum_{j=0}^{q}{q \choose j} (-1)^{j}\mubar{p+j} 
\ \ \ \text{ (Definition \ref{def:Dpq}) } \\
& = \mubar{p} + \mubar{p+q} (-1)^{p+q} + \sum_{j=1}^{q-1} \left({q-1 \choose j} + {q-1 \choose j-1} \right) (-1)^{j}\mubar{p+j} 
\ \ \ \text{ (Pascal's rule) } \\
& =  \mubar{p} + \mubar{p+q}(-1)^{p+q} + \sum_{j=1}^{q-1} \left( {q-1 \choose j}  (-1)^{j}\mubar{p+j} + {q-1 \choose j-1}  (-1)^{j}\mubar{p+j} \right) \\
& =  \mubar{p} + \mubar{p+q} (-1)^{p+q} + \sum_{j=1}^{q-1}  {q-1 \choose j}  (-1)^{j}\mubar{p+j} + \sum_{j=1}^{q-1}{q-1 \choose j-1}  (-1)^{j}\mubar{p+j}  \\
& = \sum_{j=0}^{q-1}  {q-1 \choose j}  (-1)^{j}\mubar{p+j} - \sum_{j=0}^{q-1}{q-1 \choose j}  (-1)^{j}\mubar{p+1+j} \\
& = D^{p}_{q-1} - D^{p+1}_{q}.
\end{align*}
Therefore, we have $D^{p}_{q} + D^{p+1}_{q} =  D^{p}_{q-1}$ (iii). Now, we use this formula to prove $D^{p}_{q}$ is positive by using induction. 

\begin{itemize}
    \item When $p=d$, we have $D^p_0 > 0$ by (ii).
    \item Assume that when $p = i$, $D^p_q > 0$ for all $0 \leq  q \leq d-p$. 
    \item When $p = i-1$, we now prove that $D^p_q > 0$ for all $0 \leq  q \leq d-p$. By (i) and (ii),
    we have $D^p_0 > 0 $ and $D^{p}_{d-p} > 0$. Then by (iii), we have $D^{p}_{d-p} + D^{p+1}_{d-p} =  D^{p}_{d-p-1}$, since we have already had $D^{p}_{d-p} > 0$ and $ D^{p+1}_{d-p} > 0$ by induction hypothesis, we have $D^{p}_{d-p-1} > 0$. Then by calling (iii) recursively from $q= d-p$ to $q=1$, we conclude that $D^p_q > 0$ for all $0 \leq  q \leq d-p$, which also conclude the induction proof.
\end{itemize}

\end{proof}

\subsection{Proof of Claim \ref{clm:row_operation}}
\begin{proof}
First of all, we develop the following equality.

\begin{claim}
\label{clm:I_value}
For any integers $\ell,r, t' p', \rho$ with constraints $1 \leq \ell < r $, $0 \leq t' \leq \ell$, $ 0 \leq p' \leq \ell$ and $0 \leq \rho \leq t'$, we have
\begin{equation}
\sum_{\sigma=0}^{t'-\rho} {t' \choose \sigma} (-1)^{\sigma} {t'-\sigma \choose \rho} { r-t'+\sigma \choose p'- \rho} = \begin{cases}
0 & \text{ if } p' < t' \\
(-1)^{t'-\rho}{r-t' \choose p' - t' } {t' \choose \rho} & \text{ if }  t' \leq p' \leq \ell \\
\end{cases}
\end{equation}
\end{claim}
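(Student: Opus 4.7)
The plan is to prove Claim \ref{clm:I_value} as a purely algebraic binomial identity, with no appeal to the probabilistic or measure-theoretic content of the surrounding argument. The first move I would make is to apply the elementary identity
\[
\binom{t'}{\sigma}\binom{t'-\sigma}{\rho} \;=\; \binom{t'}{\rho}\binom{t'-\rho}{\sigma},
\]
which is valid whenever $\sigma+\rho\le t'$—and this holds throughout the summation range $0\le\sigma\le t'-\rho$. Pulling the $\sigma$-independent factor $\binom{t'}{\rho}$ out front, the claim reduces to showing that
\[
S_{k,m,n} \;:=\; \sum_{\sigma=0}^{k}(-1)^{\sigma}\binom{k}{\sigma}\binom{m+\sigma}{n}
\]
equals $(-1)^{k}\binom{m}{n-k}$ when $k\le n$ and $0$ when $k>n$, with the substitution $k=t'-\rho$, $m=r-t'$, $n=p'-\rho$. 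In particular the case split $p'<t'$ versus $t'\le p'$ corresponds exactly to $k>n$ versus $k\le n$.

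Next I would evaluate $S_{k,m,n}$ by Vandermonde: write
\[
\binom{m+\sigma}{n} \;=\; \sum_{j=0}^{n}\binom{m}{n-j}\binom{\sigma}{j},
\]
swap the order of summation, and simplify using $\binom{k}{\sigma}\binom{\sigma}{j}=\binom{k}{j}\binom{k-j}{\sigma-j}$. The inner $\sigma$-sum becomes
\[
\sum_{\sigma=j}^{k}(-1)^{\sigma}\binom{k}{j}\binom{k-j}{\sigma-j} \;=\; (-1)^{j}\binom{k}{j}\sum_{\tau=0}^{k-j}(-1)^{\tau}\binom{k-j}{\tau} \;=\; (-1)^{j}\binom{k}{j}(1-1)^{k-j},
\]
which is $(-1)^{k}$ when $j=k$ and $0$ otherwise. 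Only the $j=k$ term in the outer sum survives, provided $k\le n$, giving $S_{k,m,n}=(-1)^{k}\binom{m}{n-k}$; if $k>n$ the surviving index lies outside the range of $j$ and $S_{k,m,n}=0$.

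Reinstating $k=t'-\rho$, $m=r-t'$, $n=p'-\rho$ and multiplying by the pulled-out $\binom{t'}{\rho}$ yields the two cases in the claim: $0$ when $p'<t'$, and $(-1)^{t'-\rho}\binom{r-t'}{p'-t'}\binom{t'}{\rho}$ when $t'\le p'\le \ell$. The constraint $\ell<r$ plays no role in this algebraic identity itself; its function in the surrounding argument is only to ensure that the binomial coefficients and the quantities $D^{s'+p'}_{t'}$ that appear in Claim \ref{clm:row_operation} remain meaningful and nonzero. The only real bookkeeping point is to check the boundary behaviour: the symmetric identity in Step 1 is valid on the full summation range $\sigma\le t'-\rho$, and the Vandermonde truncation places the surviving index $j=k$ inside $[0,n]$ precisely when $k\le n$—which is where the case split comes from. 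I do not expect any genuine obstacle; the proof is a short chain of standard manipulations, and the main care is in keeping the index ranges consistent.
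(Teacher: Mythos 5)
Your proof is correct, and it reaches the identity by a noticeably cleaner route than the paper's. Both arguments share the same skeleton: after the trinomial revision $\binom{t'}{\sigma}\binom{t'-\sigma}{\rho}=\binom{t'}{\rho}\binom{t'-\rho}{\sigma}$ (which the paper performs implicitly by expanding factorials), everything reduces to pairing the alternating weights $(-1)^{\sigma}\binom{t'-\rho}{\sigma}$ against a polynomial of degree $p'-\rho$ in $\sigma$, and observing that this $(t'-\rho)$-th finite difference annihilates all terms of degree below $t'-\rho$ and extracts the top coefficient otherwise --- which is exactly where the case split $p'<t'$ versus $p'\ge t'$ comes from in both proofs. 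The difference is in how that polynomial is expanded: the paper writes $\binom{r-t'+\sigma}{p'-\rho}$ as $\prod_{j}(r-t'+\sigma-j)/(p'-\rho)!$ and invokes Claim \ref{clm:rho_poly}, an ad hoc falling-factorial expansion whose coefficients $C_i$ require a separate, fairly long induction; you instead invoke Vandermonde, $\binom{m+\sigma}{n}=\sum_{j}\binom{m}{n-j}\binom{\sigma}{j}$, which is precisely the same expansion in the binomial basis but comes for free as a standard identity, so the auxiliary claim and its induction disappear entirely. Your bookkeeping is sound: the symmetric identity holds on the whole summation range, the surviving index $j=k$ lies in $[0,n]$ exactly when $t'\le p'$, and the degenerate case $p'<\rho$ (where $n<0$, both sides vanish, and necessarily $p'<t'$) is handled consistently by the empty-sum convention. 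The only cosmetic remark is that you could note explicitly that $m=r-t'\ge r-\ell\ge 1$, so $\binom{m+\sigma}{n}$ is a genuine nonnegative-upper-index binomial coefficient and Vandermonde applies without qualification; as you say, this is the only place the hypothesis $\ell<r$ touches the identity.
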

The proof is delayed to Section \ref{sec:proof_claim_I_value}
By using the definition of $M'$ in Eqn.\eqref{eqn:m_definition} , we have 
\begin{align*}
\sum_{\sigma=0}^{t'} {t' \choose \sigma} (-1)^{\sigma} M'_{\{s'+t', s'+\sigma \},\{p',0\}} 
& = 
\sum_{\sigma=0}^{t'} {t' \choose \sigma} (-1)^{\sigma} \sum_{\rho=0}^{t'-\sigma} {t'-\sigma \choose \rho} { r-t'+\sigma \choose p'- \rho} \mubar{s'+t'+p'-\rho}  \\
& = 
\sum_{\sigma=0}^{t'}
\sum_{\rho=0}^{t'-\sigma} {t' \choose \sigma} (-1)^{\sigma}  {t'-\sigma \choose \rho} { r-t'+\sigma \choose p'- \rho} \mubar{s'+t'+p'-\rho} 
 \\
& = 
\sum_{\rho=0}^{t'}  \sum_{\sigma=0}^{t'-\rho} {t' \choose \sigma} (-1)^{\sigma} {t'-\sigma \choose \rho} { r-t'+\sigma \choose p'- \rho} \mubar{s'+t'+p'-\rho} 
 \\ 
& = 
\sum_{\rho=0}^{t'}  \mubar{s'+t'+p'-\rho}  \sum_{\sigma=0}^{t'-\rho} {t' \choose \sigma} (-1)^{\sigma} {t'-\sigma \choose \rho} { r-t'+\sigma \choose p'- \rho} 
\\ 
\end{align*}

Then, by plugging in the results of Claim. \ref{clm:I_value}, we have 
\begin{align*}
\sum_{\sigma=0}^{t'} {t' \choose \sigma} (-1)^{\sigma} M'_{\{s'+t', s'+\sigma \},\{p',0\}} 
& = 
\sum_{\rho=0}^{t'}  \mubar{s'+t'+p'-\rho}  \sum_{\sigma=0}^{t'-\rho} {t' \choose \sigma} (-1)^{\sigma} {t'-\sigma \choose \rho} { r-t'+\sigma \choose p'- \rho}  \\
& = \begin{cases}
0 & \text{ if } p' < t' \\
{r-t' \choose p'-t'} \sum_{\rho=0}^{t'}  (-1)^{t'-\rho} {t' \choose \rho} \mubar{s'+t'+p'-\rho} 
  & \text{ if }  t' \leq p' \leq \ell \\
\end{cases} \\
& = \begin{cases}
0 & \text{ if } p' < t' \\
{r-t' \choose p'-t'} D_{t'}^{s'+p'}
  & \text{ if }  t' \leq p' \leq \ell  \text{   ( Definition \ref{def:Dpq})}\\
\end{cases}
\end{align*}

\end{proof}

\subsection{Proof of Claim \ref{clm:row_decomp}}

\begin{proof}
By Definition \ref{def:row_combination} and Pascal's rule, we have
\begin{align*}
\Re_{s+t,s}^{t} 
& = \sum_{\sigma=0}^{t} {t \choose \sigma} (-1)^{\sigma} Q_{\{s+t,s+\sigma \}} \\
& = \sum_{\sigma=0}^{t} [{t-1 \choose \sigma-1} + {t-1 \choose \sigma}](-1)^{\sigma} Q_{\{s+t,s+\sigma \}} \\
& = \sum_{\sigma=1}^{t} {t-1 \choose \sigma-1} (-1)^{\sigma} Q_{\{s+t,s+\sigma \}} 
+  \sum_{\sigma=0}^{t-1} {t-1 \choose \sigma}(-1)^{\sigma} Q_{\{s+t,s+\sigma \}}  \\
& = -\sum_{\sigma=0}^{t-1} {t-1 \choose \sigma} (-1)^{\sigma} Q_{\{s+t,s+\sigma+1 \}} +  \sum_{\sigma=0}^{t-1} {t-1 \choose \sigma} (-1)^{\sigma} Q_{\{s+t,s+\sigma \}} \\
& = -\Re_{s+t,s+1}^{t-1} +\Re_{s+t,s}^{t-1}.   \\
\end{align*}
By rearanging, we have $\Re_{s+t,s}^{t-1} = \Re_{s+t,s}^{t} +  \Re_{s+t,s+1}^{t-1}$.

\end{proof}

\subsection{Proof of Claim \ref{clm:d_linear_relation}}
\label{sec:clm_d_linear_relation}
\begin{proof}
We prove a stronger version of the claim:
$$ \frac{D^{p}_{q}}{D^{p}_{q+1}} = c^{(1)}_q p +  c^{(2)}_q \text{ and } \frac{D^{p+1}_{q}}{D^{p}_{q+1}} =  c^{(1)}_q  p +  c^{(3)}_q $$ 
for some constants $ c^{(1)}_q , c^{(2)}_q,  c^{(3)}_q \in \mathbb{R}$ dependent on $q$. We prove it by induction.
\begin{enumerate}
    \item For $q = 0$, 
    \begin{equation}
    \label{eqn:AB0}
    \frac{D^p_0}{D^p_1} 
    = \frac{\mubar{p}}{ \mubar{p}-\mubar{p+1}}
    = \frac{1}{1-\mubar{p+1}/\mubar{p}} 
    = \frac{1}{1- \frac{a(a-b) + p(b-a^2)}{a -b + t(b-a^2)}}
    = \frac{a-b + p(b-a^2) }{(1-a)(a-b)} 
    = c^{(1)}_0 p + c^{(2)}_0
    \end{equation}
    \begin{equation}
    \label{eqn:AC0}
    \frac{D^{p+1}_0}{D^p_1} 
    = \frac{\mubar{p+1}}{ \mubar{p}-\mubar{p+1}}
    = \frac{1}{\mubar{p}/\mubar{p+1}-1} 
    = \frac{1}{\frac{a-b + p(b-a^2)}{a(a-b) + p(b-a^2)} - 1}
    = \frac{a(a-b) + p(b-a^2) }{(1-a)(a-b)} 
    = c^{(1)}_0 p + c^{(3)}_0 
    \end{equation}

    \item When $q = k$, suppose we have 
    $$\frac{D^{p}_{k}}{D^{p}_{k+1}} = c^{(1)}_k p + c^{(2)}_k, \text{ and } \frac{D^{p+1}_{k}}{D^{p}_{k+1}} = c^{(1)}_k p + c^{(3)}_q$$ for some constants $c^{(1)}_k, c^{(2)}_k, c^{(3)}_k \in \mathbb{R}$.

    \item When $q = k+1$, by the assumption in step 2, we have 
    $$ \frac{D^{p+1}_{k+1}} {D^p_{k+1}}
    = \frac{D^{p+1}_{k}/  D^{t}_{k+1}} {D^{p+1}_{k}/D^{p+1}_{k+1}}
    = \frac{c^{(1)}_k p + c^{(3)}_k}{c^{(1)}_k p + c^{(1)}_k + c^{(2)}_k}.
    $$
    Then, 
    
    \begin{align}
    \label{eqn:a_k_relation}
     \frac{D^p_{k+1}}{D^p_{k+2}} 
    & = \frac{D^p_{k+1}}{D^p_{k+1} - D^{p+1}_{k+1}} 
    =  \frac{1}{1 - D^{p+1}_{k+1}/ D^p_{k+1}}
    = \frac{c^{(1)}_k p + c^{(1)}_k + c^{(2)}_k}{c^{(1)}_kt + c^{(1)}_k + c^{(2)}_k - c^{(1)}_k t - c^{(3)}_k} \nonumber \\
    & = \frac{c^{(1)}_k p}{c^{(1)}_k + c^{(2)}_k - c^{(3)}_k} + \frac{c^{(1)}_k + c^{(2)}_k}{c^{(1)}_k + c^{(2)}_k - c^{(3)}_k}  
    \end{align}

    Also, 
    \begin{equation}
    \label{eqn:c_k_relation}
    \small
    \frac{D^{p+1}_{k+1}}{D^p_{k+2}} 
    = \frac{D^{p+1}_{k+1}}{D^p_{k+1} - D^{p+1}_{k+1}}
    = \frac{1}{D^p_{k+1} /D^{p+1}_{k+1} - 1}
    = \frac{c^{(1)}_k p + c^{(3)}_k}{c^{(1)}_k + c^{(2)}_k - c^{(3)}_k}
    = \frac{c^{(1)}_k p }{c^{(1)}_k + c^{(2)}_k - c^{(3)}_k} + \frac{c^{(3)}_k}{c^{(1)}_k + c^{(2)}_k - c^{(3)}_k}
    \end{equation}
    The first equality in equations \eqref{eqn:a_k_relation} and \eqref{eqn:c_k_relation} is due to 
    \begin{align}
    D^{p}_{k+2} 
    & = \sum_{i=0}^{k+2} {k+2 \choose i }(-1)^i \mubar{p+i}  
    \nonumber \\
    & =  \mubar{p} + \sum_{i=1}^{k+2} ( {k+1 \choose i-1} + {k+1 \choose i})(-1)^i \mubar{p+i} 
     \nonumber\\
    & = \sum_{i=0}^{k+1} {k \choose i }(-1)^i \mubar{p+i} - \sum_{i=0}^{k+1} {k+1 \choose i }(-1)^i \mubar{p+i+1}
    \nonumber \\
    & =  D^{p}_{k+1} - D^{p+1}_{k+1} \nonumber \\
    \label{eqn:D_recursive_relation}
    \end{align}
    
    Therefore, by Eqn. \eqref{eqn:a_k_relation}, \eqref{eqn:c_k_relation}, we can get 
    \begin{equation}
    \label{eqn:abc_k_relation}
    c^{(1)}_{k+1} = \frac{c^{(1)}_k}{c^{(1)}_k + c^{(2)}_k - c^{(3)}_k},
    \ \ c^{(2)}_{k+1} = \frac{c^{(1)}_k + c^{(2)}_k}{c^{(1)}_k + c^{(2)}_k - c^{(3)}_k},
    \ \ c^{(3)}_{k+1}=  \frac{c^{(3)}_k}{c^{(1)}_k + c^{(2)}_k - c^{(3)}_k}
    \end{equation}
    So we can conclude that these two ratio have affine relationship with $p$.

\end{enumerate}

\end{proof}

\subsection{Proof of Claim \ref{clm:dpq_value_thm2}}
We first prove that $D^p_1 = \mubar{p} - \mubar{p+1} \propto \frac{d-1}{pd}$ for $1 \leq p \leq d-1$.

\begin{align*}
\mubar{p}
& = \sum_{i=0}^{d-p-1} { d-p \choose i } \mu_{i+p} \ \ \ ( \text{ Definition } \ref{def:cumulative_weighting})  \\
& \propto 
\sum_{i=0}^{d-p-1} { d-p \choose i } \frac{d-1}{{d \choose i+p}(i+p)(d-i-p) } \ \ \ ( \mu_d=0)\\
& =
(d-1)\sum_{i=0}^{d-p-1}  \frac{ (d-p)!(i+p-1)!(d-i-p-1)!}{i!(d-p-i)!d!  } \\
& = 
\frac{(d-1)(d-p)!}{d!} \sum_{i=0}^{d-p-1}  \frac{(i+p-1)!}{i!(d-p-i)} \\
& = 
\frac{(d-1)(d-p)!}{d!} \sum_{i=0}^{d-p-1}  \frac{\prod_{j=1}^{p-1}(i+j) }{d-p-i} \\
\end{align*}
Then, we have 
\begin{align*}
D^p_1 
& = \mubar{p} - \mubar{p+1} \\
& \propto 
\frac{(d-1)(d-p)!}{d!} \sum_{i=0}^{d-p-1}  \frac{\prod_{j=1}^{p-1}(i+j) }{d-p-i}
- \frac{(d-1)(d-p-1)!}{d!} \sum_{i=0}^{d-p-2}  \frac{\prod_{j=1}^{p}(i+j) }{d-p-i-1}
\\
& =
\frac{(d-1)(d-p-1)!}{d!}
\left[
(p-1)! +
\sum_{i=0}^{d-p-2}  \left[ \frac{(d-p)\prod_{j=1}^{p-1}(i+j+1) }{d-p-i-1} 
- \frac{\prod_{j=1}^{p}(i+j) }{d-p-i-1}
\right]\right] \\
& =
\frac{(d-1)(d-p-1)!}{d!}
\left[
(p-1)! +
\sum_{i=0}^{d-p-2} \frac{\prod_{j=1}^{p-1}(i+j+1)}{d-p-i-1} \left[ d-p-i-1
\right]\right] \\
& =
\frac{(d-1)(d-p-1)!(p-1)!}{d!}
\left[
1 +
\sum_{i=0}^{d-p-2} \ \  \frac{\prod_{j=1}^{p-1}(i+j+1)}{(p-1)!}
\right] \\
& =
\frac{(d-1)(d-p-1)!(p-1)!}{d!}
\sum_{i=p-1}^{d-2}  
{ i \choose p-1 }\\
& = 
\frac{(d-1)(d-p-1)!(p-1)! {d-1 \choose p }}{d!} \ \ \ (*)\\
& = 
\frac{(d-1)(d-p-1)!(p-1)!(d-1)!}{d!p!(d-p-1)!} \\
& = 
\frac{d-1}{dp} \\
\end{align*}

where $(*)$ follows from using Pascal's rule for multiple times:
\begin{align*}
{d-1 \choose p} 
& = {d-2 \choose p-1} + {d-2 \choose p}   \\
& = {d-2 \choose p-1} + {d-3 \choose p-1} +  {d-3 \choose p} \\
& = {d-2 \choose p-1} + {d-3 \choose p-1} + \cdots +  {p \choose p-1} + {p \choose p} \\
& = \sum_{i=p-1}^{d-2} {i \choose p-1} \\
\end{align*}

Then we prove the following equation by induction on $q$. We have proved the basic case when $q=1$. Let assume that it holds for $q-1$, such that for all $0 \leq p \leq d-q$, 
\begin{align*}
D^p_{q-1}
= \sum_{j=0}^{q-1} 
{ q-1 \choose j}(-1)^j \mubar{p+j}
\propto 
\frac{d-1}{d(q-1) { p+q-2 \choose p-1 } }.
\end{align*}

Then, we have
\begin{align*}
D^p_{q}
& = D^p_{q-1} - D^{p+1}_{q-1} 
\ \ \ \ \ (\text{Eqn. \eqref{eqn:D_recursive_relation}}) \\
& \propto 
\frac{d-1}{d} \times \frac{1}{(q-1) { p+q-2 \choose p-1 } } - \frac{1}{(q-1) { p+q-1 \choose p } } \\
& = 
\frac{d-1}{d} \times
\frac{1}{(q-1) { p+q-2 \choose p-1 } }
\left( 1 - \frac{p}{ p+q-1 } \right)\\
& = 
\frac{d-1}{d} \times
\frac{1}{{ p+q-2 \choose p-1 }( p+q-1) } \\
& = 
\frac{d-1}{d} \times
\frac{(p-1)!(q-1)!}{(p+q-1)! } \\
& = 
\frac{d-1}{dq {p+q-1 \choose p-1}} \\
\end{align*}
Therefore, we complete the induction proof.

\subsection{Proof of Claim \ref{clm:d_linear_relation_thm2}}
\begin{proof}
By Claim \ref{clm:dpq_value_thm2}, we have
\begin{align*}
\frac{D^{p}_{q}}{D^{p}_{q+1}}  
& = \frac{\frac{1}{q { p+q-1 \choose p-1 } }}{\frac{1}{(q+1) { p+q \choose p-1 } }} 
= \frac{q+1}{q} \times \frac{p+q}{q+1}
= \frac{p}{q} + 1 \\
\end{align*}
The constants are $ c^{(1)}_q = \frac{1}{q}  \text{  and  } c^{(2)}_q  = 1$.

\end{proof}

\subsection{Proof of Claim \ref{clm:closed_form_basis_function}}

First of all, by Claim \ref{clm:dpq_value_thm2}, we plug in $D^p_q 
= \frac{d-1}{dq { p+q-1 \choose p-1 } }$ to the equation.

\begin{proof}

\begin{align}
& \sum_{j=0}^{\ell - i} 
{ r- i \choose j} \frac{D_{i}^{j+i}}{D^{r}_{i}} 
\frac{(r-1)!(\ell + i+j -1)!}{(r+\ell-1)!(j+i-1)!} { r - j-i - 1 \choose \ell - i-j} (-1)^{\ell - i - j}  \nonumber\\
& = \frac{(r+i-1)!(r-i)!}{(r+\ell-1)!(r-\ell-1)!}\sum_{j=0}^{\ell - i} 
\frac{(\ell + i+ j -1)!}{(r-i-j)j!(j+2i-1)!(\ell-i-j)!}(-1)^{\ell - i - j}   \nonumber\\
& = 
\frac{(r+i-1)!(r-i)!}{(r+\ell-1)!(r-\ell-1)!}\sum_{j=0}^{\ell - i} 
\frac{(\ell-i)!(-1)^{\ell-i-j}}{(r-i-j)j!(\ell-i-j)!}\cdot 
\frac{(\ell + i+j -1)!}{(j+2i-1)!(\ell-i)!}  \nonumber\\
& = 
\frac{(r+i-1)!(r-i)!}{(r+\ell-1)!(r-\ell-1)!}
\underbrace{\sum_{j=0}^{\ell - i} 
 \frac{(-1)^{\ell-i-j}{\ell -i \choose \ell -i - j}}{r-i-j} \cdot
{\ell+i+j-1 \choose \ell-i} }_{(i)}
\label{eqn:(i)_substitute}
\end{align}
Now we use generating function to prove the above equation equals 1. 

First, we look at the coefficients of the generating function $p_1(x) = \int x^{r-\ell-1}(1-x)^{\ell-i} dx$. 
By using the fact that the 
coefficient of $x^{r-i-n-1}$ in the polynomial $x^{r-\ell-1}(1-x)^{\ell-i} $ is
$(-1)^{\ell-i-n}{\ell -i \choose \ell-i-n}$ for some $0 \leq n \leq \ell-i$, we have 
$$
p_1(x) = \int x^{r-\ell-1}(1-x)^{\ell-i} dx = \sum_{n=0}^{\ell-i} \frac{(-1)^{\ell-i-n}{\ell-i \choose \ell-i-n}}{r-i-n} x^{r-i-n},
$$
where we set the constant term in the integration is zero.

Secondly, we consider another polynomial \footnote{Generally, for any non-negative integer $k$ and non-zero real value $a$, we have $\sum_{n=0}^{\infty}a^n {n+k \choose k} x^n = \frac{1}{(1-ax)^{k+1}}$.} 
$$p_2(x) =\frac{1}{(1-x)^{\ell-i+1}}
= \sum_{m=0}^{\infty}{ m + \ell -i \choose \ell-i}x^m.$$ 

Finally, we deal with the coefficient of $x^{r+i-1}$ term of the polynomial $p_1(x)p_2(x)$. By combining the above two equalities, this coefficient is 
\begin{align*}
& \sum_{n=0}^{\ell-i} \left( \frac{(-1)^{\ell-i-n}{\ell-i \choose \ell-i-n}}{r-i-n} x^{r-i-n}
 \cdot {(2i+n-1)+\ell-i \choose \ell-i }x^{2i+n-1} \right) \\
& = x^{r+i-1} \underbrace{\sum_{n=0}^{\ell-i} \frac{(-1)^{\ell-i-n}{\ell-i \choose \ell-i-n}}{r-i-n}
\cdot {\ell+i+n-1 \choose \ell-i } }_{(ii)}. 
\end{align*}

Now, if we compare (i) and (ii), we can see that their values are the same. Therefore, (i) is actually the coefficient of $x^{r+i-1}$ term of the polynomial $p_1(x)p_2(x)$.

We now analyze the polynomial $p_1(x) = \int x^{r-\ell-1}(1-x)^{\ell-i} dx$.
\begin{claim}
\label{clm:int_closed_form}
Assume that the constant term in the polynomial $p_1(x) = \int x^{r-\ell-1}(1-x)^{\ell-i} dx$ is zero so that $p_1(0)=0$. Then we have 
$$
p_1(x) = \int x^{r-\ell-1}(1-x)^{\ell-i} dx 
= (1-x)^{\ell-i+1}p(x) + \frac{(r-\ell-1)! }{\prod_{k=0}^{r-\ell-1} (\ell-i+k+1)},
$$
where $p(x)$ is a polynomial with degree at most $\leq r-\ell-1$.
\end{claim}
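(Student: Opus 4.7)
The plan is to reduce the claim to a divisibility statement about roots of $p_1(x)-p_1(1)$ at $x=1$. Setting $m=r-\ell-1$ and $n=\ell-i$ (both nonnegative under the relevant parameter ranges), the antiderivative $p_1$ is a polynomial of degree $m+n+1$ satisfying $p_1'(x)=x^m(1-x)^n$ and $p_1(0)=0$, and what we want is exactly the statement
\[
p_1(x)-p_1(1) \;=\; (1-x)^{n+1}\,p(x),\qquad \deg p\le m,\quad p_1(1)=\tfrac{m!}{\prod_{k=0}^{m}(n+k+1)}.
\]

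The first step is to evaluate $p_1(1)$. Since $p_1(1)-p_1(0)=\int_{0}^{1}x^{m}(1-x)^{n}\,dx=B(m+1,n+1)$ and $p_1(0)=0$, a direct computation gives $p_1(1)=\dfrac{m!\,n!}{(m+n+1)!}=\dfrac{m!}{\prod_{k=0}^{m}(n+k+1)}$, matching the constant appearing in the claim.

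The second step, which is the main content, is to show that $(1-x)^{n+1}$ divides the polynomial $q(x):=p_1(x)-p_1(1)$. By construction $q(1)=0$, so $(1-x)$ divides $q$; let $k\ge 1$ be the exact multiplicity and write $q(x)=(1-x)^{k}h(x)$ with $h(1)\ne 0$. Differentiating, $q'(x)=(1-x)^{k-1}\bigl[-k\,h(x)+(1-x)h'(x)\bigr]$, and the bracketed factor does not vanish at $x=1$; hence the exact multiplicity of $(1-x)$ in $q'$ is $k-1$. But $q'=p_1'=x^{m}(1-x)^{n}$ has multiplicity exactly $n$ at $x=1$, so $k=n+1$, as required. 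Writing $q(x)=(1-x)^{n+1}p(x)$ and comparing degrees ($\deg q=m+n+1$), we obtain $\deg p\le m=r-\ell-1$, which finishes the proof.

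The only subtle point is the argument that the multiplicity of $(1-x)$ in $q$ is exactly one more than in $q'$; I expect this to be the main obstacle if one tries a messier route (for instance, repeated integration by parts, which is what the inductive version of the identity would give and which also works but is more computational). The root-multiplicity argument above avoids that bookkeeping entirely, and all remaining facts are degree count plus the standard beta-function evaluation.
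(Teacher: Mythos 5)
Your proof is correct, and it takes a genuinely different route from the paper's. The paper proves the (slightly generalized) identity $\int x^{a}(1-x)^{b}\,dx = (1-x)^{b+1}p_{a,b}(x) + \frac{a!}{\prod_{k=0}^{a}(b+k+1)}$ by induction on $a$: integration by parts trades $x^{a}$ for $x^{a-1}$ while raising the power of $(1-x)$, the base case $a=0$ is computed directly, and the constant emerges from the inductive recursion. You instead split the claim into two independent facts: the constant is just $p_1(1)=\int_0^1 x^{m}(1-x)^{n}\,dx=B(m+1,n+1)$, and the divisibility of $p_1(x)-p_1(1)$ by $(1-x)^{n+1}$ follows from the exact-multiplicity argument (a root of exact order $k$ in $q$ gives a root of exact order $k-1$ in $q'$, and $q'=x^{m}(1-x)^{n}$ vanishes to order exactly $n$ at $x=1$ since $1^{m}\neq 0$), with the degree bound $\deg p\le m=r-\ell-1$ coming from $\deg q=m+n+1$. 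Your version is shorter and isolates the structural reason the factorization holds, at the cost of being an existence argument rather than the explicit recursive construction the paper gives; the paper's induction is more computational but entirely self-contained in elementary algebra and matches the style of the adjacent claims. The one hypothesis you rely on implicitly --- that $m=r-\ell-1\ge 0$ and $n=\ell-i\ge 0$ --- does hold in the regime where the claim is invoked ($r>\ell\ge i\ge 1$), and you flag it, so there is no gap.
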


We delay the proof to Section \ref{sec:proof_int_closed_form}. Now, we are able to calculate the polynomial $p_1(x)p_2(x)$:
\begin{align*}
p_1(x)p_2(x) 
& = \frac{\int x^{r-\ell-1}(1-x)^{\ell-i} dx }{(1-x)^{\ell-i+1}} \\
& = p(x) + \frac{(r-\ell-1)! }{\prod_{k=0}^{r-\ell-1} (\ell-i+k+1)} \cdot \frac{1}{{(1-x)^{\ell-i+1}}}
\ \  \text{(Claim \ref{clm:int_closed_form})} \\
& =  p(x) + \frac{(r-\ell-1)! }{\prod_{k=0}^{r-\ell-1} (\ell-i+k+1)}  \sum_{m=0}^{\infty}{ m + \ell -i \choose \ell-i}x^m. \\
\end{align*}
We note that $p(x)$ is a polynomial with degree $\leq r-\ell-1$. Recall that our goal is to calculate the coefficient of $x^{r+i-1}$ term of the polynomial $p_1(x)p_2(x)$, so $p(x)$ has nothing to do with it. Therefore, we have 
\begin{align*}
(i) & = \frac{(r-\ell-1)! }{\prod_{k=0}^{r-\ell-1} (\ell-i+k+1)} { (r+i-1) + \ell -i \choose \ell-i} \\
& = \frac{(r-\ell-1)!(\ell-i)! }{(r-i)!} { r+\ell-1 \choose \ell-i} \\
& = \frac{(r-\ell-1)!(r+\ell-1)! }{(r-i)!(r+i-1)!}. \\
\end{align*}

By substituting the value of (i) to Eqn.\eqref{eqn:(i)_substitute}, we get
$$
\sum_{j=0}^{\ell - i} 
{ r- i \choose j} \frac{D_{i}^{j+i}}{D^{r}_{i}} 
\frac{(r-1)!(\ell + j+i -1)!}{(r+\ell-1)!(j+i-1)!} { r - j-i - 1 \choose \ell - i-j} (-1)^{\ell - i - j} 
= 1.
$$

\end{proof}

\subsection{Proof of Claim \ref{clm:I_value}}
\label{sec:proof_claim_I_value}
\begin{proof}
Before we prove Claim \ref{clm:I_value}, we first derive the following equality.
\begin{claim}
\label{clm:rho_poly}
For any $x',\sigma \in \mathbb{R}$ and $y' \in \mathbb{N} + \{0\}$, we have
$$ 
\prod_{j=0}^{y'} (x'+\sigma-j) =
C_0 +
\sum_{j=0}^{y'} C_{j+1} \sigma(\sigma-1)...(\sigma-j) 
\text{ with } 
C_i = \frac{1}{i!}\prod_{j=0}^{i-1} (y'+1-j) \prod_{k=0}^{y'-i}(x'-k)
$$
\end{claim}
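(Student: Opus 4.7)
The plan is to recognize that the left-hand side is a falling factorial in disguise. Writing $z^{\underline{m}} := z(z-1)\cdots(z-m+1)$ for the falling factorial, we have
$$\prod_{j=0}^{y'}(x'+\sigma-j) = (x'+\sigma)^{\underline{y'+1}},$$
while the right-hand side, since $\sigma(\sigma-1)\cdots(\sigma-j)=\sigma^{\underline{j+1}}$ and $C_0$ plays the role of the constant term $\sigma^{\underline{0}}=1$, is a linear combination of the falling factorials $\sigma^{\underline{i}}$ for $i=0,1,\ldots,y'+1$. Thus the claim reduces to the Vandermonde--Chu identity for falling factorials,
$$(a+b)^{\underline{n}} = \sum_{k=0}^{n}\binom{n}{k}\, a^{\underline{n-k}}\, b^{\underline{k}},$$
applied with $a = x'$, $b = \sigma$, and $n = y'+1$.

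I would establish the Vandermonde identity itself by a short induction on $n$, using the recurrence $(a+b)^{\underline{n+1}} = (a+b-n)\,(a+b)^{\underline{n}}$ and splitting $(a+b-n) = \bigl(a - (n-k)\bigr) + (b - k)$ inside each summand of the inductive hypothesis, so that the two halves extend $a^{\underline{n-k}}$ and $b^{\underline{k}}$ by one factor respectively; Pascal's rule then collects the coefficients into $\binom{n+1}{k}$. A cleaner alternative, which I might use instead, is to compare coefficients in the formal identity $(1+t)^{a+b}=(1+t)^a(1+t)^b$: equating the coefficients of $t^n$ gives $\binom{a+b}{n} = \sum_{k}\binom{a}{k}\binom{b}{n-k}$, and multiplying through by $n!$ produces the falling-factorial version.

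With the identity in place, the proof concludes by matching coefficients. The coefficient of $\sigma^{\underline{i}}$ in the expansion is $\binom{y'+1}{i}(x')^{\underline{y'+1-i}}$; expanding $\binom{y'+1}{i} = \frac{1}{i!}\prod_{j=0}^{i-1}(y'+1-j)$ and $(x')^{\underline{y'+1-i}} = \prod_{k=0}^{y'-i}(x'-k)$ yields exactly the formula for $C_i$ given in the statement, with the boundary cases $i=0$ and $i=y'+1$ handled by the usual convention that an empty product equals $1$. There is no genuine obstacle here; the only care required is bookkeeping the shifted indices between the falling-factorial notation and the explicit product notation used in the statement.
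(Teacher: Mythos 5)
Your proof is correct, and it takes a genuinely different route from the paper's. You recognize the statement as the Chu--Vandermonde identity for falling factorials, $(x'+\sigma)^{\underline{y'+1}}=\sum_{i=0}^{y'+1}\binom{y'+1}{i}(x')^{\underline{y'+1-i}}\sigma^{\underline{i}}$, verify that $\binom{y'+1}{i}(x')^{\underline{y'+1-i}}$ matches the stated $C_i$ (including the empty-product boundary cases $i=0$ and $i=y'+1$), and prove the identity either by a one-step induction with Pascal's rule or by the generating-function argument; since both sides are polynomials in $x'$ and $\sigma$, the identity extends from integers to all reals, and your inductive argument in fact works directly for real arguments. The paper instead expands $g(\sigma)=\prod_{j=0}^{y'}(x'+\sigma-j)$ in the basis $\{1,\sigma^{\underline{1}},\dots,\sigma^{\underline{y'+1}}\}$ and determines the coefficients by evaluating $g$ at $\sigma=0,1,2,\dots$ and solving recursively, showing along the way that $C_{(i)}(x')=\frac{1}{i!}\sum_{j=0}^{i}(-1)^{i-j}\binom{i}{j}g(j)$ --- essentially re-deriving the Newton forward-difference formula by a lengthy double induction. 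Your approach buys a much shorter and more conceptual proof by invoking (and quickly reproving) a standard identity; the paper's approach is self-contained and avoids naming any classical result, at the cost of considerably more computation. Either is acceptable; no gap.
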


\begin{proof}
Let
$$
g(\sigma) = \prod_{j=0}^{y'} (x'+\sigma-j)  
$$
We note that $g(\sigma)$ is a $(y'+1)^{\text{th}}$-polynomial in terms of $\sigma$. Since $\{\sigma(\sigma-1)...(\sigma-j)  \}_{j=0}^{y'}\cup \{ 1\}$ is a basis for this polynomial, $g(\sigma)$ can be expressed as $C_{(0)}(x') +
\sum_{j=0}^{y'} C_{(j+1)}(x') \sigma(\sigma-1)...(\sigma-j) $ with some functions $C_{(0)}(x'),...,C_{(y'+1)}(x')$ of $x'$. Now we proof that 

$$
C_{(i)}(x') =  \frac{1}{i!}\prod_{j=0}^{i-1} (y'+1-j) \prod_{k=0}^{y'-i}(x'-k) = C_i
$$
by induction.
At each induction step, we prove that 
$ C_i = C_{(i)}(x') = \frac{1}{i!}( \sum_{j=0}^{i} (-1)^{i-j}{ i \choose j }g(j) )
=  \frac{1}{i!}\prod_{j=0}^{i-1} (y'+1-j) \prod_{k=0}^{y'-i}(x'+i+k)$.
\begin{enumerate}
    \item When $i=0$, by plugging in $\sigma = 0$ in $g(\sigma)$, we have
    $$
    C_{(0)}(x')  = g(0) 
    = \prod_{j=0}^{y'} (x'-j)
    = \prod_{k=0}^{y'} (x'-k) 
    = C_0.
    $$ 
    
    \item When $i=0,1,...,i'$ for some $i'>0$, assume that we have 
    $$
    C_{i}
    = C_{(i)}(x')
    = \frac{1}{i!} \left( \sum_{j=0}^{i} (-1)^{i-j}{ i \choose j }g(j) \right)
    = \frac{1}{i!}\prod_{j=0}^{i-1} (y'+1-j) \prod_{k=0}^{y'-i}(x'-k).
    $$
    
    \item When $i=i'+1$, by plugging in $\sigma = i'+1$, we have
    $$
    g(i'+1) = \prod_{j=0}^{y'} (x'+i'+1-j) 
    = C_{(0)}(x') + \sum_{j=0}^{i'} C_{(j+1)}(x') (i'+1)(i')...(i'+1-j) 
    $$
    Now we can express $C_{(i'+1)}(x')$ as $C_{(0)}(x'),..., C_{(i')}(x')$ and $g(i'+1)$. First, we proof $C_{(i'+1)}(x') =  \frac{1}{(i'+1)!}\sum_{k=0}^{i'+1} 
    { i'+1 \choose k}
     (-1)^{i'+1-k}  g(k) $.
    \begin{align*}
    & C_{(i'+1)}(x') \\
    & = \frac{1}{(i'+1)!}\left(g(i'+1) - C_{(0)}(x') - \sum_{j=0}^{i'-1}  (i'+1)(i')...(i'+1-j) C_{(j+1)}(x')\right) \\
    & = \frac{1}{(i'+1)!}\left(g(i'+1) - g(0) - \sum_{j=0}^{i'-1}(i'+1)(i')...(i'+1-j)  \frac{1}{(j+1)!} \left[ \sum_{k=0}^{j+1} (-1)^{j+1-k}{ j+1 \choose k }g(k) \right] \right) \\
    & = \frac{1}{(i'+1)!}\left(g(i'+1) - g(0) - \sum_{j=0}^{i'-1}
    { i'+1 \choose j+1 }
    \left[ \sum_{k=0}^{j+1} (-1)^{j+1-k}{ j+1 \choose k }g(k) \right] \right) \\
    & = \frac{1}{(i'+1)!}\left(g(i'+1)  - \sum_{j=0}^{i'}
    { i'+1 \choose j }
    \left[ \sum_{k=0}^{j} (-1)^{j-k}{ j \choose k }g(k) \right] \right) \\
    & = \frac{1}{(i'+1)!}\left(g(i'+1)  - \sum_{k=0}^{i'}  \sum_{j=k}^{i'}
    { i'+1 \choose j }
    (-1)^{j-k}{ j \choose k }g(k)  \right) \\
    & = \frac{1}{(i'+1)!}\left(g(i'+1)  - \sum_{k=0}^{i'}  g(k) \sum_{j=k}^{i'} 
    \frac{(-1)^{j-k}(i'+1)!j!}{j!(i'+1-j)!k!(j-k)!}\right) \\
    & = \frac{1}{(i'+1)!}\left(g(i'+1)  - \sum_{k=0}^{i'}  g(k)
    \frac{(i'+1)!}{k!(i'-k+1)!}
    \sum_{j=0}^{i'-k} 
    \frac{(-1)^{j}(i'-k+1)!}{(i'-j+k+1)!(j)!}\right) \\
    & = \frac{1}{(i'+1)!}\left(g(i'+1)  - \sum_{k=0}^{i'}  g(k)
    { i'+1 \choose k}
    ( (-1)^{i'-k} + \sum_{j=0}^{i'-k+1} 
    \frac{(-1)^{j}(i'-k+1)!}{(i'-j+k+1)!(j)!} )\right) \\
    & = \frac{1}{(i'+1)!}\left(g(i'+1)  - \sum_{k=0}^{i'}  g(k)
    { i'+1 \choose k}
    ( (-1)^{i'-k} + (1-1)^{i'-k+1} )\right) \\
     & = \frac{1}{(i'+1)!}\sum_{k=0}^{i'+1}  g(k)
    { i'+1 \choose k}
     (-1)^{i'+1-k} \\
    \end{align*}
    Secondly, we prove that $C_{(i'+1)}(x')
    =  \frac{1}{(i'+1)!}\prod_{j=0}^{i'} (y'+1-j) \prod_{k=0}^{y'-i}(x'+i'+1+k) 
    = C_{i'+1}$.
    
    \begin{align*}
    C_{(i'+1)}(x')
    & = \frac{1}{(i'+1)!}( \sum_{j=0}^{i'+1} (-1)^{i'+1-j}{ i'+1 \choose j }g(j) ) \\
    & = \frac{1}{(i'+1)!}
    \left(  \sum_{j=0}^{i'} (-1)^{i'-j}{ i' \choose j } g(j+1)-\sum_{j=0}^{i'} (-1)^{i'-j}{ i' \choose j }g(j) \right) 
    \ \ \ (\text{Pascal's rule} )\\
    & = \frac{1}{(i'+1)!}
    \left(  \sum_{j=0}^{i'} (-1)^{i'-j}{ i' \choose j } g(j+1)
    -\sum_{j=0}^{i'} (-1)^{i'-j}{ i' \choose j }g(j) \right) \\
    & = \frac{1}{(i'+1)!}
    (i'!C_{(i')}(x'+1) - i'!C_{(i')}(x') ) \\
    & =  \frac{1}{(i'+1)!}(\prod_{j=0}^{i'-1} (y'+1-j) \prod_{k=0}^{y'-i'}(x'+1-k) - \prod_{j=0}^{i'-1} (y'+1-j) \prod_{k=0}^{y'-i'}(x'-k) ) \\
    & =  \frac{\prod_{j=0}^{i'-1} (y'+1-j) \prod_{k=0}^{y'-i'-1}(x'-k)}{(i'+1)!}( (x'+1) -  (x'-y'+i') ) \\
    & =  \frac{\prod_{j=0}^{i'} (y'+1-j) \prod_{k=0}^{y'-i'-1}(x'-k)}{(i'+1)!}\\
    & = C_{i'+1} \\
    \end{align*}
\end{enumerate}
Then we complete the induction proof of Claim \ref{clm:rho_poly}.
\end{proof}

Now we come back to the proof of Claim \ref{clm:I_value}.

Since we may have $r-t'+\sigma < t'=\rho$, we expand ${ r-t'+\sigma \choose p'- \rho}$ with ${ r-t'+\sigma \choose p'- \rho} = \frac{ \prod_{j=0}^{p'-\rho-1} (r-t'+\sigma-j)}{(p'-\rho)!}$ to avoid the denominator being zero.
\begin{align*}
& \sum_{\sigma=0}^{t'-\rho} {t' \choose \sigma} (-1)^{\sigma} {t'-\sigma \choose \rho} { r-t'+\sigma \choose p'- \rho} \\
& = \sum_{\sigma=0}^{t'-\rho} 
\frac{(-1)^{\sigma}t'!(t'-\sigma)!}
{\sigma!(t'-\sigma)!\rho!(t'-\sigma-\rho)!} 
\frac{ \prod_{j=0}^{p'-\rho-1} (r-t'+\sigma-j)}{(p'-\rho)!}
\\
& = \frac{t'!}{\rho!(p'-\rho)!}
\sum_{\sigma=0}^{t'-\rho} 
\frac{(-1)^{\sigma} \prod_{j=0}^{p'-\rho-1} (r-t'+\sigma-j)}
{\sigma!(t'-\sigma-\rho)!}
\\
& = \frac{t'!}{\rho!(p'-\rho)!}
\sum_{\sigma=0}^{t'-\rho} 
\frac{(-1)^{\sigma} (C_0 + \sum_{j=0}^{p'-\rho-1} C_{j+1}
\sigma(\sigma-1)...(\sigma-j) ) } 
{\sigma!(t'-\sigma-\rho)!} \\
& (\text{By plugging in $x'=r-t'$ and $y'=p-\rho -1$ in Claim.\ref{clm:rho_poly}.})
\\
& = 
\underbrace{\frac{ t'!}{\rho!(p'-\rho)!} 
\left[C_0 \sum_{\sigma=0}^{t'-\rho} 
\frac{(-1)^{\sigma} }
{\sigma!(t'-\sigma-\rho)!}  + \sum_{j=0}^{p'-\rho-1} C_{j+1} 
\sum_{\sigma=0}^{t'-\rho} 
\frac{(-1)^{\sigma} \sigma(\sigma-1)...(\sigma-j) }
{\sigma!(t'-\sigma-\rho)!} \right]}_{(I)}.
\\
\end{align*}

Then we separate $(I)$ into two cases. 
First, when $\sigma \leq j'$, we have $\sigma(\sigma-1)...(\sigma-j)= 0$. Therefore, we have 
\begin{align*}
(I) & = \frac{ t'!}{\rho!(p'-\rho)!} 
\left[C_0 \sum_{\sigma=0}^{t'-\rho} 
\frac{(-1)^{\sigma} }
{\sigma!(t'-\sigma-\rho)!}  + \sum_{j=0}^{t'-\rho-1} C_{j+1} 
\sum_{\sigma=j+1}^{t'-\rho} 
\frac{(-1)^{\sigma} }
{(\sigma-j-1)!(t'-\sigma-\rho)!} \right] 
\\
& = \frac{t'!}{\rho!(p'-\rho)!}
\sum_{j=0}^{t'-\rho}  C_j \sum_{\sigma=j}^{t'-\rho} 
\frac{(-1)^{\sigma} }
{(\sigma-j)!(t'-\sigma-\rho)!} 
\\
& = \frac{t'!}{\rho!(t'-\rho)!}
\sum_{j=0}^{t'-\rho} C_j (t'-\rho -j)! \sum_{\sigma=j}^{t'-\rho} 
(-1)^{\sigma} {t'-\rho -j \choose t'-\rho-\sigma}
\\
& = \frac{t'!}{\rho!(t'-\rho)!}
\sum_{j=0}^{t'-\rho}  C_j (t'-\rho -j)! (1-1)^{t'-\rho-j} (-1)^j
\\
& =
\frac{t'!C_{t'-\rho}}{\rho!(p'-\rho)!} (-1)^{t'-\rho} \\
& =
\frac{t'!}{\rho!(p'-\rho)!}
\frac{(p'-\rho)!(r-t')!}{(t'-\rho)!(p'-t')!(r-p')!} (-1)^{t'-\rho} \\
& (\text{By plugging in $x'=r-t'$ and $y'=p-\rho -1$ in Claim.\ref{clm:rho_poly}.}) \\
& =
{r-t' \choose p' - t' }{t' \choose \rho} (-1)^{t'-\rho} \\
\end{align*}

Secondly, if $p' < t'$, 
\begin{align*}
(I)& = \frac{ t'!}{\rho!(p'-\rho)!} 
\left[\frac{ C_0(-1)^{\sigma} }{\sigma!(t'-\sigma-\rho)!}\sum_{\sigma=0}^{t'-\rho} 
\frac{(-1)^{\sigma} }
{\sigma!(t'-\sigma-\rho)!}  + \sum_{j=0}^{t'-\rho-1} C_{j+1} 
\sum_{\sigma=0}^{t'-\rho} 
\frac{(-1)^{\sigma} }
{(\sigma-j-1)!(t'-\sigma-\rho)!} \right] 
\\
& = \frac{t'!}{\rho!(p'-\rho)!}
\sum_{j=0}^{t'-\rho}  C_j \sum_{\sigma=j}^{t'-\rho} 
\frac{(-1)^{\sigma} }
{(\sigma-j)!(t'-\sigma-\rho)!} \ \ \ ( \sigma \leq j \Rightarrow \sigma(\sigma-1)...(\sigma-j)= 0)
\\
& = \frac{t'!}{\rho!(t'-\rho)!}
\sum_{j=0}^{p'-\rho} C_j (t'-\rho -j)! \sum_{\sigma=j}^{t'-\rho} 
(-1)^{\sigma} {t'-\rho -j \choose t'-\rho-\sigma}
\\
& = \frac{t'!}{\rho!(t'-\rho)!}
\sum_{j=0}^{p'-\rho}  C_j (t'-\rho -j)! (1-1)^{t'-\rho-j}
\\
& =0
\end{align*}
\end{proof}

\subsection{Proof of Claim \ref{clm:int_closed_form}}
\label{sec:proof_int_closed_form}

\begin{proof}
We prove a more general form: for any $a,b \in \mathbb{N} + \{0\}$, we have 
\begin{equation}
\label{eqn:int_closed_form}
g_{a,b}(x) = \int x^{a}(1-x)^{b} dx 
= 
(1-x)^{b+1}p_{a,b}(x) + \frac{a! }{\prod_{k=0}^{a} (b+k+1)},
\end{equation}
where $g_{a,b}(x),p_{a,b}(x)$ are polynomials depended on $a,b$ with $g_{a,b}(0)=0$ . 

Now we prove Eqn.\eqref{eqn:int_closed_form} by induction on $a$. For $a=0$, we have 
$$
\int (1-x)^{b} dx  = \frac{-(1-x)^{b+1}}{b+1} + \frac{1}{b+1}.
$$
Assume that Eqn.\eqref{eqn:int_closed_form} holds for $a=n$. 
Then, for $a,b \in \mathbb{N}$, using integration by parts, we have 
\begin{align*}
& \int x^{n+1}(1-x)^{b} dx \\
& = \frac{-1}{b+1}\int x^{n+1} d (1-x)^{b+1} \\
& = \frac{-1}{b+1}x^{n+1}(1-x)^{b+1} + \frac{1}{b+1} \int (1-x)^{b+1} d x^{n+1}  \\
& =  \frac{-1}{b+1}x^{n+1}(1-x)^{b+1} + \frac{n+1}{b+1} \int x^{n}(1-x)^{b+1} dx.
\end{align*}
We note that the first term is divisible by $(1-x)^{b+1}$, while the second term can be obtained by induction hypothesis. Therefore, we have 
\begin{align*}
g_{n+1,b}(x)
& = \int x^{n+1}(1-x)^{b} dx \\
& = (1-x)^{b+1}p_{n+1,b}(x) + \frac{n+1}{b+1}\frac{n! }{\prod_{k=0}^{n} (b+k+2)}   \\
& = (1-x)^{b+1}p_{n+1,b}(x) + \frac{(n+1)! }{\prod_{k=0}^{n+1} (b+k+1)}.
\end{align*}
By plugging in $a=r-\ell-1$ and $b=\ell-1$, we have the desired results. Also, since the degree of $p_1(x) = \int x^{r-\ell-1}(1-x)^{\ell-i} dx$ is at most $r-i$, $p(x)$ should have degree less or equal to $r-\ell-1$.

\end{proof}

\end{document}